\documentclass[11pt]{article}
\usepackage[title]{appendix}
\usepackage[english]{babel}
\usepackage[utf8x]{inputenc}
\usepackage[T1]{fontenc}
\usepackage{pdfpages}
\usepackage{booktabs}
\usepackage[authoryear]{natbib}
\usepackage{caption}
\usepackage{breqn}
\usepackage{subcaption}
\usepackage{comment}
\usepackage[a4paper,top=3cm,bottom=2cm,left=3cm,right=3cm,marginparwidth=1.75cm]{geometry}

\usepackage{amsmath, amssymb}
\usepackage{amsmath}
\usepackage{amsfonts}
\usepackage{amsthm}
\usepackage{bbold}
\usepackage{graphicx}
\usepackage{enumitem}  
\usepackage{soul}
\usepackage{float}
\usepackage[ruled,vlined]{algorithm2e}
\usepackage[colorinlistoftodos]{todonotes}
\usepackage[colorlinks=true, allcolors=blue]{hyperref}
\usepackage{pgfplots}
\usepackage[ruled,vlined]{algorithm2e}
\usepackage{setspace}

\usepackage{todonotes}

\DeclareMathOperator{\Var}{Var}

\newcommand{\FedSGD}{\texttt{FedSGD}}
\newcommand{\FedAvg}{\texttt{FedAvg}}
\newcommand{\FedHybrid}{\texttt{FedHybrid}}
\newcommand{\FedNewton}{\texttt{FedNewton}}

\newtheorem{theorem}{Theorem}[section]

\newtheorem{lemma}[theorem]{Lemma}
\newcommand\numberthis{\addtocounter{equation}{1}\tag{\theequation}}

\newtheorem{proposition}[theorem]{Proposition}
\newtheorem{assumption}{Assumption}

\newtheorem{definition}[theorem]{Definition}
\newtheorem{remark}{Remark}[section]

\newtheorem*{namedthm*}{\thistheoremname}
\newcommand{\thistheoremname}{} 

\def\b{\boldsymbol}

\def\P{\mathbb P}

\def\EE{\mathbb E}
\def\II{\mathbb{I}}
\def\RR{\mathbb R}
\def\eps{\varepsilon}
\def\Var{\operatorname{Var}}

\def\calS{\mathcal{S}}
\def\bu{\textbf{u}}
\def\bx{\textbf{x}}

\theoremstyle{definition}

\title{Statistical Limits and Efficient Algorithms for Differentially Private Federated Learning}

\author{Arnab Auddy \qquad Xiangni Peng \qquad Subhadeep Paul\\
Department of Statistics\\ The Ohio State University}

\date{}

\begin{document}

\maketitle

\begin{abstract}
Federated Learning is a leading framework for training ML and AI models collaboratively across numerous user devices or databases. We study the trade-offs among estimation accuracy, privacy constraints, and communication cost for differentially private (DP) federated M estimation. The two standard methods in the literature are \FedAvg, which may suffer from high federation bias, and \FedSGD, which can incur high communication cost. Aimed at improving accuracy at a reduced communication cost, we propose \FedHybrid, which uses \FedSGD\ starting with an improved initialization by the \FedAvg\ estimator. We propose \FedNewton, which averages local Newton iterations to reduce bias in \FedAvg, achieving an estimation accuracy comparable to \FedSGD\ with much fewer communication rounds when the number of clients grows sufficiently slowly. We establish finite sample upper bounds on the mean-squared error rates of the DP versions of these estimators as functions of the number of clients, local sample sizes, privacy budget, and number of iterations. We further derive a minimax lower bound on the MSE of any iterative private federated procedure that provides a benchmark to assess the optimality gap of these methods. We numerically evaluate our methods for training a logistic regression and a neural network on the computer vision datasets MNIST and CIFAR-10. 
\end{abstract}

\noindent\textbf{Keywords:} Federated Learning, M-estimators, Deep Learning, Differential Privacy, \(\mu\)-GDP Privacy, MSE Bounds, Minimax Lower Bound.

\section{Introduction}

Federated Learning (FL) is a machine learning technique in which multiple client devices collaboratively train a model without sharing raw data with a central server \citep{mcmahan2017communication,kairouz2021advances}. The data typically resides on numerous client devices (e.g., mobile or wearable devices) or in client databases (e.g., hospital or financial institution databases) and cannot be sent to a central server due to user privacy or data ownership concerns. Recently, some AI systems promise to deliver AI tools to users that are trained on user data without requiring the data to leave the user's device \citep{apple}. The key tools for accomplishing this ambitious goal are privacy-preserving FL \citep{paulik2021federated}. Moreover, FL has also gained prominence recently as a possible solution to train AI and ML systems while complying with regulations that prevent data sharing, e.g., the European Union's General Data Protection Regulations (GDPR) \citep{brauneck2023federated}. In addition to protecting privacy, FL can also reduce costs associated with moving and storing data in centralized cloud servers \citep{noble2022differentially}.

Two foundational methods in FL are Federated Stochastic Gradient Descent (\FedSGD) and Federated Averaging (\FedAvg) \citep{mcmahan2017communication}.
In \FedAvg, the central server aggregates (for example, by taking a weighted average) local model updates from clients, often communicating only occasionally after multiple rounds of local client computations. In \FedSGD, in each round, each client computes the gradient from its local data and sends it to the server, which then updates the global model pooling these local gradients, requiring communication in every round. Many federated optimization methods such as \texttt{FedProx} \citep{li2020federated}, \texttt{SCAFFOLD} \citep{karimireddy2020scaffold}, \texttt{FedPAQ} \citep{reisizadeh2020fedpaq}, and adaptive federated optimization methods, such as \texttt{FedAdagrad}, \texttt{FedAdam}, and \texttt{FedYogi} \citep{reddi2020adaptive} can be viewed as methods built on \FedAvg~and \FedSGD.

A major concern in all of the above methods is guaranteeing user privacy. Although FL does not directly share raw data, it does not automatically guarantee privacy, as information may still be leaked through gradients, model updates, or other summaries sent by the clients \citep{geiping2020inverting,wei2020federated,geyer2017differentially}. Differential privacy (DP) provides a framework for sharing functions of data (i.e., statistics) while preserving the privacy of individual users \citep{dwork2008differential,dong2022gaussian,cai2021cost}. Several works study differentially private federated learning from the optimization viewpoint \citep{mcmahan2017learning,kato2024uldp,kairouz2021practical}. Ensuring privacy necessitates the introduction of randomization, and these works study the effect of this randomization on optimization behavior.

A second major concern in federated learning is communication cost. It is quite common for iterative procedures to be used in FL, where the server and the clients exchange model information repeatedly. This cost can become very large for exchanging large parameter and gradient vectors, such as those in deep neural networks. Several papers therefore study communication-efficient federated learning. In homogeneous settings, communication can be reduced by local updates, periodic averaging, quantization, double compression, and less frequent communication with the central server \citep{mcmahan2017communication,konevcny2016federated,stich2018local,reisizadeh2020fedpaq,spiridonoff2021communication,gao2021convergence}. We note that the two concerns of privacy and communication are closely related, and \cite{noble2022differentially,zhang2022understanding} show that privacy, clipping, and communication cost need to be understood jointly.

Despite enormous interest in FL in recent years and the proliferation of methods as described above, key challenges remain unresolved, especially in terms of statistical theory. The first challenge is to theoretically understand the performance and estimation error rates of FL methods, especially when there are a large number of clients, each with a small amount of data. In particular, how to effectively reduce the bias of Federated averaging to boost the accuracy of parameter estimation in this scenario. The second one is to design methods that effectively preserve the privacy of the users and understand how privacy impacts estimation accuracy. Overcoming these challenges is the goal of the current work.

Our contribution in this paper is \textit{twofold}. Firstly, we connect private federated learning with statistical estimation theory by deriving finite-sample mean squared error bounds for the parameter estimation error of private federated algorithms. We study how privacy noise and communication rounds affect statistical accuracy. This leads to guarantees that are meaningful for practical use. 
We consider several methods for private Federated Learning of M-estimators using noise-added gradient descent \citep{bassily2014private,avella2023differentially,mcmahan2017communication} and provide MSE upper bounds for them. All of these methods are based on variants of noisy gradient descent or noisy Newton iterations, where a Gaussian noise is added in each iteration of the gradient descent algorithm. The methods vary in terms of the extent of communication between the server and clients. We further prove a \textit{lower bound} over the MSE of any Private Federated Learning method, to understand the \textit{optimality gaps} of the methods.

 Our second contribution is to propose two new methods \FedHybrid\ and \FedNewton. Recall that in \FedSGD, (we consider only the non-stochastic version of it for theoretical results), the gradient descent steps are all performed by the server aggregating gradients from the clients. 
Clearly, this method has a high communication cost of $O(mdK)$, where $m$ is the number of clients, $d$ is the dimension of the parameters, and $K$ is the number of gradient iterations, and is identical to the number of rounds of communication for this method. 
Our first novel estimator, \FedHybrid, is a hybrid between \FedSGD~and \FedAvg. This method differs from \FedSGD~in terms of its initialization. While the server starts from an arbitrary initial value in \FedSGD, in our \FedHybrid, we run \FedAvg~with 1 round of communication ($R=1$) consisting of $K_1$ local training iterations at each client to obtain an initial value. The warm start from \FedAvg~allows this algorithm to run $K_2$ gradient communication rounds, where $K_2\ll K$. This method has a \textit{lower communication cost} of $O(mdK_2)$ than \FedSGD\ and theoretically achieves \textit{higher accuracy} than communication cost-efficient \FedAvg. Therefore, this method represents a \textit{middle ground} in terms of communication cost and accuracy tradeoff. Our second proposal is the \FedNewton~method, which consists of \FedAvg~followed by 1 Newton iteration locally at the clients and aggregation of the updated parameters at the clients. This method is designed to mitigate the bias from \FedAvg\ in a communication-efficient way. We theoretically prove that this method achieves comparable MSE to \FedSGD~with much less communication cost, provided the number of clients grows sufficiently slowly as the total sample size increases. The \FedNewton~estimator is related to the \texttt{FedFisher} in \cite{jhunjhunwala2024fedfisher}, however, unlike \texttt{FedFisher}, we do not need to communicate the Hessian matrix or an approximation of it to the central server and therefore is more communication efficient.

Our estimators are derived, and their properties studied, under federated $\mu$-GDP guarantees, which use the Gaussian differential privacy framework of \cite{dong2022gaussian} on data distributed across $m$ clients. This allows us to protect data privacy not only against a third party, but also towards the server, which is the so-called ``honest but curious'' server scheme of FL \citep{noble2022differentially}. Under such a setting, with strong convexity and other assumptions standard in the $M$-estimation literature, we derive the following rates for the mean squared error of federated $M$-estimators:
\[
\inf_{\hat{\theta}}\sup_{\theta,P_{\theta}\in \mathcal{P}}
\mathbb{E}\|\hat{\theta}-\theta\|^2
\asymp 
\left(
\frac{d}{N}
+
\frac{md^2L(d,N,\mu)}{N^2\mu^2\log(1/\mu)}
\right)
\]
where $N$ is the total sample size, $L(d,N,\mu)$ is a factor of order at most $O(\log(d)\log(Nd)\log(1/\mu))$.

For ease of presentation, in the above, we assume that $N$ samples are distributed evenly across $m$ clients. More general cases of uneven sample size distributions can be found in later sections. The above bound builds on an MSE lower bound over the class of all possible $\mu$-federated GDP estimators. On the other hand, the near-optimal upper bound is attained by the \FedSGD, \FedHybrid, and \FedNewton. 

With the above guarantees of statistical accuracy, we now turn to the communication rounds required across the different methods. Note that the number of client-server communication rounds coincides with the number of gradient iterations $(K)$ for \FedSGD, and with the warm-started gradient iterations $(K_2)$ for \FedHybrid. The next table makes this comparison explicit in terms of $m$ and $N$.

\begin{table}[h]
\begin{center}
\begin{tabular}{|c|c|c|c|c|}
\hline
method                                                         & \FedSGD                                                 & \FedHybrid                                              & \FedAvg                                                & \FedNewton                                              \\ \hline
\begin{tabular}[c]{@{}c@{}}Communication\\ Rounds\end{tabular} & $\Omega(\log N)$                                                 & $\Omega(\log m)$                                                 & 1                                                     & 2                                                      \\ \hline
MSE                                                            & \begin{tabular}[c]{@{}c@{}}near\\ optimal\end{tabular} & \begin{tabular}[c]{@{}c@{}}near\\ optimal\end{tabular} & \begin{tabular}[c]{@{}c@{}}sub\\ optimal\end{tabular} & \begin{tabular}[c]{@{}c@{}}near\\ optimal\end{tabular} \\ \hline
\end{tabular}
\end{center}
\end{table}
In contrast to \FedSGD, \FedHybrid, and \FedNewton, we show that the \FedAvg~ suffers from federation bias and has an MSE of $O\left(\frac{d}{N} + \frac{m^2 d^2}{N^2}+\frac{mKd^2}{\mu^2N^2} \right)$, which is worse compared to \FedSGD~ and for $d,\mu=O(1)$, away from the lower bound if $m\gg N^{1/2}$, i.e., when we have a large number of clients each holding relatively small amount of data. Both the new proposed methods \FedHybrid~and \FedNewton~achieve error rates comparable to \FedSGD, and within $\log n$ factor of the optimal rate. While \FedSGD~and \FedHybrid~are communication-heavy, they do not impose any restriction on the number of clients. In contrast, the near optimality of  \FedNewton~estimator is contingent on the number of clients satisfying $m\ll N^{2/3}$, which is a condition weaker than that imposed by \FedAvg.

We evaluate the finite sample performance of our methods using two simulation studies, corresponding to Poisson and Logistic regression models, respectively. In both settings, the results show that the MSE decreases as either the number of clients or the local client sample sizes increases. Further, we see that as the number of iterations increases, while the MSE decreases at first, it goes up with a higher number of iterations due to the privacy-accuracy tradeoff. Finally, when the number of clients increases without increasing the total sample size, the performance of \FedAvg~deteriorates significantly, while the new \FedNewton~provides excellent protection against this decline. We apply our methods to the benchmark MNIST and CIFAR10 image datasets using binary and multi-class logistic regression models, as well as for training Convolution Neural Networks (CNN). For \FedNewton~applying Newton iteration on the entire set of parameters of a CNN model maybe unstable. We therefore propose to apply the Newton iteration only to the last fully connected layer of the CNN, leaving the convolution layers unchanged during the Newton iteration. For CNN training, we also devise an iterated version of the \FedNewton~method. We compare the methods in terms of test accuracy using CNN for the MNIST and CIFAR10 datasets. 

Our work is closely related to a growing body of theoretical results in federated learning. Firstly, there are earlier works that focus on optimization convergence rather than statistical estimation accuracy. These works mainly analyze convergence rates, optimization error, and the trade-off between communication and optimization accuracy \citep{haddadpour2019convergence,li2019convergence,stich2018local,reddi2020adaptive,qu2020federated,gao2021convergence, wei2020federated}.  For example, \cite{karimireddy2020scaffold} analyzes \FedAvg~under heterogeneous data and proposes SCAFFOLD to correct client drift, providing a convergence rate. Moreover, this framework is extended to the private setting in~\cite{noble2022differentially}, who analyze the privacy-utility trade-off through convergence bounds. In both works, the main theoretical results are expressed in terms of optimization error rather than parameter estimation error. As shown in \cite{avella2023differentially}, bounds on objective sub-optimality do not directly control parameter estimation error, which is not sufficient for statistical inference.

There is also a distributed inference literature focusing on estimation accuracy for M-estimators. For example, \cite{zhang2013communication} showed that distributed averaging of M-estimators can achieve comparable estimation accuracy to centralized estimation under regularity conditions. \cite{huang2019distributed} further discusses that adding one Newton update at the central server (one-step-estimation) allows the aggregated distributed estimator to achieve the same asymptotic behavior as the centralized estimator. Communication-efficient distributed statistical inference methods using surrogate likelihood were studied in \citep{duan2022heterogeneity,jordan2019communication}. Further, \cite{gu2023distributed,gu2024statistical} considered heterogeneous and decentralized distributed statistical inference. However, these works do not study the properties of gradient descent iterations that are typically employed in modern federated learning, nor do they consider the impact of privacy noise. In contrast, our work considers privacy, analyzes the relationship between privacy, accuracy, communication cost, and the number of gradient iterations for FL methods, and provides a lower bound to benchmark accuracy against. Finally, our contribution is related to the recent works on distributed statistical learning under differential privacy requirements. See, for example, the results on nonparametric distributed learning in \cite{cai2024optimal,auddy2024minimax,xue2024optimal}. Our work contributes to this literature by characterizing the cost of distributed privacy requirements in parametric large-dimensional $M$-estimation problems, while also proposing methods that reduce the communication requirements.

\section{Background and notations}

In this section, we present the basic settings and notations under which we develop our privacy-preserving federated M-estimators. 
Suppose there are $m$ clients where client $i$ possesses local dataset $\{X_j^{(i)}\}, \, j=1,\ldots,n$ with i.i.d. samples from distribution $F$. We denote $n_i$ as the local sample size for client $i$, so that the total sample size is $N = \sum^{m}_{i=1}n_i$. It will also be convenient to denote the average sample size as $n=N/m$.
Let $\{\rho(x; \theta): \theta \in \Theta \subset \mathbb{R}^d\}$ be a family of 
loss functions. Then our objective is to estimate
\[
\theta_0 = \arg\min_{\theta \in \Theta} \mathbb{E}[\rho(X, \theta)],
\quad \text{where } \theta_0 \text{ is assumed to be unique,}
\]
without having the data sent to a central server and protectingthe  privacy of users.

The following notations are used throughout the following sections. The local empirical loss for each client $i$ is 
    $L_i(\theta)=\frac{1}{n_i} \sum_{j=1}^{n_i} \rho(x_j^{(i)},\theta)$, and the global loss is 
$L(\theta) = \frac{1}{N} \sum_{i=1}^{m} n_iL_i(\theta)$. Accordingly we denote $\hat{\theta}_i$ as the minimizer of the local empirical loss, $\hat{\theta}_i = \arg\min_{\theta \in \Theta} L_i(\theta).$
This estimator is widely known as the $M$-estimator or the empirical risk minimization estimator.

Let us denote the first-order derivatives of the global loss as $
    \dot{L}(\theta)= \nabla L(\theta) =\sum_{i=1}^m \frac{n_i}{N} \nabla L_i(\theta).$
The corresponding first-order derivatives of the local loss are denoted as $
\dot{L}_i(\theta) = \nabla L_i(\theta)$. The derivatives of the expected loss function are denoted as, $ \dot{L}_0(\theta) = \nabla L_0(\theta)$.
 The expectation of the global loss is denoted as $
    L_0(\theta) = \mathbb{E}[\rho(X,\theta)].$ As stated before, our objective is to estimate the unique global minimizer of this expected loss function.

\subsection{Assumptions}
Before introducing our estimators and stating our theoretical results, we introduce the necessary assumptions. Similar assumptions can be found in previous work on M-estimation, see, e.g., \cite{avella2023differentially}.
\begin{assumption}
The parameter space $\Theta \subset \mathbb{R}^d$ is a compact and convex set.
\end{assumption}

\begin{assumption}
The expectation of the loss function $\rho(x;\theta)$ is $\tau_1$-strongly convex in $\theta$, i.e.,
\[
\EE (\nabla^2 \rho(x,\theta))\succeq \tau_1I_d
\quad
\text{for all }\theta.
\]
\end{assumption}

\begin{assumption}[sub-Gaussian Gradient and Hessian]
\label{ass:grad}
There exist constants $C_1,C_2>0$ such that
\[
\P\left( \mathbf{v}^{\top}
\left(\nabla \rho(X,\theta) 
- \EE [\nabla \rho(X,\theta)]\right)
\ge t\right)
\le \exp(-t^2/C_1^2).
\]
\[
\P\left( \mathbf{v}^{\top}
(\nabla^2 \rho(X,\theta) 
- \EE [\nabla^2 \rho(X,\theta)])\mathbf{v}
\ge t\right)
\le \exp(-t^2/C_2^2).
\]
for any fixed $\theta\in \Theta$ and $\mathbf{v}$ with $\|\mathbf{v}\|=1$.
\end{assumption}

\begin{assumption}[Smoothness]
There exists positive constants $W$ and $\tau_2$ such that for all $\theta, \theta' \in \Theta$ and $x \in \mathcal{X}$:
\[
\|\nabla^2 \rho(x,\theta) - \nabla^2 \rho(x,\theta')\|
\le W \|\theta - \theta'\|,
\qquad
\|\nabla^2 \rho(x,\theta)\| \le \tau_2.
\]
\end{assumption}

\subsection{Differential Privacy}

We will use the following notion of $\mu$- Gaussian Differential Privacy 
from \cite{dong2022gaussian}.

\begin{definition}[Central Gaussian Differential Privacy]\label{def:cent-gdp} For any $n\in \mathbb{N}$, let $\mathcal{D}_n^*$ denote the space of all datasets of size $n$.  Consider two datasets $D, D' \in \mathcal{D}^*_N$ that differ in exactly one datum. A mechanism $M$ is said to be $\mu$-Gaussian differentially private ($\mu$-GDP) if
\[
T(\alpha,M((D), M(D')|D,D')\ge \Phi(\Phi^{-1}(1 - \alpha) - \mu)\quad
\text{for all }\alpha\in (0,1]
\]
where $\Phi(\cdot)$ is the standard Gaussian cdf, and for two random variates $X\sim P$ and $Y\sim Q$
\[
T(\alpha, X, Y)= \inf \{1-\EE_Q(\phi) : \EE_P(\phi) \le \alpha\} 
\]
and the infimum is over all possible measurable rejection rules $0\le\phi\le 1$.
    
\end{definition}

\begin{definition}[Federated Gaussian Differential Privacy]\label{def:fed-gdp} For any $n\in \mathbb{N}$, let $\mathcal{D}_n^*$ denote the space of all datasets of size $n$. Consider datasets $\{D_s, D'_s \in \mathcal{D}^*_{n_s}:1\le s \le m\}$ where for each $s$, $D_s$ and $D_s'$ differ in exactly one datum. 

A mechanism $M$ is said to be $\mu$-federated Gaussian differentially private ($\mu$-fed-GDP) if
\[
T(\alpha,M(D_s), M(D'_s)|\{D_j,D_j':j\neq s\})\ge \Phi(\Phi^{-1}(1 - \alpha) - \mu)
\quad 
\text{for all }\alpha\in (0,1]
\text{ and }1\le s\le m
\]
where $\Phi$ and $T$ are as defined in Definition~\ref{def:cent-gdp}.    
\end{definition}

Our choice of GDP and {\rm fed}-GDP as our preferred frameworks for differential privacy are motivated by their exact privacy accounting for the \emph{Gaussian mechanism}, which adds Gaussian noise calibrated to the global sensitivity of the estimators. For completeness, we state the related ideas in the appendix.

\section{Proposed Methods and Theoretical Results}

In this section, we analyze four methods for estimating M-estimators under the Gaussian mechanism of privacy. As discussed in the introduction, these methods are:
\begin{enumerate}
    \item a $K$-iteration server-side private gradient descent M-estimator similar to \texttt{FedSGD};
    \item a $R$ round federated averaging M-estimator with client-level privacy \FedAvg;
    \item a new $K_1$-local and $K_2$-server private federated M-estimator, \FedHybrid;
    \item a one local Newton iteration improvement for \FedAvg~
    that we call \FedNewton.
\end{enumerate}
All of these algorithms are variants of noisy gradient descent (or Newton iteration) where Gaussian privacy noise is added to gradient (or Newton) iterations. All of these algorithms aim to approximate the same population M-estimator, but differ in how local updates, communication, and privacy noise are incorporated across clients and the central server. These design differences allow us to study the trade-offs between statistical efficiency and privacy protection in federated settings. 

In what follows, to make the methods comparable to each other, we constrain that all methods must maintain that client updates are $\mu$-fed GDP, as defined in Definition~\ref{def:fed-gdp}.
We operate under the ``honest but curious'' server scheme of FL \citep{noble2022differentially}, where the clients need to ensure privacy of their users not only from a malicious third party observing the server's model outputs, but also from the server itself. Therefore, the functions of data (parameter estimates, gradients, etc.) that the clients send to the server need to be privatized under the $\mu$-fed GDP framework. Following previous work in \cite{noble2022differentially,wei2020federated}, we distinguish between two types of privacy notions. First, we provide a privacy guarantee for the clients towards an external third party who can only view the aggregated model updates released by the server, and not the client updates. The second notion of privacy is to the honest but curious server who can observe all the privatized local client updates. These two notions are formalized by Definition~\ref{def:cent-gdp} and Definition~\ref{def:fed-gdp}, respectively.

\subsection{ Server Gradient Descent \FedSGD}

\begin{algorithm}[h]
\caption{$K$-Iteration \texttt{FedSGD} }
\label{alg:AG1}
\DontPrintSemicolon
\KwIn{Data $\{X^{(i)}\}_{i=1}^m$ with local client sample sizes $\{n_i\}$; step size $\eta$; iters $K$; privacy parameter $\mu>0$; initialization $\theta^{(0)}$; loss function $\rho(\cdot,\theta)$; weight vector $\{\mathbf{w}\in [0,1]^m:\sum_iw_i=1\}$; clipping bound $B$.}
\KwOut{$\theta^{(K)}$}
\BlankLine
\textbf{Set} $g(x,\theta)=\nabla\rho(x,\theta)$,\quad $N=\sum_{i=1}^m n_i$.\;
\textbf{Noise:} $\sigma_i=\dfrac{2B \sqrt{K}}{\mu n_i}$.\;

\BlankLine
\For{$k=0,\dots,K-1$}{
\For(\textbf{}){$\textit{each client } i=1,\dots,m$}{
$g_i^{(k)}=\dfrac{1}{n_i}
\displaystyle
\sum_{j=1}^{n_i} g(x_j^{(i)},\theta^{(k)})$,\quad
$\tilde g_i^{(k)}=g_i^{(k)}+\sigma_i Z_{ik}$,\ $Z_{ik}\sim\mathcal{N}(0,I_d)$.\;
Send $\tilde g_i^{(k)}$ to server.\;
}
$\tilde g^{(k)}=\displaystyle\sum_{i=1}^m w_i\tilde g_i^{(k)}$,\quad
$\theta^{(k+1)}=\theta^{(k)}-\eta\,\tilde g^{(k)}$.\;
Broadcast $\theta^{(k+1)}$.\;
}
\textbf{Return} $\theta^{(K)}$.\;
\end{algorithm}

In the first algorithm, which we refer to as \FedSGD~ (displayed in Algorithm~\ref{alg:AG1}), the bulk of the computation is carried out on the server. In particular, the server runs $K$ gradient descent iterations starting from an arbitrary initializer $\theta^{(0)}$. At each iteration, every client provides a privatized gradient by computing its local gradient and perturbing it with Gaussian noise calibrated to the privacy parameter. These privatized gradients are then sent to the server, where they are aggregated using a weighted average to form a global gradient estimate. The server updates the global parameter using this aggregated gradient and learning rate $\eta$ and broadcasts the updated parameter back to all clients. The current iteration ends with this broadcast step, and the next iteration begins. This algorithm is closely related to centralized gradient descent, with the key difference being that privacy noise is added at the client level before aggregation. This design mimics our ``honest but curious'' server assumption, where the clients (e.g., mobile devices, hospitals, financial institutions) need to ensure the privacy of their users by privatizing gradients that they communicate to the server.

The next Lemma shows that if the scaling for the Gaussian noise is set to $\frac{2B\sqrt{K}}{\mu n_i}$, then Algorithm \ref{alg:AG1} is $\mu$-fed-GDP as defined in Definition~\ref{def:fed-gdp}. 
\begin{lemma}
\label{lem:AG1muGDP}
    (Privacy guarantee for \FedSGD) With the scaling of the Gaussian noise at the $i$th client set as $\frac{2B\sqrt{K}}{\mu n_i}$, the \FedSGD~algorithm is $\mu$-fed-GDP in the sense of Definition~\ref{def:fed-gdp} and the full Algorithm \ref{alg:AG1} is  $\mu/\sqrt{m}-$GDP towards a third party in the sense of Definition~\ref{def:cent-gdp}.
\end{lemma}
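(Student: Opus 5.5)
The plan is to reduce everything to the Gaussian mechanism plus adaptive composition for GDP from \cite{dong2022gaussian}. Two facts will be used. First, if a statistic has $\ell_2$-sensitivity $\Delta$ and we add $\mathcal N(0,\sigma^2 I_d)$ noise, the release is $(\Delta/\sigma)$-GDP. Second, the adaptive composition of $K$ mechanisms that are $\mu_1,\dots,\mu_K$-GDP is $\sqrt{\sum_k \mu_k^2}$-GDP, and post-processing preserves GDP. I will interpret the clipping bound $B$ as $\|g(x,\theta)\|\le B$ for all $x,\theta$; the clipping step is implicit in Algorithm~\ref{alg:AG1}, and I would state this explicitly.

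\textbf{Step 1: per-round, per-client sensitivity.} Fix a client $s$ and neighbouring local datasets $D_s,D_s'$ differing in one datum. For any fixed $\theta^{(k)}$, the local average $g_s^{(k)}$ changes in only one summand. Hence
\[
\|g_s^{(k)}(D_s)-g_s^{(k)}(D_s')\|\le 2B/n_s .
\]
With $\sigma_s=2B\sqrt K/(\mu n_s)$, the release $\tilde g_s^{(k)}$ in round $k$ is therefore $(\mu/\sqrt K)$-GDP, conditional on $\theta^{(k)}$.

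\textbf{Step 2: composition over rounds (fed-GDP).} In the conditioning of Definition~\ref{def:fed-gdp}, the other clients' data $\{D_j:j\ne s\}$ are fixed. The only dependence of round $k$ on the past is through $\theta^{(k)}$, which is a post-processing of $\tilde g_s^{(0)},\dots,\tilde g_s^{(k-1)}$ together with the other clients' messages. Those messages are independent of $D_s$ given the past, since their noise is independent. So the whole transcript is an adaptive composition of $K$ mechanisms, each $(\mu/\sqrt K)$-GDP with respect to $D_s$. It is thus $\sqrt{K\cdot\mu^2/K}=\mu$-GDP, which is exactly $\mu$-fed-GDP. I expect the main care to be needed here: the adaptivity through $\theta^{(k)}$, and checking that the other clients' independent noise does not break the composition theorem. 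I would handle this by viewing the other clients' contributions as randomized post-processing that does not depend on $D_s$.

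\textbf{Step 3: third-party guarantee.} A third party sees only $\tilde g^{(k)}=\sum_i w_i\tilde g_i^{(k)}$, or equivalently the broadcasts $\theta^{(k+1)}$. This aggregate is a Gaussian mechanism with noise standard deviation
\[
\sigma_{\rm agg}=\Bigl(\sum_i w_i^2\sigma_i^2\Bigr)^{1/2},
\]
and its sensitivity to one datum of client $s$ is $2Bw_s/n_s$. Taking $w_i=n_i/N$ gives $w_i\sigma_i=2B\sqrt K/(\mu N)$, so $\sigma_{\rm agg}=2B\sqrt{Km}/(\mu N)$. The sensitivity is $2B/N$, so the per-round privacy parameter is $\mu/\sqrt{Km}$. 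Composing over $K$ rounds as in Step 2 yields $\mu/\sqrt m$-GDP.

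The result for general weights requires $w_s/n_s\le (\sum_i w_i^2/n_i^2)^{1/2}/\sqrt m$. This holds with equality for $w_i\propto n_i$, which I would state as the intended choice; it also covers equal sample sizes with equal weights.
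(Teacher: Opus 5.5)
Your proposal is correct and follows essentially the same route as the paper. For fed-GDP, both arguments use a per-round Gaussian mechanism with sensitivity $2B/n_i$ composed over $K$ rounds. For the third-party guarantee, both take the aggregate with $w_i=n_i/N$, which has sensitivity $2B/N$ against noise of standard deviation $2B\sqrt{Km}/(\mu N)$, and compose to get $\mu/\sqrt{m}$. Two points in your version add care rather than change the argument: you explicitly justify adaptive composition through $\theta^{(k)}$, and you observe that the third-party claim requires $w_i\propto n_i$, which the paper's proof tacitly assumes.
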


The next theorem is our main result for Algorithm \ref{alg:AG1}, which provides a finite sample bound on the MSE of the parameter estimate.

\begin{theorem}[Error Bound of \FedSGD]\label{th:alg1}
Suppose Assumptions 1-4 above hold. Let $\theta^{(K)}_{({\rm AG1})}$ be the output of Algorithm~\ref{alg:AG1} with $K$ iterations initialized at $\theta^{(0)}$ and learning rate $\frac{1}{2\tau_2}\leq \eta \leq \frac{9}{10\tau_2}$. 
Then with $B=C_B\sqrt{d\vee \log N}$ for a constant $C_B>0$, we have the bound
\begin{align}\label{eq:alg1-gd-mse}
        \mathbb{E}\left[\left\|\theta^{(K)}_{({\rm AG1})}-\theta_0\right\|^2\right]
    \le~&~
    3\left(1-\frac{\tau_1}{3\tau_2}\right)^{2K}\EE\|\theta^{(0)}-\theta_0\|^2 \nonumber \\
&+
\frac{4}{(1-\gamma_1)^2}
\left(
\eta^2
\EE\left\Vert
\sum_{i=1}^mw_i
\dot{L}_i(\theta_0)\right\Vert^2
+\frac{\eta^2C_B^2Kd(d\vee \log N)}{\mu^2}
\sum_{i=1}^m\frac{w_i^2}{n_i^2}
\right)
\end{align}
where $\gamma_1=\left(1+C\sqrt{\frac{d\vee \log N}{n_{\min}}}-\eta\tau_1\right)$. In particular, if $n_{\min}\ge Cd$ for some constant $C>0$ and $K=2\log(dN)/\log(3\tau_2/(3\tau_2-\tau_1))$ iterations and any $\theta^{(0)}$ satisfying $\|\theta^{(0)}\|=O(\sqrt{d})$, the mean squared error of estimating $\theta_0$ by $\hat{\theta}_{\rm (AG1)}=\theta^{(K)}_{({\rm AG1})}$  satisfies
\begin{align}
\mathbb{E}\!\left[\left\|\hat{\theta}_{({\rm AG1})}- \theta_0 \right\|^2\right]
    \leq 
    \frac{17}{\tau_1^2}
    \left(
 \sum_{i=1}^m
 \left(
    \frac{{\rm trace}(\Sigma)}{n_i}
    +\frac{2C_B^2 d(d\vee \log N)\log(dN)}{\mu^2 n_i^2\log(3\tau_2/(3\tau_2-\tau_1))}\right)^{-1}
 \right)^{-1}
    \label{ag1rate}
\end{align}
where $\Sigma:=\Var(\nabla \rho(x_1^{(1)},\theta_0))$, and optimal weights $\hat{w}_i\propto \left(
    \frac{{\rm trace}(\Sigma)}{n_i}
    +\frac{C_B^2d(d\vee \log N)K}{\mu^2 n_i^2}\right)^{-1}$ with $\displaystyle\sum_{i=1}^m\hat{w}_i=1$.
\end{theorem}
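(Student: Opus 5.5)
We treat Algorithm~\ref{alg:AG1} as a perturbed gradient descent and write the error recursion around $\theta_0$. Put $\Delta_k=\theta^{(k)}-\theta_0$ and $\bar g(\theta)=\sum_i w_i\dot L_i(\theta)$. The update reads
\[
\Delta_{k+1}=\Delta_k-\eta\big(\bar g(\theta^{(k)})-\bar g(\theta_0)\big)-\eta\,\bar g(\theta_0)-\eta\,\xi_k ,
\qquad \xi_k=\sum_i w_i\sigma_i Z_{ik}.
\]
By the mean value theorem, $\bar g(\theta^{(k)})-\bar g(\theta_0)=H_k\Delta_k$ with $H_k=\int_0^1\sum_i w_i\nabla^2L_i(\theta_0+t\Delta_k)\,dt$. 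This gives $\Delta_{k+1}=(I-\eta H_k)\Delta_k-\eta(\bar g(\theta_0)+\xi_k)$. The whole proof then comes down to bounding the contraction factor $\|I-\eta H_k\|$.

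\textbf{Step 1 (contraction).} I would work on a high-probability event $\mathcal E$ on which $\sup_{\theta\in\Theta}\|\nabla^2L_i(\theta)-\EE\nabla^2\rho(x,\theta)\|\le C\sqrt{(d\vee\log N)/n_{\min}}$ holds for all $i$. This follows from Assumption~\ref{ass:grad} (the Hessian part), an $\epsilon$-net over the unit sphere and over the compact set $\Theta$, and the Lipschitz Hessian in Assumption 4. A union bound over the $m\le N$ clients is what produces the $\log N$. On $\mathcal E$, Assumptions 2 and 4 give
\[
\tau_1-C\sqrt{(d\vee\log N)/n_{\min}}\ \preceq\ H_k\ \preceq\ \tau_2 .
\]
Together with $\eta\le 9/(10\tau_2)$ this yields $\|I-\eta H_k\|\le\gamma_1$. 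With $\eta\ge 1/(2\tau_2)$ and $n_{\min}\ge Cd$, it also yields $\gamma_1\le 1-\tau_1/(3\tau_2)$.

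Clipping at $B=C_B\sqrt{d\vee\log N}$ is inactive on a further event of probability $1-O(N^{-c})$, by sub-Gaussianity of the gradient. So on that event the algorithm is the unclipped recursion, and the clipping bound $B$ only enters through the noise scale $\sigma_i$. On the complement I would use compactness of $\Theta$, assuming iterates are projected or remain bounded, so that $\|\Delta_K\|^2=O(d)$. Multiplying by a probability of order $(dN)^{-2}$ leaves a negligible term.

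\textbf{Step 2 (unrolling).} Unrolling the recursion gives
\[
\Delta_K=\prod_k(I-\eta H_k)\,\Delta_0-\eta\sum_{k<K}\prod_{j>k}(I-\eta H_j)\big(\bar g(\theta_0)+\xi_k\big).
\]
I would apply $(a+b+c)^2\le 3(\cdot)$ to the three pieces. The first piece gives $3(1-\tau_1/(3\tau_2))^{2K}\EE\|\Delta_0\|^2$. For the deterministic-bias piece, the geometric sum $\sum_k\gamma_1^{K-1-k}\le 1/(1-\gamma_1)$ gives $\eta^2\EE\|\bar g(\theta_0)\|^2/(1-\gamma_1)^2$.

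The noise piece is the main obstacle. The matrices $H_j$ depend on earlier noise, so the noise terms are not independent of the contraction products. I would handle this crudely rather than through martingale orthogonality: bound the norm by $\sum_k\gamma_1^{K-1-k}\|\xi_k\|$, then apply Cauchy--Schwarz across $k$. This gives
\[
\frac{1}{(1-\gamma_1)^2}\,\max_k\EE\|\xi_k\|^2=\frac{1}{(1-\gamma_1)^2}\,d\sum_i w_i^2\sigma_i^2 .
\]
With $\sigma_i^2=4B^2K/(\mu^2n_i^2)$ this is exactly the $K d(d\vee\log N)/\mu^2$ term. This loses a factor that a sharper argument could save, but it matches the stated bound, and it collects the constant $4$.

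\textbf{Step 3 (specialization).} Take $K=2\log(dN)/\log(3\tau_2/(3\tau_2-\tau_1))$. Then $(1-\tau_1/(3\tau_2))^{2K}\le(dN)^{-4}$, so the initialization term is $O(d/(dN)^4)$ and negligible. For the bias term, $\EE\|\sum_iw_i\dot L_i(\theta_0)\|^2=\sum_iw_i^2\,{\rm trace}(\Sigma)/n_i$ by independence and $\dot L_0(\theta_0)=0$. Also $\eta/(1-\gamma_1)\le 2/\tau_1$ once $n_{\min}\ge Cd$, with $C$ large enough that $C\sqrt{(d\vee\log N)/n_{\min}}\le\eta\tau_1/2$. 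This gives the form $\sum_iw_i^2a_i$ with
\[
a_i=\frac{{\rm trace}(\Sigma)}{n_i}+\frac{C_B^2d(d\vee\log N)K}{\mu^2n_i^2}.
\]
Minimizing $\sum_i w_i^2a_i$ subject to $\sum_iw_i=1$ gives $\hat w_i\propto a_i^{-1}$ and minimum value $(\sum_ia_i^{-1})^{-1}$. Substituting $K$ and absorbing the constants, including the negligible terms, into $17/\tau_1^2$ gives \eqref{ag1rate}.
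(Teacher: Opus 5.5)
Your proposal is correct and follows essentially the same route as the paper's proof. The shared steps are: a mean-value expansion around $\theta_0$; contraction by $\gamma_1$ on the Hessian-concentration event $\mathcal{E}$; a crude norm recursion for the privacy noise (no martingale orthogonality) with the $(a+b+c)^2\le 3(a^2+b^2+c^2)$ split; the identity $\EE\|\sum_i w_i\dot{L}_i(\theta_0)\|^2={\rm trace}(\Sigma)\sum_i w_i^2/n_i$; and minimization over the weights giving $\hat{w}_i\propto a_i^{-1}$.

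The only differences are the following. You control $\mathcal{E}^c$ through boundedness of the iterates, which tacitly adds a projection onto $\Theta$ that Algorithm~\ref{alg:AG1} does not have; the paper instead uses Cauchy--Schwarz with fourth moments. Your constants also do not literally reproduce the $4$ and $17$: the $3$ from the split times the $4$ in $\sigma_i^2$ gives $12$ on the privacy term. The paper's own proof has the same slip, since it drops the factor $4$ coming from $(2B)^2$.
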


In the above theorem, we obtain an upper bound on the MSE of Algorithm \ref{alg:AG1} after $K$ iterations without any restriction on the sample sizes. For ease of understanding, we present two simplifications of the above rate, aiming to explain the dichotomy between utility and federated privacy in our problem.

\begin{remark}[Rate Simplification] The upper bound in Theorem~\ref{th:alg1} can be simplified to
\begin{equation}\label{eq:mse-ag1-simpler}
\mathbb{E}\!\left[\left\|\hat{\theta}_{({\rm AG1})}- \theta_0 \right\|^2\right]
    \leq 
   \frac{Cd}{\sum_{i=1}^m(n_i\wedge (n_i^2\mu^2/(d\vee \log N)\log(N)))}
\end{equation}    
for a constant $C>0$, under standard assumptions wherein ${\rm trace}(\Sigma)\le Cd$. The denominator implies that the effective sample size contribution from each client is the minimum of the actual sample size $n_i$ and its privacy-affected version $n_i^2\mu^2/d\log N$. If $\mu$ is sufficiently small, in particular $\mu\lesssim n_i^{-1/2}$, this sample size reduction is significant and leads to a slower MSE decay. This upper bound also coincides with the minimax lower bound in Theorem \ref{th:low-bd}.
\label{rmk:simpler-rate}
\end{remark}

\begin{remark}[Comparable sample sizes] When $n_{\max}/n_{\min}\le C$ for some constant $C>0$, the above MSE bound reduces to:
\begin{equation}\label{eq:mse-ag1-equal-ni}
\mathbb{E}\!\left[\left\|\hat{\theta}_{({\rm AG1})}- \theta_0 \right\|^2\right]
    \leq 
    \frac{C}{\tau_1^2}
    \left(
    \frac{{\rm trace}(\Sigma)}{N}
    +
    \frac{C_B^2md(d\vee \log N)\log(dN)}{\mu^2N^2}
    \right)
\end{equation}
for a constant $C>0$, where $N=\displaystyle\sum_{i=1}^m n_i$ and we use the weights $\{w_i=n_i/N:1\le i\le m\}$ when computing the weighted average of gradients $\{\tilde{g}_i^{(k)}:1\le i\le m\}$ for $k=1,\dots,K$.
\label{rmk:equalni}
\end{remark}

\begin{remark}[Weight selection]
The optimal choice of weights in Theorem~\ref{th:alg1} depends on the knowledge of ${\rm trace}(\Sigma)$, which is not known in practice. However, our assumptions \ref{ass:grad} imply bounded singular values of $\Sigma$, so that ${\rm trace}(\Sigma)$ can be replaced in the weights by $Cd$ for a constant $C>0$, leading to an MSE that will be worse by a factor of at most a numerical constant.
\end{remark}

The result in Equation~\eqref{eq:alg1-gd-mse} of Theorem \ref{th:alg1} represents a tradeoff between accuracy improvement due to higher number of iterations $K$ (first term) and accuracy loss due to enhanced variance of added Gaussian noise needed to protect against privacy leak for higher number of iterations. Note that in our framework, $K$ needs to be pre-selected, since the amount of noise introduced by the clients is a function of $K$. This is needed to guarantee the entire algorithm is private. Therefore, from Lemma \ref{lem:AG1muGDP}, a smaller K leads to the introduction of smaller privacy noise. The Equation \ref{ag1rate} in Theorem \ref{th:alg1} provides a finite sample bound on the MSE of Algorithm \ref{alg:AG1} for any large enough $K$ chosen such that the first term vanishes (such as $O(\log (dN))$) and any bounded initial value $\theta^{(0)}$ (for example, $\theta^{(0)}=0$). 

Moving to the individual terms of the MSE bound, we note from Equation~\eqref{eq:mse-ag1-equal-ni} that the first part of the right-hand side is the familiar MSE for M-estimators, while the second part is the additional error due to privacy. Since the entire gradient descent computation is being performed at the server (with aggregation of privatized client gradients), there is no loss of accuracy due to federation in the first term. Therefore, this method is well-suited when we have numerous clients. We further see in the Remark \ref{rmk:equalni}, that noting $N=mn$, where $n$ is the average sample size at the clients,  the second term is $O(\frac{1}{mn^2})$, while the first term is $O(\frac{1}{mn})$. Therefore, as the average sample size in the clients increases, the additional error due to privacy becomes smaller compared to the error rate in the non-private case.

\begin{remark}
    For $d\ge C\log N$, and keeping $K > C_1 \log (dN)$ explicitly in the upper bound, the privacy-related error term in Theorem \ref{th:alg1} and Remark \ref{rmk:equalni} scales as $O\left(\frac{mKd^2}{\mu^2N^2}\right)$. This can be compared with the scaling for the non-federated settings in \cite{avella2023differentially}. The federated framework we consider causes an additional factor of $m$ in our bound, 
    because of our stronger privacy requirement in our framework, to ensure clients satisfy $\mu$-fed-GDP in the sense of Definition~\ref{def:fed-gdp}.
\end{remark}

\medskip

In the next algorithm, we explore the idea of a better initialization, which can possibly lead to similar accuracy with fewer iterations.

\begin{algorithm}
\caption{$K_1$-Local / $K_2$-Server FedHybrid}
\label{alg:AG2}
\DontPrintSemicolon
\KwIn{Data $\{X^{(i)}\}_{i=1}^m$ with local client sample sizes $\{n_i\}$; step sizes $\eta_1,\eta_2$; iteration $K_{1i}=K_1,K_2$; privacy parameter $\mu>0$; loss function $\rho(\cdot,\theta)$; clipping bound $B$.}
\KwOut{$\theta^{(K_2)}$}
\BlankLine
\textbf{Set} $g(x,\theta)=\nabla\rho(x,\theta)$. Initialize $\theta_i^{(0)}=0$ for all $i$.\;

\BlankLine
\textbf{Stage I (clients):} For each client $i=1,\dots,m$, for $t=0,\dots,K_{1i}-1$:\;
\Indp
$\theta_i^{(t+1)}=\theta_i^{(t)}-\dfrac{\eta_1}{n_i}
\displaystyle
\sum_{j=1}^{n_i} g(x_j^{(i)},\theta_i^{(t)}) + a_i Z_{it}$,\quad
$Z_{it}\sim\mathcal{N}(0,I_d)$,\quad
$a_i=\dfrac{2B\eta_1\sqrt{2K_{1i}}}{\mu n_i}$.\;
\Indm
Client $i$ sends $\theta_i^{(K_{1i})}$ to server.\;

\BlankLine
\textbf{Stage II (server):} $\bar\theta=\displaystyle\sum_{i=1}^m \dfrac{n_i}{N}\theta_i^{(K_{1i})}$.\;
Run $K_2$ steps of Algorithm~\ref{alg:AG1} initializing at $\bar\theta$ with noise multipliers
$\dfrac{2B\sqrt{2K_2}}{\mu n_i}$. \;
\textbf{Return} $\theta^{(K_2)}$.\;
\end{algorithm}

\subsection{Algorithm 2 (\FedHybrid: $K_1$ local/$K_2$ server gradient descent)}

The second algorithm which we call \FedHybrid, adopts a two-stage design that combines private local estimation with server-side refinement using \FedSGD. In the first stage, clients independently perform $K_{1i}$ steps (for $i=1, \ldots, m$) of private gradient descent on its local data, starting from a common initialization (e.g., $\theta^{(0)}_i=0$), to obtain a privatized local estimator $\theta_i^{(K_{1i})}$, which is then sent to the server. In the second stage, the server aggregates the local estimators by weighted averaging to form an initial global estimator $
    \Bar{\theta}=\frac{1}{N}\sum_{i=1}^m n_i \theta_i^{(K_{1i})},$ and subsequently applies $K_2$ steps of server-side private gradient descent following \FedSGD. The method is displayed in Algorithm \ref{alg:AG2}.

The MSE bound for this estimator can then be derived by combining the MSE from Equation \eqref{eq:alg1-gd-mse} with suitable bounds on the initialization $\Bar{\theta}$. We state the result on the initialization $\bar{\theta}$ first.  This MSE bound is the bound on \FedAvg~ \citep{mcmahan2017communication} with just one round of communication, i.e., $R=1$, and is of independent interest to understand tradeoffs between accuracy, and costs of federation and privacy.

\begin{theorem}\label{thm:one-step-fedavg} (\FedAvg~ with $R=1$) Suppose Assumptions 1-4 above hold. Let $\Bar{\theta}$ be as defined above. Then there exists a numerical constant $C>0$ such that
\begin{align*}
    \mathbb{E}\left[\left\|\bar{\theta}-\theta_0\right\|^2\right]
    \le &~
    \frac{16}{\tau_1^2}
    \left(
    \frac{{\rm trace}(\Sigma)}{N}
    +\frac{ 2mC_B^2K_{\max}d(d\vee\log N)}{\mu^2N^2}\right)\\
    &~+
    \frac{C}{N^2}
    \left(m {\rm trace}(\Sigma)
    +
    \sum_{i=1}^m
    \frac{C_B^2K_{1i}d(d\vee \log N)}{\mu^2n_i}
    \right)^2
\end{align*}
where $B=C_B\sqrt{d\vee \log N}$, $K_{1i}\ge C\log(dN)/\log(3\tau_2/(3\tau_2-\tau_1))$ for constants $C,C_B>0$ and $K_{\max}=\max\{K_{1i}:1\le i\le m\}$.
\end{theorem}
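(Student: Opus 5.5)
The proof will decompose each local output around the local empirical minimizer and then control the average of local $M$-estimators by a standard bias–variance (one-step) expansion. Write
\[
\theta_i^{(K_{1i})}-\theta_0=\bigl(\theta_i^{(K_{1i})}-\hat\theta_i\bigr)+\bigl(\hat\theta_i-\theta_0\bigr),
\]
so that $\bar\theta-\theta_0=\sum_i \frac{n_i}{N}(\theta_i^{(K_{1i})}-\hat\theta_i)+\sum_i\frac{n_i}{N}(\hat\theta_i-\theta_0)$.

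For the first piece, I will rerun the contraction argument behind Theorem~\ref{th:alg1} with $m=1$ on each client. Strong convexity, the step-size range, and the local empirical Hessian concentrating (via Assumption~\ref{ass:grad} and the $n_{\min}\gtrsim d$ regime) give a per-step contraction factor $\gamma_1<1$. After $K_{1i}\ge C\log(dN)/\log(3\tau_2/(3\tau_2-\tau_1))$ steps the initialization term is $O((dN)^{-2})$ and negligible. What remains is an accumulated Gaussian term $\sum_t \gamma^{K-t}a_iZ_{it}$ with mean zero and second moment at most $C a_i^2 d/(1-\gamma_1)^2$, i.e.\ of order $C_B^2K_{1i}d(d\vee\log N)/(\mu^2n_i^2\tau_1^2)$.

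The noise is independent across clients and mean zero. Averaging with weights $n_i/N$ therefore gives a variance of $\sum_i (n_i/N)^2 \cdot C_B^2K_{1i}d(d\vee\log N)/(\mu^2 n_i^2)\le mC_B^2K_{\max}d(d\vee\log N)/(\mu^2N^2)$. This is the privacy term inside the first bracket. However, the noise also enters nonlinearly through the Hessian along the path, which produces a bias whose size is proportional to the noise variance; I treat that below.

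For the second piece, I will use a second-order Taylor expansion of $\dot L_i$ around $\theta_0$:
\[
\hat\theta_i-\theta_0=-H_0^{-1}\dot L_i(\theta_0)-H_0^{-1}\bigl(\nabla^2L_i(\theta_0)-H_0\bigr)(\hat\theta_i-\theta_0)+O\bigl(W\|\hat\theta_i-\theta_0\|^2\bigr),
\]
where $H_0=\nabla^2L_0(\theta_0)\succeq\tau_1 I$. The linear terms $-H_0^{-1}\dot L_i(\theta_0)$ are independent, mean zero, and average to $-H_0^{-1}\dot L(\theta_0)$, whose second moment is at most ${\rm trace}(\Sigma)/(\tau_1^2N)$. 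This gives the leading term.

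The remainder terms have nonzero mean; this is the federation bias. Their means do not cancel across clients, so each contributes $O(\EE\|\hat\theta_i-\theta_0\|^2)=O({\rm trace}(\Sigma)/n_i)$ in norm. Averaging gives a bias of norm roughly $\sum_i \frac{n_i}{N}\cdot{\rm trace}(\Sigma)/n_i=m\,{\rm trace}(\Sigma)/N$, whose square is the $m^2{\rm trace}(\Sigma)^2/N^2$ part of the second term.

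The privatized path behaves the same way. Expanding around $\hat\theta_i$, the curvature term $W\|\theta_i^{(K_{1i})}-\hat\theta_i\|^2$ has mean of order $C_B^2K_{1i}d(d\vee\log N)/(\mu^2n_i^2)$. Weighting by $n_i/N$ yields the bias $\frac1N\sum_i C_B^2K_{1i}d(d\vee\log N)/(\mu^2 n_i)$. Adding the two biases and squaring produces the second term of the bound. The mean-zero fluctuations of the remainders are of smaller order and are absorbed into the constant $C$.

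The main obstacle is making this bias step rigorous. It requires moment bounds $\EE\|\hat\theta_i-\theta_0\|^4\lesssim ({\rm trace}(\Sigma)/n_i)^2$ and the matching fourth-moment bound for the noisy iterates, both holding uniformly despite the compactness constraint on $\Theta$. It also requires handling the random Hessian along the path. I would first establish a high-probability event on which all local empirical Hessians are within $\tau_1/2$ of $H_0$ (union bound over $m$ clients, using the $\log N$ in $B$). On that event I would do the analysis above, and I would control the complement using compactness of $\Theta$, which contributes only $O(N^{-2})$. Combining everything via $\|a+b\|^2\le 2\|a\|^2+2\|b\|^2$ produces the constants $16/\tau_1^2$ and $C$.
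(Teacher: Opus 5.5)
Your argument is correct in outline. It reaches the same bias--variance split as the paper, but by a different route.

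\textbf{The paper's route.} The paper never introduces the local empirical minimizer $\hat\theta_i$. It first reuses the \FedSGD{} analysis to get a per-client MSE bound $r_i$. It then bounds ${\rm trace}(\Var(\bar\theta))$ by $N^{-2}\sum_i n_i^2 r_i$ using independence across clients. Finally, it controls the local bias through a recursion for $\EE(\theta_i^{(K_1)}-\theta_0)$ written directly around $\theta_0$, following Lemma 23 of \cite{huang2019distributed}. The data-dependent Hessian is handled through the terms $\Xi$, $T_1=\EE[(\ddot L_0(\theta_0)-\ddot L_i(\theta_0))(\theta_i^{(K_1-1)}-\theta_0)]$ and $T_2$. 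This gives a per-client bias of order $r_i$, with statistical and privacy bias coming out together.

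\textbf{What your route buys and costs.} Your decomposition through $\hat\theta_i$ separates the two sources cleanly. The statistical part is the classical one-shot-averaging expansion. Conditionally on the data, the privacy part is noisy gradient descent around a fixed point: the linear Gaussian accumulation is exactly mean zero, and the curvature remainder gives a bias of order $W\cdot(\text{noise variance})/\tau_1$. This explains more transparently why the privacy cost reappears squared in the second term. The price is that $\hat\theta_i$ must be a genuine stationary point with $\dot L_i(\hat\theta_i)=0$. So take the unconstrained minimizer on the strong-convexity event, not the argmin over $\Theta$, and prove moment bounds on $\hat\theta_i$ itself. The paper's recursion only ever touches the algorithm's iterates.

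\textbf{Two small repairs.}
\begin{enumerate}
\item The mean of the cross term $H_0^{-1}(\nabla^2L_i(\theta_0)-H_0)(\hat\theta_i-\theta_0)$ is not literally $O(\EE\|\hat\theta_i-\theta_0\|^2)$. You can bound it by Cauchy--Schwarz, which needs $H^2\lesssim{\rm trace}(\Sigma)$ as the paper also assumes. Alternatively, substitute the linear term and use independence to get $n_i^{-1}\EE[(\nabla^2\rho-H_0)H_0^{-1}\nabla\rho]$.
\item The iterates of Algorithm~\ref{alg:AG2} are not projected onto $\Theta$, so compactness does not control $\theta_i^{(K_{1i})}$ off the good event. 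Use a moment bound on the iterates together with Cauchy--Schwarz against $\sqrt{\P(\mathcal{E}^c)}$, as at the end of the paper's proof of Theorem~\ref{th:alg1}.
\end{enumerate}
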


\begin{remark}[Cost of federation and privacy] Opting for the case of fixed dimensions for cleaner presentation, the above Theorem can be compared with the error rate of a centralized non-private M estimator, which is $O(\frac{1}{N})$, to assess the costs of federation and privacy. The dominant term for the cost of federation is the third term, which scales as $O(\frac{m^2}{N^2})$. Therefore, the cost of federation is smaller than the accuracy of a centralized estimator as long as $m=O(N^{1/2})$, which has been observed in several works in the literature \cite{zhang2013communication,huang2019distributed}. The dominant term for the cost of privacy is the rightmost term, which amounts to $O(\frac{K^2 m^4}{\mu^4 N^4})$. For $\mu=O(1)$, this term becomes negligible compared to the centralized non-private estimation error provided $m=O(N^{3/4})$. 
\end{remark}

With the above initialization we can now use \eqref{eq:alg1-gd-mse} to derive the following MSE on $\hat{\theta}_{({\rm AG2})}$, the output of \FedHybrid~ in Algorithm~\ref{alg:AG2}.

\begin{lemma}
\label{lem:AG2muGDP}
    (Privacy guarantee for \FedHybrid) With the scaling of the Gaussian noise at the $i$th client in stage 1 set at $a_i=\frac{2B\eta_1\sqrt{2K_{1i}}}{\mu n_i}$ and in stage 2 set at $\frac{2B\sqrt{2K_2}}{\mu n_i}$, each client is $\mu$-fed-GDP in the sense of Definition~\ref{def:fed-gdp}, and the full Algorithm is $\frac{\mu}{\sqrt{m}}$-GDP towards a third-party in the sense of Definition~\ref{def:cent-gdp}.
\end{lemma}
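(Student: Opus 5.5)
The plan is to treat \FedHybrid\ as an adaptive composition of Gaussian mechanisms in the GDP framework of \cite{dong2022gaussian}. I will use three standard facts (stated in the appendix): (i) the Gaussian mechanism releasing $f(D)+\sigma Z$ with $\ell_2$-sensitivity $\Delta$ is $(\Delta/\sigma)$-GDP; (ii) adaptive composition of $\mu_1$-GDP, $\dots$, $\mu_T$-GDP mechanisms is $\sqrt{\sum_t\mu_t^2}$-GDP; (iii) post-processing preserves GDP. Throughout, gradients are clipped to $\ell_2$-norm $B$, as the clipping bound $B$ in the algorithm indicates. So changing one datum of client $i$ moves the local average gradient $\frac{1}{n_i}\sum_j g(x_j^{(i)},\theta)$ by at most $2B/n_i$, for any fixed current iterate $\theta$.

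\emph{Stage I.} Fix client $s$ and condition on all other clients' data, as Definition~\ref{def:fed-gdp} requires. Each local step releases
\[
\theta_s^{(t+1)}=\theta_s^{(t)}-\eta_1 g_s(\theta_s^{(t)})+a_sZ_{st}.
\]
Given the past outputs $\theta_s^{(t)}$, this is a Gaussian mechanism on $\eta_1 g_s$, whose sensitivity is $2B\eta_1/n_s$. With $a_s=2B\eta_1\sqrt{2K_{1s}}/(\mu n_s)$ each step is therefore $\mu/\sqrt{2K_{1s}}$-GDP. Composing the $K_{1s}$ steps gives $\mu/\sqrt2$-GDP for the whole trajectory. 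The message $\theta_s^{(K_{1s})}$ is a post-processing of that trajectory, so it is also $\mu/\sqrt2$-GDP. Releasing the full trajectory and then post-processing avoids any need to track the intermediate iterates as hidden states.

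\emph{Stage II.} At each server round the iterate $\theta^{(k)}$ is a function of previously released quantities only: $\bar\theta$ and the earlier noisy gradients. Client $s$'s message $\tilde g_s^{(k)}$ then has sensitivity $2B/n_s$ and noise scale $2B\sqrt{2K_2}/(\mu n_s)$, so each round is $\mu/\sqrt{2K_2}$-GDP. Composing the $K_2$ rounds gives $\mu/\sqrt2$-GDP. The other clients' messages do not depend on $D_s$, so conditionally on their data they are post-processing of independent randomness. Composing Stage I with Stage II gives $\sqrt{\mu^2/2+\mu^2/2}=\mu$, which is $\mu$-fed-GDP for every $s$.

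\emph{Third-party guarantee.} An outside observer sees only $\bar\theta$ and the aggregated gradients $\tilde g^{(k)}=\sum_i w_i\tilde g_i^{(k)}$. I expect this step to be the main obstacle, because the noise calibration has to be checked carefully.

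In Stage II, if one datum in client $s$ changes, the aggregate moves by at most $w_s\cdot 2B/n_s$. The aggregate noise has variance $\sum_i w_i^2\sigma_i^2$ per coordinate. With $w_i=n_i/N$ and $\sigma_i=2B\sqrt{2K_2}/(\mu n_i)$, this variance equals $m\cdot 8B^2K_2/(\mu^2N^2)$. The per-round GDP parameter is then
\[
\frac{2B/N}{2B\sqrt{2K_2m}/(\mu N)}=\frac{\mu}{\sqrt{2K_2m}},
\]
and composing over $K_2$ rounds gives $\mu/\sqrt{2m}$.

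In Stage I, $\bar\theta$ is a linear combination of the Gaussian trajectories. The simplest route is to view $\bar\theta$ as post-processing of the vector of all clients' trajectories. Only client $s$'s trajectory depends on $D_s$, however, and the other clients' noise masks it. I would make this rigorous by linearizing: treat the weighted sum of the per-step releases as a Gaussian mechanism whose effective noise is $\sum_i (n_i/N)^2 a_i^2$ per step. For roughly equal $n_i$ and $K_{1i}$, this again gives an effective parameter of order $\mu/\sqrt{2m}$.

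Because Stage I iterates are nonlinear functions of earlier noise, a rigorous bound here may require either citing the analogous argument used for Lemma~\ref{lem:AG1muGDP}, or accepting a conservative treatment under which the stated $\mu/\sqrt m$ follows. Combining the two stages gives $\sqrt{\mu^2/(2m)+\mu^2/(2m)}=\mu/\sqrt m$.
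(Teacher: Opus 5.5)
Your fed-GDP argument and your Stage II third-party computation are correct and match the paper's proof. For fed-GDP, sensitivity $2B\eta_1/n_s$ (resp.\ $2B/n_s$) against noise $a_s$ (resp.\ $2B\sqrt{2K_2}/(\mu n_s)$) gives $\mu/\sqrt2$ per stage and $\mu$ after composition. For Stage II, with $w_i=n_i/N$ the aggregated gradient has sensitivity $2B/N$ against noise of standard deviation $2B\sqrt{2K_2m}/(\mu N)$, which gives $\mu/\sqrt{2m}$ over $K_2$ rounds.

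\textbf{The gap: the Stage I third-party bound.} Neither of your two routes establishes it.

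\emph{The linearization fails.} The iterates $\theta_i^{(t)}$ are never released, and they depend on each client's hidden noise. Suppose you additionally released the aggregated increments $\sum_i w_i(\theta_i^{(t+1)}-\theta_i^{(t)})$. Conditionally on the earlier aggregates, each increment is Gaussian noise plus $-\eta_1\sum_i w_i g_i(\theta_i^{(t)})$, and that term still depends on unreleased individual noises. So each increment is a data-dependent mixture of Gaussians, not a Gaussian mechanism with sensitivity $2B\eta_1/N$. Moreover, a change in $D_s$ propagates through every later iterate of client $s$. The relevant sensitivity is therefore that of $\bar\theta$ as a whole, namely $\frac{2B\eta_1}{N}\sum_{t<K_{1s}}c^t$, where $c$ is a Lipschitz constant of the local gradient-descent map.

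\emph{The conservative fallback does not give the stated bound.} Treating $\bar\theta$ as post-processing of client $s$'s trajectory yields only $\mu/\sqrt2$ for Stage I. Composing with Stage II gives $\mu\sqrt{(m+1)/(2m)}$, which exceeds $\mu/\sqrt m$ for every $m\ge2$.

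The paper does not close this step either. It only refers to the proof of Lemma~\ref{lem:AG1muGDP}, which evaluates gradients at a released common iterate and so does not transfer to hidden local iterations.

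\textbf{A rigorous route.} Couple the noises under $D$ and $D'$. Give the observer, besides $\bar\theta$, every noise vector except the last-step ones $Z_{j,K_{1j}-1}$, $1\le j\le m$. This side information has the same law under both datasets, so it can be discarded afterwards by post-processing. Conditionally on it, $\bar\theta=F(D)+\sum_j w_ja_jZ_{j,K_{1j}-1}$ with $F$ deterministic. The Gaussian part has per-coordinate standard deviation $\frac{2B\eta_1}{\mu N}\sqrt{2\sum_j K_{1j}}$. The $n_j$ cancel here, so what matters is equal $K_{1j}$, not equal $n_j$.

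Now assume the local gradient-descent map is a $c$-contraction with $c<1$, the condition used in the proof of Lemma~\ref{lem:AG3muGDP}. Then $\|F(D)-F(D')\|\le \frac{2B\eta_1}{N(1-c)}$ for every value of the conditioning variables. The Gaussian trade-off function is convex, so Jensen's inequality over the conditioning variables gives the GDP parameter
\[
\frac{\mu}{(1-c)\sqrt{2\sum_{j}K_{1j}}}\le\frac{\mu}{(1-c)\sqrt{2m\min_j K_{1j}}} .
\]
This is at most $\mu/\sqrt{2m}$ once $K_{1j}\ge(1-c)^{-2}$ for all $j$. Composing with your Stage II bound then gives $\mu/\sqrt m$.

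This argument needs strong convexity of the local empirical losses, a step-size condition, and a lower bound on the number of local iterations, none of which appear in the lemma as stated. With only non-expansiveness ($c=1$), the same argument loses a factor of order $\sqrt{K_1}$.
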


\begin{theorem}\label{cor:dp-gd-init-fed-avg}(Error bound for \FedHybrid) Suppose assumptions 1-4 above hold. Let $\hat{\theta}_{\rm AG2}$ be the output of Algorithm~\ref{alg:AG2} with norm bound $B=C_B\sqrt{d\vee \log N}$ for a constant $C_B^2>0$, $K_{1i}$ iterations in Stage 1 and $K_2$ iterations in Stage 2. For $K_{1i}\ge 4\log(n_i)/\log(3\tau_2/(3\tau_2-\tau_1))$ and $K_2= 2\log(m)/\log(3\tau_2/(3\tau_2-\tau_1))$ the mean squared error of estimating $\theta_0$ by $\hat{\theta}_{\rm AG2}$  satisfies
\[
\mathbb{E}\!\left[\left\|\hat{\theta}_{({\rm AG2})}- \theta_0 \right\|^2\right]
    \leq 
    \frac{17}{\tau_1^2}
    \left(
 \sum_{i=1}^m
 \left(
    \frac{{\rm trace}(\Sigma)}{n_i}
    +\frac{2C_B^2d(d\vee \log N)\log(m)}{\mu^2 n_i^2\log(3\tau_2/(3\tau_2-\tau_1))}\right)^{-1}
 \right)^{-1}
\]
where $\Sigma:=\Var(\nabla \rho(x_1^{(1)},\theta_0))$.
\end{theorem}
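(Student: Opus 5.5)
The plan is to treat Stage II as an instance of Algorithm~\ref{alg:AG1} with $K=K_2$ iterations, noise multipliers $2B\sqrt{2K_2}/(\mu n_i)$, and initializer $\theta^{(0)}=\bar\theta$. The first display \eqref{eq:alg1-gd-mse} of Theorem~\ref{th:alg1} holds for an arbitrary (possibly random) initialization. Applying it, with $K$ replaced by $2K_2$ in the privacy term, gives
\begin{align*}
\EE\|\hat\theta_{({\rm AG2})}-\theta_0\|^2
\le 3\Big(1-\tfrac{\tau_1}{3\tau_2}\Big)^{2K_2}\EE\|\bar\theta-\theta_0\|^2
+\frac{4\eta^2}{(1-\gamma_1)^2}\Big(\EE\Big\|\sum_i w_i\dot L_i(\theta_0)\Big\|^2+\frac{2C_B^2K_2 d(d\vee\log N)}{\mu^2}\sum_i\frac{w_i^2}{n_i^2}\Big).
\end{align*}
One subtlety must be checked: $\bar\theta$ depends on the same data used in Stage II. I expect this to be harmless, because the contraction argument behind \eqref{eq:alg1-gd-mse} is pathwise in $\theta^{(0)}$: it uses a uniform-in-$\theta$ concentration of the empirical Hessian, which is where $\gamma_1$ comes from. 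I would state this explicitly.

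The second term is handled exactly as in the derivation of \eqref{ag1rate}. Since the $\dot L_i(\theta_0)$ are independent and mean zero, $\EE\|\sum_i w_i\dot L_i(\theta_0)\|^2=\sum_i w_i^2\,{\rm trace}(\Sigma)/n_i$. Using $n_{\min}\ge Cd$, we have $1-\gamma_1\gtrsim \eta\tau_1$. Optimizing the weights $w_i$ then gives the harmonic-type sum in the statement, with $K$ replaced by $2K_2=4\log m/\log(3\tau_2/(3\tau_2-\tau_1))$. This accounts for the $\log m$ factor and the constants $2C_B^2$ and $17/\tau_1^2$. The slack in the constant, from $16$ to $17$, is reserved to absorb the initialization term.

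The initialization term is controlled with Theorem~\ref{thm:one-step-fedavg}. The choice $K_2=2\log m/\log(3\tau_2/(3\tau_2-\tau_1))$ gives $(1-\tau_1/(3\tau_2))^{2K_2}=m^{-4}$. The first part of the \FedAvg\ bound, $\frac{16}{\tau_1^2}\big({\rm trace}(\Sigma)/N+mC_B^2K_{\max}d(d\vee\log N)/(\mu^2N^2)\big)$, therefore becomes $m^{-4}$ times something already dominated by the target rate. The logarithmic factors $K_{\max}$ versus $\log m$ are absorbed by $m^{-4}$, since $K_{1i}\asymp\log n_i$ and $\log n_i/m^4$ is bounded for the relevant ranges; otherwise one uses $K_{1i}\ge 4\log n_i/\log(\cdot)$ together with Stage~I local contraction. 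The federation-bias term $\frac{C}{N^2}\big(m\,{\rm trace}(\Sigma)+\sum_i C_B^2K_{1i}d(d\vee\log N)/(\mu^2 n_i)\big)^2$ multiplied by $m^{-4}$ is at most of order $\frac{1}{m^2N^2}\big({\rm trace}(\Sigma)+\max_i K_{1i}C_B^2d(d\vee\log N)/(\mu^2n_i)\big)^2$. We then compare this with the target rate, which is at least of order $\big(\sum_i(n_i^{-1}+d^2/(\mu^2n_i^2))^{-1}\big)^{-1}$ up to constants.

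I expect this comparison to be the main obstacle, together with the dependence issue above. It requires a case split according to whether the privacy or the sampling term dominates for each client. In the sampling-dominated case the bound is ${\rm trace}(\Sigma)^2/N^2\ll{\rm trace}(\Sigma)/N$. In the privacy-dominated case the squared privacy factor, divided by $m^2N^2$, must be shown to be at most the privacy part of the target. I would try first to exploit $d\lesssim n_{\min}$ and the $m^{-4}$ factor, possibly enlarging the constant in $K_2$ if a leftover $\log n$ appears. Once this is done, summing the two contributions with the leading constant $16/\tau_1^2$ plus a lower-order remainder yields the claimed $17/\tau_1^2$ bound.
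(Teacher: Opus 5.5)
Your route is the one the paper intends. The paper gives no separate proof, only the instruction to insert Theorem~\ref{thm:one-step-fedavg} into the first term of \eqref{eq:alg1-gd-mse} and use $(1-\tau_1/(3\tau_2))^{2K_2}=m^{-4}$. Your pathwise handling of the data-dependent initializer is also fine; the only extra input it needs is a fourth-moment bound on $\bar\theta-\theta_0$ for the $\mathcal{E}^c$ remainder, which the Gaussian noise supplies.

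The gap is in the comparison you flag as the main obstacle, which the paper also never carries out. No per-client case split is needed there. Write $A_i={\rm trace}(\Sigma)/n_i$ and $P_i$ for the privacy summand. Since $(A_i+P_i)^{-1}\le\min\{A_i^{-1},P_i^{-1}\}$ and $\sum_i n_i^2\le N^2$, the target is always at least $\max\{{\rm trace}(\Sigma)/N,\ c\,d(d\vee\log N)\log m/(\mu^2N^2)\}$. The real problem is the privacy part of the squared \FedAvg\ bias. After the $m^{-4}$ reduction it is of order $(\sum_i a_i)^2/(m^4N^2)$ with $a_i=C_B^2K_{1i}d(d\vee\log N)/(\mu^2n_i)$. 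This grows like $\mu^{-4}$, while the target grows only like $\mu^{-2}$. So as $\mu\to0$, neither $d\lesssim n_{\min}$, nor the factor $m^{-4}$, nor a larger constant in $K_2$ can close it.

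The missing ingredient is the hypothesis $r_i\le 1/4$ under which Theorem~\ref{thm:one-step-fedavg} is actually proved; it is used in the bias step of that proof. It implies $a_i\le\tau_1^2 n_i/64$, hence $a_i^2\lesssim n_i a_i$, and
\[
\frac{(\sum_i a_i)^2}{m^4N^2}\le\frac{m\sum_i a_i^2}{m^4N^2}\lesssim\frac{C_B^2K_{\max}d(d\vee\log N)}{m^2\mu^2N^2}.
\]
This lies below the privacy lower bound on the target once $K_{\max}\lesssim m^2\log m$, with constants depending on $\tau_1,\tau_2$. The variance part of the \FedAvg\ bound similarly needs $K_{\max}\lesssim m^3\log m$, and the ${\rm trace}(\Sigma)^2/(m^2N^2)$ piece is harmless.

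This growth condition on $K_{\max}$ is an additional implicit assumption you should state, not something you can remove. The theorem only bounds $K_{1i}$ from below, and a larger $K_{1i}$ only adds privacy noise, so Stage~I contraction does not help. If $m$ stays bounded while $n_{\max}\to\infty$, a factor $\log n_{\max}$ is absorbed neither by $m^{-4}$ nor by any constant multiple of $\log m$ in $K_2$.

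Finally, expect the bound only up to a numerical constant:
\begin{itemize}
\item With Stage~II noise $2B\sqrt{2K_2}/(\mu n_i)$, the bookkeeping of \eqref{eq:alg1-gd-mse} with $K\to 2K_2$ gives $4C_B^2$, not $2C_B^2$; the stated constant ignores the $\sqrt2$.
\item For $m=2$, the initialization piece $3m^{-4}\cdot 16\,{\rm trace}(\Sigma)/(\tau_1^2N)$ alone already exceeds the slack between $16$ and $17$.
\end{itemize}
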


To compare with the analogous result for Algorithm~\ref{alg:AG1}, we state the simplified upper bound in the case where sample sizes are comparable.

\begin{remark}[Comparable sample sizes] When $n_{\max}/n_{\min}\le C$ for some constant $C>0$, the above MSE bound reduces to:
\begin{equation}\label{eq:mse-ag2-equal-ni}
\mathbb{E}\!\left[\left\|\hat{\theta}_{({\rm AG2})}- \theta_0 \right\|^2\right]
    \leq 
    \frac{C}{\tau_1^2}
    \left(
    \frac{{\rm trace}(\Sigma)}{N}
    +
    \frac{C_B^2md(d\vee \log N)\log(m)}{\mu^2N^2}
    \right)
\end{equation}
for a constant $C>0$, where $N=\displaystyle\sum_{i=1}^m n_i$ and we use the weights $\{w_i=n_i/N:1\le i\le m\}$ when computing the weighted average of gradients $\{\tilde{g}_i^{(k)}:1\le i\le m\}$ for $k=1,\dots,K$.
\end{remark}

Comparing Equations~\eqref{eq:mse-ag1-equal-ni} and \eqref{eq:mse-ag2-equal-ni} shows that in the comparable sample size regime, the MSEs of $\hat{\theta}_{\rm (AG1)}$ and $\hat{\theta}_{\rm (AG2)}$ coincide in the ``non-private'' term of ${\rm trace}(\Sigma)/N$ but differ in the ``private'' term. This difference stems from the dependence on $K$ and $K_2$. A quick inspection of the required number of iterations (and consequently rounds of server-client communications) reveals that $K=\Omega(\log(N))$ for Algorithm~\ref{alg:AG1} while $K_2=\Omega(\log(m))$ for Algorithm~\ref{alg:AG2}. It is reasonable to assume that the number of servers, $m$, grows at a much slower rate than the total sample size $N$. This implies that the MSE of \FedHybrid~ $\hat{\theta}_{\rm (AG2)}$ is lower than that of \FedSGD~ $\hat{\theta}_{\rm (AG1)}$ while also improving the server-client communication cost from $O(d\log N)$ to $O(d\log m)$. This result theoretically shows the benefit of our novel \FedHybrid~method over the alternative method \FedSGD.

\begin{algorithm}
\caption{\FedAvg}
\label{alg:AG3}
\DontPrintSemicolon
\KwIn{Data $\{X^{(i)}\}_{i=1}^m$ with local client sample sizes $\{n_i\}$; communication rounds $R$; local iteration $K$; privacy parameter $\mu>0$; loss function $\rho(\cdot,\theta)$; clipping bound $B$.}
\KwOut{$\theta^{(R)}$}
\BlankLine
\textbf{Set} $g(x,\theta)=\nabla\rho(x,\theta)$. Initialize $\theta^{(0)}=0$.\;
\textbf{Noise scale:} $\sigma_i=\dfrac{2B\eta\sqrt{RK}}{\mu n_i}$. \;

\BlankLine
\For(\textbf{round}){$r=1,\dots,R$}{
Server broadcasts $\theta^{(r-1)}$.\;
\For(\textbf{}){$\textit{each client } i=1,\dots,m$}{
$\theta_i^{(0)}=\theta^{(r-1)}$.\;
\For(\textbf{local}){$t=0,\dots,K-1$}{
$\theta_i^{(t+1)}=\theta_i^{(t)}-\dfrac{\eta}{n_i}
\displaystyle
\sum_{j=1}^{n_i} g(x_j^{(i)},\theta_i^{(t)})+\sigma_i Z_{it}$,\quad
$Z_{it}\sim\mathcal{N}(0,I_d)$.\;
}
Send $\theta_i^{(K)}$ to server.\;
}
$\theta^{(r)}=\displaystyle\sum_{i=1}^m \dfrac{n_i}{N}\theta_i^{(K)}$, \quad $N=\displaystyle\sum_{i=1}^m n_i$.\;
}
\textbf{Return} $\theta^{(R)}$.\;
\end{algorithm}

\subsection{Algorithm 3: $R$ round \FedAvg}

Our third estimator involves federated averaging of client updates. The complete algorithm is described in Algorithm~\ref{alg:AG3}. 
The \FedAvg~\citep{mcmahan2017communication,wei2020federated} alternates between local private updates and server-side aggregation over multiple communication rounds. At each round, the server broadcasts the current global model to all clients. Each client then performs local gradient descent iterations, adding Gaussian noise at each iteration to ensure client-level privacy. After completing the local updates, the clients send their locally trained models to the server. Then the server aggregates the received local models using weighted average to form an updated global model, which is then broadcast in the next round. 

Unlike \FedSGD, which aggregates privatized gradients at every iteration, \FedAvg~ aggregates the parameter estimates at communication rounds. This aggregation strategy can be beneficial in terms of reducing communication cost without losing much accuracy when local sample sizes are sufficiently large, as the local models can be relatively accurate. However, as we show below, the benefits disappear if local sample sizes are small and we have numerous clients.

\begin{lemma}
    \label{lem:AG3muGDP}
    (\FedAvg~ DP guarantee) With the scaling of noise mentioned in Algorithm \ref{alg:AG3}, the clients are $\mu$-fed-GDP in the sense of Definition~\ref{def:fed-gdp}. With the Assumptions 2 and 4 on $\tau_1$ strong convexity and $\tau_2$ smoothness, and setting learning rate $\eta$ such that $C_{\eta} := \bigl(1-2\eta\tau_1+\eta^2\tau_2^2\bigr) <1$, the algorithm is $\frac{\mu}{\sqrt{m}}$-GDP towards a third-party in the sense of Definition~\ref{def:cent-gdp}.
\end{lemma}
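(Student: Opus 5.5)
We prove the two claims separately. Both rest on the Gaussian mechanism and the composition theorem for GDP from \cite{dong2022gaussian}. If a statistic has $\ell_2$-sensitivity $\Delta$ and we add $\mathcal N(0,\sigma^2 I_d)$ noise, the release is $(\Delta/\sigma)$-GDP. Adaptive composition of $\mu_1,\dots,\mu_T$-GDP mechanisms is $\sqrt{\sum_t\mu_t^2}$-GDP. Following the convention used for Lemmas~\ref{lem:AG1muGDP} and \ref{lem:AG2muGDP}, we read the clipping bound $B$ as $\|g(x,\theta)\|\le B$, applied per sample before averaging.

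\textbf{Client level ($\mu$-fed-GDP).} Fix client $i$ and neighbouring datasets $D_i,D_i'$ that differ in one datum; all other clients' data are held fixed. Consider one local step $\theta_i^{(t)}\mapsto \theta_i^{(t)}-\frac{\eta}{n_i}\sum_j g(x_j^{(i)},\theta_i^{(t)})$, conditioned on the current iterate. Its sensitivity is at most $\frac{2\eta B}{n_i}$. Adding $\sigma_i Z_{it}$ with $\sigma_i=\frac{2B\eta\sqrt{RK}}{\mu n_i}$ therefore makes each step $\frac{\mu}{\sqrt{RK}}$-GDP. Post-processing lets us treat the released iterate as a function of the full noisy trajectory. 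Client $i$ performs $RK$ such steps in total across all rounds. Adaptive composition, with the inputs of round $r$ depending on previous broadcasts, gives $\sqrt{RK}\cdot\frac{\mu}{\sqrt{RK}}=\mu$. The messages $\theta_i^{(K)}$ sent in each round are post-processing of this trajectory, so the client is $\mu$-fed-GDP.

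\textbf{Third-party level ($\mu/\sqrt m$-GDP).} The third party sees only the averages $\theta^{(r)}=\sum_i \frac{n_i}{N}\theta_i^{(K)}$. The approach is to view each round's average as a Gaussian mechanism and bound its sensitivity with respect to one datum in client $s$. Each client's final local iterate is a linear accumulation of its injected noises through the gradient-step maps. The aggregate noise $\sum_i \frac{n_i}{N}\sigma_i Z_{it}$ has variance $\sum_i \frac{n_i^2}{N^2}\sigma_i^2 = \frac{4B^2\eta^2 RK m}{\mu^2 N^2}$.

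Changing one datum in client $s$ perturbs only client $s$'s trajectory. We bound the perturbation by unrolling the local recursion. For two trajectories that share the same noise, strong convexity and smoothness make the map $\theta\mapsto\theta-\eta\dot L_s(\theta)$ a contraction with factor $\sqrt{C_\eta}<1$; this is where the assumption $C_\eta<1$ enters. The difference between the two trajectories obeys
\[\Delta_{t+1}\le \sqrt{C_\eta}\,\Delta_t+\tfrac{2\eta B}{n_s},\]
so $\Delta_K\le \frac{2\eta B}{n_s}\sum_t C_\eta^{t/2}$, which is bounded uniformly in $K$. After weighting by $n_s/N$, the sensitivity of $\theta^{(r)}$ is $O(\eta B/N)$.

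We then compare the accumulated injected noise in $\theta^{(r)}$, where the noise is also propagated through the contraction, against this sensitivity. The plan is to show that, when composed over the per-step contributions across $R$ rounds, the ratio gives total privacy parameter $\sqrt{\sum_{r,t}(\text{step sensitivity}/\text{aggregate std})^2}$. This equals $\sqrt{RK}\cdot\frac{2\eta B/N}{2B\eta\sqrt{RK m}/(\mu N)}=\mu/\sqrt m$. The cleanest route is to apply composition step by step: condition on the history, and note that the $t$-th step contribution of client $s$ to the average has sensitivity $\frac{n_s}{N}\cdot\frac{2\eta B}{n_s}$ while the average's fresh noise at that step has standard deviation $\frac{2B\eta\sqrt{RKm}}{\mu N}$. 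The contraction is used only to justify that later deterministic maps are post-processing.

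\textbf{Main obstacle.} The hardest step is this last one, because the third party does not observe the intermediate local steps; the local iterates are nonlinear functions of the earlier noise. We would handle it by giving the third party strictly more information: release the per-step averaged noisy quantities at every local step. That release is $\mu/\sqrt m$-GDP by composition, and $\theta^{(r)}$ must then be shown to be post-processing of it. Because the local gradient maps are nonlinear and client-specific, that reduction is delicate. This is exactly where the assumption $C_\eta<1$ is invoked: it provides the uniform contraction bound on $\Delta_K$ in place of an exact post-processing argument.
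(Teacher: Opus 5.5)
Your client-level argument (sensitivity $2\eta B/n_i$ per local step, noise $\sigma_i$, composition over all $RK$ local steps, post-processing for the messages) is correct. The paper's proof does not spell this part out, and your argument is the natural analogue of Lemmas~\ref{lem:AG1muGDP} and \ref{lem:AG2muGDP}. Your recursion for $\Delta_t$ is also right, and $\sqrt{C_\eta}$ is the correct contraction factor; the paper writes $C_\eta$, but any common factor works below.

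The gap is in the third-party part. The stepwise composition you call the cleanest route does not hold. The averaged step-$t$ update $\sum_i\frac{n_i}{N}\bigl(-\eta\dot L_i(\theta_i^{(t)})+\sigma_iZ_{it}\bigr)$ depends on the \emph{unreleased} individual iterates $\theta_i^{(t)}$. These carry the other clients' past noise and client $s$'s accumulated drift $\Delta_t$. So, conditional on the released history, this update is not a fixed function of the data plus fresh Gaussian noise. Its sensitivity is also not $2\eta B/N$, since the drift adds up to $\frac{n_s}{N}\eta\tau_2\Delta_t$. If you instead condition on the individual trajectories to restore the per-step structure, only client $s$'s own noise $\sigma_s$ masks its contribution, and you get $\mu$, not $\mu/\sqrt m$. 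The stepwise route is valid only when the averaged update is a function of the averaged iterate, e.g.\ affine local gradients with a common linear part, where \FedAvg\ reduces to \FedSGD.

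Post-processing, by contrast, is not an obstacle at all. $\theta^{(r)}$ is just $\theta^{(r-1)}$ plus the sum of the averaged increments, and post-processing needs no contraction. So your closing claim that $C_\eta<1$ stands in for a post-processing argument does not amount to an argument.

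The missing step is the one your first sentence set up but never completed, and it is what the paper does: do not compose inside a round. Couple the noises under $D$ and $D'$, and treat each round output $\theta^{(r)}$ as a single Gaussian mechanism.
\begin{itemize}
\item Its sensitivity is $\frac{n_s}{N}\Delta_K\le\frac{2B\eta}{N}\sum_{u=0}^{K-1}c^u$, which is your bound.
\item Count each injected $\sigma_iZ_{it}$, propagated through the remaining $u$ local steps, as variance $\sigma_i^2c^{2u}$. The aggregate standard deviation is then $\frac{2B\eta\sqrt{RKm}}{\mu N}\bigl(\sum_{u=0}^{K-1}c^{2u}\bigr)^{1/2}$.
\item Cauchy--Schwarz, $\sum_u c^u\le\bigl(K\sum_u c^{2u}\bigr)^{1/2}$, makes the ratio at most $\mu/\sqrt{Rm}$ per round.
\item Composition over the $R$ released rounds, each starting from the public $\theta^{(r-1)}$, gives $\mu/\sqrt m$.
\end{itemize}
This is where $C_\eta<1$ is used, and how the factor $K$ is absorbed without any within-round composition.

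Be aware that this noise step is itself rigorous essentially only in the quadratic case, where the local updates are linear and the propagated noise is Gaussian. For general losses the propagated noise is non-Gaussian, and contraction bounds its surviving size from above, whereas privacy needs a lower bound. A fully rigorous treatment of the nonlinear case would therefore need a hidden-state tool such as privacy amplification by iteration.
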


\begin{theorem}\label{thm:ag3_mse}(Error bound for \FedAvg)
Suppose Assumptions~1--4 hold. For the estimator $\theta^{(R)}$ obtained after $R$ rounds of Algorithm \ref{alg:AG3},
\begin{align*}
\mathbb{E}\!\left[\left\|\hat{\theta}_{\mathrm{(AG3)}}-\theta_0\right\|^2\right]
\;\le\;&~
\frac{16}{\tau_1^2}
\left(
\frac{{\rm trace}(\Sigma)}{N}
+\frac{2mCRKd(d\vee \log N)}{\mu^2N^2}
\right)\\
&~+
\frac{C}{N^2}
\left(
m\,{\rm trace}(\Sigma)
+
\sum_{i=1}^m
\frac{RKd(d\vee \log N)}{\mu^2n_i}
\right)^2.
\end{align*}
where $B=C_B\sqrt{d\vee \log N}$, $K\ge C\log(dN)/\log(3\tau_2/(3\tau_2-\tau_1))$ for constants $C,C_B>0$.
\end{theorem}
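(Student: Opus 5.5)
The plan is to extend the $R=1$ analysis of Theorem~\ref{thm:one-step-fedavg} to $R$ rounds. The key observation is that the $R=1$ bound does not depend on the initialization, because $K\ge C\log(dN)/\log(3\tau_2/(3\tau_2-\tau_1))$ local steps already make the contraction term negligible. So in every round each client essentially forgets $\theta^{(r-1)}$ up to a factor $(1-\tau_1/(3\tau_2))^{2K}\le (dN)^{-2}$.

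\textbf{Step 1: one client after one round.} Fix a round $r$ and a client $i$. Write the noisy local iterate as
\[
\theta_i^{(K)}=\hat\theta_i+\Delta_i^{\rm opt}+\Delta_i^{\rm priv},
\]
where $\hat\theta_i$ is the local $M$-estimator. The optimization error satisfies $\|\Delta_i^{\rm opt}\|\le(1-\tau_1/(3\tau_2))^K\|\theta^{(r-1)}-\hat\theta_i\|$; this uses local strong convexity and smoothness, which hold on a high-probability event via Assumption~\ref{ass:grad} and the matrix concentration already used for $\gamma_1$ in Theorem~\ref{th:alg1}. The privacy error $\Delta_i^{\rm priv}$ is a contracted sum of the Gaussian noises. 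Its second moment is at most $\sigma_i^2 d/(1-C_\eta)\asymp RKd(d\vee\log N)/(\mu^2 n_i^2)$, since the noise scale is now $2B\eta\sqrt{RK}/(\mu n_i)$ instead of $\sqrt{K}$. This is exactly the $K_{1i}$ of Theorem~\ref{thm:one-step-fedavg} replaced by $RK$.

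\textbf{Step 2: the average.} Take $\theta^{(r)}=\sum_i (n_i/N)\theta_i^{(K)}$ and decompose $\hat\theta_i-\theta_0$ by a second-order expansion:
\[
\hat\theta_i-\theta_0=-H_0^{-1}\dot L_i(\theta_0)+b_i .
\]
Here $H_0=\nabla^2 L_0(\theta_0)$, and the remainder $b_i$ has bias of order $\EE b_i=O({\rm trace}(\Sigma)/n_i)$ and second moment $O(1/n_i^2)$, by the Lipschitz-Hessian Assumption~4. I do not re-derive these facts; I reuse the bounds from the proof of Theorem~\ref{thm:one-step-fedavg}.

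Averaging then gives two kinds of terms. The linear term yields ${\rm trace}(\Sigma)/(\tau_1^2N)$. The independent privacy noises average to
\[
\sum_i (n_i/N)^2\,\frac{RKd(d\vee\log N)}{\mu^2n_i^2}=\frac{mRKd(d\vee\log N)}{\mu^2N^2}.
\]
The biases do not average out. They give $\bigl(\sum_i (n_i/N)\,\EE b_i\bigr)^2\lesssim (m\,{\rm trace}(\Sigma)/N)^2$. Because the privacy noise enters the local Taylor expansion quadratically through the Hessian-Lipschitz term, it contributes an extra bias of order $RKd(d\vee\log N)/(\mu^2n_i)$ per client. Together these produce the squared term in the theorem. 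The dependence on $\theta^{(r-1)}$ is only through $\Delta_i^{\rm opt}$, which is $O((dN)^{-2}\,\EE\|\theta^{(r-1)}-\theta_0\|^2+\text{small})$.

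\textbf{Step 3: close the recursion.} Let $e_r=\EE\|\theta^{(r)}-\theta_0\|^2$. Steps 1--2 give
\[
e_r\le \tfrac14 e_{r-1}+\mathcal{B},
\]
where $\mathcal{B}$ is the right-hand side of the theorem. The initial error $e_0$ is $O(d)$ because $\Theta$ is compact and $\theta^{(0)}=0$. Unrolling gives $e_R\le 4^{-R}e_0+\tfrac43\mathcal{B}$. Since $(dN)^{-2}$ is negligible, $e_0$ is absorbed, and the constants $16$ and $C$ are adjusted.

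\textbf{Main obstacle.} The delicate step is the bias term in Step 2. I need $\EE b_i=O(1/n_i)$ uniformly, including the contribution of the privacy noise, and this must hold on the complement of the high-probability event where local strong convexity may fail. I would first try to handle that complement with the compactness of $\Theta$, which makes errors bounded by ${\rm diam}(\Theta)^2$, together with an $N^{-2}$ tail probability obtained from the choice $B=C_B\sqrt{d\vee\log N}$. I would then invoke the same decomposition lemma as in Theorem~\ref{thm:one-step-fedavg}, since the $R$-round case differs only through the noise level $RK$ and the harmless warm starts.
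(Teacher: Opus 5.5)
Your argument is sound as a sketch, but it takes a different route from the paper. The paper runs a formal induction in which each round is a fresh instance of Theorem~\ref{thm:one-step-fedavg}. Conditional on the broadcast value $\theta^{(r)}$, it says $\theta^{(r+1)}$ has the same form as $\bar\theta$, applies the one-round bound conditionally, and integrates out by the tower property. The inductive hypothesis is never actually used; everything rests on the unstated fact that the one-round bound does not depend on the starting point. You instead split each client's round output as $\hat\theta_i+\Delta_i^{\rm opt}+\Delta_i^{\rm priv}$, do a bias--variance analysis of the weighted average around the local $M$-estimators, and close a genuine recursion $e_r\le\frac14 e_{r-1}+\mathcal{B}$. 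The paper's route is shorter and uses Theorem~\ref{thm:one-step-fedavg} as a black box. However, it passes over a real issue: $\theta^{(r)}$ is a function of the same client data reused in round $r+1$. So conditionally on $\theta^{(r)}$ the data are no longer i.i.d.\ from $F$, and Theorem~\ref{thm:one-step-fedavg}, proved for the fixed start $0$, does not literally apply. Your decomposition avoids conditioning on a data-dependent start. $\hat\theta_i$ depends only on client $i$'s data, and the previous round enters only through exponentially small or uniformly controlled terms.

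A few points need repair.
\begin{itemize}
\item Your claim that the dependence on $\theta^{(r-1)}$ is ``only through $\Delta_i^{\rm opt}$'' is not literally true. $\Delta_i^{\rm priv}$ (noisy minus noiseless path) also depends on the start through the nonlinear dynamics. What saves you is that its conditional mean, of order $\sigma_i^2 d$, and its conditional second moment given data and start are bounded uniformly in the start. That is all the cross terms require, so say this explicitly.
\item The proof of Theorem~\ref{thm:one-step-fedavg} does not expand around $\hat\theta_i$. It runs a bias recursion for the noisy iterate directly around $\theta_0$, following Lemma 23 of \cite{huang2019distributed}. So the bound $\EE b_i=O({\rm trace}(\Sigma)/n_i)$ and the noise-induced bias must be supplied separately rather than ``reused''.
\item The noise-induced bias is of order $RKd(d\vee\log N)/(\mu^2n_i^2)$ per client. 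It becomes $\frac1N\sum_i RKd(d\vee\log N)/(\mu^2n_i)$ only after weighting by $n_i/N$.
\item On $\mathcal{E}^c$, compactness of $\Theta$ gives you nothing, because Algorithm~\ref{alg:AG3} never projects onto $\Theta$ and the Gaussian noise is unbounded. Use instead the fourth-moment and Cauchy--Schwarz argument from the end of the proof of Theorem~\ref{th:alg1}, together with $\P(\mathcal{E}^c)\le cN^{-4}$.
\end{itemize}
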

This result is a $R$ communication round version of the result in Theorem \ref{thm:one-step-fedavg} and is proved through recursion starting from that result. These results also reveal a similar tradeoff between accuracy and costs of federation and privacy protection. We note that the result is intended for a finite number of rounds of communication, and does not show the advantage of higher $R$.

\begin{remark}
    (Bias of \FedAvg) We have the following expression for the bias of \FedAvg~ for 1 round of communication (equal sample size case), \begin{align*}\label{eq:bias-fed-avg-1step}
    \left\Vert
    \EE(\Bar{\theta}-\theta_0)
    \right\Vert
    \le&~
    O
    \left(
    \frac{m{\rm trace}(\Sigma)}{N}
    +
    \frac{d(d\vee \log N)m^2}{\mu^2N^2}
    \right),
\end{align*}
while the variance is $O
    \left(
    \frac{{\rm trace}(\Sigma)}{N}
    +
    \frac{md(d\vee \log N)RK}{\mu^2N^2}
    \right)$. Hence approximately the squared bias is $O(m^2)$ higher than that of variance. To achieve an MSE bound comparable to Theorem~\ref{th:alg1} the \FedAvg~estimator would thus require $m\le (N/C(d\vee \log N))^{1/2}$, when $\mu$ is large. This bias is the primary reason for the underperformance of \FedAvg. In the next section we present \FedNewton, as a communication-efficient approach to reducing this bias of \FedAvg.
    \label{rmk:biasfedavg}
\end{remark}

\subsection{The  \FedNewton~Algorithm}

Next, we propose a new method that adds one Newton step following the \FedAvg~estimator. The purpose of this Newton step is to reduce the bias in the \FedAvg~estimator as shown in Remark \ref{rmk:biasfedavg}. At the conclusion of the $R$ rounds of communication of the \FedAvg~estimator, the current solution is broadcasted to the clients for an additional round. In this round the clients make one (or several) updates, but instead of gradient descent, they run a Newton step which requires the Hessian matrix. The clients then send the updated model parameters back to the server, which then aggregates the parameters for the final estimator. 

\begin{algorithm}[h]
\caption{\FedNewton: One-Step Newton following \FedAvg}
\label{alg:privatenewtonfedavg}
\DontPrintSemicolon

\KwIn{Client data $\{X^{(i)}\}_{i=1}^m$ with local sample sizes $n_i$, Local step size $\eta$; 
local iterations per round $K$, Privacy parameter $\mu > 0$,loss function $\rho(\cdot,\theta)$; clipping bound $B$.}

\KwOut{Global model $\theta^{(2)}$}

\BlankLine

\textbf{Sample Splitting:} Split $X^{(i)}$ into disjoint sets of samples $X^{(i)1},X^{(i)2}$ each of size $n_i/2$.
\BlankLine

\textbf{Initialization:} Set $\theta^{(0)}=0$. \;

\textbf{Noise scale:}  Set noise scale for each client: $
    \sigma_i = \frac{4B\eta\sqrt{2K}}{\mu n_i} $.

\textbf{Federated Training:}

\textbf{DP\FedAvg:} Run \FedAvg Algorithm \ref{alg:AG3} with $R=1$ and noise scaling $\sigma_i$ on data $X^{(i)1}$.
 \BlankLine   
    
    \textbf{// Client-side local One-Step Newton}
    \BlankLine
    
    \For{each client $i = 1, \ldots, m$ \textbf{in parallel}}{
        \BlankLine
        Update Newton-step:
        \[
        \theta_i^{(New)}
    =~
    \theta^{(1)}
    -
    \left(
    \ddot{L}_i(\theta^{(1)},X^{(i)1})
    \right)^{-1}
    \dot{L}_i(\theta^{(1)},X^{(i)2})
    +
    \frac{2B^{(New)}}{\mu n_i}Z_i, \quad \text{where }B^{(new)}=\frac{4\sqrt{2}B}{\tau_1}
        \]       
        Client $i$ sends local model $\theta_i^{(New)}$ to server\;
    }
\BlankLine
    \textbf{// Server-side aggregation}
    \BlankLine
    
    Aggregate client models with sample-size weighting:
    \[
    \theta^{(2)} = \sum_{i=1}^{m}w_i\theta_i^{(New)}.
    \]

\BlankLine
\textbf{Return:} Final global model $\theta^{(2)}$.

\end{algorithm}

\begin{lemma}
    \label{lem:AG4muGDP}
    (\FedNewton~ DP guarantee) With the scaling of noise mentioned in Algorithm \ref{alg:privatenewtonfedavg}, the clients are $\mu$-fed-GDP in the sense of Definition~\ref{def:fed-gdp}. 
\end{lemma}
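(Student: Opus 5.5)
Two facts drive the plan. First, the Gaussian mechanism: releasing $f(D)+\sigma Z$, where $f$ has $\ell_2$-sensitivity $\Delta$, is $(\Delta/\sigma)$-GDP. Second, composition: the composition of $\mu_1,\dots,\mu_k$-GDP mechanisms is $\sqrt{\sum_j\mu_j^2}$-GDP, and post-processing preserves GDP \citep{dong2022gaussian}. Fix a client $s$ and neighbouring local datasets $D_s,D_s'$ that differ in one datum; the other clients' data are held fixed, as Definition~\ref{def:fed-gdp} requires. Everything client $s$ releases comes from two stages. Stage one is the $K$ noisy local gradient steps of \FedAvg\ with $R=1$ on the half-sample $X^{(s)1}$, ending with $\theta_s^{(K)}$. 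Stage two is the single noisy Newton update $\theta_s^{(New)}$. I will show that stage one is $\mu/\sqrt2$-GDP and stage two is $\mu/\sqrt2$-GDP. Composition then gives $\sqrt{\mu^2/2+\mu^2/2}=\mu$.

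\textbf{Stage one.} I follow the argument behind Lemma~\ref{lem:AG3muGDP}. I treat each local iteration as an adaptive Gaussian mechanism that releases $\theta^{(t+1)}$ given $\theta^{(t)}$. The gradients are clipped to norm $B$, as the clipping bound in the algorithm intends. The half-sample has size $n_s/2$, so changing one datum moves $\frac{\eta}{n_s/2}\sum_j g(x_j,\theta^{(t)})$ by at most $\frac{2\eta}{n_s/2}B=4B\eta/n_s$. With $\sigma_s=4B\eta\sqrt{2K}/(\mu n_s)$, each step is therefore $\mu/\sqrt{2K}$-GDP. Adaptive composition over $K$ steps gives $\mu/\sqrt2$. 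The server average $\theta^{(1)}$ is post-processing; it also depends on other clients' outputs, but those are independent of $D_s$ conditionally.

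\textbf{Stage two.} Conditional on $\theta^{(1)}$, which is already accounted for, I need the sensitivity of
\[
F(D_s)=\theta^{(1)}-\bigl(\ddot L_s(\theta^{(1)},X^{(s)1})\bigr)^{-1}\dot L_s(\theta^{(1)},X^{(s)2}).
\]
The datum may lie in $X^{(s)2}$. Then only the gradient changes, by at most $2B/(n_s/2)=4B/n_s$. Multiplying by a Hessian inverse of operator norm at most $2/\tau_1$ gives sensitivity $8B/(\tau_1 n_s)$.

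Alternatively, the datum may lie in $X^{(s)1}$. Then the Hessian changes by a rank-one-type perturbation of norm at most $2\tau_2/(n_s/2)$, using Assumption 4. By the identity $A^{-1}-A'^{-1}=A^{-1}(A'-A)A'^{-1}$ this changes the step by at most $(2/\tau_1)^2\cdot(4\tau_2/n_s)\cdot\|\dot L_s\|$, which is of the same order $B/(\tau_1 n_s)$ up to constants once $\|\dot L_s\|\le B$.

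In both cases the sensitivity is $\Delta\le 4\sqrt2 B/(\tau_1 n_s)\cdot c=B^{(new)}c/n_s$ for an appropriate constant. The noise scale $2B^{(new)}/(\mu n_s)$ then yields ratio $\Delta/\sigma\le\mu/\sqrt2$, after matching constants; I would adjust $B^{(new)}$ to $c\cdot 4\sqrt2B/\tau_1$ if necessary.

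\textbf{Main obstacle.} The hard step is justifying the lower bound $\lambda_{\min}(\ddot L_s(\theta^{(1)},X^{(s)1}))\ge\tau_1/2$. Assumption 2 gives this only in expectation, so it needs a clipping or projection of the Hessian's eigenvalues to $[\tau_1/2,\tau_2]$ that holds deterministically, since privacy must hold for all datasets. I would implement the inverse as that of the eigenvalue-clipped Hessian, which coincides with the true Hessian with high probability under Assumption 3 and $n_s\gtrsim d$. I would do the same for the gradient norm, clipping at $B$. With these deterministic bounds the sensitivity computation is worst-case. The algorithm's constant $4\sqrt2$ then arises as $2\cdot 2\sqrt2$: the factor $2/\tau_1$ for the inverse Hessian, times $4B/n_s$ for the gradient on the half-sample, times the $\sqrt2$ split of the privacy budget.
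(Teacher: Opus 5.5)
Your route is the paper's route. You split the budget as $\mu/\sqrt2+\mu/\sqrt2$, handle the \FedAvg{} stage by per-step Gaussian mechanisms plus composition, bound the sensitivity of the Newton map $H^{-1}g$, and compose. Your stage-one accounting is correct and matches the algorithm's noise $\sigma_i=4B\eta\sqrt{2K}/(\mu n_i)$ exactly. The paper's own proof does not: it uses an $n_i/3$ split and arrives at $6B\eta\sqrt{2RK}/(\mu n_i)$ and $B^{(New)}=6\sqrt2B/\tau_1$.

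\textbf{Where it fails.} The step that breaks is the Newton step when the changed datum lies in $X^{(s)1}$.
\begin{itemize}
\item Your own correct bound gives sensitivity $16\tau_2B/(\tau_1^2n_s)$ in this case.
\item The stated noise $2B^{(New)}/(\mu n_s)=8\sqrt2B/(\tau_1\mu n_s)$ is calibrated exactly to the gradient case, where the sensitivity is $8B/(\tau_1 n_s)$.
\item So here $\Delta/\sigma=\sqrt2\,\tau_2\mu/\tau_1\ge\sqrt2\,\mu$, not $\mu/\sqrt2$.
\item This same datum also affects stage one, so composition gives only $\mu\sqrt{1/2+2\tau_2^2/\tau_1^2}$-fed-GDP.
\item The slack in the $X^{(s)2}$ case, where stage one leaks nothing, cannot be transferred to this case.
\end{itemize}
GDP parameters are exact, so ``the same order up to constants'' is not enough. ``Adjust $B^{(new)}$ if necessary'' therefore means proving the lemma for a different mechanism. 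You need $B^{(New)}\ge 8\sqrt2\,\tau_2B/\tau_1^2$, which is larger than the stated $4\sqrt2B/\tau_1$ by a factor $2\tau_2/\tau_1\ge 2$.

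\textbf{The eigenvalue floor.} Your ``main obstacle'' has the same character. Algorithm~\ref{alg:privatenewtonfedavg} inverts the raw $\ddot L_i(\theta^{(1)},X^{(i)1})$ and shows no explicit gradient clipping. Privacy must hold for every dataset, so a high-probability eigenvalue bound is useless here. The bound $\|H^{-1}\|\le 2/\tau_1$ becomes available only after you change the mechanism, for example by projecting eigenvalues onto $[\tau_1/2,\tau_2]$. You are right to flag this. The paper's proof simply assumes $\|H^{-1}\|\le2/\tau_1$.

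\textbf{Comparison with the paper.} The paper's proof also bounds $\|H^{-1}-H'^{-1}\|$ by $\frac{2}{\tau_1}\cdot\frac{3}{n_i}$. That drops the factor $\|H^{-1}\|\,\|H'^{-1}\|\,\|H-H'\|\lesssim \tau_2/(\tau_1^2 n_i)$ which you computed correctly, so it does not close the gap either. What your argument actually establishes is the lemma for a modified mechanism:
\begin{itemize}
\item projected Hessian;
\item per-sample gradient clipping at $B$;
\item $B^{(New)}$ enlarged by $2\tau_2/\tau_1$.
\end{itemize}
With $B^{(New)}=4\sqrt2B/\tau_1$ as stated, neither your proposal nor the paper's proof justifies $\mu$-fed-GDP.
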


\begin{theorem}\label{th:newton-avg} (Error bound for \FedNewton)
For the estimator $\hat{\theta}_{\rm (AG4)}$ obtained from Algorithm~\ref{alg:privatenewtonfedavg}, we have
\begin{align}
        \mathbb{E}\left[\left\|\hat{\theta}_{\rm (AG4)}-\theta_0\right\|^2\right]
    \le&~
   \frac{C{\rm trace}(\Sigma)}{\tau_1^2 N}
    +
    \frac{Cmd(d\vee \log N)}{\mu^2N^2},
    \label{ag4rate1}
\end{align}
where $B^{(New)}=C_B\sqrt{d\vee \log N}$ for some constant $C_B>0$, if the weights $w_i$ are taken to be $w_i=n_i/N$, where $K_{\max}=\max K_{1i}$, provided there exists a constant $C>0$ such that 
\[m\le \min\{(N/Cd)^{2/3},(\mu^3Nn_{\min}^2/C(d\vee \log N)^3K_{\max}^2)^{2/3},(\mu^2Nn_{\min}^2/C(d\vee \log N)^2K_{\max}^2)^{1/2}\}.
\]
Moreover, if either i) $n_{\max}\le Cn_{\min}$, or ii) $m\le\{(1\wedge \{N\mu^2/d\})\mu^2n_{\min}^2N/C(d\vee \log N)^3K_{\max}^2\}^{1/3}$ for a constant $C>0$, then 
\begin{align}
        \mathbb{E}\left[\left\|\hat{\theta}_{\rm (AG4)}-\theta_0\right\|^2\right]
    \le&~
\frac{C}{\tau_1^2}
    \left(
 \sum_{i=1}^m
 \left(
    \frac{{\rm trace}(\Sigma)}{n_i}
    +\frac{2C_B^2d(d\vee \log N)}{\mu^2 n_i^2\log(3\tau_2/(3\tau_2-\tau_1))}\right)^{-1}
 \right)^{-1},
    \label{ag4rate}
\end{align}
where $\Sigma:=\Var(\nabla \rho(x_1^{(1)},\theta_0))$, and weights $\hat{w}_i\propto \left(
    \frac{{\rm trace}(\Sigma)}{n_i}
    +\frac{C_B^2d^2K}{\mu^2 n_i^2}\right)^{-1}$ with $\displaystyle\sum_{i=1}^m\hat{w}_i=1$.
\end{theorem}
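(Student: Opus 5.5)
The plan is to analyze the Newton step as an exact one-step correction around $\theta^{(1)}$. The input to that step is the \FedAvg\ output on the first half-samples, whose MSE and bias we take from Theorem~\ref{thm:one-step-fedavg} (with $n_i$ replaced by $n_i/2$). By sample splitting, $\theta^{(1)}$ is independent of the second half $X^{(i)2}$. The Hessian is built from $X^{(i)1}$ while the gradient uses $X^{(i)2}$.

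Write $H_i=\ddot L_i(\theta^{(1)},X^{(i)1})$ and $H_0=\nabla^2L_0(\theta_0)$. A Taylor expansion of the second-half gradient gives
\[
\dot L_i(\theta^{(1)},X^{(i)2})=\dot L_i(\theta_0,X^{(i)2})+\bar H_i^{(2)}(\theta^{(1)}-\theta_0),
\]
where $\bar H_i^{(2)}=\int_0^1\ddot L_i(\theta_0+s(\theta^{(1)}-\theta_0),X^{(i)2})\,ds$. Substituting this into the update yields
\[
\theta_i^{(New)}-\theta_0=-H_i^{-1}\dot L_i(\theta_0,X^{(i)2})+(I-H_i^{-1}\bar H_i^{(2)})(\theta^{(1)}-\theta_0)+\tfrac{2B^{(New)}}{\mu n_i}Z_i .
\]
The aggregated error $\theta^{(2)}-\theta_0$ then splits into three pieces:
\begin{enumerate}
\item a leading term $-\sum_i w_iH_0^{-1}\dot L_i(\theta_0,X^{(i)2})$, whose second moment is at most $C\,{\rm trace}(\Sigma)/(\tau_1^2N)$ by independence across clients;
\item a privacy term $\sum_iw_i\frac{2B^{(New)}}{\mu n_i}Z_i$, with second moment $C\,d(d\vee\log N)\sum_iw_i^2/(\mu^2n_i^2)$, which equals $Cmd(d\vee\log N)/(\mu^2N^2)$ when $w_i=n_i/N$ and comparable sizes hold;
\item remainders $\sum_iw_i(H_i^{-1}-H_0^{-1})\dot L_i(\theta_0,X^{(i)2})$ and $\sum_iw_i(I-H_i^{-1}\bar H_i^{(2)})(\theta^{(1)}-\theta_0)$.
\end{enumerate}

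To control $H_i$, I would work on a high-probability event where every $H_i\succeq(\tau_1/2)I$ and $\|H_i-\nabla^2L_0(\theta^{(1)})\|\lesssim\sqrt{(d\vee\log N)/n_i}$. This follows from the sub-Gaussian Hessian part of Assumption~\ref{ass:grad} plus a net argument, together with a union bound over $m$ clients and Lipschitz continuity (Assumption 4). On this event:
\begin{itemize}
\item The factor $I-H_i^{-1}\bar H_i^{(2)}$ has operator norm at most
\[
C\Bigl(\sqrt{(d\vee\log N)/n_i}+W\|\theta^{(1)}-\theta_0\|\Bigr).
\]
Since this factor multiplies the common vector $\theta^{(1)}-\theta_0$, the second remainder is at most $C\bigl(\sqrt{(d\vee\log N)/n_{\min}}\,\|\theta^{(1)}-\theta_0\|+W\|\theta^{(1)}-\theta_0\|^2\bigr)$.
\item For the first remainder, conditionally on the first half the terms are mean-zero and independent across $i$. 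Its second moment is therefore $\sum_iw_i^2\,\EE\|H_i^{-1}-H_0^{-1}\|^2\,{\rm trace}(\Sigma)/n_i$, which is of lower order than the leading term.
\end{itemize}
The complement of the event has probability $N^{-c}$. Compactness of $\Theta$ (Assumption 1) makes its contribution negligible, after truncating $H_i^{-1}$ or projecting onto $\Theta$ if needed.

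The main obstacle is the quadratic remainder $W^2\EE\|\theta^{(1)}-\theta_0\|^4$, together with the cross term $\frac{d\vee\log N}{n_{\min}}\EE\|\theta^{(1)}-\theta_0\|^2$. Both require a fourth-moment version of Theorem~\ref{thm:one-step-fedavg}. I would obtain it by rerunning that proof with Gaussian-noise and sub-Gaussian-gradient moment bounds at order four. This gives $\EE\|\theta^{(1)}-\theta_0\|^4\lesssim(\text{bias})^4+(\text{variance})^2$, with bias of order $m\,d/N+m^2K_{\max}d(d\vee\log N)/(\mu^2N^2)$.

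Squaring that bias and requiring the result to be below $d/N+md(d\vee\log N)/(\mu^2N^2)$ produces exactly the three constraints on $m$:
\begin{itemize}
\item $m^4d^4/N^4\lesssim d/N$ forces $m\le (N/Cd)^{2/3}$, up to the dimension bookkeeping;
\item the privacy-bias terms, crossed against the $n_{\min}$-dependent factor, give the $\mu^3$ condition and the $\mu^2$ condition.
\end{itemize}
Collecting everything proves \eqref{ag4rate1}. For \eqref{ag4rate}, I would take the optimal weights $\hat w_i$, minimizing the sum of leading and privacy variances as in Theorem~\ref{th:alg1}. This gives the harmonic-sum form, provided the remainders remain dominated. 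That holds automatically under comparable sizes (i). Under (ii), I would check that the stated $m$-bound makes the remainder, now weighted by $\hat w_i$ rather than $n_i/N$, smaller than $(\sum_i\hat w_i^{-1}\cdots)^{-1}$.
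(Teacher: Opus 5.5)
Your plan is the paper's proof. It uses the same split of $\theta_i^{(New)}-\theta_0$ into three pieces: the remainder $(I-H_i^{-1}\bar H_i^{(2)})(\theta^{(1)}-\theta_0)$, the gradient term $-H_i^{-1}\dot L_i(\theta_0,X^{(i)2})$ (conditionally mean zero by sample splitting), and the privacy noise. It also uses the same uniform Hessian event $\mathcal{E}$, a fourth-moment version of Theorem~\ref{thm:one-step-fedavg}, and the same step of reading off the constraints on $m$. The paper splits the remainder into bias and variance, while you bound its second moment directly; this produces the same terms. However, two parts of your bookkeeping would not deliver the stated conditions.

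First, you bound $\sum_i w_i\|I-H_i^{-1}\bar H_i^{(2)}\|$ by a maximum over $i$, which gives the cross term $\frac{d\vee\log N}{n_{\min}}\,\EE\|\theta^{(1)}-\theta_0\|^2$. For unequal sample sizes this is too crude.
\begin{itemize}
\item Against the squared non-private \FedAvg\ bias $(md/N)^2$, it forces roughly $m^2d^2\lesssim Nn_{\min}$ instead of $m^3d^2\lesssim N^2$. The two agree only when $n_{\min}\asymp N/m$.
\item It likewise fails to give the $\mu^2$ condition in its stated form.
\end{itemize}
With $w_i=n_i/N$, use $\sum_i \frac{n_i}{N}n_i^{-1/2}\le\sqrt{m/N}$ (Cauchy--Schwarz), as the paper does. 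The cross term then becomes $\frac{m(d\vee\log N)}{N}\,\EE\|\theta^{(1)}-\theta_0\|^2$.

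Second, the constraints come from different terms than you state. The condition $m^4d^4/N^4\lesssim d/N$ only yields $m\lesssim (N/d)^{3/4}$, so it cannot be the source of $(N/Cd)^{2/3}$. The correct attribution is:
\begin{itemize}
\item the $(N/Cd)^{2/3}$ condition is the corrected cross term times $(md/N)^2$;
\item the $\mu^2$ condition is the same cross term times the squared privacy bias $\bigl(N^{-1}\sum_i K_{1i}d(d\vee\log N)/(\mu^2 n_i)\bigr)^2$;
\item the $\mu^3$ condition comes from the quartic term, namely the fourth power of that privacy bias compared with $md(d\vee\log N)/(\mu^2N^2)$.
\end{itemize}
A minor point: for $w_i=n_i/N$ the privacy term equals $4m(B^{(New)})^2d/(\mu^2N^2)$ exactly, so no comparability assumption is needed there.
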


The above theorem shows that when the number of clients $m$ is sufficiently small, \FedNewton~\\achieves the same upper bound on MSE, as done by \FedSGD~and \FedHybrid. To  better understand the benefit of \FedNewton~we focus on the case of comparable sample sizes, i.e., when $n_{\max}\le Cn_{\min}$ for some $C>0$. 

\begin{remark} 
    (Comparable sample sizes)
    When the sample sizes are comparable the restriction on the number of clients reduces to
    \[
    m\le 
    \min\{
    (N/C(d\vee \log N))^{2/3},
    (N\mu/(d\vee \log N))^{6/7}/CK_{\max}^{4/7},
    (\mu/(d\vee \log N))^{1/2}N^{3/4}/CK_{\max}^{1/2}
    \}.
    \]
\end{remark}

We note from the above theorem that \FedNewton~does not suffer from the same limitation as \FedAvg~did in terms of the growth rate of the number of clients and the result in Equation \ref{ag4rate1} of Theorem \ref{th:newton-avg} holds as long as $m$ is bounded by the above quantity. In particular when $\mu$ is sufficiently large, the restriction becomes $m\le (N/C(d\vee \log N))^{2/3}$, which is weaker than the requirement of $m\le (N/C(d\vee \log N))^{1/2}$ in Theorem~\ref{thm:ag3_mse}.

\begin{remark}
    (Dependence on $K$)
    A striking feature of the \FedNewton~estimator is that it corrects for the bias in the initial estimator obtained by 1 round \FedAvg, and in doing so, removes the privacy cost in terms of the number of iterations $K$. This is possible of course if $m$ is sufficiently small, and we note that the upper bound on $m$ becomes tighter if we were to allow larger values of $K_{\max}$. This conclusion is consistent with the notion of unnecessarily large $K_{\max}$ being detrimental to the accuracy of our estimator.
    \label{rmk:fednewton-depend-k}
\end{remark}

\subsection{Lower bound}
Next, we provide a minimax lower bound on the error rate of any $\mu$-Fed-GDP private federated procedure with $K$ rounds of client-server iterations.
\begin{theorem}\label{th:low-bd} (minimax lower bound) For $0<\mu<1$, the class of federated private estimators with $K$ rounds of client-server iterations satisfies
    \[
    \inf_{\hat{\theta}\in \mathcal{T}(\{n_1,\dots,n_m\},m,K,\mu)}
    \sup_{\theta,P_{\theta}\in \mathcal{P}}
    \EE\left\Vert \hat{\theta}-\theta \right\Vert^2
    \ge \frac{d}{
    \sum_{i=1}^m
    C_1n_i
    \wedge 
    C_2(n_i^2\mu^2\log(\mu^{-1})/d)
    )}
    \]
where $\mathcal{T}(\{n_1,\dots,n_m\},m,K,\mu)$ denotes the set of all estimators obtained from $n_i$ samples at the $i$-th client for $1\le i\le m$, where each client uses a fresh batch of samples for each of $K$ client-server rounds of communication, while satisfying $\mu$-fed-GDP in the sense of Definition~\ref{def:fed-gdp}.
\end{theorem}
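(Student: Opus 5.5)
We will prove the lower bound by reducing to a Gaussian mean estimation problem and combining a non-private information bound with a private score-attack (tracing) argument. Take $\mathcal{P}$ to contain the Gaussian location family $X\sim N(\theta,I_d)$ with squared loss $\rho(x,\theta)=\|x-\theta\|^2/2$, restricted to $\theta$ in a ball of radius $O(1)$. This family satisfies Assumptions 1--4, so it suffices to lower bound the minimax risk over it. The non-private term follows directly. Any estimator in $\mathcal{T}$ is a measurable function of the $N=\sum_i n_i$ samples plus independent randomness. A van Trees or Bayesian Cram\'er--Rao argument with a smooth prior on the ball then gives $\EE\|\hat\theta-\theta\|^2\ge cd/N=cd/\sum_i n_i$.

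For the privacy term we use the score attack of Cai, Wang and Zhang, adapted to the federated setting. Put a prior $\theta\sim\pi$ with independent coordinates of scale $\asymp 1$ supported in the ball. For each client $i$ and datum $j$, define the attack statistic
\[
A_{ij}=\langle \hat\theta-\theta,\, S_\theta(X_j^{(i)})\rangle,
\]
where $S_\theta(x)=x-\theta$ is the score. We need two bounds on this statistic.

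\textbf{Soundness.} Let $\hat\theta'$ be the output when $X_j^{(i)}$ is replaced by an independent copy. Since $\hat\theta'$ is independent of $X_j^{(i)}$, $\EE A'_{ij}=0$. Client $i$ is $\mu$-GDP with respect to changing one of its data even given all other clients' data, which is exactly Definition~\ref{def:fed-gdp}, and by composition this holds across the $K$ rounds. Together with a tail bound on $A'_{ij}$ obtained by truncation at level $T\asymp \sqrt{\EE\|\hat\theta-\theta\|^2}\,\sqrt{d\log(1/\mu)}$, the GDP-to-$(\varepsilon,\delta)$ conversion gives
\[
\EE A_{ij}\le C\mu\sqrt{\EE\|\hat\theta-\theta\|^2}\,\sqrt{\log(1/\mu)}\,\cdot(\text{factor}).
\]
\textbf{Completeness.} Summing over $j$ and integrating by parts in the prior (Stein's identity) gives
\[
\sum_{j}\EE A_{ij}=\sum_{k}\EE\,\partial_{\theta_k}\EE[\hat\theta_k\mid\theta]\ \text{restricted to client } i\text{'s share}.
\]
Summing over all clients gives $\sum_{i,j}\EE A_{ij}\gtrsim d$ whenever the risk is below a constant.

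The step I expect to be the main obstacle is turning these two bounds into a per-client budget so that each client contributes $n_i\wedge n_i^2\mu^2\log(1/\mu)/d$ rather than a global term. The plan is to bound client $i$'s total contribution $\sum_j\EE A_{ij}$ in two ways: by $n_i\cdot C\mu\sqrt{\log(1/\mu)}\,\sqrt{\EE\|\hat\theta-\theta\|^2}$ from privacy, and by roughly $\sqrt{n_i\,d\,\EE\|\hat\theta-\theta\|^2}$ from Cauchy--Schwarz, with the latter reflecting the information content. We take the minimum of the two and sum over $i$. To get the $d$ correctly in the privacy branch, I would work coordinate-wise, or use the fingerprinting refinement in which per-coordinate priors make each coordinate's attack contribute separately. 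Writing $r=\EE\|\hat\theta-\theta\|^2$, combining with completeness should yield
\[
d\lesssim \sqrt{r}\sum_i \min\bigl(\sqrt{n_i d},\, n_i\mu\sqrt{\log(1/\mu)}\bigr).
\]
A cruder route is Cauchy--Schwarz over clients, giving $d^2\lesssim r\,d\sum_i \min(n_i,\, n_i^2\mu^2\log(1/\mu)/d)$. This is exactly the claimed bound $r\gtrsim d/\sum_i\bigl(C_1n_i\wedge C_2n_i^2\mu^2\log(\mu^{-1})/d\bigr)$.

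Finally, the fresh-batch assumption makes the $K$ rounds act on disjoint data. Each datum is therefore touched by only one $\mu$-GDP release, which explains why $K$ does not appear in the bound.
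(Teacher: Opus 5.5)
Your ingredients are standard, but the step meant to produce the federated rate does not work.

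\textbf{The combination step fails.} Write $a_i=\min\bigl(\sqrt{n_i d},\,n_i\mu\sqrt{\log(1/\mu)}\bigr)$. From $d\lesssim\sqrt{r}\sum_i a_i$, squaring and Cauchy--Schwarz over clients give $d^2\lesssim r\,(\sum_i a_i)^2\le r\,m\sum_i a_i^2$. The extra factor is $m$, not $d$. So you only obtain $r\gtrsim d/\bigl(m\sum_i (n_i\wedge n_i^2\mu^2\log(1/\mu)/d)\bigr)$.

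For equal $n_i=n$ this is $d/(mN)$ in the non-private regime and $d^2/(N^2\mu^2\log(1/\mu))$ in the private regime. The theorem asserts $d/N$ and $md^2/(N^2\mu^2\log(1/\mu))$ respectively.

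This is not an arithmetic slip you can patch. Per-datum soundness only uses that replacing one datum changes the whole output in a $\mu$-GDP way, which every centrally $\mu$-GDP procedure also satisfies. Adding $N$ scalar bounds of size $\mu\sqrt{\log(1/\mu)}\sqrt{r}$ can at best give $(\sum_i n_i)^2$ where the theorem needs $\sum_i n_i^2$. In other words, the score attack on $\hat\theta$ recovers the central-DP lower bound and never sees the federated structure. Likewise, summing the per-client bounds $\sqrt{n_i d r}$ loses up to a factor $\sqrt{m}$ compared with one joint Cauchy--Schwarz.

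\textbf{What is missing.} You need an argument in which information from different clients adds in squared norm, with each client's share bounded quadratically in $n_i$.

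Put $S_i=\sum_j S_\theta(X_j^{(i)})$ and $V_i=\EE[S_i\mid T]$. By the fresh-batch Markov structure, $V_i$ depends only on client $i$'s privatized messages and the server broadcasts. Then $\sum_j\EE A_{ij}=\EE\langle\hat\theta-\theta,V_i\rangle$. A single Cauchy--Schwarz gives $d\lesssim\sqrt{r}\,\bigl(\EE\|\sum_iV_i\|^2\bigr)^{1/2}=\sqrt{r}\,\bigl(\sum_i\EE\|V_i\|^2\bigr)^{1/2}$. The cross terms vanish by conditional independence across clients and the martingale (chain-rule) structure across rounds.

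Each term is bounded in two ways:
\begin{itemize}
\item Trivially, $\EE\|V_i\|^2\le\EE\|S_i\|^2=n_id$.
\item Apply your own soundness argument to the statistic $V_i$ instead of $\hat\theta$: $\EE\|V_i\|^2=\sum_j\EE\langle V_i,S_\theta(X_j^{(i)})\rangle\lesssim n_i\varepsilon\,(\EE\|V_i\|^2)^{1/2}$ plus $\delta$-terms. With $\varepsilon\asymp\mu\sqrt{\log(1/\mu)}$ this gives $\EE\|V_i\|^2\lesssim n_i^2\mu^2\log(1/\mu)$.
\end{itemize}
Combining these yields the claimed bound.

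This is exactly the paper's route. It uses van Trees with the decomposition $I_F(T;\theta)=\sum_s\sum_r I_F(T_s^{(r)}\mid M^{(r-1)})$. Each client-round trace is bounded by $C(n_s^{(r)})^2\mu^2\log(\mu^{-1})\wedge Cdn_s^{(r)}$, taken from Lemma 4.2 of Cai et al.\ as used in Proposition 12 of Xue et al. Finally, $\sum_r(n_s^{(r)})^2\le n_s^2$ absorbs the rounds.
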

The minimax lower bound brings out information-theoretic intuition on the cost of privacy during federated learning. Instead of having $N=\sum_i n_i$ in the lower bound, as we would have in centralized non-private estimation, the sample size contribution of each client is modulated by the effect of privacy. The contribution or effective sample size for each client is the minimum of the actual sample size and a quantity that depends on privacy parameters. The construction of the lower bound is via Van Trees inequality and characterizing the shrinkage in information due to federated privacy requirements. Similar techniques have been used earlier in nonparametric federated learning problems: see \cite{cai2024optimal,auddy2024minimax,xue2024optimal}.

The next two remarks compare the lower bound with the upper bounds obtained via our algorithms. 

\begin{remark}
   (Near-optimality of our methods) 
    As can be seen from the upper bounds in Theorems \ref{th:alg1}, \ref{cor:dp-gd-init-fed-avg}, and \ref{th:newton-avg}, \FedSGD, \FedHybrid, when $d\ge C\log N$, the upper bounds obtained by our algorithms match the lower bound in Theorem~\ref{th:low-bd} upto a factor of $\log(\mu)$ and $K$, $R$, which in our theoretical analyses are also of $\log(N)$ order. Thus our MSE bounds are near optimal, with the only multiplicative discrepancy being of a factor logarithmic in sample size. We note that such gaps in optimality are ubiquitous in the $(\varepsilon,\delta)$ differential privacy literature: see, e.g., \cite{cai2021cost,cai2024optimal}. 

    As pointed out in Remark~\ref{rmk:fednewton-depend-k}, the final MSE of the \FedNewton~estimator does not depend on $K$ and $R$. \FedNewton~thus matches the lower bound, with a sub-optimality of only the $\log(\mu)$ factor. We remind the reader once again of the important caveat that this is possible only when the number of clients $m$ is sufficiently small.
\end{remark}

\begin{remark}
    (Sample splitting) 
    Beyond what is pointed out in the above remark, we note that for a more convenient proof, the lower bound explicitly uses sample splitting at every communication round, while the upper bounds have no such requirement. Using a similar sample splitting in the algorithms would lead to an MSE which is worse by a factor of $R$ in each case, where $R$ refers to the number of client-server communications.
\end{remark}

The three methods, \FedSGD, \FedHybrid, and \FedNewton~now primarily differ in terms of communication cost, which we discuss in the next section.

\subsection{Communication cost of the methods}

The communication costs associated with the three algorithms are summarized in Table \ref{tab:communication}.

\begin{table}[h]
\centering
\begin{tabular}{l|c|c|c}
\hline
Algorithm & Effective Rounds &  Total Communication & Error Bound \\
\hline
\FedSGD
& $K=\Omega(\log Nd)$ & $O(mKd)$
& $O\left(\dfrac{d}{N}+\dfrac{md^2\log Nd}{\mu^2N^2}\right)$ \\

\FedHybrid
& $K_2=\Omega(\log m)$ &  $O(mK_2d)$
& $O\left(\dfrac{d}{N}+\dfrac{md^2\log m}{\mu^2N^2}\right)$ \\

\FedAvg
& $R=O(1)$ &  $O(mRd)$  
& $O\left(\dfrac{d}{N} + \dfrac{m^2 d^2}{N^2}+\dfrac{md^2RK}{\mu^2N^2} \right)$ \\

\FedNewton
& $2$ &  $O(md)$  
& 
$O\left(\dfrac{d}{N}+\dfrac{md^2}{\mu^2N^2}\right)$ \\
\hline
\end{tabular}
\caption{Comparison of total communication cost and mean squared error bounds for the four proposed differentially private federated algorithms.}
\label{tab:communication}
\end{table}

We note that \FedSGD~has high communication cost of $O(mKd)$ due to the need of collecting gradients from the clients at each step. The hybrid method (\FedHybrid), on the other hand, runs the first $K_1$ iterations locally at the clients, which does not require any communication. In stage 2, the algorithm runs $K_2 \ll K$ iterations, resulting in a communication cost of $O(mK_2d)$. We remind the readers that in our setup of empirical risk minimization with scalar valued response, both the parameter  and the gradient are of dimension $d$. This comparison of communication cost continues to hold even if we consider non-convex risk minimization such as training deep neural networks.

\subsection{\FedNewton~adaptation for training neural networks} Given the importance of federated learning for training deep neural networks (DNN), we briefly discuss practical adaptation of the proposed \FedNewton~to DNNs. Note the \FedNewton~algorithm requires computation of the Hessian matrix locally (at the clients), and in particular, does not require communication of the Hessian. Following recent works in federated learning for DNN and language model training \citep{jhunjhunwala2024fedfisher}, we approximate the Hessian with the sample average of the inner product of the derivative (score function) with its transpose. However, unlike \cite{jhunjhunwala2024fedfisher}, we do not further approximate this matrix by its diagonal, since we do not need to communicate the Hessian to the central server, and the Newton iteration is performed locally at the client. Our method thus achieves a better communication-accuracy tradeoff.

Further, in our data examples on MNIST and CIFAR10 data with convolution neural networks, we found that the full (approximated) Hessian matrix performs better than when only the diagonal is kept as an approximation. For numerical stability, we add a small ridge parameter and a damping parameter to control the step size. The entire method is described in Algorithm \ref{alg:fednewton_nn}. In real data examples, we further repeat the entire algorithm (with both stages) for a few iterations instead of finishing all stage 1 runs first and then doing stage 2 iterations. Note that since the loss function of deep neural networks is nonconvex and our Assumptions 1-4 do not hold, the theoretical bounds derived earlier for the four methods do not hold for the case of neural networks.  

\begin{algorithm}[h]
\caption{\FedNewton-NN: Neural network adaptation}
\label{alg:fednewton_nn}
\DontPrintSemicolon

\KwIn{Client datasets $\{X^{(i)}\}_{i=1}^m$ with sizes $n_i$; step size $\eta$; rounds $R$; local steps $K$; privacy $\mu$; loss $\rho(\cdot,\theta)$; clipping bound $B$; ridge parameter $\lambda$; damping parameter $\alpha$.}

\KwOut{Global model $\theta^{(R+1)}$}

\BlankLine

\textbf{Initialization:} $\theta^{(0)}=0$, \quad 
$\sigma_i=\frac{2B\eta\sqrt{2RK}}{\mu n_i}$.

\BlankLine

\textbf{Stage 1 (DP-\FedAvg):} Run DP-\FedAvg~ for $R$ rounds with noise $\sigma_i$ to obtain $\theta^{(R)}$.

\BlankLine

\textbf{Stage 2 (Approximate Newton refinement):}

\For{each client $i$ \textbf{in parallel}}{
    Compute
    \[
    g_i=\frac{1}{n_i}\sum_{x\in X^{(i)}} \nabla\rho(x,\theta^{(R)}), 
    \quad
   h_i = \left(
\frac{1}{n_i}\sum_{x\in X^{(i)}} 
\nabla\rho(x,\theta^{(R)})\,\nabla\rho(x,\theta^{(R)})^\top
\right)
    \]
    
    Clip $\bar g_i = g_i \cdot \min\{1, B/\|g_i\|_2\}$ \;
    
    Update
    \[
    \theta_i^{(\mathrm{New})}
    =
    \theta^{(R)}
    -
    \alpha (h_i+\lambda)^{-1}\bar g_i
    +
    \frac{2B^{(\mathrm{New})}}{\mu n_i} Z_i,
    \quad
    B^{(\mathrm{New})}=\frac{2\sqrt{2}B}{\tau_1}
    \]
}

\BlankLine

\textbf{Aggregation:}
\[
\theta^{(R+1)}=\sum_{i=1}^m \frac{n_i}{N}\theta_i^{(\mathrm{New})}.
\]

\BlankLine

\textbf{Return:} $\theta^{(R+1)}$.

\end{algorithm}

\section{Simulation Studies}

We conduct multiple simulation studies to evaluate the finite-sample performance of the considered methods and validate our theoretical results in finite samples under Poisson and logistic regression models. The generalized linear model (GLM) family takes the form $  g\!\left(\mathbb{E}\!\left[Y \mid X\right]\right) = X\boldsymbol{\beta},$
where $g(\cdot)$ denotes the canonical link function, $X$ is the design matrix of covariates and intercept, and $\boldsymbol{\beta}$ is the vector of model coefficients. Both the logistic regression and Poisson regression are special cases of GLM. We estimate the parameter $\boldsymbol{\beta}$ through the maximum likelihood estimator, which is a type of M estimator with the negative log likelihood of the model serving the role of the loss function. Since we know the true parameter vector $\boldsymbol{\beta}$ in simulation, our goal is to evaluate the performance of the methods in terms of empirical MSE for estimating this parameter vector. In all settings, a total sample of size $N$ is first generated and then partitioned across $m$ clients. We vary the number of clients and the local sample sizes in various scenarios, and apply the four methods, \FedSGD, \FedHybrid, \FedAvg, and \FedNewton, to estimate the model parameters.

\begin{figure}[h]
    \centering
    \begin{subfigure}{0.47\textwidth}
        \centering
        \includegraphics[width=\textwidth]{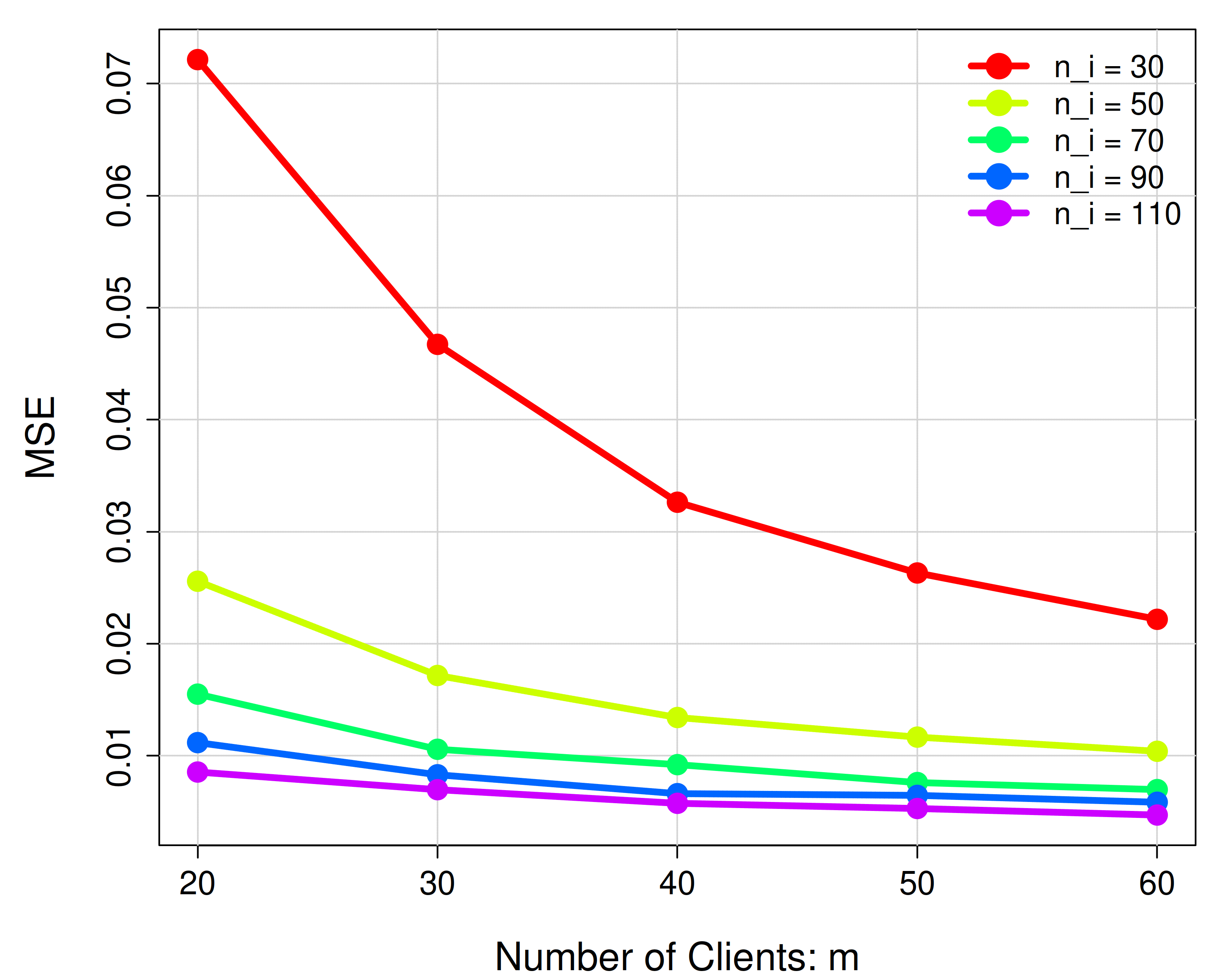}
        \caption{\FedSGD, same $n_i$}
    \end{subfigure}%
    \hfill
    \begin{subfigure}{0.47\textwidth}
        \centering
        \includegraphics[width=\textwidth]{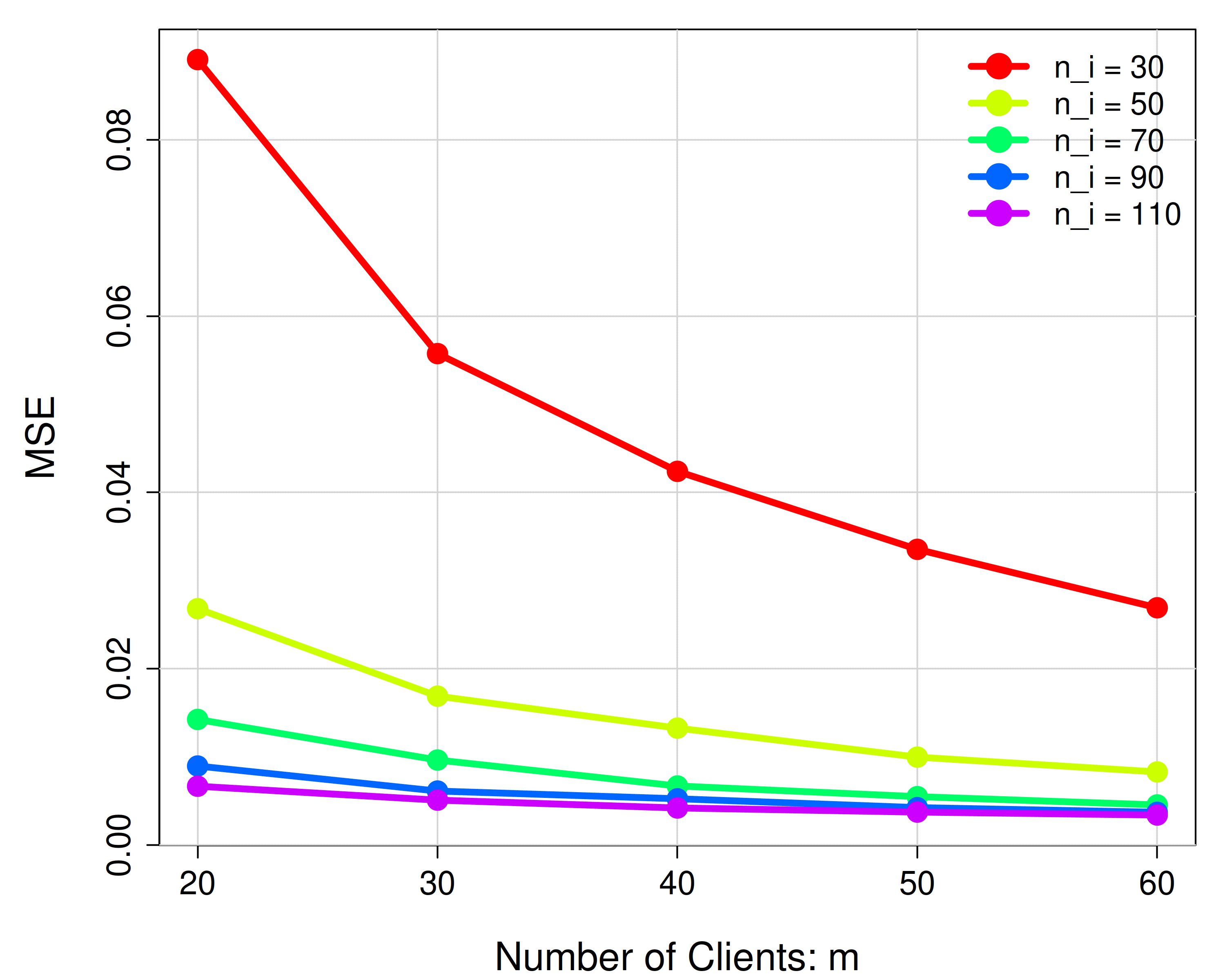}
        \caption{\FedHybrid, same $n_i$}
    \end{subfigure}%

    \vspace{0.6cm}

    \caption{Empirical MSE results of DP version of \FedSGD\ and \FedHybrid\ under equal sample sizes across clients in Logistic Regression.}

\label{fig:log_ag12_equal_ni}
\end{figure}

Each simulation study is repeated 100 times. For each repetition, when implementing \FedSGD, the number of iterations is set to $K = 50$ with step size $\eta = 0.5$. For \FedHybrid, Stage One consists of $K_1 = 30$ local iterations for each client with step size $\eta_1 = 0.5$, while Stage Two uses $K_2 = 20$ iterations with step size $\eta_2 = 0.5$. For \FedAvg, the number of local iterations is $K_3 = 50$, and the total number of communication rounds is set to $R = 2$, leading to an effective 100 iterations. For \FedNewton, the first stage uses \FedAvg~ with $K_4 = 50$ local iterations and $R = 1$, followed by just one private Newton step. In all cases, the scale of Gaussian noise to be added to the gradient iterations is calibrated accordingly. We will generally compare the performance of \FedHybrid~against \FedSGD, and \FedNewton~against \FedAvg and avoid comparison across those pairs due to differences in communication cost. The reported results are obtained by averaging the empirical squared estimation error over the 100 repetitions.

\subsection{General Federated Simulation Setup}

A pooled dataset of size \( N \) is simulated from a GLM with the canonical link function. Each data point in this pooled dataset includes a response variable \( y \) and its associated covariate vector \( \b x \).  Let \(\boldsymbol{\beta}_{\text{true}} \in \mathbb{R}^{p}\) denote the true parameter vector, where the first component corresponds to an intercept term. We first generate a covariate matrix \(\mathbf{C} \in \mathbb{R}^{N \times (p-1)}\), whose entries are independently sampled from a normal distribution with mean zero and variance \(\sigma_c^2\). An intercept column of ones is then added to this covariate matrix to obtain $\mathbf{X}$. After the pooled data are generated, they are split into \(m\) clients according to the local sample sizes \( \{n_i\}_{i=1}^{m} \). Therefore, client \( i \) receives a design matrix \( X^{(i)} \in \mathbb{R}^{n_i \times p} \) and a response vector \( y^{(i)} \in \mathbb{R}^{n_i} \). 

To control the global sensitivity, gradient clipping is applied at the initial value $\beta^{(0)} = 0$ for each client. For each observation $j =1, ..., n_i$ in client \( i = 1, \ldots, m \), we first compute its gradient norm as,
\(
\left\| \nabla \rho(x_j^{(i)}, y_j^{(i)}; \beta^{(0)}) \right\|_2.
\)
The local clipping threshold $B_i$ is then defined as the 90th percentile of the local sample gradient norms. Then the global sensitivity bound is defined as  
\(
B = \max_{1 \le i \le m} B_i.
\)
Any gradient vector \( g \in \mathbb{R}^p \) is then clipped using this global bound.
This step allows that all gradients are uniformly bounded by $B$, which is necessary to control the sensitivity before applying the Gaussian mechanism. We note that this data driven global clipping bound uses client data and therefore will leak information, however, following standard practice in the literature \citep{noble2022differentially}, we ignore this minor privacy leakage in our theoretical privacy guarantees. As an alternative, one can use a part of the privacy budget, say, $\mu/10$ to set $B$, and the rest for the estimation procedure. This would leave our theoretical guarantees and algorithmic properties unchanged.

\begin{figure}[h]
    \centering

    \begin{subfigure}[t]{0.47\textwidth}
        \centering
        \includegraphics[width=\textwidth]{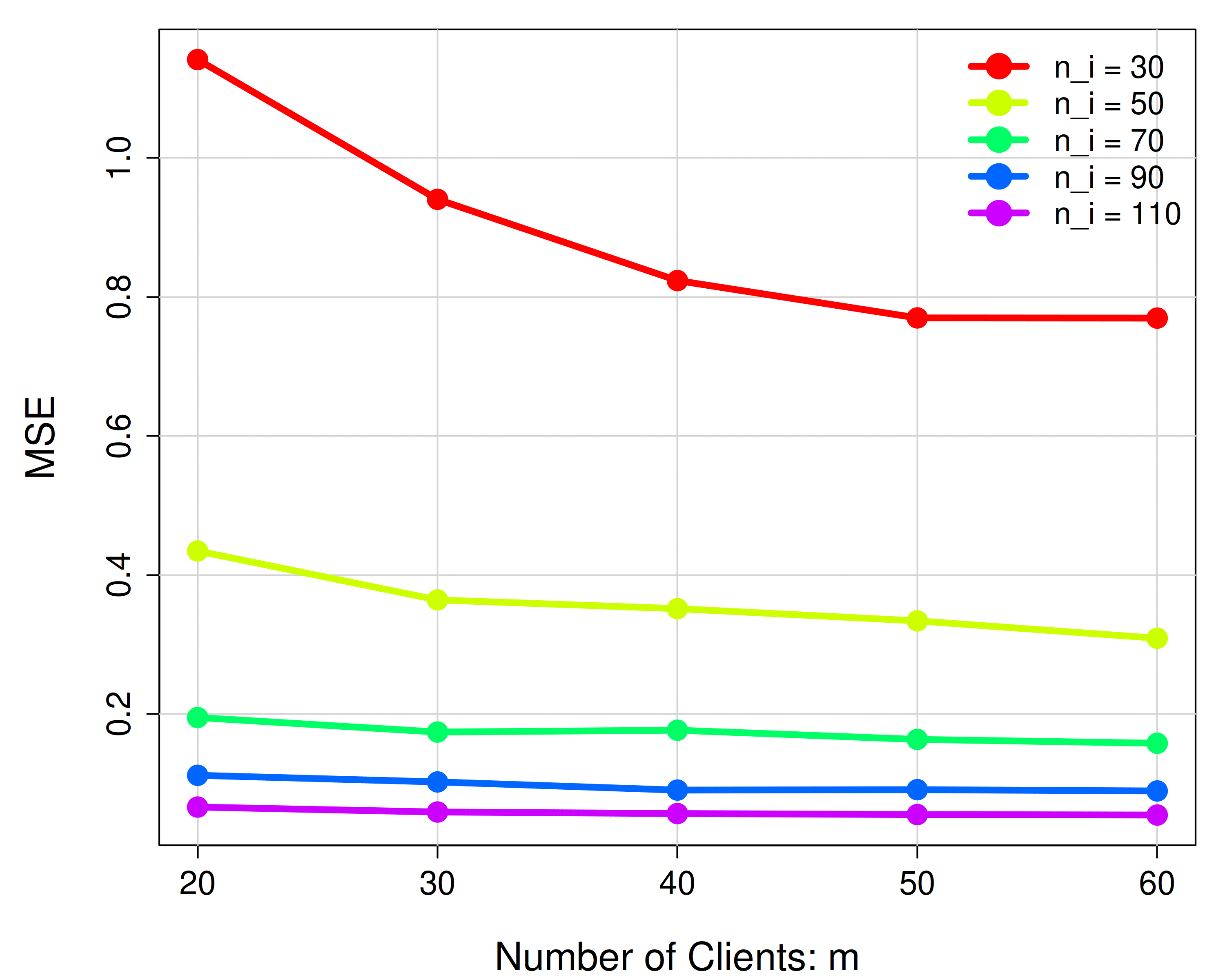}
        \caption{\FedAvg, same $n_i$}
    \end{subfigure}%
 \hfill
    \begin{subfigure}[t]{0.47\textwidth}
        \centering
        \includegraphics[width=\textwidth]{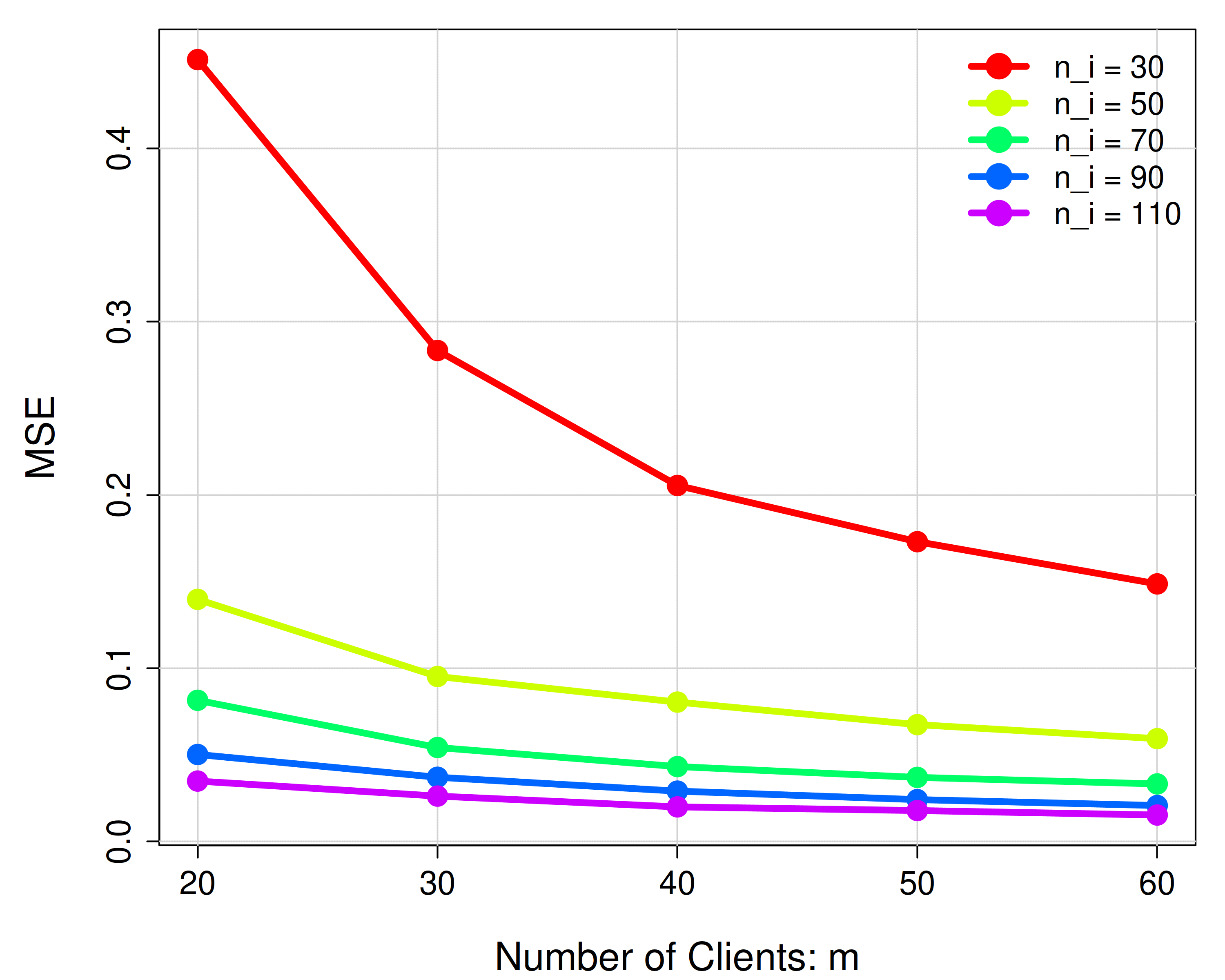}
        \caption{\FedNewton, same $n_i$}
    \end{subfigure}

    \caption{Empirical MSE results of \FedAvg\ and \FedNewton\ under equal client sample sizes in Logistic Regression.}
    \label{fig:log_ag34_equal_ni}
\end{figure}

In the simulation experiments, different numbers of clients and local sample sizes are considered. In the first set of simulations for both logistic and Poisson GLM, we vary number of clients
\(
m \in \{20, 30, 40, 50, 60\}.
\)
For the local sample sizes, two cases are examined. The first case assumes that all clients have the same local sample size of $n$ with \(n \in \{30, 50, 70, 90, 110\}.\)
The second case allows variation across clients in the local sample sizes. In this case, the sample sizes for clients \(n_i\) are independently sampled from a discrete uniform distribution. Specifically, the following ranges are considered:
\(
[10, 90],\ [20, 80],\ [30, 70],\ [60, 160],\ [80, 140],\ [100, 120].
\)

In the second set of simulations to better compare the four methods, we consider a setting of $m$ increasing from 60 to 140, under three different local sample size regimes: equal local sample sizes with \(n_i = 400\), local sample sizes generated from a discrete uniform distribution on \([100,700]\), and local sample sizes generated from a log-normal distribution with parameters \((5.5,1)\). These settings are used to compare the estimation performance of \FedSGD, \FedHybrid, \FedAvg, and \FedNewton\ under different client-size distributions with mean local sample size around 400 for both logistic and Poisson GLMs.

The third and fourth set of simulations evaluate the tradeoffs we observed theoretically in the previous sections in finite samples. For this we first vary the number of iterations to evaluate the behavior of the methods in the third set while we vary the number of clients without changing the total sample size across all clients in the fourth simulation.

\subsection{Logistic Regression performance of federated methods}
\label{subsec:sim_logistics_regression}

\begin{figure}  
 \centering
    \begin{subfigure}{0.33\textwidth}   \includegraphics[width=\textwidth]{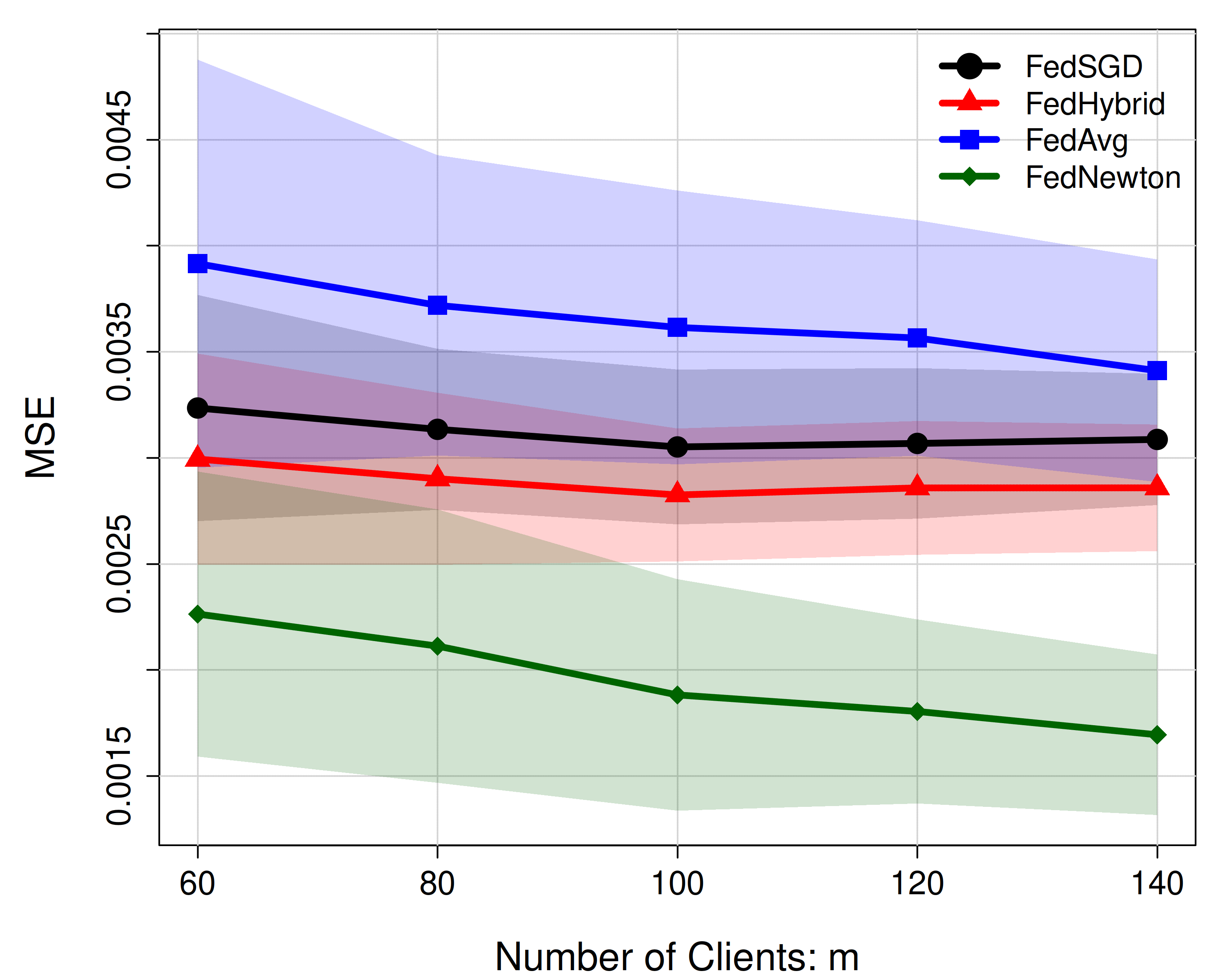}
        \caption{Random sample size}
    \end{subfigure}%
    \begin{subfigure}{0.33\textwidth}    \includegraphics[width=\textwidth]{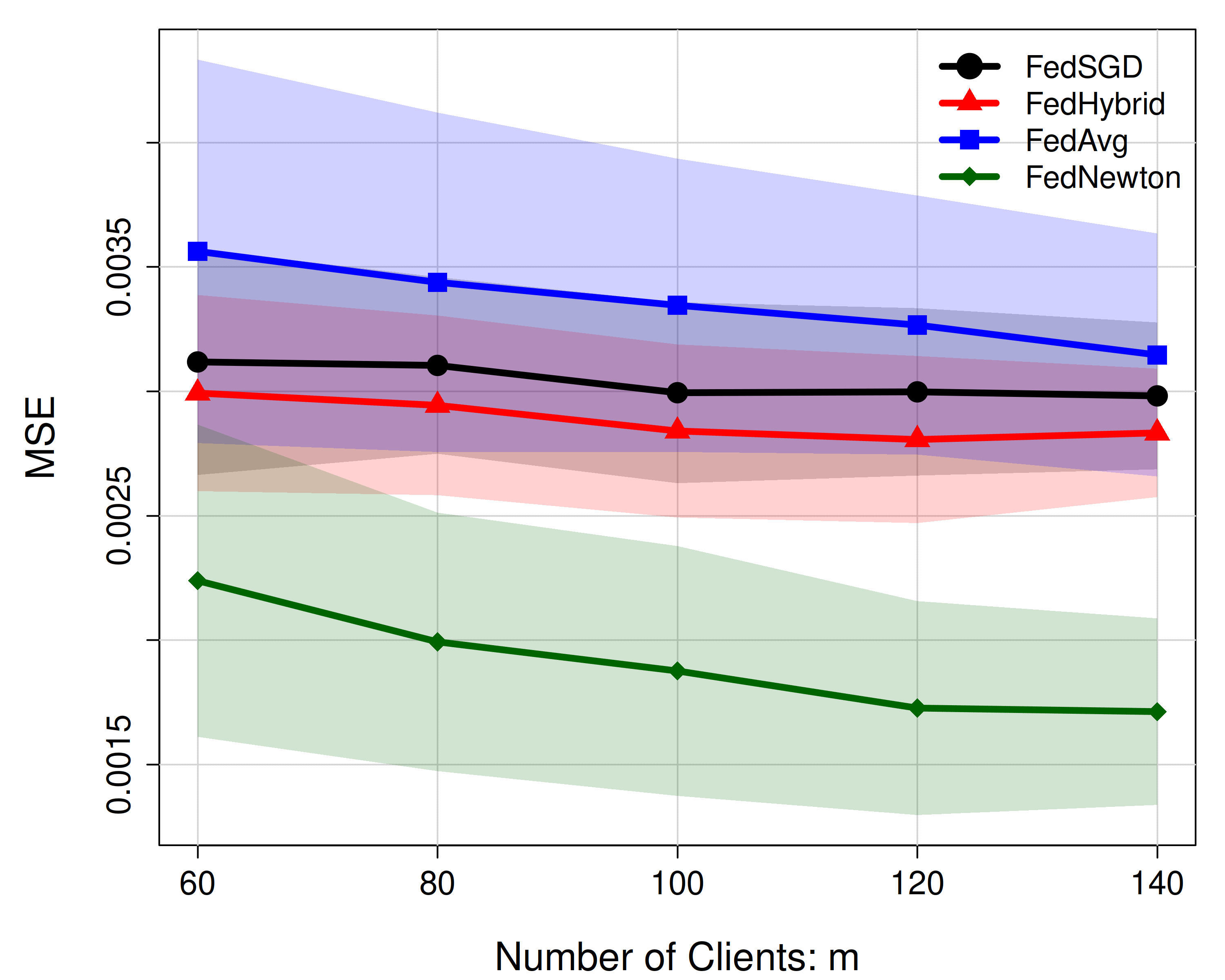}
        \caption{Equal sample size}
    \end{subfigure}%
    \begin{subfigure}{0.33\textwidth}
\includegraphics[width=\textwidth]{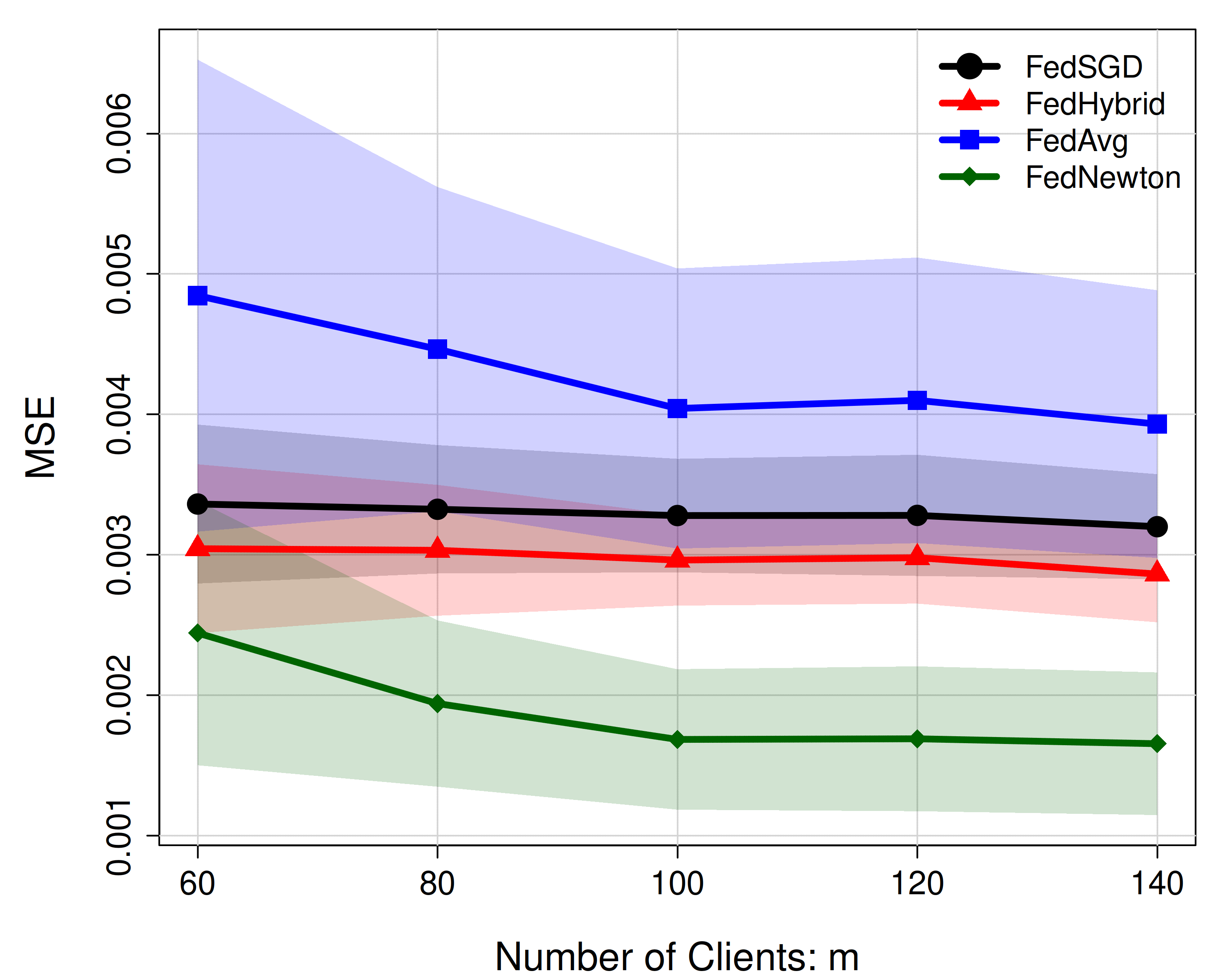}
        \caption{Heterogeneous sample sizes}
    \end{subfigure}
    \caption{MSE comparison of \FedSGD, \FedHybrid, \FedAvg, and \FedNewton\ for logistic regression with local sample sizes having mean around 400 under different distributions: (a) Uniformly distributed sample sizes, (b) Equal sample sizes, and (c) Lognormally distributed sample sizes.
    }
    \label{fig:log_ag1234_400}
\end{figure}

We remind the readers that for logistic regression setting, the probability of $y_{ij}=1$ is defined by the logit link function as $
\pi_{ij} = \frac{\exp(\mathbf{x}_{ij}^\top \boldsymbol{\beta}_{\text{true}})}{1 + \exp(\mathbf{x}_{ij}^\top \boldsymbol{\beta}_{\text{true}})}.$ The binary responses are then generated independently according to $
y_{ij} \mid \mathbf{x}_{ij} \sim \mathrm{Bernoulli}(\pi_{ij}),
\quad j = 1,\dots,n_i,\; i = 1,\dots,m.$
The performance of the federated $M$-estimator algorithms is evaluated by assessing how the MSE varies with the number of clients $m$ while keeping the local sample size $n_i$ fixed so that the total sample size increases. We display the results over two figures, clubbing similar methods together. In Figure \ref{fig:log_ag12_equal_ni}, we display the MSE of \FedSGD\ and \FedHybrid\ methods with increasing values of $m$ for different values of the common client sample size $n_i$. In Figure \ref{fig:log_ag34_equal_ni}, we plot the same results for the other two methods \FedAvg\ and \FedHybrid. In Appendix, we show similar figures for both cases with variable client sample sizes $n_i$.  The simulation results show that larger sample sizes lead to lower MSE, suggesting that estimation accuracy improves when either more clients are included or each client has access to more local data. Even after allowing for variation in local sample sizes, the overall MSE trend remains stable across all four algorithms, showing a decreasing trend as $m$ increases. These results suggest a baseline improvement in all federated methods with increasing sample size, either in terms of the number of clients or the number of samples per client, which is expected from the theoretical results.


\begin{figure}[h]
    \centering

    \begin{subfigure}[t]{0.47\textwidth}
        \centering
        \includegraphics[width=\textwidth]{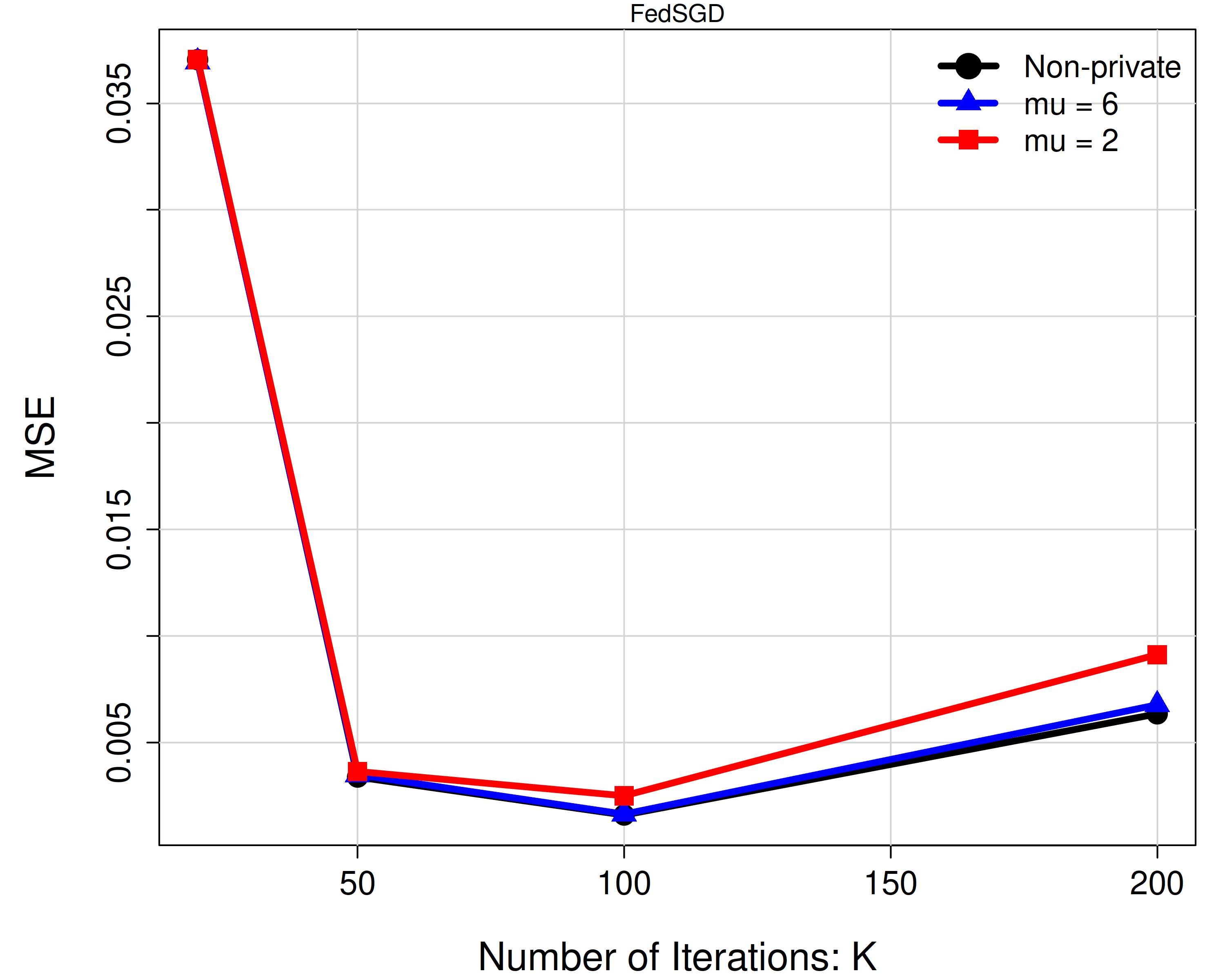}
        \caption{\FedSGD}
    \end{subfigure}
 \hfill
    \begin{subfigure}[t]{0.47\textwidth}
        \centering
        \includegraphics[width=\textwidth]{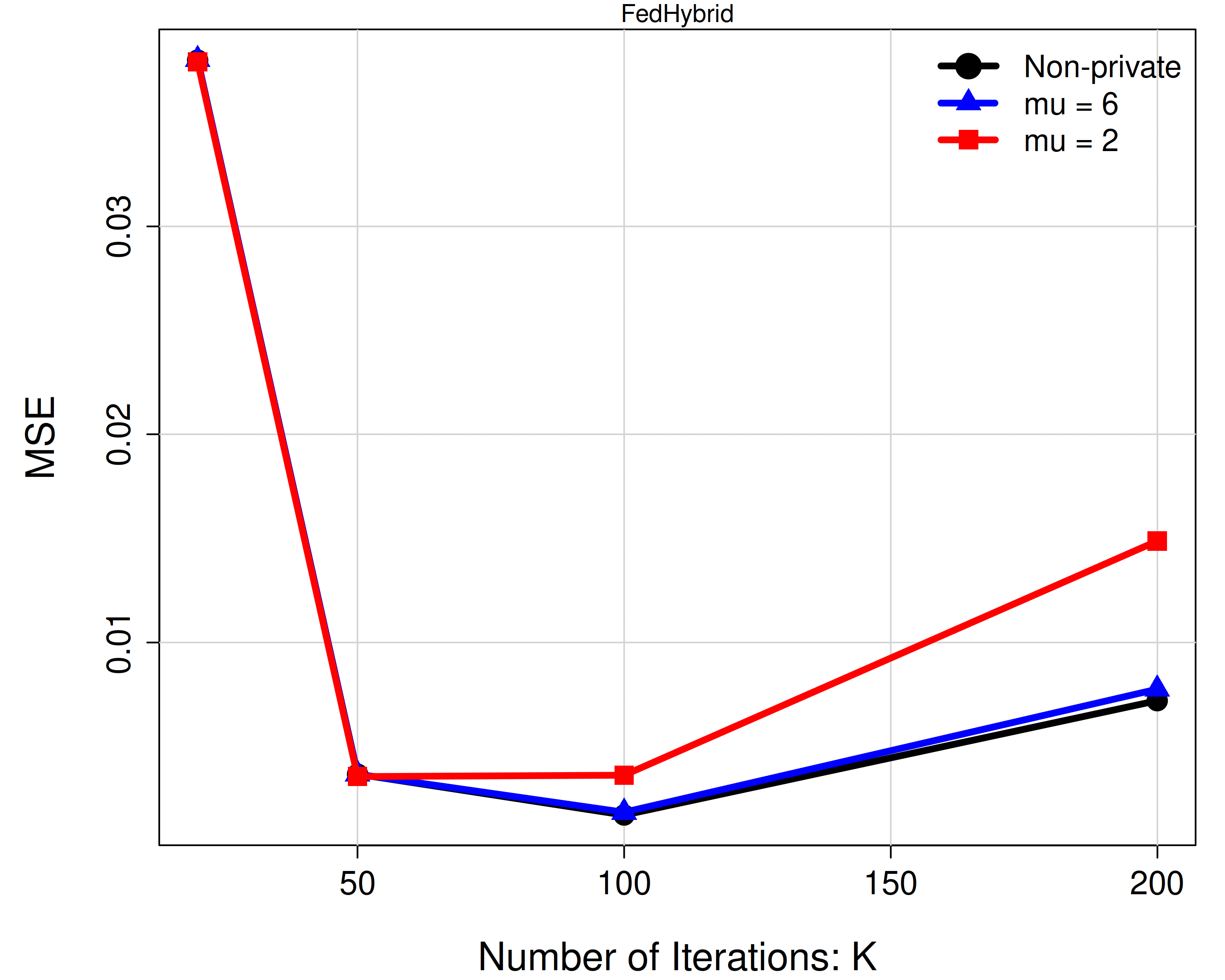}
        \caption{\FedHybrid}
    \end{subfigure}

    \begin{subfigure}[t]{0.47\textwidth}
        \centering
        \includegraphics[width=\textwidth]{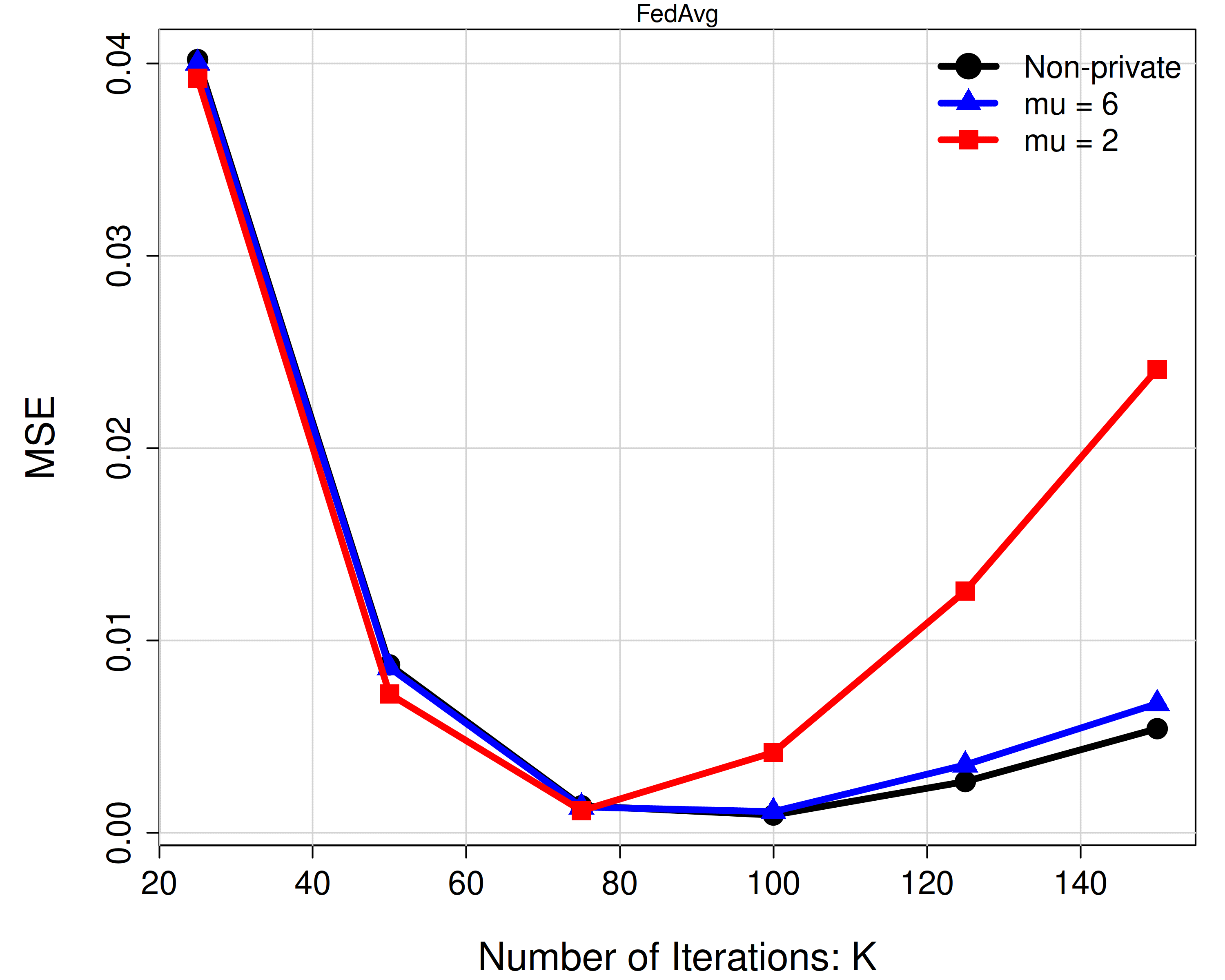}
        \caption{\FedAvg}
    \end{subfigure}
  \hfill
    \begin{subfigure}[t]{0.47\textwidth}
        \centering
        \includegraphics[width=\textwidth]{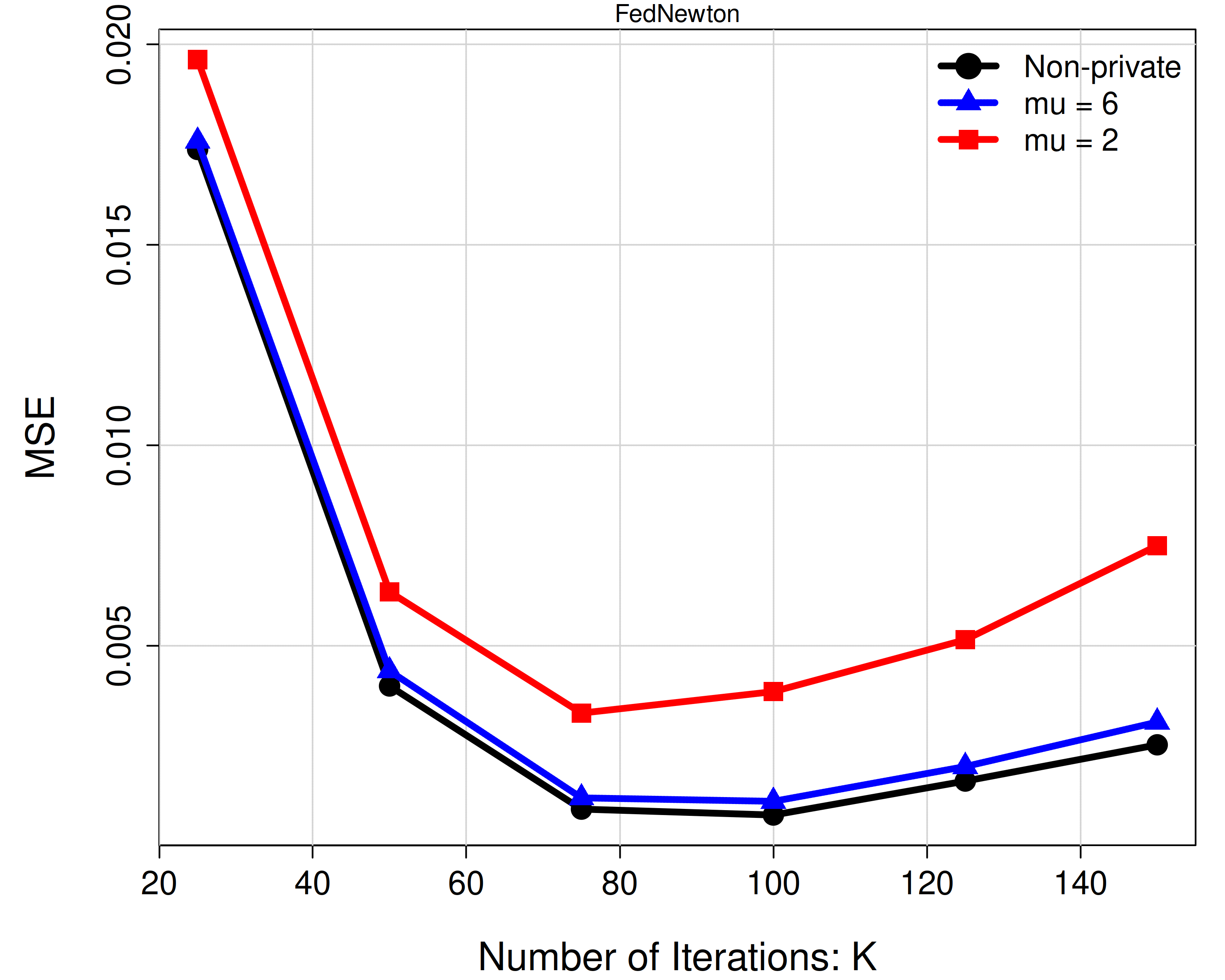}
        \caption{\FedNewton}
    \end{subfigure}

    \caption{The empirical MSE of the methods with increasing number of iterations illustrating the tradeoff between optimization quality and privacy.}
    \label{fig:log_accuracy_pribacy_tradeoff}
\end{figure}

\subsection{Comparison between \FedSGD, \FedHybrid, \FedAvg, and \FedNewton}

After verifying each method works well, we now compare the MSE of the estimators produced by \FedSGD, \FedHybrid, \FedAvg, and \FedNewton\ under different distributions of local client sample sizes. 
The results are presented in Figure~\ref{fig:log_ag1234_400} for logistic regression. 
In all three cases, the average local sample size is approximately $400$. 
For this setting, we consider three distributions of local sample sizes $n_i$: equal sample sizes, discrete uniformly distributed sample sizes, and log-normal distributed sample sizes. 
Specifically, we consider the following distributions for the local sample size $n_i$: 
(i) equal local sample sizes with $n_i = 400$; 
(ii) a discrete uniform distribution $[100, 700]$; and 
(iii) a log-normal distribution with mean $5.5$ and standard deviation $1$ on the log scale.

 Among the two communication heavy methods, \FedHybrid\ outperforms \FedSGD\ in terms of empirical MSE. We remind the readers that in \FedHybrid, clients first compute private local estimators, which are averaged at the server before further \FedSGD\ updates, which saves in communication cost, yet achieves a superior performance in our simulation.

Among the two communication-efficient methods, \FedNewton\ outperforms \FedAvg. Further, among the methods considered, \FedNewton\ achieves the best performance, while \FedAvg~ has the worst performance.  Recall that compared with \FedAvg, \FedNewton\ further refines the estimator by performing an additional Newton step after a \FedAvg\ initialization. This second-order update improves estimation accuracy and leads to lower MSE than \FedAvg\ in this setting. 

\begin{figure}[h]  
    \centering

    \begin{subfigure}{0.32\textwidth}
        \centering
        \includegraphics[width=\textwidth]{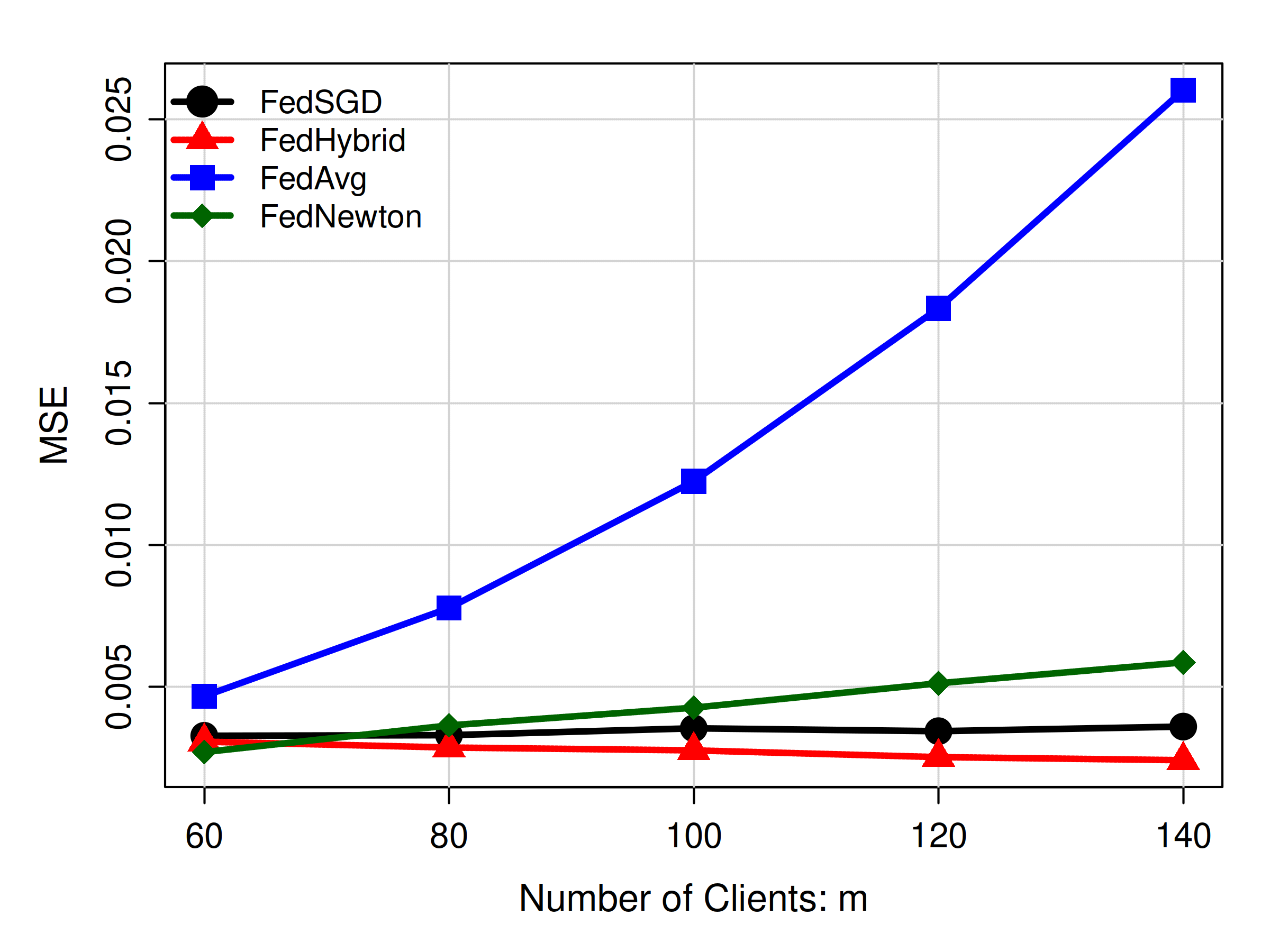}
        \caption{Equal $n_i$}
    \end{subfigure}%
    \begin{subfigure}{0.32\textwidth}
        \centering
        \includegraphics[width=\textwidth]{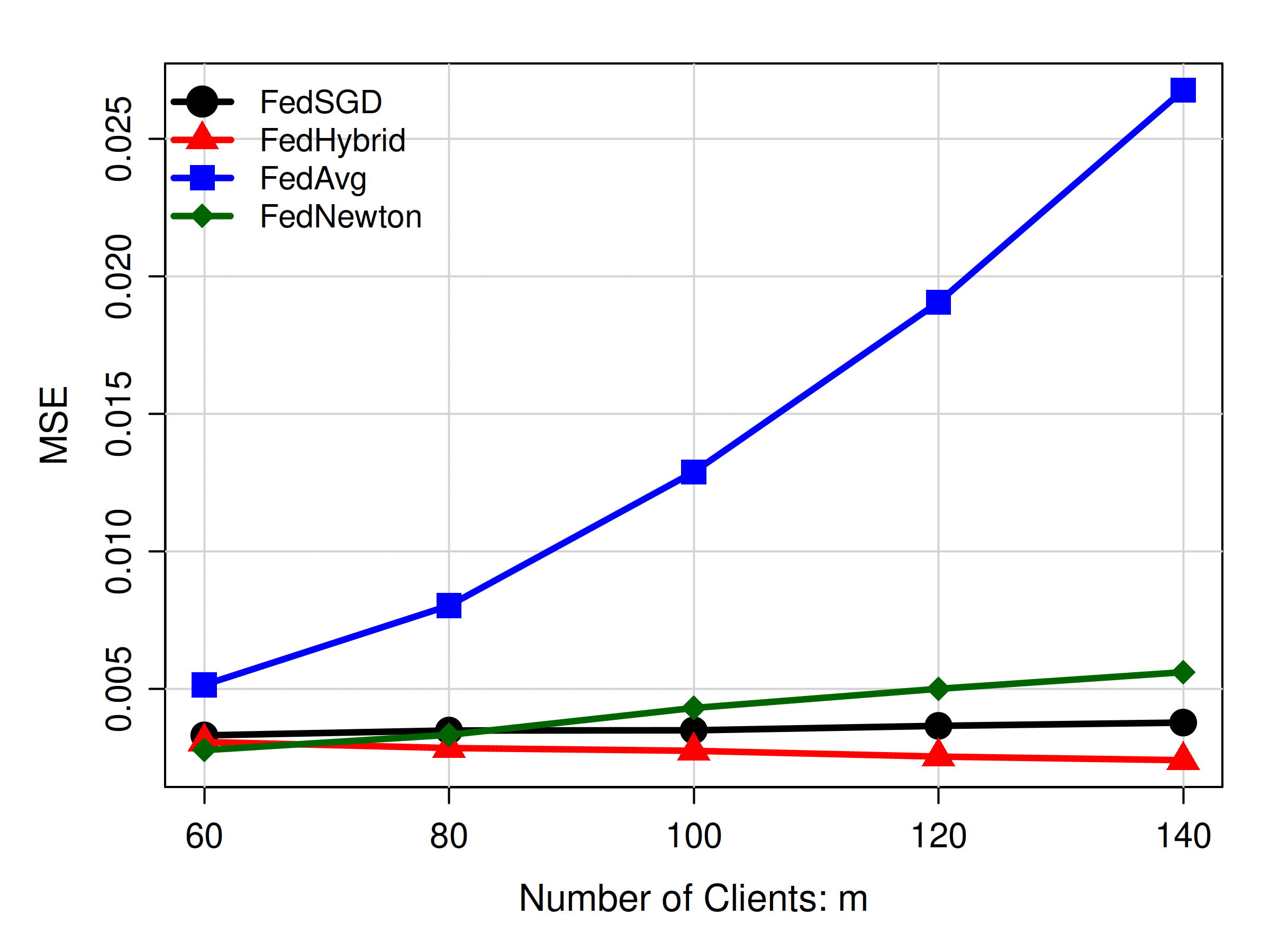}
        \caption{Uniformly distributed $n_i$}
    \end{subfigure}%
    \begin{subfigure}{0.32\textwidth}
        \centering
        \includegraphics[width=\textwidth]{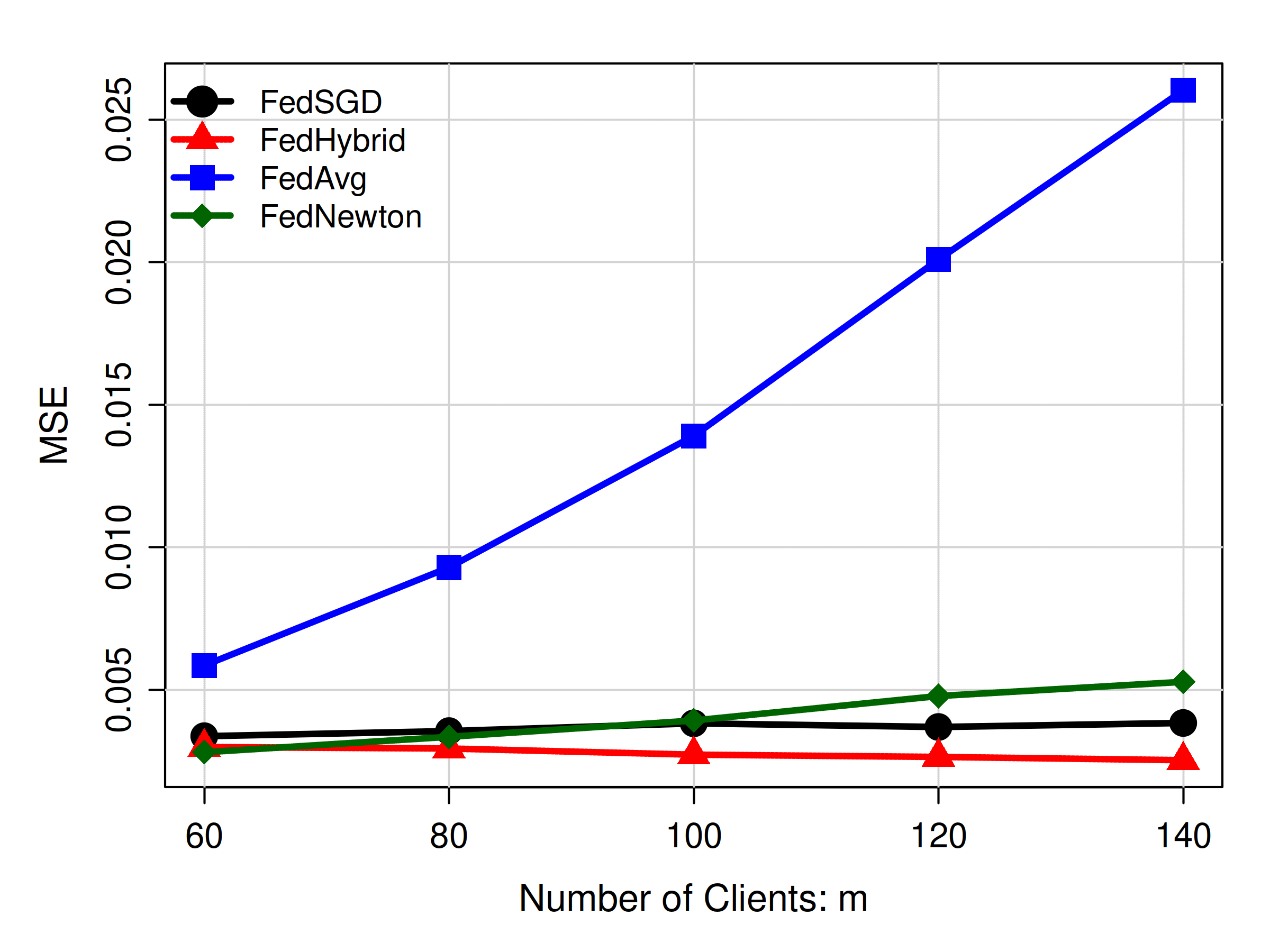}
        \caption{Lognormal distributed $n_i$}
    \end{subfigure}

\caption{MSE comparison of \FedSGD, \FedHybrid, \FedAvg, and \FedNewton\ under fixed total sample size \(N=20000\) with varying number of clients \(m\): (a) equal local sample sizes, (b) uniformly distributed local sample sizes, and (c) lognormally distributed local sample sizes.}
    \label{fig:log_fixedN}
\end{figure}

\subsection{Accuracy and Privacy Trade-off with Number of Iterations} 
We next validate the theoretical results showing that increasing the number of iterations leads to a phenomenon of conflict between increased accuracy due to better optimization and decreased accuracy due to higher privacy noise. In Figure \ref{fig:log_accuracy_pribacy_tradeoff} we show this trade-off between optimization performance and privacy noise for the logistic regression setting. When the number of iteration $K$ is small, the MSE decreases as $K$ increases. This is expected because additional steps help improve the estimator. However, after a certain point, further increasing $K$ can increase the MSE. This is because a larger number of iterations also leads to more accumulated privacy noise.

A smaller value of $\mu$ gives stronger privacy and requires more Gaussian noise. As shown in the figure, the curve for $\mu=2$, while remaining similar to the non-private and $\mu=6$ curves for smaller $K$, sharply goes above those curves for larger $K$, especially for $K>100$. This shows that although a larger $K$ can improve optimization, too many iterations may add too much privacy noise and harm the final estimator. Overall, the results illustrate the trade-off among accuracy, privacy, and $K$.

\subsection{Effect of the Number of Clients under Fixed Total Sample Size}

To illustrate the effect of federation across clients, we conduct a simulation study with a fixed total sample size $N$, while allowing the number of clients $m$ to increase. The total sample size is fixed at \(N=20000\). For each value of \(m\), we consider three local sample-size allocation schemes: equal sample sizes, uniformly distributed sample sizes, and lognormally distributed sample sizes. In this experiment, increasing \(m\) redistributes the same total amount of data across more clients, instead of increasing the total sample size as was the case in section 4.3. This is a very relevant scenario for modern Federated Learning applications, where we typically have many small client devices, each with a small amount of data.

The results are consistent with the theoretical comparison. \FedAvg\ shows the sharpest increase in MSE as \(m\) grows, which agrees with the \(O(m^2d^2/N^2)\) term in its error bound. When \(N\) is fixed, this term increases quadratically with \(m\), indicating that \FedAvg\ is more sensitive to splitting the same data across more clients. In contrast, \FedSGD, \FedHybrid, and \FedNewton\ remain relatively stable as \(m\) increases. This is consistent with their \(m\)-dependent terms in the error bound, which are \(O(md^2\log(Nd)/(\mu^2N^2))\) for \FedSGD, \(O(md^2\log m/(\mu^2N^2))\) for \FedHybrid, and \(O(md^2/(\mu^2N^2))\) for \FedNewton. Overall, this experiment shows that redistributing the same total sample size across more clients affects the four private methods differently. The performance of \FedAvg~deteriorates quickly, whereas our new communication-efficient solution \FedNewton~significantly mitigates this decline.

\subsection{Poisson GLM}

\begin{figure}[h]  
    \centering

    \begin{subfigure}{0.32\textwidth}
        \centering
        \includegraphics[width=\textwidth]{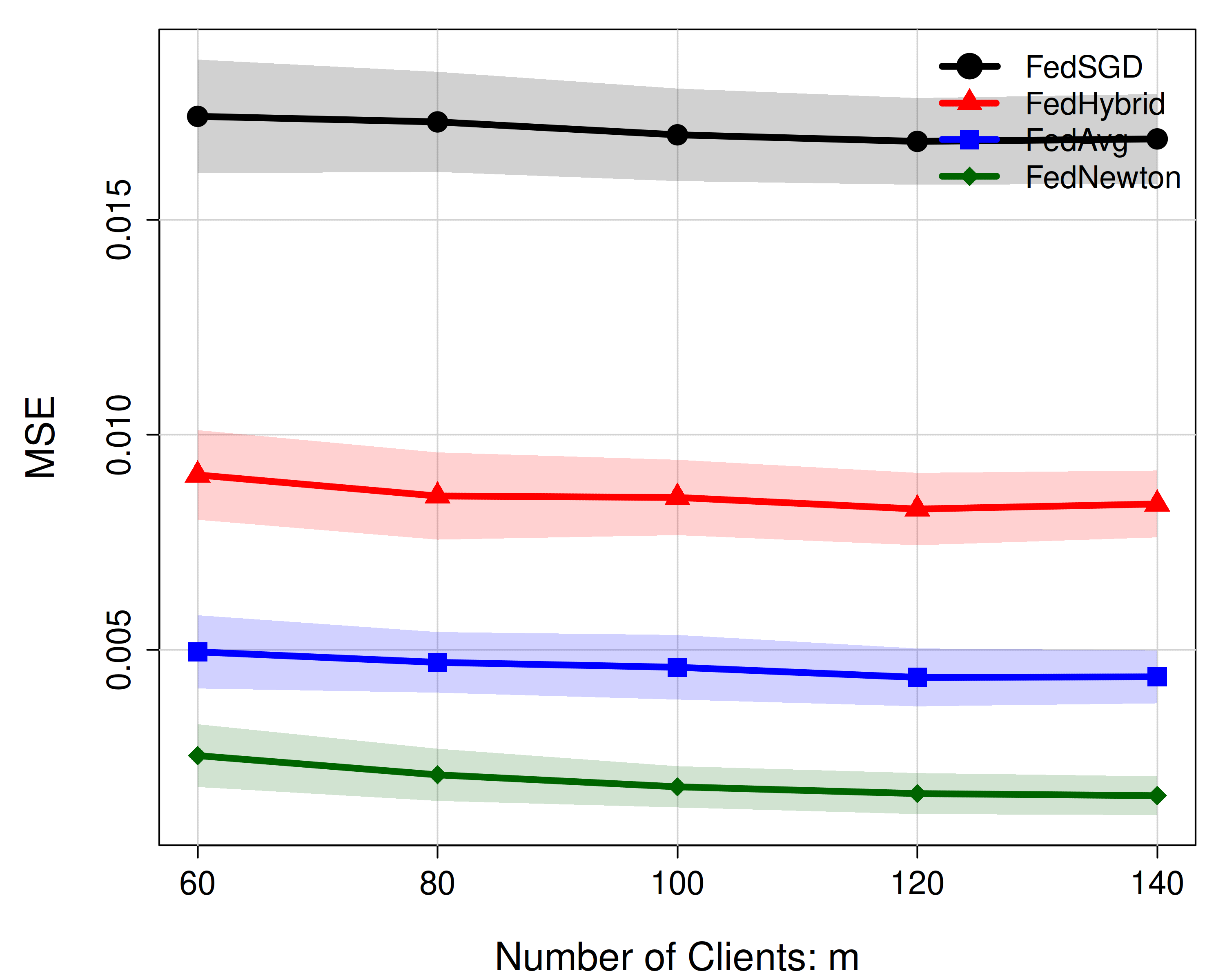}
        \caption{Uniformly distributed $n_i$}
    \end{subfigure}%
    \begin{subfigure}{0.32\textwidth}
        \centering
        \includegraphics[width=\textwidth]{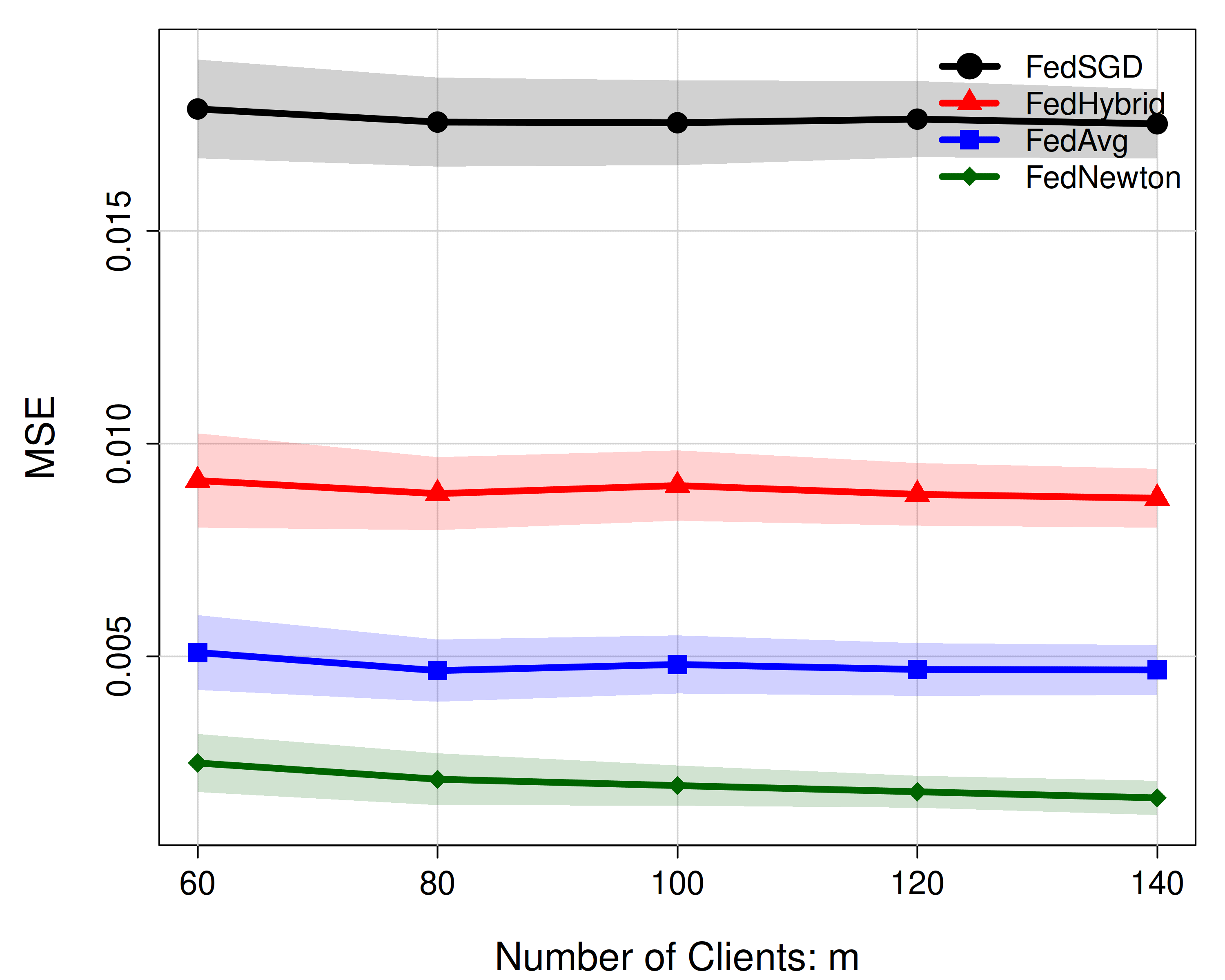}
        \caption{Equal $n_i$}
    \end{subfigure}%
    \begin{subfigure}{0.32\textwidth}
        \centering
        \includegraphics[width=\textwidth]{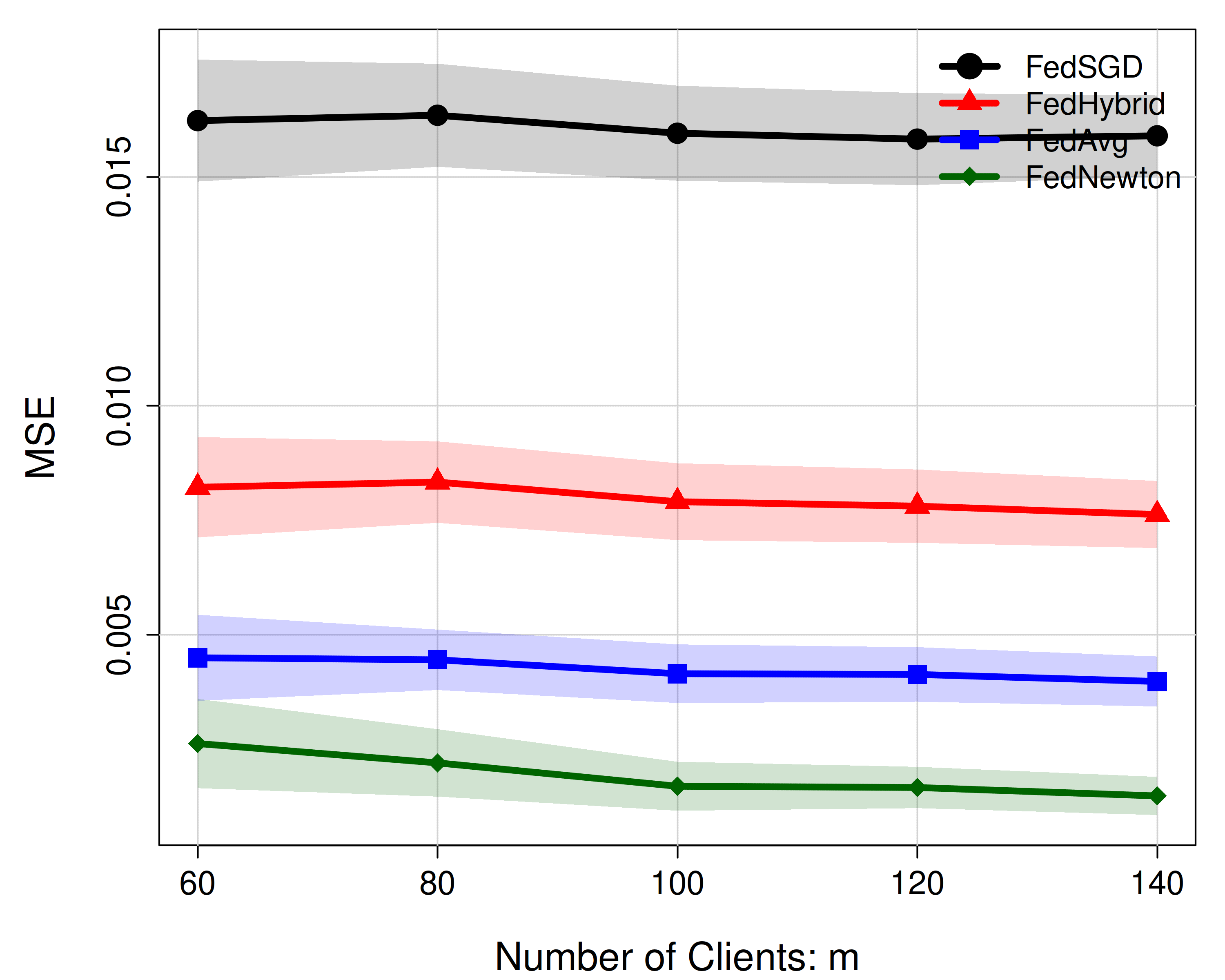}
        \caption{Lognormal distributed $n_i$}
    \end{subfigure}

    \caption{
    Poisson GLM: Empirical MSE comparison of \FedSGD, \FedHybrid, \FedAvg, and \FedNewton\ for local sample sizes with mean around 400 under different local sample size distributions: equal sample sizes, uniformly distributed sample sizes, and lognormally distributed sample sizes.
    }
    \label{fig:poi_ag1234_400}
\end{figure}

In the Poisson GLM the link function is log. Accordingly, the response variables are generated independently according to $
y_{ij} \mid \mathbf{x}_{ij} \sim \mathrm{Poisson}(\lambda_{ij}),
\quad j = 1,\dots,n_i,\; i = 1,\dots,m$, with the conditional mean of the Poisson distribution given by $
\lambda_{ij} = \exp(\mathbf{x}_{ij}^\top \boldsymbol{\beta}_{\text{true}}).$

To evaluate the performance of the proposed federated M-estimator algorithms, we compute the empirical MSE as the number of clients $m$ changes both for homogeneous sample sizes and different sample sizes. For the four proposed algorithms, \FedSGD, \FedHybrid, \FedAvg, and \FedNewton, the empirical MSE of the corresponding final estimates are plotted against increasing values of $m$ under different values of $n_i$ in Figure \ref{fig:poi_ag12_equal_different_ni} and Figure \ref{fig:poi_ag34_equal_different_ni} in the Appendix \ref{appsim}. The results indicate that larger local sample sizes lead to lower MSE across all values of m, suggesting that the estimation accuracy improves when there are more clients or when each client has more local data. Moreover, variation in local sample sizes does not substantially affect the overall MSE trend. Regardless of the variation in local sample sizes, the MSE decreases as the number of clients increases for all four algorithms.

The four methods are compared against each other in Figure \ref{fig:poi_ag1234_400}. We once again see that among communication inefficient methods, \FedHybrid~outperforms \FedSGD. Among communication-efficientient methods, \FedNewton\ clearly performs better than \FedAvg. The comparison of the methods with increasing number of iterations $K$ is presented in the Appendix Figure \ref{fig:poi_accuracy_pribacy_tradeoff}, while the comparison in the case of increasing number of clients yet fixed total sample size is presented in Appendix Figure \ref{fig:poi_fixedN}. The conclusions from these figures are similar to what we had in the logistic regression case.

\section{Real Data Applications}

\subsection{Binary and Multiclass Logistic regression on MNIST}

The first real data application  we consider is binary and multiclass logistic regression on the popular MNIST image classification dataset \citep{lecun1998mnistdataset}. The MNIST consists of grayscale handwritten digit images from ten classes,
corresponding to the digits $0$ through $9$.  Each image is available as a
$28 \times 28$ matrix with entries being the pixel intensities taking integer values between $0$ and $255$. The MNIST dataset consists of two components: a training set containing 60000 images and a test set containing 10000 images, each with corresponding class labels. For this application, we vectorize the matrices and store them as feature vectors of length $784$. The associated labels indicating the handwritten digit ($0$--$9$) in each image is our response variable. Moreover, all image vectors are normalized by dividing pixel intensities by 255, so that their values lie in [0,1]. 

In this study, we combine the original training and testing sets to form a single dataset of size $N = 70000$. This combined dataset is then randomly partitioned across $m = 80$ clients, while ensuring a minimum local sample size of 800 images for each client. This is done so that the number of samples in each client is higher than the total number of parameters, which is $d=784$. After allocating this minimum number of samples to all clients, the remaining images are distributed among the clients according to randomly generated proportions.

Within each client, the local MNIST data are split into $K=5$ folds for cross-validation. 
Fold assignment is performed independently within each client by first applying a random permutation to the observations and then allocating them approximately evenly across the five folds. In each round, one fold serves as the testing set and the remaining four folds are used for training. Then in every round, the out of sample AUC values are computed for the data in the test fold for each method and each client. To reduce variability induced by the partition into clients, we repeat the client splitting procedure $S=3$ times. 
Therefore, within each split, each method yields one AUC per client and per fold. We then average these client-level AUCs across the $5$ folds to obtain a fold-averaged AUC for each client. 
Therefore, for each fixed $(\mu,\text{method})$ combination, we obtain $m\times S = 80\times 3 = 240$ fold-averaged client-level AUC values, which are summarized using boxplots. 

To investigate how predictive performance varies with the privacy budget, we consider two levels of privacy, with larger values of the privacy parameter corresponding to weaker privacy requirement. Specifically, we take
\(
\mu \in \{6, 2\}
\). We compare nine estimators using the AUC values:
\FedSGD, \FedAvg, \FedHybrid, \FedNewton, \texttt{DP-SCAFFOLD}, \texttt{NP-SCAFFOLD},
\texttt{NP-FedAvg}, \texttt{NP-LocalFit}, and \texttt{NP-Pooled}. The methods with a ``\texttt{NP}'' are non-private methods and are used as baselines.

\begin{figure}[h]
    \centering
    \includegraphics[width=0.49\linewidth]{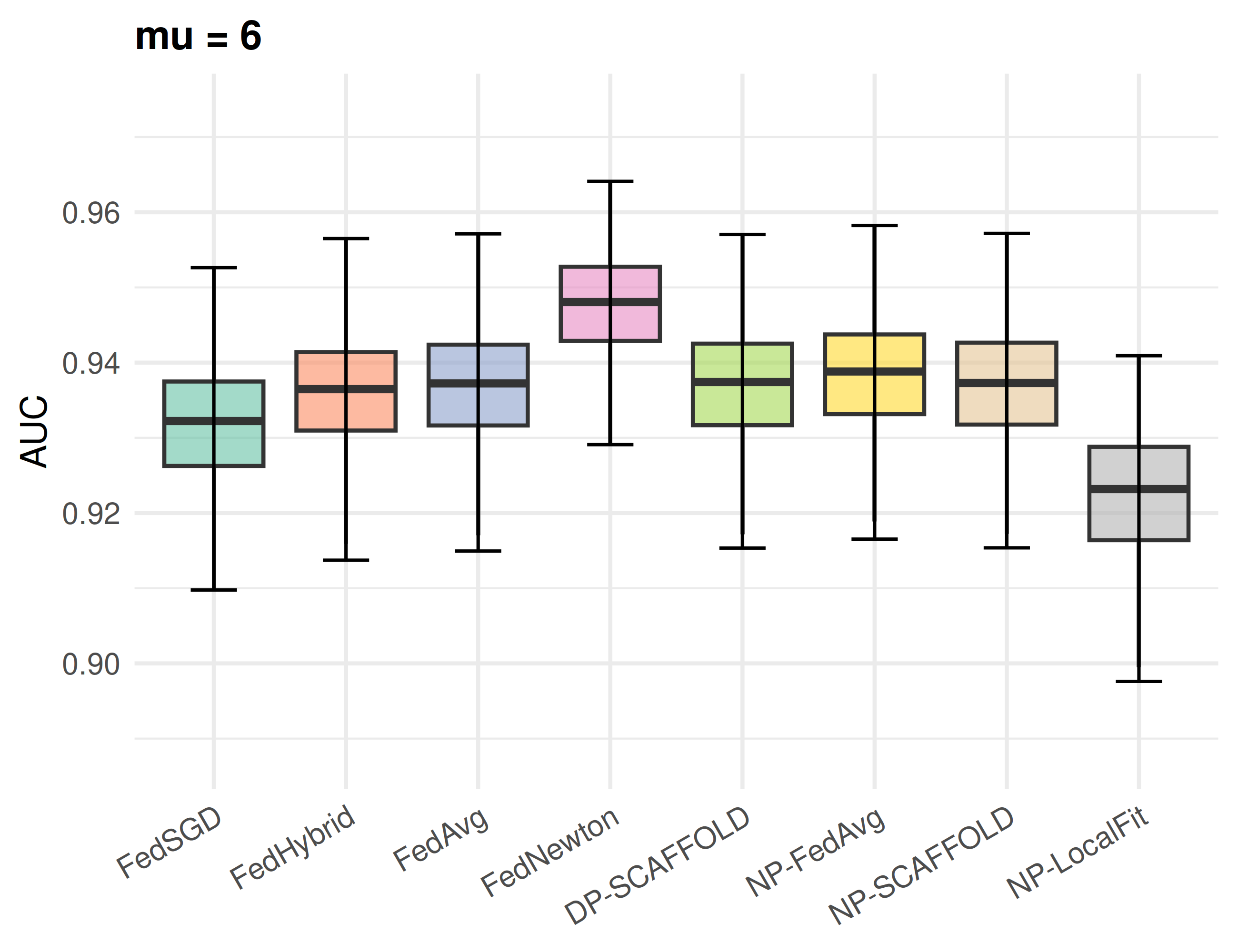}
        \includegraphics[width=0.49\linewidth]{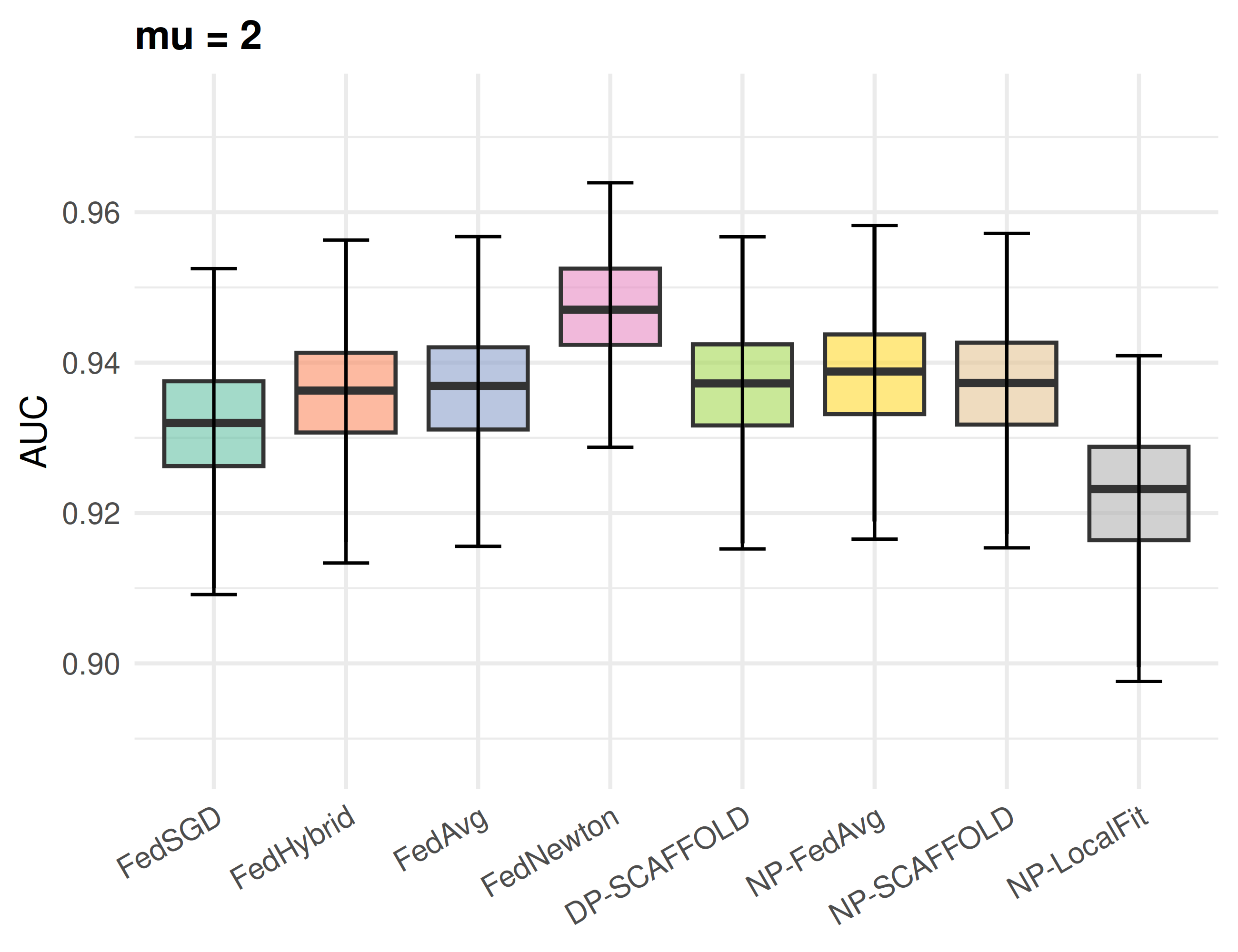}
    \caption{Binary class logistic regression MNIST results under different privacy budgets.}
    \label{MNIST_binary}
\end{figure}

\paragraph{Binary Classification} 
We consider a binary classification task for MNIST. Specifically, images whose original labels correspond to odd digits \((1,3,5,7,9)\) are relabeled as \(1\), while images corresponding to even digits \((0,2,4,6,8)\) are relabeled as \(0\). The resulting binary response variable indicates whether an image represents an odd digit. A logistic regression model is then fitted to estimate the probability that an image corresponds to an odd digit.

Figure~\ref{MNIST_binary} reports the boxplots of the client-level AUC values under \(\mu=6\) (left) and \(\mu=2\) (right) respectively. The private federated estimators studied in this paper achieve strong predictive performance in both settings. The results are also stable when the privacy constraint becomes stronger, from \(\mu=6\) to \(\mu=2\). This suggests that the additional privacy noise does not lead to a substantial loss in prediction accuracy in this application.

Among all the private methods, \FedNewton\ has the highest median AUC. In fact it has higher median AUC compared to the non-private methods as well. In particular it outperforms both DP version of \FedAvg~and \texttt{SCAFFOLD}. The \FedHybrid\ and \FedAvg\ also perform well, while \FedSGD\ is slightly lower but still comparable. They methods are also competitive Compared with the non-private federated baselines, the proposed private methods remain competitive. Their performance is close to the performance of NP-FedAvg and NP-SCAFFOLD. In contrast, NP-LocalFit has the lowest AUC among the methods considered, which suggests that local fitting alone is less effective than federated aggregation in this setting. These results indicate that the proposed private federated estimators achieve a good balance between data privacy and prediction accuracy under reasonable privacy budgets. 

\paragraph{Multi-Class Classification}
Now we consider the more natural target of multi-class classification using a multiclass logistic regression model. More specifically, we fit a logistic (often also called ``softmax") regression model to estimate the class probabilities for the ten digit classes. Figure~\ref{fig:MNIST_multiclass} shows the client-level classification accuracy under \(\mu=6\) and \(\mu=2\). The accuracy of classification is a more commonly used metric for multi-class classification problems.

\begin{figure}
    \centering
    \begin{minipage}{0.49\textwidth}
        \centering
        \includegraphics[width=\linewidth]{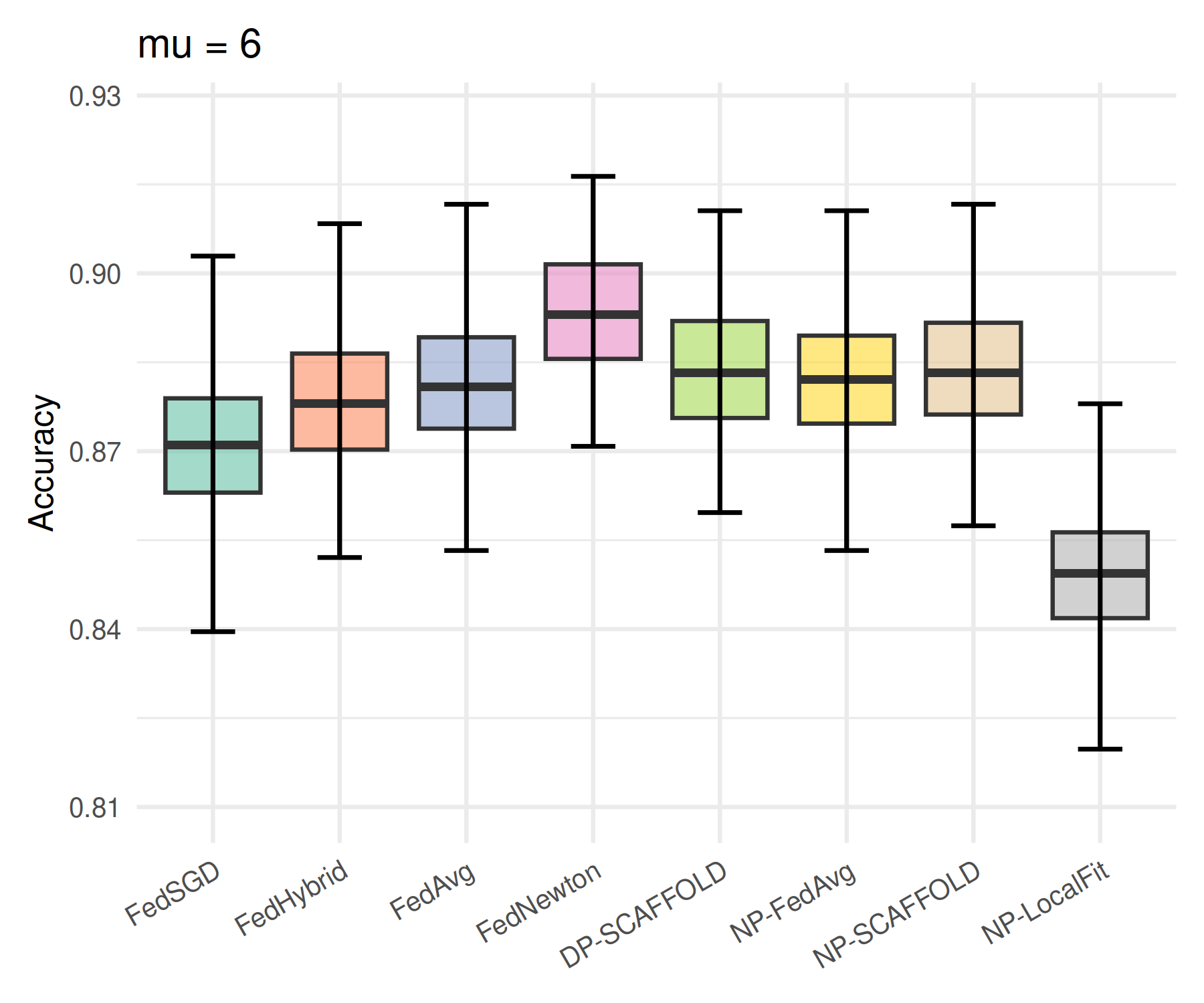}
        
    \end{minipage}
    \hfill
    \begin{minipage}{0.49\textwidth}
        \centering
        \includegraphics[width=\linewidth]{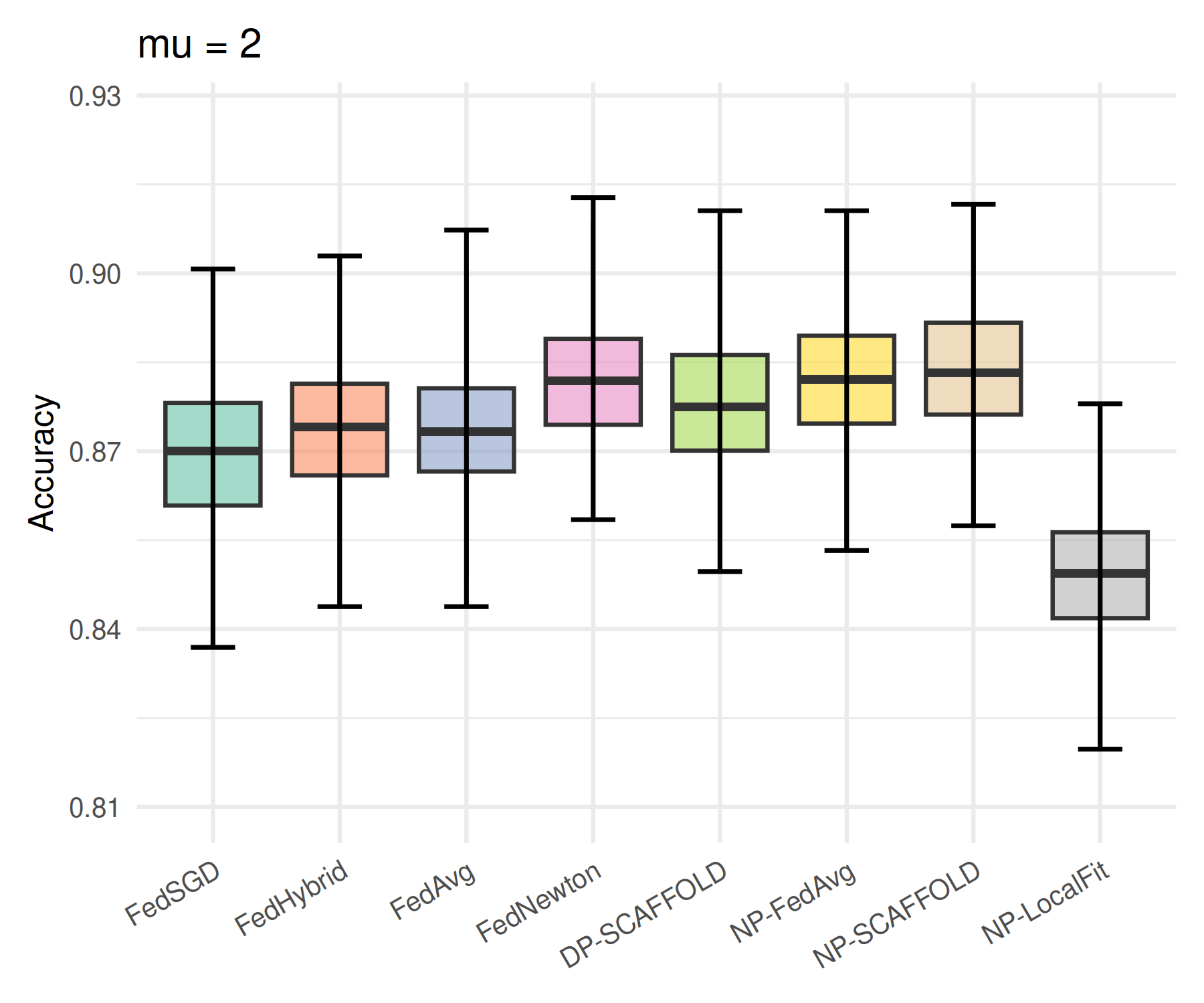}
        
    \end{minipage}
    \caption{Multiclass MNIST results under different privacy budgets.}
    \label{fig:MNIST_multiclass}
\end{figure}

As shown in Figure~\ref{fig:MNIST_multiclass}, all methods achieve relatively high accuracy in this setting. The results are also stable when the privacy constraint becomes stronger, from \(\mu=6\) to \(\mu=2\). The private federated methods perform closely to the non-private federated methods. Among the private methods, \FedNewton~continues to have the highest median accuracy, followed by DP-SCAFFOLD, while \FedSGD, \FedHybrid, and \FedAvg~have slightly lower but still comparable performance. Among the non-private methods, NP-SCAFFOLD and NP-FedAvg perform well, whereas NP-LocalFit has the lowest accuracy. 

These results suggest that the proposed private federated estimators can preserve  predictive performance well in the multiclass MNIST problem. Among the private methods, \FedNewton\ appears to benefit from the additional Newton refinement step. The similar  performance under \(\mu=6\) and \(\mu=2\) further suggests that the private methods are not very sensitive to the moderate increase in privacy protection in this data application.

\subsection{Federated training of neural networks on MNIST and CIFAR10}

We apply the private federated learning methods to multi-class image classification on MNIST and CIFAR-10 using convolutional neural networks (CNNs).  For MNIST, the CNN has two convolutional layers followed by two fully connected layers.  For CIFAR-10, the CNN has three convolutional layers followed by three fully connected layers.  The global accuracy reported in all figures is the accuracy on the standard held-out test set.

We consider two federated learning scenarios.  In the first scenario, shown in Figure~\ref{fig:accuracy-samples}, we fix the number of clients at $m=100$ and increase the number of samples available to each client.  In the second scenario, shown in Figure~\ref{fig:accuracy-clients}, we use the full training splits and increase the number of clients while keeping the total training sample size fixed.  Thus, as $m$ increases, the average number of samples per client decreases.

\begin{figure}[h]
    \centering
    \includegraphics[width=0.48\linewidth]{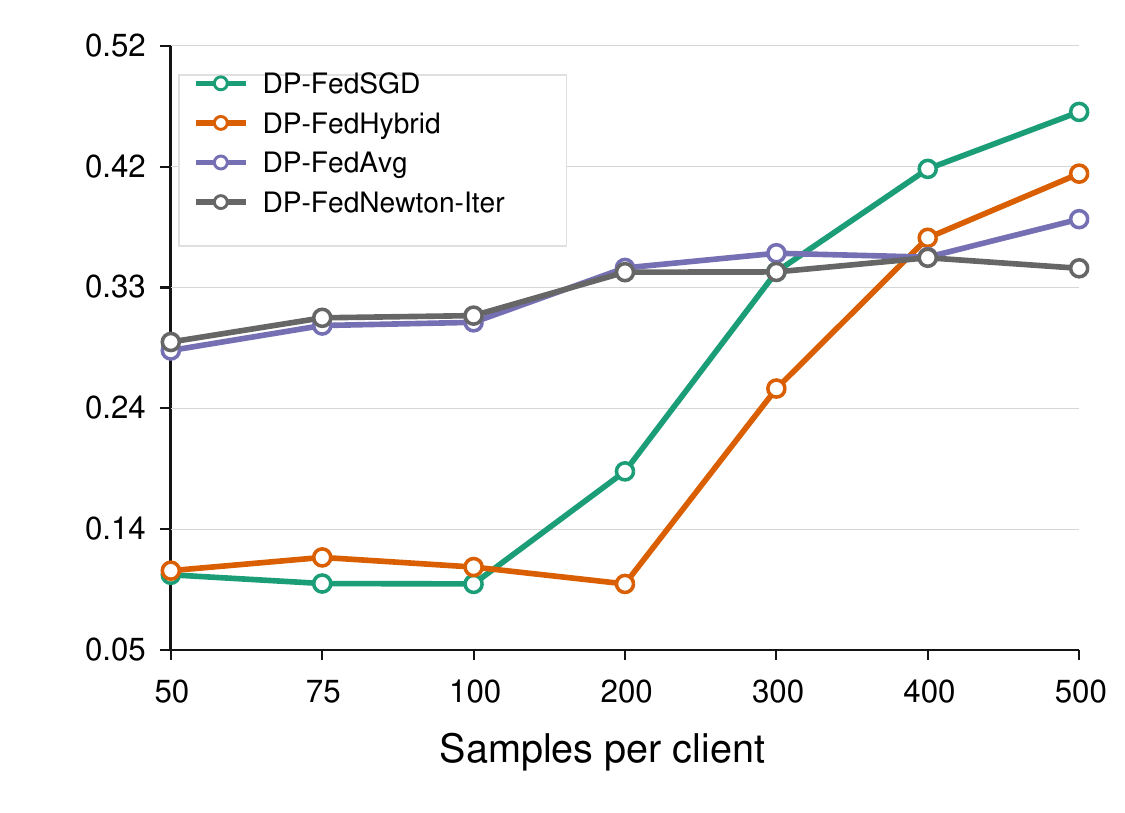}
    \hfill
    \includegraphics[width=0.48\linewidth]{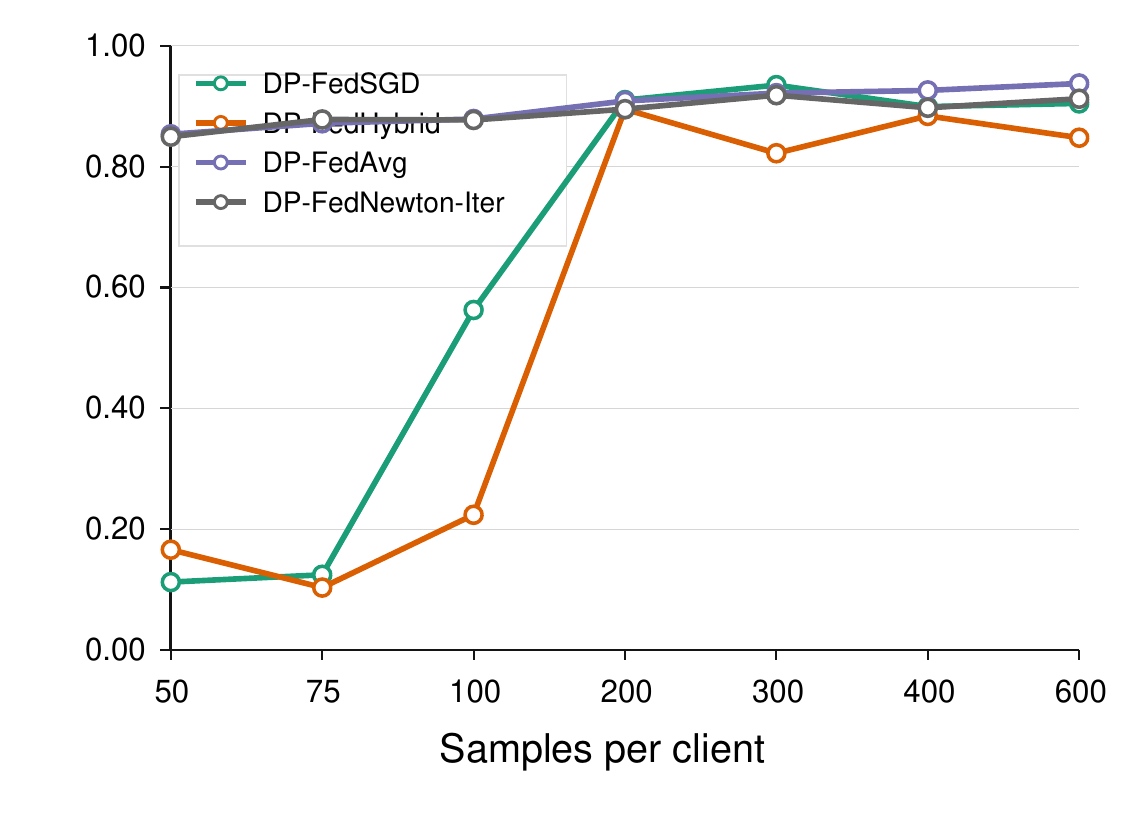}
    \caption{CNN results with increasing samples per client: median test
    accuracy on (left) CIFAR-10 and (right) MNIST.  The number of clients is
    fixed at $m=100$. The medians are computed over five independent runs.}
    \label{fig:accuracy-samples}
\end{figure}

\begin{figure}[h]
    \centering
    \includegraphics[width=0.48\linewidth]{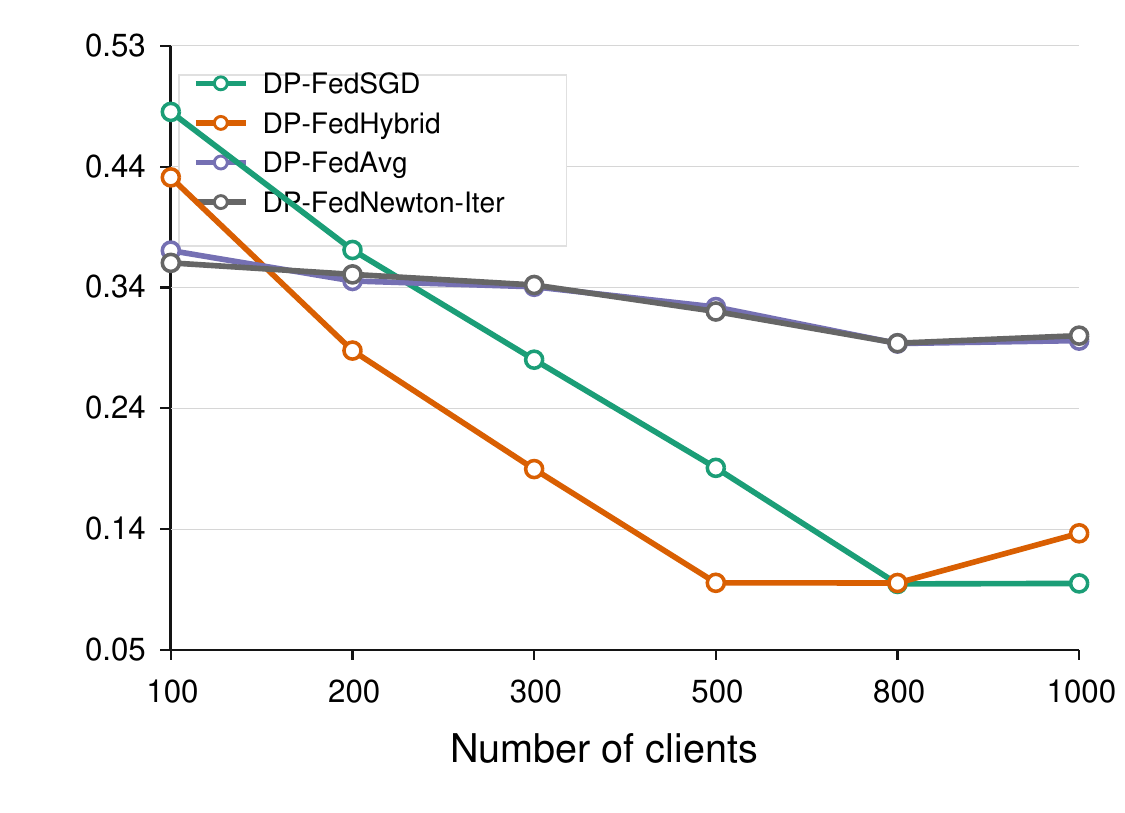}
    \hfill
    \includegraphics[width=0.48\linewidth]{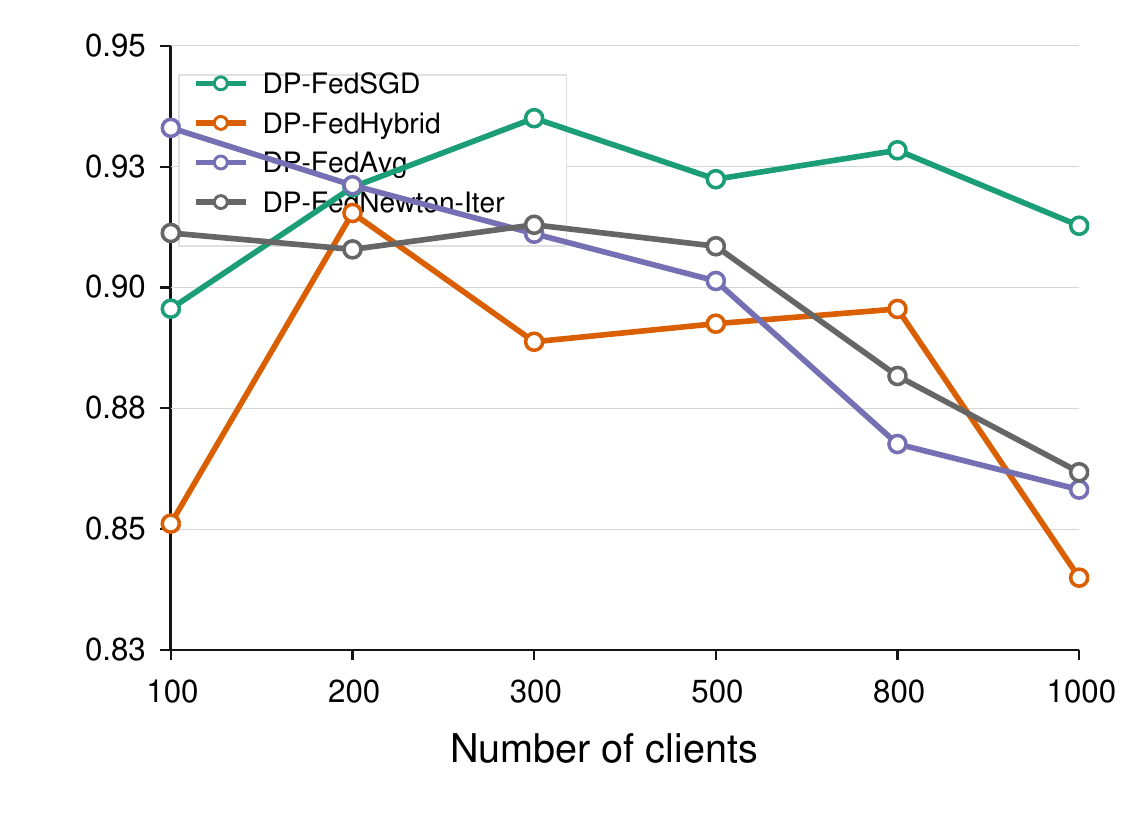}
    \caption{CNN results with increasing number of clients fixing the total sample size: median final test
    accuracy on (left) CIFAR-10 and (right) MNIST.  The medians are computed over four
    independent runs.}
    \label{fig:accuracy-clients}
\end{figure}
\subsubsection{Data and Client Partitioning}

We compare the performance of federated training of CNNs on two popular image classification benchmarks: MNIST and CIFAR-10.   The MNIST dataset has been discussed before. The standard dataset contains 60,000 training images and 10,000 test images.  Because the images are low-dimensional, centered, and visually simple, MNIST is a relatively easy benchmark for convolutional neural networks.

CIFAR-10 \cite{krizhevsky2010cifar} is a more challenging natural image classification benchmark with ten object classes: airplane, automobile, bird, cat, deer, dog, frog, horse, ship, and truck.  Each image has spatial dimension $32 \times 32$ with three RGB color channels making those $32 \times 32 \times 3$ tensors.  The standard split contains 50,000 training images and 10,000 test images.  Compared with MNIST, CIFAR-10 has substantially greater visual variation due to color, texture, background clutter, and changes in orientation, making it a harder problem to classify accurately. 

For the increasing samples in the clients scenario, the training data are split IID across $m=100$ clients and each client's local data are capped at $n \in \{50,75,100,200,300,400,600\}$ samples per client.  Since CIFAR-10 has 50,000 training examples, the maximum sample per client for this dataset is 500.  For the increasing client scenario, we instead use the full training split and vary $
  m \in \{100,200,300,500,800,1000\}$. This keeps the total sample size fixed while changing the average local sample size per client.  In both cases, each client's data is further divided locally into training, validation, and local test portions using 10\% validation and 10\% local test splits.  Local gradient descent updates are full-batch updates, so one local epoch corresponds to one gradient step over the client's local training split. For both scenarios, we repeat the runs a few times and plot the median test accuracies in each case in the figures.

\subsubsection{Models and optimization setup}

We use convolutional neural networks with log-softmax outputs and negative log-likelihood loss for optimization.  The CNN model for MNIST has two convolutional blocks with 32 and 64 channels, followed by a 128-unit fully connected layer and a 10-class classifier layer.  The CIFAR-10 network uses three convolutional blocks with 32, 64, and 128 channels, followed by a 256-unit fully connected layer and a 10-class classifier.

In both scenarios, a fraction $q=0.1$ of clients is sampled per global round. The MNIST runs use 10 base global rounds and 10 local epochs per round, with learning rate $0.01$, server learning rate $1.0$, and clipping bound $B=1.0$. The CIFAR-10 runs use 10 base global rounds and 30 local epochs per round, with learning rate $0.005$, server learning rate $0.75$, and clipping bound $B=0.75$.  The privacy parameter is $\mu=2.0$ in all runs.

The figures compare four DP federated learning methods. These are \FedAvg, a compute-matched \FedSGD, compute-matched \FedHybrid, and \FedNewton-Iter.  The \FedAvg
performs private full-batch local SGD on selected clients.  The \FedSGD~and  \FedHybrid~uses additional communication rounds to match the local computation budget of
\FedAvg.  The \FedNewton-Iter alternates \FedAvg~blocks with private head-only Newton refinements. It uses two cycles, each with five \FedAvg~rounds followed by one Newton refinement. All methods are privatized with the Gaussian noise added as described in the methods and theory.

\subsubsection{Results}

Figure~\ref{fig:accuracy-samples} shows that increasing the number of local
examples generally improves final accuracy, as expected, because both the cost of federation and the effect of
client-level privacy noise decrease with larger local sample sizes.  On MNIST,
\FedAvg~and \FedNewton-Iter are strongest in the small-sample regime, while
compute-matched \FedSGD becomes competitive once clients have a few hundred
examples.  On CIFAR-10, \FedAvg~and \FedNewton-Iter perform best at small to
moderate sample sizes, but compute-matched \FedSGD~improves sharply with more
local data and becomes the strongest method at the largest sample sizes.

Figure~\ref{fig:accuracy-clients} shows the effect of increasing
the number of clients while keeping the total training set fixed.  We note that the performance of all methods deteriorates as the same data is split among more clients due to the cost of federation. We also note that \FedNewton~is either competitive or is better than \FedAvg~at a higher number of clients.

\section{Conclusion}

In this article, we obtained finite sample upper bounds on the estimation error of differentially private federated learning methods for M-estimation. These results allowed us to theoretically study the tradeoffs among accuracy gain and privacy loss as the number of iterations increases, as well as the cost of federation and privacy in federated learning. We developed a minimax lower bound on the error rate of any private federated estimator, which allowed us to further study the optimality gap of various federated learning methods.

Based on these theoretical considerations, we further proposed two methods as improvements on the existing \FedSGD~and \FedAvg~respectively. Our new \FedHybrid~achieves the same level of accuracy as \FedSGD~with fewer communication rounds and hence reduces the communication cost. Our new \FedNewton~retains the communication efficiency of \FedAvg~while improving upon the accuracy of \FedAvg~especially when the number of clients is high.

\section{Acknowledgement}
This research was partially supported by a grant from the NSF (DMS grant 2529302) and a grant from the OSU College of Arts and Sciences. We also gratefully acknowledge the computing credits provided by the Ohio Supercomputer Center.

\bibliographystyle{plainnat}  
\bibliography{references,references_cand} 
\clearpage

\appendix

\section{Proofs of Lemmas and Theorems}

\subsection{Additional definitions and lemmas}

We will require the following definition for privacy accounting.

\begin{definition}[Global Sensitivity]
Let $\mathcal{D}^*$ denote the space of all datasets. Consider two datasets $D, D' \in \mathcal{D}^*$ that differ in exactly one datum. The global sensitivity of a function $f: \mathbb{R}^{n \times m} \rightarrow \mathbb{R}^p$ with respect to a norm $\|\cdot\|$ is defined as
\[
\mathrm{GS}_f
=
\sup_{D, D'}
\| f(D) - f(D') \|.
\]
\end{definition}

\begin{proposition} [Theorem 1 in \cite{dong2022gaussian}]
\label{mugdpdong}
Let $f : \mathbb{R}^{n \times m} \to \mathbb{R}^p$ be a function and assume its global sensitivity $\mathrm{GS}_f < \infty$. Let $Z \sim \mathcal{N}(0, I_p)$ be a $p$-dimensional standard Gaussian random vector. For any $\mu > 0$, define 
\[
h(x) = f(x) + \frac{\mathrm{GS}_f}{\mu} Z,
\quad x \in \mathbb{R}^{n \times m}.
\]
Then h(x) is $\mu$-Gaussian Differentially Private ($\mu$-GDP).
\end{proposition}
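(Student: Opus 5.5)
The statement is Proposition~\ref{mugdpdong}, the Gaussian mechanism guarantee of \cite{dong2022gaussian}. The plan is to reduce it to a one-dimensional Gaussian testing problem.

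First, fix neighbouring datasets $D,D'$ and set $\Delta=f(D)-f(D')$, so that $\|\Delta\|\le \mathrm{GS}_f$. Since $f(D)$ and $f(D')$ are deterministic, we have
\[
h(D)\sim\mathcal N\bigl(f(D),\sigma^2 I_p\bigr), \qquad h(D')\sim\mathcal N\bigl(f(D'),\sigma^2 I_p\bigr), \qquad \sigma=\mathrm{GS}_f/\mu .
\]
The trade-off function $T$ is invariant under bijective measurable transformations of the sample space. Any test of one output distribution against the other transfers through such a map, and the inverse map gives the reverse direction.

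Next, apply the affine bijection $x\mapsto Q(x-f(D'))/\sigma$. Here $Q$ is an orthogonal matrix whose first row is $\Delta^\top/\|\Delta\|$; if $\Delta=0$ the two distributions coincide and $T(\alpha)=1-\alpha$, so the claim is trivial. After this map the problem becomes testing $\mathcal N(\delta e_1, I_p)$ against $\mathcal N(0,I_p)$, where $\delta=\|\Delta\|/\sigma\le\mu$.

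The remaining coordinates $2,\dots,p$ are identically distributed $\mathcal N(0,I_{p-1})$ and independent of the first under both hypotheses. Because the first coordinate is a sufficient statistic, the problem reduces to testing $\mathcal N(\delta,1)$ against $\mathcal N(0,1)$, in either orientation. The main technical point is a Neyman--Pearson computation. The likelihood ratio is monotone in $x_1$, so the optimal level-$\alpha$ test is a threshold test. This gives
\[
T(\alpha)=\Phi\bigl(\Phi^{-1}(1-\alpha)-\delta\bigr).
\]
The same formula holds when the roles of the two hypotheses are swapped, by the symmetry $x\mapsto\delta-x$.

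Finally, $\Phi(\Phi^{-1}(1-\alpha)-\delta)$ is decreasing in $\delta$, and $\delta\le\mu$. Hence
\[
T\bigl(\alpha,h(D),h(D')\bigr)\ \ge\ \Phi\bigl(\Phi^{-1}(1-\alpha)-\mu\bigr)\quad\text{for all }\alpha\in(0,1],
\]
which is exactly the $\mu$-GDP condition of Definition~\ref{def:cent-gdp}.

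The only mildly delicate step is justifying that $T$ is unchanged by the invertible transformation and that marginalizing the ancillary coordinates loses nothing. Both follow because tests can be pushed forward and pulled back through a bijection, and because the likelihood ratio depends only on $x_1$.
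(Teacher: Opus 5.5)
Your proof is correct, and it is the standard argument behind the cited result. The paper gives no proof of its own and simply invokes Theorem 1 of \cite{dong2022gaussian}, which is proved the same way: $h(D)$ and $h(D')$ are Gaussians with common covariance $(\mathrm{GS}_f/\mu)^2 I_p$, so their trade-off function is $\Phi(\Phi^{-1}(1-\alpha)-\delta)$ with $\delta=\mu\|f(D)-f(D')\|/\mathrm{GS}_f\le\mu$, and monotonicity in $\delta$ finishes the argument. Your rotation, sufficiency and Neyman--Pearson steps just make the multivariate reduction explicit; note that they need $\mathrm{GS}_f$ to be measured in the Euclidean norm (or a norm dominating it), which is the paper's intended reading.
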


\begin{lemma}\label{lem:high-prob-event}
    Let \(\mathcal{E}\) be the event defined as follows:
    \[
    \mathcal{E}
    :=\left\{
    \max_{1\le i\le m}\max_{1\le j\le n_i}
    \left\Vert \nabla\rho(x_j^{(i)},\theta_0)\right\Vert\le B,\,
    \sup_{\theta\in \RR^d}    
    \left\Vert 
    \ddot{L}_i(\theta)-\ddot{L}_0(\theta)
    \right\Vert\le \frac{CH}{\sqrt{n_i}}
    \,\,
    \text{for all }i,
    \,\,
    \sup_{\theta\in \RR^d}    
    \left\Vert 
    \ddot{L}(\theta)-\ddot{L}_0(\theta)
    \right\Vert\le \frac{CH}{\sqrt{N}}
    \right\}
    \]
    for $H=C_1\sqrt{d\vee \log N}$ where $C_1$ is the sub-Gaussian constant in Assumption~\ref{ass:grad}. Then $\P(\mathcal{E}^c)\le c(\exp(-cd)\wedge N^{-4})$ whenever $B\ge C(\sqrt{d}\vee \sqrt{\log N})$.
\end{lemma}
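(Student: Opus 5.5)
The event $\mathcal{E}$ is an intersection of three families of events, so I would bound each complement separately and finish with a union bound. Since $c(\exp(-cd)\wedge N^{-4})$ is only required up to constants, it suffices to show each piece fails with probability at most a constant multiple of $N^{-5}$ or $e^{-c(d\vee\log N)}$. Both are $\lesssim \exp(-cd)\wedge N^{-4}$ once constants are adjusted, using $d\vee\log N\ge \max(d,\log N)$.

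\textbf{Step 1 (gradient norms).} For fixed $i,j$, the vector $\nabla\rho(x_j^{(i)},\theta_0)$ has mean $\dot L_0(\theta_0)=0$, since $\theta_0$ is an interior minimizer. By Assumption~\ref{ass:grad} it is sub-Gaussian in every direction with constant $C_1$. I would take a $1/2$-net $\mathcal{N}$ of the unit sphere with $|\mathcal{N}|\le 5^d$. Then $\|v\|\le 2\max_{u\in\mathcal{N}}u^\top v$, so
\[
\P\big(\|\nabla\rho(x_j^{(i)},\theta_0)\|>B\big)\le 5^d\exp\!\big(-B^2/(4C_1^2)\big).
\]
A union bound over all $N$ pairs $(i,j)$ multiplies this by $N$. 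With $B=C\sqrt{d\vee\log N}$ and $C$ large, the result is at most $\exp(d\log 5+\log N-C^2(d\vee\log N)/(4C_1^2))\le \exp(-c(d\vee\log N))$.

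\textbf{Step 2 (Hessian deviations, uniform in $\theta$).} This is the main obstacle. The supremum over $\theta$ must be restricted to the compact set $\Theta$ (Assumption 1), since over all of $\RR^d$ it is not meaningful. I would read the statement that way, with diameter ${\rm diam}(\Theta)\lesssim\sqrt d$.

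For a fixed $\theta$, write $\ddot L_i(\theta)-\ddot L_0(\theta)=n_i^{-1}\sum_j\big(\nabla^2\rho(x_j,\theta)-\EE\nabla^2\rho\big)$. The summands are independent with sub-Gaussian quadratic forms (Assumption~\ref{ass:grad}), and they are bounded in operator norm by $2\tau_2$ (Assumption 4). Using a $1/4$-net on the sphere for the symmetric matrix, so that $\|A\|\le 2\max_{u}|u^\top Au|$, Hoeffding gives
\[
\P\Big(\|\ddot L_i(\theta)-\ddot L_0(\theta)\|> t/\sqrt{n_i}\Big)\le 2\cdot 9^d e^{-ct^2}.
\]

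To make this uniform in $\theta$, take an $\epsilon$-net of $\Theta$ of size $(3\,{\rm diam}(\Theta)/\epsilon)^d$. Between net points, the $W$-Lipschitz property of the Hessian (Assumption 4) applies both to $\ddot L_i$ and to $\ddot L_0$. It gives a discretization error of at most $2W\epsilon$. Choosing $\epsilon=1/(W\sqrt N)$ makes this error $\le 2/\sqrt{N}\le 2/\sqrt{n_i}$, which is absorbed into $CH/\sqrt{n_i}$. The log of the net size is then $d\log(C\sqrt{dN})\lesssim d\log N$. Taking $t=CH$ with $H=C_1\sqrt{d\vee\log N}$ leaves $ct^2\asymp C^2(d\vee\log N)$, and I would need this to exceed $d\log(dN)$ plus $\log m+5\log N$.

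There is a genuine $\log N$ factor mismatch here: the net entropy is of order $d\log N$, not $d$. I would handle it in one of two ways. The first is to let the constant $C$ absorb $\log N$ via the sharper chaining bound for Lipschitz processes, where the increments of $\theta\mapsto u^\top(\ddot L_i-\ddot L_0)(\theta)u$ are themselves sub-Gaussian with scale $W\|\theta-\theta'\|/\sqrt{n_i}$, so Dudley's integral gives an $O(\sqrt d)$ expected supremum without the log. The second, if that fails, is to note that the bound degrades only by $\sqrt{\log N}$, which is harmless downstream.

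\textbf{Step 3 (the global Hessian).} The same argument applies to $\ddot L(\theta)-\ddot L_0(\theta)$, which is an average of all $N$ independent terms. This gives the $CH/\sqrt N$ bound with the same tail. A final union bound over the $m\le N$ clients and the three families gives $\P(\mathcal{E}^c)\le c(\exp(-cd)\wedge N^{-4})$ after enlarging $C$.
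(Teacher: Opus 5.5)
Your route is the paper's. For the gradients you use a sphere net, sub-Gaussian tails, and a union bound over the $N$ samples. For the Hessians you use per-direction concentration, a sphere net, and a second net in $\theta$; the paper takes that second net over the ball $\{\theta:\|\theta\|\le C\}$ of \emph{constant} radius. Step 1 is fine. You are also right that a single-scale $\theta$-net cannot close Step 2. Even the coarsest admissible mesh, $\epsilon\asymp H/(W\sqrt{n_i})$, leaves entropy of about $\tfrac d2\log(n_i/d)$, and a tail $e^{-c(d\vee\log N)}$ cannot absorb that unless $n_i/d$ is bounded. The paper's sketch does not address this, and Step 3 inherits the same issue.

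\textbf{The gap is in your repair.} First, you fixed ${\rm diam}(\Theta)\lesssim\sqrt d$. Under that reading, Dudley's integral for the metric $W\|\theta-\theta'\|/\sqrt{n_i}$ is of order $W\,{\rm diam}(\Theta)\sqrt d/\sqrt{n_i}\asymp d/\sqrt{n_i}$, not $\sqrt{d/n_i}$. Truncating the metric at the pointwise scale $\tau_2/\sqrt{n_i}$ still leaves an extra $\sqrt{\log d}$ in the entropy integral. The $O(\sqrt{d/n_i})$ bound you want holds only when $\Theta$ has constant diameter, so you need the paper's constant-radius ball, not ${\rm diam}(\Theta)\lesssim\sqrt d$.

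Second, Dudley bounds only the expected supremum. The lemma needs a failure probability $e^{-c(d\vee\log N)}$ that survives a union bound over $m$ clients and $9^d$ directions. Either of the following gives this:
\begin{itemize}
\item the tail form of chaining: for $X_\theta:=u^\top(\ddot L_i(\theta)-\ddot L_0(\theta))u$ and a fixed $\theta^*\in\Theta$, $\sup_{\theta}|X_\theta-X_{\theta^*}|\lesssim (W\,{\rm diam}(\Theta)/\sqrt{n_i})(\sqrt d+s)$ with probability at least $1-2e^{-s^2}$, taking $s\asymp\sqrt{d\vee\log N}$;
\item McDiarmid's inequality, since by Assumption 4 replacing one sample moves $\sup_\theta\|\ddot L_i(\theta)-\ddot L_0(\theta)\|$ by at most $2\tau_2/n_i$.
\end{itemize}
Your fallback of tolerating a $\sqrt{\log N}$ loss proves a weaker statement, not this lemma.

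A smaller point concerns your opening reduction. A tail of order $N^{-5}$ is not $\lesssim e^{-cd}$ when $d\gg\log N$. What you need, and what your bounds actually deliver, is $e^{-c'(d\vee\log N)}$ with $c'\ge 4$.
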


\begin{proof}[Proof of Lemma~\ref{lem:high-prob-event}]
We prove the result for the case where $d>C\log N$ so that we take $B\ge C\sqrt{d}$. The proof for the first part follows by sub-Gaussian tail assumptions on the gradient $\nabla\rho(x_j^{(i)},\theta_0)$. Similarly for the Hessian $\ddot{L}_i(\theta)$, we use a $1/3$-net $\mathcal{S}$ on the unit sphere $\SS^{d-1}$, which has a cardinality at most $4^d$. For each $\bu\in\mathcal{S}$, we again use sub-Gaussian tail assumptions to show
    \[
    \P\left(\bu^{\top}(\ddot{L}_i(\theta)-\ddot{L}(\theta))\bu
    \ge \frac{CH}{\sqrt{n_i}}
    \quad
    \text{for all }i
    \right)\le \exp(-Cd)
    \]
    for a constant $C>0$. The result follows by taking a union bound over all $\bu\in \calS$ and then using a standard $\eps$-net argument, after which we take another union bound using a second $\eps$-net argument over $\{\theta:\|\theta\|\le C\}$ for a sufficiently large constant $C>0$. The proof for the case of small dimensions, i.e., $d\le C\log N$ follows similarly by allowing $B=C\sqrt{\log N}$.
\end{proof}

\subsection{Proof of results in the main paper}

\begin{proof}[Proof of Lemma \ref{lem:AG1muGDP}]
Suppose we run Algorithm \ref{alg:AG1} (AG1) with $K$ gradient iterations at the server. The total privacy budget for each client is assumed to be $\mu$ (i.e., the clients are $\mu$-GDP). Then for each iteration, by the composition theorem of the Gaussian mechanism \cite{dong2022gaussian}, the privacy budget is $\frac{\mu}{\sqrt{K}}$. Also, the server takes a weighted average of the clients' gradients in each of the $K$ iterations. Let us denote the scaling factor of the Gaussian noise as $\sigma/n_i$ for the $i$th client. 

To determine the differential privacy of the client towards the server, we note that the global sensitivity of the function $g_i^{(k)} = \frac{1}{n_i}\sum^{n_i}_{j=1}\nabla\rho\left(x_j^{(i)},\theta^{(k)}\right)$ is $\frac{2B}{n_i}$ from the bounded gradient assumption. Then 
\[
\frac{2B \sqrt{K}}{\mu n_i} = \frac{\sigma}{n_i}.
\]
This gives us $\sigma =\frac{2B \sqrt{K}}{\mu}$ for all clients to be $\mu$-fed-GDP towards the server.

To determine the differential privacy of the server to a third party, we make the following calculation.

\begin{align*}
    \theta^{(k+1)}&= \theta^{(k)}-\eta \tilde{g}^{(k)} \\ &= \theta^{(k)}- \eta\sum^{m}_{i=1}\frac{n_i}{N}\tilde{g}_{i}^{(k)} \\ &= \theta^{(k)}-\eta\sum^{m}_{i=1}\frac{n_i}{N}\left(g_i^{(k)}+\text{noise}_i^{(k)}\right)\\ &= \theta^{(k)}-\eta\sum^{m}_{i=1}\frac{n_i}{N}\left[\frac{1}{n_i}\sum^{n_i}_{j=1}\nabla\rho\left(x_j^{(i)},\theta^{(k)}\right)+\frac{\sigma}{n_i} Z_k^{(i)}\right] \\ &= \theta^{(k)}-\frac{\eta}{N}\sum^{m}_{i=1}\sum^{n_i}_{j=1}\nabla\rho\left(x_j^{(i)},\theta^{(k)}\right)-\frac{\eta}{N}\sum^{m}_{i=1} \sigma\times Z_k^{(i)} \\ &= \theta^{(k)}-\frac{\eta}{N}\sum^{m}_{i=1}\sum^{n_i}_{j=1}\nabla\rho\left(x_j^{(i)},\theta^{(k)}\right)-\frac{\eta \sigma}{N}\sum^{m}_{i=1} Z_k^{(i)},
\end{align*}

Now in the notation of Proposition \ref{mugdpdong}, we denote  $f(x)=\theta^{(k)}-\frac{\eta}{N}\sum^{m}_{i=1}\sum^{n_i}_{j=1}\nabla\rho\left(x_j^{(i)},\theta^{(k)}\right)$. Then,
\begin{align*}
    \text{GS}_f &=\frac{\eta}{N}\left\|\sum^{m}_{i=1}\sum^{n_i}_{j=1}\nabla\rho\left(x_{j}^{(i)},\theta^{(k)}\right)-\sum^{m}_{i=1}\sum^{n_i}_{j=1}\nabla\rho\left(x_{j}^{(i)\prime},\theta^{(k)}\right)\right\| \\ &\le\frac{\eta}{N}\cdot  2B =\frac{2B\eta}{N}
\end{align*}

Note in the above calculation, the global sensitivity is measured by considering two datasets differ by just one data point overall in the global sample (i.e., there is one client where one data point is different).

Also note that $Z_k^{(i)}\sim\mathcal{N}(0,I_d)$ are independent across $i$ and $k$. Then $\sum_{i=1}^m Z_k^{(i)} \sim N(0,mI_d)$.
Let the server be $\mu_1-$GDP. Then, we have the following equality
\begin{align*}
    \frac{\eta \sqrt{m} 2B\sqrt{K}}{\mu N}=\frac{2B\eta\sqrt{K}}{N \mu_1} 
\end{align*}
This implies $\mu_1 = \mu/\sqrt{m}$.
Therefore, the full algorithm is $\mu/\sqrt{m}-$GDP to a third-party with the noise scaling of $\frac{2B\sqrt{K}}{\mu n_i}$ for client $i$.
\end{proof}

\medskip

\medskip

\begin{proof}[Proof of Theorem~\ref{th:alg1}]
  We first note that
\begin{align*}
    \tilde{g}^{(k)} = \sum_{i=1}^{m} w_i \tilde{g}^{(k)}_i = \sum_{i=1}^{m} w_i \left( \nabla L_i(\theta^{(k)}) + 
\frac{2\eta B\sqrt{K}}{\mu n_i} Z^{(i)}_k \right).
\end{align*}  
  
Therefore, writing $\dot{L}_w(\cdot)=\sum w_i \dot{L}_i(\cdot)$ and $\ddot{L}_w(\cdot)=\sum w_i \ddot{L}_i(\cdot)$
\begin{align*}\label{eq:theta-k-taylor}
    &~\theta^{(k+1)}-\theta_0 \\
&= \theta^{(k)}-\theta_0 -\eta \dot{L}_w(\theta^{(k)})
+
\sum_{i=1}^m
\left(w_i
\frac{2\eta B\sqrt{K}}{\mu n_i}\right) Z^{(i)}_k\\
&=\theta^{(k)}-\theta_0 -\eta\left[\dot{L}_w(\theta_0)
+\int_{0}^{1}\ddot{L}_w((1-t)\theta_0+t\theta^{(k)})dt\,(\theta^{(k)}-\theta_0)\right]
+\frac{2\eta B\sqrt{K}}{\mu }
\sum_{i=1}^m
\frac{w_i}{n_i} Z^{(i)}_k\\
&=\left(I-\eta\int_{0}^{1}\ddot{L}_w((1-t)\theta_0+t\theta^{(k)})dt\right)(\theta^{(k)}-\theta_0)-\eta \dot{L}_w(\theta_0)
+\frac{2\eta B\sqrt{K}}{\mu }\sum_{i=1}^m 
\frac{w_i}{n_i} Z^{(i)}_k.\numberthis
\end{align*}
By the $\tau_1$-strong convexity of $\rho$, on the event $\mathcal{E}$ we have
\[
\lambda_{\min}\left(
\int_{0}^{1}\ddot{L}_w((1-t)\theta_0+t\theta^{(k)})dt
\right)
\ge \tau_1-C\sqrt{\frac{H^2}{n_{\min}}}
\]
and hence again on $\mathcal{E}$ we have from \eqref{eq:theta-k-taylor} that 
\[
\|\theta^{(k+1)}-\theta_0\|
\le 
\left(1+C\sqrt{\frac{H^2}{n_{\min}}}-\eta\tau_1\right)\|\theta^{(k)}-\theta_0\|
+
\eta
\left\Vert\dot{L}_w(\theta_0)\right\Vert
+\left\Vert
\frac{2\eta B\sqrt{K}}{\mu  } 
\sum_{i=1}^m
\frac{w_i}{n_i}
Z^{(i)}_k
\right\Vert
\]
Unrolling the above recursion for $k=0,1,\dots,K-1$ we have
\begin{align*}
\|\theta^{(K)}-\theta_0\|
\le &~
\gamma_1^{K}\|\theta^{(0)}-\theta_0\|\\
&~+
\left(
\gamma_1^{K-1}+\gamma_1^{K-2}
+\dots+
1
\right)
\left\{
\eta
\left\Vert\dot{L}_w(\theta_0)\right\Vert
+\left\Vert
\frac{2\eta B\sqrt{K}}{\mu  } 
\sum_{s=1}^m
\frac{w_i}{n_i}
Z^{(s)}_k
\right\Vert\right\}
\end{align*}
where $\gamma_1=\left(1-\eta\left(\tau_1-
C\sqrt{\frac{H^2}{n_{\min}}}
\right)\right)$. Squaring and moving to expectations one obtains:
\begin{align*}
    \EE(\|\theta^{(K)}-\theta_0\|^2\mathbf{1}_{\mathcal{E}})
    \le &~
    3\gamma_1^{2K}\EE\|\theta^{(0)}-\theta_0\|^2\\
    &~+
    \frac{3(1-\gamma_1^K)^2}{(1-\gamma_1)^2}
    \left(
    \eta^2
\EE\left\Vert\dot{L}_w(\theta_0)\right\Vert^2
+\EE\left\Vert
\frac{2\eta B\sqrt{K}}{\mu N} 
\sum_{s=1}^m 
\frac{w_s}{n_s} Z^{(s)}_k
\right\Vert^2
    \right)
\\
\le&~
3\gamma_1^{2K}\EE\|\theta^{(0)}-\theta_0\|^2
+
\frac{3}{(1-\gamma_1)^2}
\left(
\eta^2
\EE\left\{\left\Vert\dot{L}(\theta_0)\right\Vert^2\mathbf{1}_{\mathcal{E}}\right\}
+\frac{\eta^2B^2Kd}{\mu^2}
\sum_{s=1}^m
\frac{w_s^2}{n_s^2}
\right)
\end{align*}

Since $\eta\ge \frac{1}{2\tau_2}$, for sufficiently large $N$, we have $\gamma_1\le 1-\tau_1/3\tau_2$, so that we have the rate:
\begin{align*}
        \mathbb{E}\left[\left\|\theta^{(K)}-\theta_0\right\|^2
        \mathbf{1}_{\mathcal{E}}
        \right]
    \le&~
    3\left(1-\frac{\tau_1}{3\tau_2}\right)^{2K}\EE\|\theta^{(0)}-\theta_0\|^2
+
\frac{3}{(1-\gamma_1)^2}
\left(
\eta^2
\EE\left\Vert\dot{L}_w(\theta_0)\right\Vert^2
+\frac{\eta^2B^2Kd}{\mu^2}
\sum_{s=1}^m
\frac{w_s^2}{n_s^2}
\right).
\end{align*}
To get the final rates, we choose $K=\frac{2\log(d N)}{\log((1-\tau_1/3\tau_2)^{-1})}$ so that the first term is $O(N^{-2})$, assuming $\mathbb{E}\left[\left\|\theta^{(0)}-\theta_0\right\|^2\right]=O(d)$.

For the second term, note that \begin{align*}
\mathbb{E}\!\left[\left\|\eta \dot{L}_w(\theta_0)\right\|^2
\mathbf{1}_{\mathcal{E}}
\right]
&\le \eta^2\,\mathbb{E}\!\left[\left\|
\sum_{i=1}^{m}
\frac{w_i}{n_i}
\sum_{j=1}^{n_i}\nabla \rho\!\left(x_j^{(i)},\theta_0\right)\right\|^2\right] \\
&= \eta^2\sum_{i=1}^{m}
\frac{w_i^2}{n_i^2}
\sum_{j=1}^{n_i}\mathbb{E}\!\left[\left\|\nabla \rho\!\left(x_j^{(i)},\theta_0\right)\right\|^2\right]
\quad \text{(since i.i.d.)} \\
&= \eta^2 
\sum_{i=1}^{m}
\frac{w_i^2}{n_i}
\cdot {\rm trace}
\left(\Var\left(\nabla \rho\!\left(x_j^{(i)},\theta_0\right)
\right)\right) \\
&= \eta^2 {\rm trace}(\Sigma)
\sum_{i=1}^{m}
\frac{w_i^2}{n_i}.
\end{align*}
Thus the overall rate becomes:
\begin{align*}
        \mathbb{E}\left[\left\|\theta^{(K)}-\theta_0\right\|^2\mathbf{1}_{\mathcal{E}}\right]
    \le&~
    \frac{4}{(1-\gamma_1)^2}
    \left(
    \eta^2 {\rm trace}(\Sigma)
    \sum_{i=1}^{m}
    \frac{w_i^2}{n_i}
    +\frac{\eta^2B^2Kd}{\mu^2}
    \sum_{i=1}^m
    \frac{w_i^2}{n_i^2}\right)\\
    \le&~ 
    \frac{16\eta^2}{\tau_1^2}
    \left(
    {\rm trace}(\Sigma)
    \sum_{i=1}^{m}
    \frac{w_i^2}{n_i}
    +\frac{ B^2Kd}{\mu^2}
    \sum_{i=1}^m
    \frac{w_i^2}{n_i^2}\right)\\
    =&~
    \frac{16\eta^2}{\tau_1^2}
    \sum_{i=1}^m w_i^2
    \left(
    \frac{{\rm trace}(\Sigma)}{n_i}
    +\frac{ B^2Kd}{\mu^2 n_i^2}\right)
\end{align*}
where the second last line uses the assumption that $1-\gamma_1=\eta\tau_1-C\eta\sqrt{\frac{H^2}{N}}\ge\frac{1}{2}\eta\tau_1$.

Note that the above MSE bound holds for any $\textbf{w}
:=\{(w_1,\dots,w_m)\in [0,1]^m:\sum w_i=1\}$. We can thus minimize over all possible $\textbf{w}$ to find
\[
\hat{w}_i\propto \left(
    \frac{{\rm trace}(\Sigma)}{n_i}
    +\frac{ B^2Kd}{\mu^2 n_i^2}\right)^{-1}
\]
and the corresponding MSE
\[
 \mathbb{E}\left[\left\|\theta^{(K)}-\theta_0\right\|^2\mathbf{1}_{\mathcal{E}}\right]
 \le 
 \frac{16\eta^2}{\tau_1^2}
 \left(
 \sum_{i=1}^m
 \left(
    \frac{{\rm trace}(\Sigma)}{n_i}
    +\frac{ B^2Kd}{\mu^2 n_i^2}\right)^{-1}
 \right)^{-1}.
\]

To finish the proof we write
\begin{align*}
    \mathbb{E}\left[\left\|\theta^{(K)}-\theta_0\right\|^2\mathbf{1}_{\mathcal{E}^c}\right]
    \le&~ 2\mathbb{E}\left[\left\|\theta^{(K)}
    -\theta^{(0)}
    \right\|^2\mathbf{1}_{\mathcal{E}^c}\right]
    +2\EE\|\theta^{(0)}-\theta_0\|^2\P(\mathcal{E}^c)\\
     \le&~ 2\left(\mathbb{E}\left\|
     \theta^{(0)}
     +\sum_{k=1}^K(\theta^{(k)}-\theta^{(k-1)})\right\|^4\right)^{1/2}
    \sqrt{\P({\mathcal{E}^c})}
    +2\EE\|\theta_0-\theta^{(0)}\|^2\P(\mathcal{E}^c)\\
     \le&~2CK\left(\frac{d}{N}+
     \frac{mB^2dK}{N^2\mu^2}
     \right)\sqrt{\frac{C}{N^4}}
     +\frac{Cd}{N^4}
\end{align*}
for some constant $C>0$. Here we use the triangle inequality at each gradient descent iteration and bound the fourth moment of the gradient sum and noise added due to privacy. Thus the MSE contribution from $\mathbb{E}\left[\left\|\theta^{(K)}-\theta_0\right\|^2\mathbf{1}_{\mathcal{E}^c}\right]$ contributes a smaller order term, so that the final rate is given by the term reported in the theorem statement.
\end{proof}

\medskip

\begin{proof}[
Proof of Lemma \ref{lem:AG2muGDP} ]

In stage 1, the clients perform the following iterations
\[\theta_i^{(t+1)}=\theta_i^{(t)}-\dfrac{\eta_1}{n_i}\sum_{j=1}^{n_i} g(x_j^{(i)},\theta_i^{(t)}) + a_i Z_{it},\quad
Z_{it}\sim\mathcal{N}(0,I_d) ,\quad
a_i=\dfrac{2B\eta_1\sqrt{2K_{1i}}}{\mu n_i}.
\]
As discussed before $GS_{g_i}=\frac{2B}{n_i}$.

Let $\mu_i$ be the DP for client $i$ in stage 1. Then, 
\[
\frac{2B\eta_1\sqrt{K_{1i}}}{n_i \mu_i} = \frac{2B\eta_1\sqrt{2K_{1i}}}{\mu n_i},
\]
implies $\mu_i = \frac{\mu}{\sqrt{2}}$.
In stage 2, we run $K_2$ steps of AG1 with $\bar\theta$ as initializer and noise multiplier
$b'=\frac{2B\sqrt{2K_2}}{\mu n_i}$.
From the proof of Lemma~\ref{lem:AG1muGDP}, we have GS as $\frac{2B}{n_i}$. Hence
\[
\frac{2B\sqrt{K_2}}{n_i \mu_i}=\frac{2B\sqrt{2K_2}}{\mu  n_i} \Rightarrow \mu_i = \frac{\mu}{\sqrt{2}}.
\]
Therefore, by the composition theorem in \cite{dong2022gaussian}, the clients are $\mu$-GDP to the server.

It follows immediately from the Proof of Lemma~\ref{lem:AG1muGDP}, that for stage 2, the server's DP to a third party of $\mu/\sqrt{2m}$. For stage 1, following the proof of Lemma \ref{lem:AG1muGDP}, we obtain a DP bound of $\mu/\sqrt{2m}$. Therefore, the total GDP privacy for the server to a third party can be bounded by
\[
\frac{\mu}{\sqrt{m}}.
\]
\end{proof}

\medskip
\begin{proof}[Proof of Theorem~\ref{thm:one-step-fedavg}]
The proof follows the technique for bounding the MSE of $\theta_{\rm AG1}$ by replacing $(L,\dot{L},\ddot{L})$ by $(L_i,\dot{L}_i,\ddot{L}_i)$ to obtain:
\begin{equation}\label{eq:MSE-theta-i}
        \mathbb{E}\left[\left\|\theta^{(k)}_i-\theta_0\right\|^2\right]
    \le ~ 
    \frac{16}{\tau_1^2}
    \left(
    \frac{{\rm trace}(\Sigma)}{n_i}
    +\frac{ dB^2K_1}{\mu^2n_i^2}\right)
    =: r_i
    \,\,
    \text{for all }
    \,\,
    k\ge K_{1i}/2
    \,\,
    \text{and}
    \,\,
    i=1,\dots,m
    .
\end{equation}
The next step will be to follow the proof of Lemma 23 from \cite{huang2019distributed} to bound the bias of $\theta^{(K_1)}_i$. Modifying \eqref{eq:theta-k-taylor} for the $i$-th client we have:
\begin{align*}
    \theta^{(K_1)}_i-\theta_0
    =&~
    \left(I-\eta\int_{0}^{1}\ddot{L}_i((1-t)\theta_0+t\theta^{(K_1-1)})dt\right)(\theta^{(K_1-1)}-\theta_0)\\
    &~-\eta \dot{L}_i(\theta_0)
    +\frac{2\eta B\sqrt{K_1}}{\mu n_i} Z^{(s)}_k
\end{align*}
so that
\begin{align*}
    \EE(\theta^{(K_1)}_i-\theta_0)
    =&~
    \EE\left(
    \left(I-\eta\int_{0}^{1}\ddot{L}_i((1-t)\theta_0+t\theta^{(K_1-1)}_i)dt\right)(\theta^{(K_1-1)}_i-\theta_0)
    \right)\\
    =&~
    \EE\left(
    \left(I-\eta \ddot{L}_i(\theta_0) \right)(\theta^{(K_1-1)}_i-\theta_0)
    \right)\\
    &~+
    \eta 
    \EE\left(
    \left(\int_{0}^{1}
    [\ddot{L}_i(\theta_0)
    -
    \ddot{L}_i((1-t)\theta_0+t\theta^{(K_1-1)}_i)]dt\right)(\theta^{(K_1-1)}_i-\theta_0)
    \right)\\
    =&~
    \left(I-\eta \ddot{L}_0(\theta_0) \right)
    \EE(\theta^{(K_1-1)}_i-\theta_0)
    +
    \eta 
    \EE\left(
    \left(\ddot{L}_0(\theta_0)- \ddot{L}_i(\theta_0) \right)
    (\theta^{(K_1-1)}_i-\theta_0)
    \right)
    \\
    &~+
    \eta 
    \EE\left(
    \left(\int_{0}^{1}
    [\ddot{L}_i(\theta_0)
    -
    \ddot{L}_i((1-t)\theta_0+t\theta^{(K_1-1)}_i)]dt\right)(\theta^{(K_1-1)}_i-\theta_0)
    \right)\\
    =:&~
    \left(I-\eta \ddot{L}_0(\theta_0) \right)
    \EE(\theta^{(K_1-1)}_i-\theta_0)
    +\eta T_1+\eta T_2.
\end{align*}
This implies
\begin{align*}
    (I-I+\eta \ddot{L}_0(\theta_0) )\EE(\theta^{(K_1)}_i-\theta_0)
    =&~
    \left(I-\eta \ddot{L}_0(\theta_0) \right)
    \EE(\theta^{(K_1-1)}_i-\theta^{(K_1)}_i)
    +\eta T_1+\eta T_2
\end{align*}
which by the definition of privatized gradient descent implies:
\begin{align*}
    \ddot{L}_0(\theta_0) \EE(\theta^{(K_1)}_i-\theta_0)
    =&~
    \left(I-\eta \ddot{L}_0(\theta_0) \right)
    \EE(\dot{L}_i(\theta^{(K_1-1)}_i))
    + T_1+ T_2\\
    =&~
    \left(I-\eta \ddot{L}_0(\theta_0) \right)
    \left(
    \EE(\dot{L}_i(\theta_0))
    +
    \EE \ddot{L}_i(\theta_0)(\theta^{(K_1-1)}_i-\theta_0)
    \right)
    +\Xi
    + T_1+ T_2\\
    =&~
    \left(I-\eta \ddot{L}_0(\theta_0) \right)
    \left(
    \mathbf{0}
    +
    \EE \ddot{L}_0(\theta_0)(\theta^{(K_1-1)}_i-\theta_0)
    \right)
    +\Xi\\
    &~- \left(I-\eta \ddot{L}_0(\theta_0) \right) T_1
    + T_1+ T_2\\
    =&~
    \left(I-\eta \ddot{L}_0(\theta_0) \right)
    \left(
     \ddot{L}_0(\theta_0)
     \EE(\theta^{(K_1-1)}_i-\theta_0)
    \right)
    +\Xi\\
    &~+\eta \ddot{L}_0(\theta_0) T_1
    + T_2
\end{align*}
where the Lipschitz condition on $\ddot{L}_0(\theta_i^{(K_1-1)})$ and convergence of $\ddot{L}_i(\theta)$ to $\ddot{L}_0(\theta)$ for $\|\theta-\theta_0\|\le c_0$ yields
\begin{align*}\label{eq:xi-bound}
    \|\Xi\|
    =&~\|\EE\ddot{L}_i\left(
    \int_0^1 t\theta_0+(1-t)(\theta_i^{(K_1-1)}))-\ddot{L}_i(\theta_0)
    dt
    \right)(\theta_i^{(K_1-1)}-\theta_0)\|\\
    \le&~
    2\max_{t\in [0,1]}
    \|\EE
    (\ddot{L}_i(t\theta_0+(1-t)(\theta_i^{(K_1-1)}))
    -
    \ddot{L}_0(t\theta_0+(1-t)(\theta_i^{(K_1-1)}))
    )
    (\theta_i^{(K_1-1)}-\theta_0)
    \|+\\
    &~+
    \|\EE(\ddot{L}_0(t\theta_0+(1-t)(\theta_i^{(K_1-1)}))-\ddot{L}_0(\theta_0))(\theta_i^{(K_1-1)}-\theta_0)\|\\
    \le
    &~\frac{2CH}{\sqrt{n_i}}\times \sqrt{r_i}
    +
    C_{\rm Lip}(\EE\|\theta^{(K_1-1)}_i-\theta_0\|^2)\\
    \le&~ C_1 r_i.\numberthis
\end{align*}
for a constant $C_1>0$, where in the last step we used the assumption that $H^2\le C_*{\rm trace}(\Sigma)$ for some constant $C_*>0$.

Note that by $\tau_1$ strong convexity of $\ddot{L}_0(\cdot)$,
\[
\ddot{L}_0(\theta_0)\succ \tau_1 I_d,
\,\,
\left\Vert I-\eta \ddot{L}_0(\theta_0) \right\Vert
\le 1-\eta\tau_1.
\]
Then following the recursion argument of the proof for $\theta_{\rm AG1}$ we have
\begin{align*}\label{eq:bias-local-est}
    \left\Vert \EE(\theta^{(K_1)}_i-\theta_0)\right\Vert
    \le&~
    \frac{\tau_2}{\tau_1}
    (1-\eta\tau_1)^{K_1/2} 
    \left\Vert \EE(\theta^{(K_1/2)}_i-\theta_0)\right\Vert
    +\frac{3}{\eta\tau_1^2}
    \left(
     C_{\rm Lip} r_i
     +\eta\tau_2\|T_1\|+\|T_2\|
    \right)\\
    \le&~
    \frac{\tau_2}{\tau_1}
    \times \frac{\sqrt{r_i}}{N^2}
    +\frac{3}{\eta\tau_1^2}
    \left(
     C_{\rm Lip} r_i
     +\eta\tau_2\|T_1\|+\|T_2\|
    \right)\numberthis
\end{align*}
To bound $\|T_1\|$ note that $\ddot{L}_0(\theta_0)-\ddot{L}_i(\theta_0)$ is independent of $\{Z_k^{(i)}:1\le k\le K_1\}$. Then we can unroll the recursion from \eqref{eq:theta-k-taylor} to write:
\begin{align*}
    &~\|T_1\|\\
    =&~
    \left\Vert
    \EE\left(
    \left(
    \ddot{L}_0(\theta_0)-\ddot{L}_i(\theta_0)
    \right)
    \left(
    \theta_i^{(K_1-1)}-\theta_0
    \right)
    \right)
    \right\Vert\\
    \le&~
    \left\Vert
    \EE\left(
    \left(
    \ddot{L}_0(\theta_0)-\ddot{L}_i(\theta_0)
    \right)
    \left(
    \sum_{k={K_1/2}}^{K_1-2}
    (I-\eta\ddot{L}(\theta_0))^{K_1-1-k}
    (\eta \dot{L}(\theta_0)+4(\eta B\sqrt{K_1}/\mu n_i)Z_k^{(i)})
    \right)
    \right)
    \right\Vert\\
    &~+O(\EE\|\ddot{L}_0(\theta_0)-\ddot{L}_i(\theta_0)\|\cdot \EE \|\theta^{(K_1/2)}-\theta_0\|^2)\\
    \le&~
    \left\Vert
    \EE\left(
    \left(
    \ddot{L}_0(\theta_0)-\ddot{L}_i(\theta_0)
    \right)
    \left(
    \sum_{k={K_1/2}}^{K_1-2}
    (I-\eta\ddot{L}_i(\theta_0))^{K_1-1-k}
    (\eta \dot{L}_i(\theta_0)
    \right)
    \right)
    \right\Vert\\
    &~+O((\EE\|\ddot{L}_0(\theta_0)-\ddot{L}_i(\theta_0)\|^2\cdot \EE \|\theta^{(K_1/2)}-\theta_0\|^4)^{1/2})\\
    \le&~
    \eta 
    \left(
    \EE 
    \left\Vert
    \ddot{L}_0(\theta_0)-\ddot{L}_i(\theta_0)
    \right\Vert^2
    \right)^{1/2}
    \left(
    \EE 
    \|\dot{L}_i(\theta_0)\|^2
    \right)^{1/2}
    +O((\EE\|\ddot{L}_0(\theta_0)-\ddot{L}_i(\theta_0)\|^2\cdot \EE \|\theta^{(K_1/2)}-\theta_0\|^4)^{1/2})\\
    \le&~
    \frac{1+r_i}{\tau_2}\cdot
    \dfrac{C{\rm trace}(\Sigma)}{n_i}.
\end{align*}
Next by Lipschitzness assumption on the Hessian $\ddot{L}_0$ and by uniform convergence of $\ddot{L}_i(\theta)$ to $\ddot{L}_0(\theta)$ for $\|\theta-\theta_0\|\le c_0$ we have:
\begin{align*}
\|T_2\|
=&~\|\EE\left(
    \left(\int_{0}^{1}
    [\ddot{L}_i(\theta_0)
    -
    \ddot{L}_i((1-t)\theta_0+t\theta^{(K_1-1)}_i)]dt\right)(\theta^{(K_1-1)}_i-\theta_0)
    \right)\|\\
\le&~ 2C_1\EE\|\theta_i^{(K_1-1)}-\theta_0\|^2
=2C_1r_i
\end{align*}
via an argument similar to \eqref{eq:xi-bound}. Plugging in the above bounds into \eqref{eq:bias-local-est} one obtains:
\begin{equation}\label{eq:bias-local-est-final}
    \left\Vert
    \EE(\theta_i^{(K_1)}-\theta_0)
    \right\Vert
    \le 
    \frac{4}{\tau_1}
    \left(
    3C_1r_i
    +
    \frac{(1+r_i)}{\tau_1}\cdot\frac{C{\rm trace}(\Sigma)}{n_i}
    \right).
\end{equation}
Combining the bias bounds for $i=1,\dots,m$ we then have
\begin{align*}\label{eq:bias-fed-avg-1step}
    \left\Vert
    \EE(\Bar{\theta}-\theta_0)
    \right\Vert
    =&~
    \left\Vert
    \sum_{i=1}^m
    \frac{n_i}{N}
    \EE(\theta_i-\theta_0)
    \right\Vert\\
    \le&~
    \frac{12C_1}{N\tau_1}
    \sum_{i=1}^m n_ir_i
    +
    \frac{5Cm{\rm trace}(\Sigma)}{\tau_1^2N}\\
    \le&~
    \frac{192C_1}{\tau_1^3}
    \left(
    \frac{m{\rm trace}(\Sigma)}{N}
    +
    \frac{B^2d}{\mu^2N}\sum_{i=1}^m\frac{K_{1i}}{n_i}
    \right)
    +
    \frac{5Cm{\rm trace}(\Sigma)}{\tau_1^2N}
\end{align*}
where in the first inequality we use the assumption that $n_i$ are large enough so that $r_i\le 0.25$ for all $1\le i\le m$. For the variance note that $\Bar{\theta}$ is an average of M-estimators trained on independent datasets, so that:
\begin{align*}
    {\rm trace}(\Var(\Bar{\theta}))
    =\frac{1}{N^2}\sum_{i=1}^mn_i^2
    {\rm trace}(\Var(\theta_i))
    \le~ \frac{1}{N^2}\sum_{i=1}^mn_i^2r_i
    =
    \frac{16}{\tau_1^2}
    \left(
    \frac{{\rm trace}(\Sigma)}{N}
    +
    \frac{mdB^2K_1}{\mu^2N^2}
    \right).
\end{align*}
Thus the MSE for $\Bar{\theta}$ becomes
\begin{align*}
    \EE\|\Bar{\theta}-\theta_0\|^2
    =&~
    {\rm trace}(\Var(\Bar{\theta}))+
    \left\Vert
    \EE(\Bar{\theta}-\theta_0)
    \right\Vert^2\\
    \le&~
    \frac{16}{\tau_1^2}
    \left(
    \frac{{\rm trace}(\Sigma)}{N}
    +
    \frac{mdB^2K_{\max}}{\mu^2N^2}
    \right)
    +
    C_2^2
    \left(
    \frac{m{\rm trace}(\Sigma)}{N}
    +
    \frac{dB^2}{\mu^2N}\sum_{i=1}^m\frac{K_{1i}}{n_i}
    \right)^2.
\end{align*}
\end{proof}

\medskip

\begin{proof}[Proof of Lemma \ref{lem:AG3muGDP}]
    We consider the \FedAvg~(AG3) method to be run for $R$ rounds of communication, and each round consists of $K$ local gradient steps. At round $r$, the local estimates from the clients are denoted as $\theta_{i,r}^{(K)}$. According to the \FedAvg~algorithm, the server then aggregates with weights $w_i = \frac{n_i}{N}$ to obtain $\theta^{(r)} = \sum_{i=1}^m w_i \theta_{i,r}^{(K)}$.

Let $D$ and $D'$ be neighboring global datasets differing in exactly one record. Let $i^\star$ denote the client whose dataset differs.
For ease of notation, we denote, the each client empirical risk as
\[
F_i(\theta)=\frac{1}{n_i}\sum_{j=1}^{n_i} \rho(x_j^{(i)},\theta).
\]

We couple the randomness by using the \emph{same} Gaussian noises $\{Z_{it}\}$ under $D$ and $D'$.
Then for all other clients, $i\neq i^\star$, the local iterates are identical (since the injected noise is also identical). Hence only client $i^\star$ contributes to sensitivity of the quantity server releases. 

Let $\theta^{(t)}$ and $\theta^{'(t)}$ denote the parameter values at iteration $t$, for client $i^\star$ under $D$ and $D'$ respectively.
The local update is

\[
\theta^{(t+1)}
=
\theta^{(t)} - \eta \nabla F_{i^\star}(\theta^{(t)}) + \sigma_{i^\star} Z_{t}.
\]

Since the noise is coupled, it cancels in the difference. Now we compute,

\[
\theta^{(t+1)} - \theta^{'(t+1)}
=
\theta^{(t)} - \theta^{'(t)}
-
\eta
\big(
\nabla F_{i^\star}(\theta^{(t)})
-
\nabla F_{i^\star}(\theta^{'(t)})
\big)
+
\eta
\big(
\nabla F_{i^\star}(\theta^{'(t)};D_{i^\star})
-
\nabla F_{i^\star}(\theta^{'(t)};D'_{i^\star})
\big).
\]
In the above expression, the second term is due to difference in gradients evaluated at $\theta^t$ and $\theta^{'(t)}$ respectively, while the third term is due to the gradients differing between $D_{i^\star}$ and  $D'_{i^\star}$.

By Assumption 2 on $\tau_1$ strong convexity and Assumption 4 on $\tau_2$ smoothness of $F_{i^\star}$, we have the following 
\begin{align}
(\nabla F_{i^\star}(u)-\nabla F_{i^\star}(v))^\top (u-v)
&\ge \tau_1 \|u-v\|_2^2, \label{eq:strong-convexity-grad}
\\
\|\nabla F_{i^\star}(u)-\nabla F_{i^\star}(v)\|_2
&\le \tau_2 \|u-v\|_2. \label{eq:smoothness-grad}
\end{align}

We now show that the gradient descent map $T$ is contractive.
For any $u,v\in\mathbb{R}^d$,
\begin{align*}
\|T(u)-T(v)\|_2^2
&=
\|u-v-\eta(\nabla F_{i^\star}(u)-\nabla F_{i^\star}(v))\|_2^2
\\
&=
\|u-v\|_2^2
+\eta^2\|\nabla F_{i^\star}(u)-\nabla F_{i^\star}(v)\|_2^2
-2\eta (\nabla F_{i^\star}(u)-\nabla F_{i^\star}(v))^\top (u-v).
\end{align*}
Using \eqref{eq:strong-convexity-grad} and \eqref{eq:smoothness-grad}, we obtain
\begin{align*}
\|T(u)-T(v)\|_2^2
&\le
\|u-v\|_2^2 + \eta^2 \tau_2^2 \|u-v\|_2^2 - 2\eta\tau_1 \|u-v\|_2^2
\\
&=
\bigl(1-2\eta\tau_1+\eta^2\tau_2^2\bigr)\|u-v\|_2^2.
\end{align*}

Then we can write
\[
\|T(u)-T(v)\|_2 \leq C_{\eta} \|u-v\|_2,
\]
with suitable assumptions on the step size $\eta$ such that $C_\eta<1$.

Changing one record changes the empirical gradient by at most $2B/n_{i^\star}$,
so

\[
\|
\nabla F_{i^\star}(\theta;D_{i^\star})
-
\nabla F_{i^\star}(\theta;D'_{i^\star})
\|
\le
\frac{2B}{n_{i^\star}}.
\]

Hence, defining $\delta_t := \|\theta^{(t)}-\theta^{'(t)}\|$,

\[
\delta_{t+1}
\le
C_{\eta}\delta_t + \eta \frac{2B}{n_{i^\star}}.
\]

Since $\delta_0=0$, we obtain

\[
\delta_K
\le
\eta \frac{2B}{n_{i^\star}}
\sum_{s=0}^{K-1} (C_{\eta})^s .
\]

Thus

\[
\mathrm{GS}(\theta_{i^\star}^{(K)})
\le
\eta \frac{2B}{n_{i^\star}}
\sum_{s=0}^{K-1} C_{\eta}^s .
\]

Since only client $i^\star$ differs, the global sensitivity at the server is,

\[
\mathrm{GS}(\theta^{(r)})
\le
w_{i^\star}
\mathrm{GS}(\theta_{i^\star}^{(K)})
=
\frac{n_{i^\star}}{N}
\eta \frac{2B}{n_{i^\star}}
\sum_{s=0}^{K-1}C_{\eta}^s = \frac{2B\eta}{N}
\sum_{s=0}^{K-1}C_{\eta}^s  .
\]

Now we turn to the noise propagation. Each noise term injected at step $s$ propagates through the remaining gradient descent iterations. Since the gradient descent map $T(\theta)=\theta-\eta\nabla F(\theta)$ is $C_{\eta}$ contractive, the contribution of a noise vector injected $s$ steps earlier is attenuated by at most $C_{\eta}^{s}$. Therefore the final iterate contains an effective Gaussian component with variance proportional to
\[
\sigma_i^2 \sum_{s=0}^{K-1} C_{\eta}^{2s}.
\]
After aggregation the noise standard deviation is
\[
\mathrm{sd}_{\mathrm{round}}
=
\sqrt{\sum_{i=1}^m (w_i \sigma_i)^2}
\sqrt{\sum_{s=0}^{K-1}(C_{\eta})^{2s}}.
\]
Noting, $
w_i\sigma_i
=
\frac{2B\eta\sqrt{RK}}{\mu N},$ we obtain
\[
\mathrm{sd}_{\mathrm{round}}
=
\frac{2B\eta\sqrt{RK}\sqrt{m}}{\mu N}
\sqrt{
\sum_{s=0}^{K-1}(C_{\eta})^{2s}}.
\]
By the Gaussian mechanism characterization,
\[
\mu_{\mathrm{round}}
\le
\frac{\mathrm{GS}_{\mathrm{round}}}{\mathrm{sd}_{\mathrm{round}}}
=
\frac{\mu}{\sqrt{R m}}
\frac{\sum_{s=0}^{K-1}(C_{\eta})^s}
{\sqrt{K \sum_{s=0}^{K-1}(C_{\eta})^{2s}}}.
\]
We further note,
\[
\frac{\sum_{s=0}^{K-1}(C_{\eta})^s}
{\sqrt{K \sum_{s=0}^{K-1}(C_{\eta})^{2s}}} \leq 1,
\]
by Cauchy-Schwarz inequality. Thus
\[
\mu_{\mathrm{round}}
\le
\frac{\mu}{\sqrt{Rm}} .
\]
By composition for $\mu$-GDP,
\[
\mu_{\mathrm{srv}}
\le
\sqrt{R}\,\mu_{\mathrm{round}}
=
\frac{\mu}{\sqrt{m}} .
\]
\end{proof}

\medskip

\begin{proof}[Proof of Theorem~\ref{thm:ag3_mse}]
Fix the total number of communication rounds $R\in\mathbb{N}$.
Under Algorithm \ref{alg:AG3}, in every round the Gaussian noise multiplier used by client $i$ is
\[
\sigma_i \;=\; \frac{2B\eta\sqrt{RK}}{\mu n_i}, \qquad i=1,\dots,m,
\]
which depends on the fixed value of $R$ but does not depend on the round index $r$.

We prove by induction on $r\in\{1,\dots,R\}$ that
\begin{align}
\label{eq:ag3_bound_no_sigfield}
\mathbb{E}\!\left[\left\|\theta^{(r)}-\theta_0\right\|^2\right]
\;\le\;
\frac{16}{\tau_1^2}
\left(
\frac{{\rm trace}(\Sigma)}{N}
+
\frac{2mB^2RKd}{\mu^2N^2}
\right)
+
\frac{C}{N^2}
\left(
m\,{\rm trace}(\Sigma)
+
\sum_{i=1}^m
\frac{B^2RKd}{\mu^2n_i}
\right)^2.
\end{align}
Since $\hat{\theta}_{\mathrm{(AG3)}}=\theta^{(R)}$, the theorem follows by taking $r=R$.

\paragraph{Base case ($r=1$).}
In round $1$, all clients start from $\theta^{(0)}$, perform $K$ local iterations, and the server aggregates
\[
\theta^{(1)}=\sum_{i=1}^m \frac{n_i}{N}\,\theta_i^{(K,1)}.
\]
This estimator has the same form as $\bar{\theta}$ in Lemma~3.3 with $K_{1i}=K$ and noise multipliers
$\sigma_i=\frac{2B\eta\sqrt{2RK}}{\mu n_i}$.
Applying Theorem \ref{thm:one-step-fedavg} yields \eqref{eq:ag3_bound_no_sigfield} for $r=1$.

\paragraph{Inductive step.}
Assume that \eqref{eq:ag3_bound_no_sigfield} holds for some $r\in\{1,\dots,R-1\}$.
We show that it also holds for $r+1$.

At the beginning of round $r+1$, all clients are initialized at the common value $\theta^{(r)}$,
perform $K$ local iterations, and the server aggregates
\[
\theta^{(r+1)}=\sum_{i=1}^m \frac{n_i}{N}\,\theta_i^{(K,r+1)}.
\]
Consequently, conditional on the shared initialization $\theta^{(r)}$,
$\theta^{(r+1)}$ has the same form as the estimator $\bar{\theta}$ in
Theorem \ref{thm:one-step-fedavg} with noise multipliers
\[
\sigma_i = \frac{2B\eta\sqrt{RK}}{\mu n_i}.
\]

Applying Theorem \ref{thm:one-step-fedavg} at round $r+1$, we obtain
\begin{align*}
\mathbb{E}\!\left[\left\|\theta^{(r+1)}-\theta_0\right\|^2 \,\middle|\, \theta^{(r)}\right]
\;\le\;
\frac{16}{\tau_1^2}
\left(
\frac{{\rm trace}(\Sigma)}{N}
+
\frac{2mB^2RKd}{\mu^2N^2}
\right)
+
\frac{C}{N^2}
\left(
m\,{\rm trace}(\Sigma)
+
\sum_{i=1}^m
\frac{B^2RKd}{\mu^2n_i}
\right)^2.
\end{align*}

By the law of total expectation,
\[
\mathbb{E}\!\left[\left\|\theta^{(r+1)}-\theta_0\right\|^2\right]
=
\mathbb{E}\!\left[
\mathbb{E}\!\left[\left\|\theta^{(r+1)}-\theta_0\right\|^2 \,\middle|\, \theta^{(r)}\right]
\right],
\]
which implies
\[
\mathbb{E}\!\left[\left\|\theta^{(r+1)}-\theta_0\right\|^2\right]
\;\le\;
\frac{16}{\tau_1^2}
\left(
\frac{{\rm trace}(\Sigma)}{N}
+
\frac{2mB^2RKd}{\mu^2N^2}
\right)
+
\frac{C}{N^2}
\left(
m\,{\rm trace}(\Sigma)
+
\sum_{i=1}^m
\frac{B^2RKd}{\mu^2n_i}
\right)^2.
\]
This is \eqref{eq:ag3_bound_no_sigfield} for $r+1$.

By induction, \eqref{eq:ag3_bound_no_sigfield} holds for all $r\in\{1,\dots,R\}$.
When $r=R$ we obtain the desired bound for
$\hat{\theta}_{\mathrm{(AG3)}}=\theta^{(R)}$.
\end{proof}

\medskip

\begin{proof}[Proof of Lemma~\ref{lem:AG4muGDP}]
We design the first step of the method (\FedAvg) to be $\mu/\sqrt{2}-$GDP. From the proof of Lemma \ref{lem:AG2muGDP}, the noise level $\sigma_i = \frac{2B\eta_1\sqrt{2RK}}{\mu n_i/3}=\frac{6B\eta_1\sqrt{2RK}}{\mu n_i}$ guarantees the \FedAvg step to be $\mu/\sqrt{2}-$GDP. 

Then we only need to find the desired noise level for the second (Newton) step to be $\mu/\sqrt{2}-$GDP. To calculate the sensitivity level of the Newton procedure, we note that from triangle inequality, 
\begin{align*}
    \left\|H^{-1}g-H^{\prime-1}g^{\prime}\right\| &\le \left\|H^{-1}-H^{\prime-1}\right\| \cdot \left\|g\right\|+\left\|H^{\prime-1}\right\| \cdot \left\|g-g^{\prime}\right\| \\ & \le \frac{1}{n_i/3}\cdot \frac{2}{\tau_1}\cdot B + \frac{6B}{\tau_1n_i} \\ & \le \frac{12B}{\tau_1 n_i}
\end{align*}

Then \begin{align*} \frac{2B^{(New)}}{\mu n_i}&= \frac{ \frac{12B}{\tau_1 n_i}}{\frac{\mu}{\sqrt{2}}} \Rightarrow B^{(New)}=\frac{6\sqrt{2}B}{\tau_1} \end{align*}
Therefore the Newton step with the given noise level is $\mu/\sqrt{2}-$ GDP.
\end{proof}

\medskip

\begin{proof}[Proof of Theorem~\ref{th:newton-avg}]
    The proof follows along the lines of the bound for MSE of $\Bar{\theta}$, through a bias variance decomposition. To that end, we find an upper bound for the bias of the private Newton step estimators first. Note that:
\begin{align*}
    \theta_i^{(New)}
    =&~
    \Bar{\theta}
    -
    \left(
    \ddot{L}_i(\Bar{\theta},X_i^{(1)})
    \right)^{-1}
    \dot{L}_i(\Bar{\theta},X_i^{(2)})
    +
    \frac{2B^{(New)}}{\mu n_i}Z_i,
\end{align*}
so that
\begin{align*}
    \theta_i^{(New)}-\theta_0
    =&~
    \Bar{\theta}-\theta_0
    -
    \left(
    \ddot{L}_i(\Bar{\theta},X_i^{(1)})
    \right)^{-1}
    (\dot{L}_i(\theta_0,X_i^{(2)}))
    +
    \frac{2B^{(New)}}{\mu n_i}Z_i\\
    &~-
    \left(
    \ddot{L}_i(\Bar{\theta},X_i^{(1)})
    \right)^{-1}
    \left(
    \int_0^1
    \ddot{L}_i(t\Bar{\theta}+(1-t)\theta_0,X_i^{(2)})
    dt
    \right)(\Bar{\theta}-\theta_0)\\
    =&~
    \left(
    I_d-\left(
    \ddot{L}_i(\Bar{\theta},X_i^{(1)})
    \right)^{-1}
    \left(
    \int_0^1
    \ddot{L}_i(t\Bar{\theta}+(1-t)\theta_0,X_i^{(2)})
    dt
    \right)
    \right)(\Bar{\theta}-\theta_0)\\
    &~
    -
    \left(
    \ddot{L}_i(\Bar{\theta},X_i^{(1)})
    \right)^{-1}
    (\dot{L}_i(\theta_0,X_i^{(2)}))
    +
    \frac{2B^{(New)}}{\mu n_i}Z_i\\
    =:&~T_{n1}^{(i)}+T_{n2}^{(i)}+T_{n3}^{(i)}.
\end{align*}
We now bound the norms of each of the above terms one by one.
\begin{align*}\label{eq:T1-up-bd}
    \|T_{n1}^{(i)}\mathbf{1}_{\mathcal{E}}\|
    =&~
    \left\Vert
    \left(
    \ddot{L}_i(\Bar{\theta},X_i^{(1)})
    \right)^{-1}
    \left(
    \ddot{L}_i(\Bar{\theta},X_i^{(1)})
    -
    \left(
    \int_0^1
    \ddot{L}_i(t\Bar{\theta}+(1-t)\theta_0,X_i^{(2)})
    dt
    \right)
    \right)(\Bar{\theta}-\theta_0)
    \right\Vert\\
    \le&~
    \frac{1}{\tau_1-\|\ddot{L}_i(\Bar{\theta},X_i^{(1)})-\ddot{L}_0(\Bar{\theta},X_i^{(1)})\|}
    \times\\
    &~\times 
    \bigg(\left\Vert
    \ddot{L}_0(\bar{\theta})
    -
    \int_0^1
    \ddot{L}_0(t\Bar{\theta}+(1-t)\theta_0,X_i^{(2)})
    dt
    \right\Vert\\
    &~+
    2\sup_{t\in [0,1],k\in \{1,2\}}
    \|
    \ddot{L}_i(t\Bar{\theta}+(1-t)\theta_0,X_i^{(k)})
    -
    \ddot{L}_0(t\Bar{\theta}+(1-t)\theta_0)
    \|\bigg)\|\bar{\theta}-\theta_0\|\\
    \le &~
    \frac{2}{\tau_1}    \|\bar{\theta}-\theta_0\|^2
    +\frac{C}{\tau_1}\|\bar{\theta}-\theta_0\|\cdot\frac{H}{\sqrt{n_i}}\numberthis
\end{align*}
where we use the fact that 
\[
(
\sup_{t\in [0,1]}
\|
\ddot{L}_i(t\bar{\theta}+(1-t)\theta_0,X_i^{(2)})
-
\ddot{L}_0(t\bar{\theta}+(1-t)\theta_0,X_i^{(2)})
)
\le 
\frac{CH}{\sqrt{n_i}}.
\]
with high probability over the event $\mathcal{E}$, on which uniform convergence of $\ddot{L}_i(\theta,X_i^{(1)})$ to $\ddot{L}_0(\theta,X_i^{(1)})$ for $\theta=t\bar{\theta}+(1-t)\theta_0$, for $t\in [0,1]$. As a result on the same high probability event $\mathcal{E}$ we have the upper bound
\begin{align*}
    \EE\left((\theta_i^{(New)}-\theta_0)\right)
    \le&~ \EE(T_{n1}^{(i)}\mathbf{1}_{\mathcal{E}}) + 
    +\EE(T_{n1}^{(i)}\mathbf{1}_{\mathcal{E}^c})
    +\EE(T_{n2}^{(i)})+ \EE(T_{n3}^{(i)})\\
    =&~ \EE(T_{n1}^{(i)}\mathbf{1}_{\mathcal{E}})
    + 
    \EE(T_{n1}^{(i)}\mathbf{1}_{\mathcal{E}^c})
\end{align*}
Here the first equality follows due to sample splitting which ensures 
\[
\EE(T_{n2}^{(i)})=
\EE\left(
    \ddot{L}_i(\Bar{\theta},X_i^{(1)})
    \right)^{-1}
    \EE (\dot{L}_i(\theta_0,X_i^{(2)}))
    =\EE\left(
    \ddot{L}_i(\Bar{\theta},X_i^{(1)})
    \right)^{-1}\cdot\mathbf{0}
    =\mathbf{0}
\]
and since the privacy noise are mean zero Gaussians, it is immediate to see that $\EE(T_{n3}^{(i)})=\mathbf{0}$. Notice that due to sample splitting we have
\begin{align*}
  \left\Vert\EE(T_{n1}^{(i)}\mathbf{1}_{\mathcal{E}})  \right\Vert 
  =&~
  \left\Vert
  \EE\left(
  \left(
    I_d-\left(
    \ddot{L}_i(\Bar{\theta},X_i^{(1)})
    \right)^{-1}
    \left(
    \int_0^1
    \ddot{L}_i(t\Bar{\theta}+(1-t)\theta_0,X_i^{(2)})
    dt
    \right)
    \right)(\Bar{\theta}-\theta_0)
  \right)\right\Vert\\
  \le:&~
  \frac{C}{\tau_1}\EE\|\Bar{\theta}-\theta_0\|^2
  +\left\Vert\EE
   \left(
    I_d-\left(
    \ddot{L}_i(\theta_0,X_i^{(1)})
    \right)^{-1}
    \ddot{L}_i(\theta_0,X_i^{(2)})
    \right)(\Bar{\theta}-\theta_0)\right\Vert\\
   \le &~ 
   \frac{C}{\tau_1}\EE\|\Bar{\theta}-\theta_0\|^2
  +\frac{CH}{\sqrt{n_i}}\EE \|(\Bar{\theta}-\theta_0)\|
  +Cd\exp(-cd/2)
\end{align*}
where the first inequality follows due to the Lipschitz property of $\nabla^2\rho$, and the second inequality using concentration bounds of the sample Hessian $\ddot{L}_i(\theta_0,X_i^{(1)})$ to its expectation $\ddot{L}_0(\theta_0)$ on the event $\mathcal{E}$.

We then have
\begin{align*}
    &~\left\Vert\EE\left(
    \sum_{i=1}^m\frac{n_i}{N}\theta_i^{(New)}-\theta_0
    \right)\right\Vert \\
    \le&~
    \frac{C}{\tau_1}\EE\|\bar{\theta}-\theta_0\|^2
    +
    C\left(\sum\frac{n_i}{N}\cdot\frac{H}{\sqrt{n_i}}\right)(\|\EE(\bar{\theta}-\theta_0)\|)
    +
    \left\Vert\EE
    \left(\sum_{i=1}^m\frac{n_i}{N}T_{n1}^{(i)}\mathbf{1}_{\mathcal{E}^c}
    \right)
    \right\Vert\\
    \le&~
    \frac{C}{\tau_1}\EE\|\bar{\theta}-\theta_0\|^2
    +
    C\sqrt{\frac{m}{N}}(\|\EE(\bar{\theta}-\theta_0)\|)
    +
    Cd \sqrt{\P(\mathcal{E}^c)}   \\
    \le&~
    \frac{C}{\tau_1}\EE\|\bar{\theta}-\theta_0\|^2
    +
    C\sqrt{\frac{m}{N}}(\|\EE(\bar{\theta}-\theta_0)\|)
    +
    Cd \sqrt{\P(\mathcal{E}^c)}   \\
    \le&~
    \frac{C}{\tau_1}\EE\|\bar{\theta}-\theta_0\|^2
    +
    C\sqrt{\frac{m}{N}}(\|\EE(\bar{\theta}-\theta_0)\|)
    +
    Cd \exp(-cd)
\end{align*}
where the last term in the second last line follows by using the definition of $T_{n1}^{(i)}$ to get the bound $\EE\|T_{n1}^{(i)}\|\le Cd$ for all $i$. The last line follows since $\P(\mathcal{E}^c)\le C\exp(-cd)$ for some constant $c,C>0$.

Similarly for the variance we have
\begin{align*}
 &~{\rm trace}(\Var(\bar{\theta}^{(2)}))  
 =
 {\rm trace}\left(\Var
 \left(\sum_{i=1}^m\frac{n_i}{N}{\theta}_i^{(New)}\right)\right)\\
 \le&~
 2{\rm trace}\left(\Var\left(\sum_{i=1}^m\frac{n_i}{N}T_{n1}^{(i)}\right)\right)
 +
 2{\rm trace}\left(\Var\left(\sum_{i=1}^m\frac{n_i}{N}T_{n2}^{(i)}\right)\right)
 +
 {\rm trace}\left(\Var\left(\sum_{i=1}^m\frac{n_i}{N}T_{n3}^{(i)}\right)\right).
\end{align*}
where we use the fact that $T_{n3}$ is composed of privacy noise and is hence independent of $T_{n1}$ and $T_{n2}$. Note that by \eqref{eq:T1-up-bd} and using the bound on $\P(\mathcal{E}^c)$ we have
\begin{align*}
  {\rm trace}\left(\Var\left(\sum_{i=1}^m\frac{n_i}{N}T_{n1}^{(i)}\right)\right)
  \le&~ \frac{4}{\tau_1^2}\EE\|\bar{\theta}-\theta_0\|^4+
  \frac{C H^2}{\tau_1^2}\cdot \EE\|\bar{\theta}-\theta_0\|^2\cdot \left(
  \sum_{i=1}^m\frac{\sqrt{n_i}}{N}
  \right)^2
  +Cd\exp(-cd)
  \\
  \le&~ \frac{4}{\tau_1^2}\EE\|\bar{\theta}-\theta_0\|^4+
  \frac{Cm H^2}{N\tau_1^2}\cdot \EE\|\bar{\theta}-\theta_0\|^2.
\end{align*}

Next, since $\max_i\|\ddot{L}_i(\bar{\theta},X_i^{(1)})\|\le C\tau_1^{-1}$ on $\mathcal{E}$, by a similar decomposition into $\mathcal{E}$ and $\mathcal{E}^c$ we have
\begin{align*}
  &~{\rm trace}\left(\Var\left(\sum_{i=1}^m\frac{n_i}{N}T_{n2}^{(i)}\right)\right)\\
  \le&~ {\rm trace}\left(
  \EE\left(\Var\left(\sum_{i=1}^m
  \frac{n_i}{N}T_{n2}^{(i)}|\bar{\theta}\right)
  \right)\right)
  +{\rm trace}\left(\frac{C}{\tau_1^2}
  \Var\left(\EE\left(\sum_{i=1}^m
  \frac{n_i}{N}T_{n2}^{(i)}|\bar{\theta}\right)
  \right)\right)
  +Cd(\exp(-cd)\wedge N^{-4})
  \\
  \le&~
  \frac{C}{\tau_1^2}
  {\rm trace}\left(
  \left(\left(\sum_{i=1}^m
  \frac{n_i^2}{N^2}
  \Var(\dot{L}_i(\theta_0,X_i^{(2)}))\right)
  \right)\right)
  +0
    +Cd(\exp(-cd)\wedge N^{-4})
  \\
  =&~ 
  \frac{C}{\tau_1^2}
  {\rm trace}
  \left(\sum_{i=1}^m
  \frac{n_i^2}{N^2}
  \cdot\frac{{\rm trace}(\Sigma)}{n_i}
  \right)\\
  \le&~
  \frac{C{\rm trace}(\Sigma)}{N\tau_1^2}
  +Cd(\exp(-cd)\wedge N^{-4}).
\end{align*}

This implies
\begin{align*}
    \EE(\|\bar{\theta}^{(2)}-\theta_0\|^2|)
    \le&~
    \frac{C}{\tau_1^2m}\EE\|\bar{\theta}-\theta_0\|^4
    +
    \frac{C{\rm trace}(\Sigma)}{\tau^2 N}
    +
    \frac{12m(B^{(New)})^2d}{\mu^2N^2}\\
    &~
    +\frac{C}{\tau_1^2}
    \left(
    (\EE\|\bar{\theta}-\theta_0\|^2)^2
    +
    (\EE\|\bar{\theta}-\theta_0\|^2 )
    \cdot
    \frac{ m H^2}{N}
    \right)
\end{align*}
One can follow the proof of Theorem~\ref{thm:one-step-fedavg} to show that $\EE\|\bar{\theta}-\theta_0\|^4\le C (\EE\|\bar{\theta}-\theta_0\|^2)^2$ so that, now using Theorem~\ref{thm:one-step-fedavg} we have
\begin{align*}
   \EE(\|\bar{\theta}^{(2)}-\theta_0\|^2) 
   \le&~
   \frac{C{\rm trace}(\Sigma)}{\tau_1^2 N}
    +
    \frac{12m(B^{(New)})^2d}{\mu^2N^2}\\
    &~+\frac{CmH^2}{\tau_1^2N^3}
    \left(
    m^2({\rm trace}(\Sigma))^2
    +
    m\sum_{i=1}^m\frac{B^4K_{1i}^2d^2}{\mu^4n_i^2}
    \right)\\
    &~\frac{C}{\tau_1^2N^4}
    \left(
    m^4({\rm trace}(\Sigma))^4
    +
    m^3\sum_{i=1}^m\frac{B^8K_{1i}^4d^4}{\mu^8n_i^4}
    \right)\\
    \le&~\frac{C{\rm trace}(\Sigma)}{\tau_1^2 N}
    +
    \frac{12m(B^{(New)})^2d}{\mu^2N^2}
\end{align*}
where we use the assumption that ${\rm trace}(\Sigma)\le d$, $H^2\le C d$, and $m^3d^2\le N^2$ to ensure that the first term dominates over the third and fifth terms. Similarly, the fourth term can be bounded as
\begin{align*}\label{eq:m-by-N-factor-reduction}
   \frac{Cm^2H^2}{\tau_1^2N^3}\sum_{i=1}^m\frac{B^4K_{1i}^2d^2}{\mu^4n_i^2}
   \le&~
   \frac{Cm^3d^2}{\tau_1^2N^3}\cdot\frac{B^4K_{max}^2}{\mu^4n_{\min}^2}\\
   =&~
   \frac{Cm^2K_{max}}{\tau_1^2N}
   \cdot \frac{m(B^{(New)})^2d}{\mu^2N^2}
   \cdot \frac{B^2K_{max}d}{\mu^2n_{\min}^2}\\
   \le&~ \frac{Cm(B^{(New)})^2d}{\mu^2N^2}\numberthis
\end{align*}
where we use the assumptions $m\le \mu\sqrt{Nn_{\min}^2}/(CK_{\max}B\sqrt{d})$. The sixth term can again be bounded as:
\begin{align*}
   \frac{Cm^3}{\tau_1^2N^4}\sum_{i=1}^m\frac{B^8K_{1i}^4d^4}{\mu^8n_i^4}
   \le&~
   \frac{Cm^4d^4}{\tau_1^2N^4}\cdot\frac{B^8K_{max}^4}{\mu^8n_{\min}^4}\\
   =&~
   \frac{Cm^2d^2K_{max}^2}{\tau_1^2N^2}
   \cdot \frac{m^2(B^{(New)})^4d^2}{\mu^4N^2}
   \cdot \frac{B^4K_{max}^2}{\mu^4n_{\min}^4}\\
   \le&~ \frac{Cm(B^{(New)})^2d}{\mu^2N^2}
\end{align*}
provided $m\le \mu^2n_{\min}^{4/3}N^{2/3}/(CdB^2K_{\max}^{4/3})$ which follows if $B=C(\sqrt{d}\vee \sqrt{\log N})$ and
\[
m\le \frac{\mu^{4/3}n_{\min}^{4/3}}{d^{1/3}(d\vee (\log N))K_{\max}^{4/3}}\cdot \frac{\mu^{2/3}N^{2/3}}{Cd^{2/3}}
=\left(\frac{n_{\min}^2\mu^2}{\sqrt{d}(d\vee \log N)^{3/2}K_{\max}^{2}}\right)^{2/3}
\left(\frac{N^2\mu^2}{md^2}\right)^{1/3}\cdot m^{1/3}/C
\]
i.e.,
\[
m\le \frac{n_{\min}^2\mu^2}{C\sqrt{d}(d\vee \log N)^{3/2}K_{\max}^{2}}
\cdot \sqrt{\frac{N^2\mu^2}{md^2}}
\]
which is implied by $m\le \mu^2(Nn_{\min}^2)^{2/3}/(C(d\vee \log N)^2K_{\max}^{4/3})$. The final rate thus becomes
\[
\EE(\|\bar{\theta}^{(2)}-\theta_0\|^2) 
   \le~
   \frac{C{\rm trace}(\Sigma)}{\tau_1^2 N}
    +
    \frac{12md^2}{\mu^2N^2}
\]    
when $B=C(\sqrt{d}\vee \sqrt{\log N})$. The sharper bound that coincides with Theorem~\ref{th:alg1} follows by first noting that
\[\frac{1}{\tau_1^2}
    \left(
 \sum_{i=1}^m
 \left(
    \frac{{\rm trace}(\Sigma)}{n_i}
    +\frac{2C_B^2 d^2\log(dN)}{\mu^2 n_i^2\log(1-\tau_1/3\tau_2)}\right)^{-1}
 \right)^{-1}
  \le \frac{Cd^2}{\sum_{i=1}^m(dn_i\wedge (n_i^2\mu^2/K_{1i}))}
\]
for a constant $C>0$ and by choosing the Newton step aggregation weights to be $\hat{w}_i\propto \left(
\frac{d}{n_i}+\frac{C_B^2d^2K}{\mu^2 n_i^2}\right)^{-1}$ with $\displaystyle\sum_{i=1}^m\hat{w}_i=1$. One can then follow analogous algebraic steps after \eqref{eq:m-by-N-factor-reduction} with $m\le (\mu^2n_{\min}^2N/(d\vee \log N)^3K_{\max}^2)^{1/3}$. We omit the details for brevity.
\end{proof}

\medskip

\begin{proof}[Proof of Theorem~\ref{th:low-bd}] From Corollary 1 in \cite{dong2022gaussian} we have that an estimator $\hat{\theta}$ is $\mu$-GDP if and only if it is $(\eps,\delta(\eps))$-DP where
    \[
\delta(\varepsilon)
=
\Phi\!\left(-\frac{\varepsilon}{\mu}+\frac{\mu}{2}\right)
-
e^{\varepsilon}\Phi\!\left(-\frac{\varepsilon}{\mu}-\frac{\mu}{2}\right)
\]

Let \(\varepsilon = c\mu\) where $\mu\to 0$ and $c=\sqrt{2\log\mu^{-1}}\to \infty$. Then
\begin{align*}
\delta(c\mu)
=&~
\Phi\!\left(-c+\frac{\mu}{2}\right)
-
e^{c\mu}\Phi\!\left(-c-\frac{\mu}{2}\right)\\
=&~
\mu\big(\phi(c)-c\Phi(-c)\big)+O(\mu^2)\\
=&~
\mu\phi(c)\left(\frac{1}{c^2}+O(c^{-4})\right)+O(\mu^2)\\
=&~
\mu\,\frac{\phi(c)}{c^2}\big(1+o(1)\big)\\
=&~
\frac{\mu}{\sqrt{2\pi}}\frac{e^{-c^2/2}}{c^2}\big(1+o(1)\big)\\
=&~\frac{\mu}{\sqrt{2\pi}}\frac{\mu}{c^2}\big(1+o(1)\big)
=
\frac{\mu^2}{\sqrt{2\pi}\,c^2}\big(1+o(1)\big)\\
=&~
\frac{\mu^2}{2\sqrt{2\pi}\,\log(1/\mu)}\big(1+o(1)\big).
\end{align*}
We will now use the technique for deriving lower bounds under distributed $(\eps,\delta)$-privacy as done in \cite{cai2024optimal,auddy2024minimax,xue2024optimal}.

Let $\mathcal{P}$ be the set of all distributions satisfying assumptions 1 to 4. In particular, $\mathcal{P}$ contains $P_{\theta}$, the set of joint distributions on $(\bx,y)$ with $\bx\sim N(0,\II_d)$ and the following generalized linear model on $y|\bx$. 
\[
p_{\theta}(y|\bx):=h(y)
\exp\left(\frac{\bx^{\top}\theta y-\psi(\bx^{\top}\theta)}{c}\right);\, \|\bx\|_{\rm \infty}\le B
\]
satisfying $\max\{\psi'(0),\sup_{u\in \RR}\psi''(u)\}\le L$.

For the lower bound, we will use the multivariate Van Trees inequality (see, e.g., Theorem 1 of \cite{gill1995applications}, Lemma 4.3 of \cite{cai2024optimal}):
\[
\sup_{\theta}
\EE\|\hat{\theta}-\theta\|^2
\ge 
\int \EE\|\hat{\theta}-\theta\|^2 \zeta(\theta) d\theta 
\ge 
\frac{Cd^2}{\,{\rm trace}(\mathbb E_\theta[I_F(T;\theta)])\ +\ I_F(\zeta)\,}, 
\]
where $\zeta$ is a prior on $\theta$ supported on $\RR^d$, and
\begin{equation}\label{eq:van-trees}
\mathbb E_\theta[I_F(T;\theta)]=\int I_F(T;\theta)\,\zeta(\theta)\,d\theta,
\quad
I_F(\zeta)=\int \frac{1}{\xi(\theta)}\|\xi'(\theta)\|^2d\theta.
\end{equation}
Here $T$ is the final privatized transcript obtained after $R$ rounds of communication between the clients and the server.

The above result holds under some regularity conditions, for every estimator $\hat\theta=\hat\theta(T)$ with $\mathbb E[\|\hat\theta-\theta\|^2]<\infty$ under the joint law of $(T,\theta)$. Following the proof of Proposition 10 of \cite{xue2024optimal}, let us define
\begin{align*}
    M^{(r)}:=&\text{all information shared from server to clients in $r$ rounds}\\
    T^{(r)}_s:=&T^{(r)}_s(\{(X_j^{(r)},y_j^{(r)}):1\le j\le n_s^{(r)}\},M^{(r-1)}),
    \text{private transcript from client $s$ in round $r$.}
\end{align*}
Here $1\le s\le m$ and $1\le r\le R$. Note that by sample splitting, the random variables $\{(X_j^{(r)},y_j^{(r)}):1\le j\le n_s^{(r)}\}$ are independent for $1\le s\le m$ and $1\le r\le R$. Then by using conditional expectations, similar to equation 56 in \cite{xue2024optimal} we have
\begin{equation}\label{eq:fisher-decomp}
I_F(T;\theta)=\sum_{s=1}^m\sum_{r=1}^RI_F({T_s^{(r)}|M^{(r-1)}},(\theta)).
\end{equation}
By the definition of Fisher information, we have
\begin{align*}
    I_F({T_s^{(r)}|M^{(r-1)}},(\theta))=\EE \EE[S(\mathcal{D}_s^{(r)})|T_s^{(r)},M^{(r-1)}] \EE[S(\mathcal{D}_s^{(r)})|T_s^{(r)},M^{(r-1)}]^{\top}
\end{align*}
where $S(\mathcal{D}_s^{(r)})$ denotes the score function w.r.t. $\theta$ on $\mathcal{D}_s^{(r)}:=\{(X_j^{(r)},y_j^{(r)}):1\le j\le n_s^{(r)}\}$. The crux of the paper is supplied by Lemma 4.2 of \cite{cai2024optimal} and its use in the proof of Proposition 12 of \cite{xue2024optimal}. It is thus enough to check their conditions. By our assumptions on bounded covariates and link function $\psi$, we can ensure that the score function:
\[
\nabla_{\theta}\log p_{\theta}(y,\bx)=(y-\psi'(\bx^{\top}\theta))\bx
\]
is sub-Gaussian with sub-Gaussianity parameter $Cd$ for some constant $C>0$. We can then follow the proof of Proposition 12 in \cite{xue2024optimal} to write:
\begin{align*}
    {\rm trace}(I_F({T_s^{(r)}|M^{(r-1)}},(\theta)))
    =&~
    {\rm trace}(\EE \EE[S(\mathcal{D}_s^{(r)})|T_s^{(r)},M^{(r-1)}] \EE[S(\mathcal{D}_s^{(r)})|T_s^{(r)},M^{(r-1)}]^{\top})\\
    \le&~ C(n_s^{(r)})^2\mu^2\log(\mu^{-1}).
\end{align*}
We also have the non-private upper bound
\begin{align*}
    {\rm trace}(I_F({T_s^{(r)}|M^{(r-1)}},(\theta)))
    \le Cdn_s^{(r)},
\end{align*}
leading to the combined upper bound:
\begin{align*}
    {\rm trace}(I_F({T_s^{(r)}|M^{(r-1)}},(\theta)))
    \le 
    C((n_s^{(r)})^2\mu^2\log(\mu^{-1}))
    \wedge 
    Cdn_s^{(r)}).
\end{align*}
which when added across $s=1,..., m$ and $r=1,...,R$, together with \eqref{eq:fisher-decomp} yields
\begin{equation}\label{eq:fisher-bound}
I_F(T;\theta)
\le \sum_{s=1}^m\sum_{r=1}^R
    C((n_s^{(r)})^2\mu^2\log(\mu^{-1}))
    \wedge 
    Cdn_s^{(r)}).
\end{equation}
Following \cite{xue2024optimal} we choose the prior $\zeta$ to be $N(\mathbf{0},I_d)$ truncated to $[-1,1]^d$. It can be checked that $I_F(\zeta)\le Cd$ so that \eqref{eq:van-trees} and \eqref{eq:fisher-bound} together imply:
\[
\sup_{\theta}
\EE\|\hat{\theta}-\theta\|^2
\ge \frac{Cd^2}{\sum_{s=1}^m\sum_{r=1}^R
    C((n_s^{(r)})^2\mu^2\log(\mu^{-1}))
    \wedge 
    Cdn_s^{(r)})+Cd}
\]
for a constant $C>0$. Note that $\sum_{r=1}^R(n_s^{(r)})^2\le (\sum_{r=1}^Rn_s^{(r)})^2=n_s^2$, thus implying:
\begin{align*}
\sup_{\theta}
\EE\|\hat{\theta}-\theta\|^2
\ge&~ \frac{d^2}{\sum_{s=1}^m
    C(n_s^2\mu^2\log(\mu^{-1}))
    \wedge 
    Cdn_s)+Cd}\\
\ge&~ \frac{d^2}{\sum_{s=1}^m
    C(n_s^2\mu^2\log(\mu^{-1}))
    \wedge 
    Cdn_s)}    .
\end{align*}
\end{proof}

\section{Additional Simulation Results and Details}
\label{appsim}

\subsection{Logistic regression additional figures}

\begin{figure}[h]
    \centering

    \begin{subfigure}{0.47\textwidth}
        \centering
        \includegraphics[width=\textwidth]{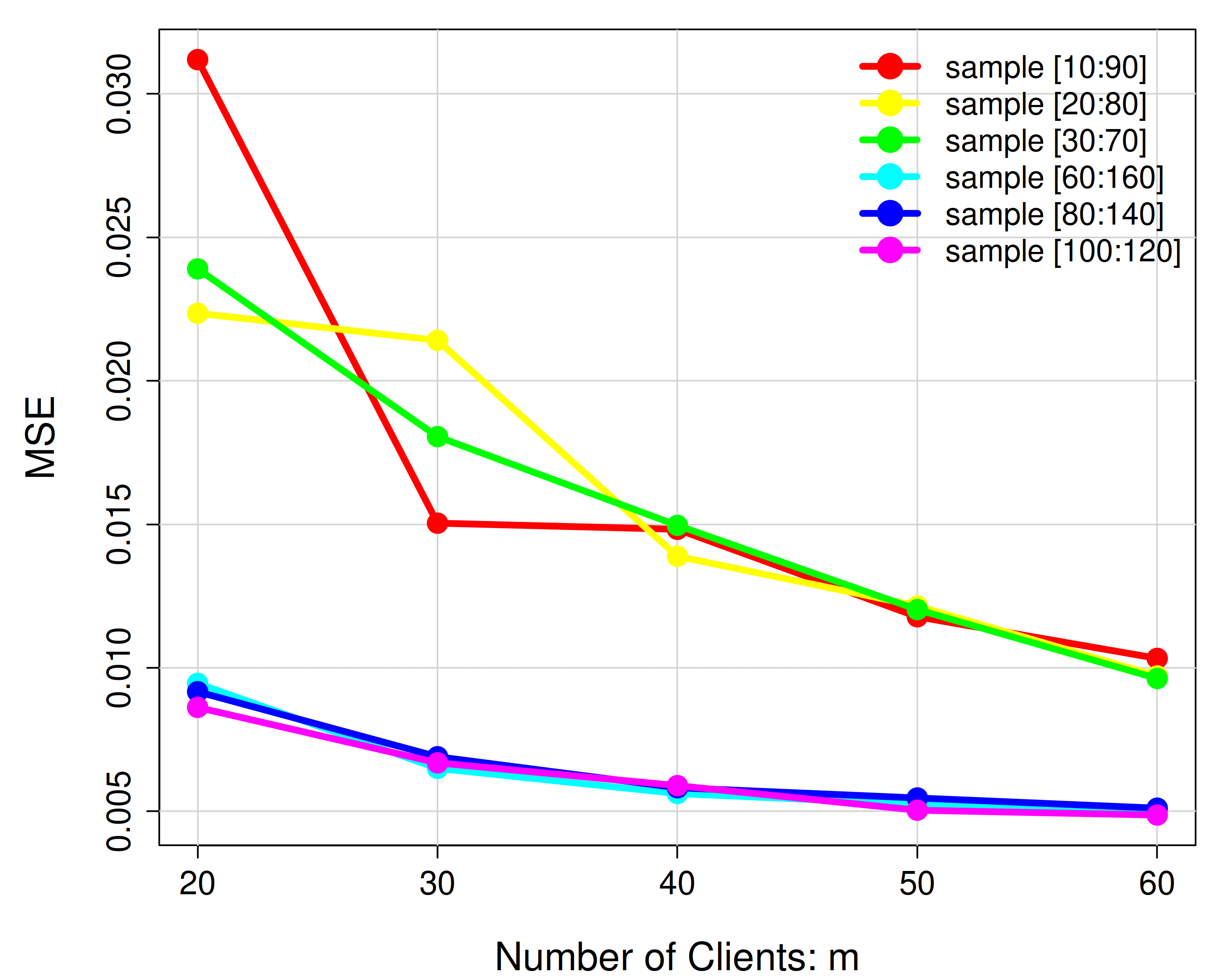}
        \caption{\FedSGD, different $n_i$}
    \end{subfigure}
    \hfill
    \begin{subfigure}{0.47\textwidth}
        \centering
        \includegraphics[width=\textwidth]{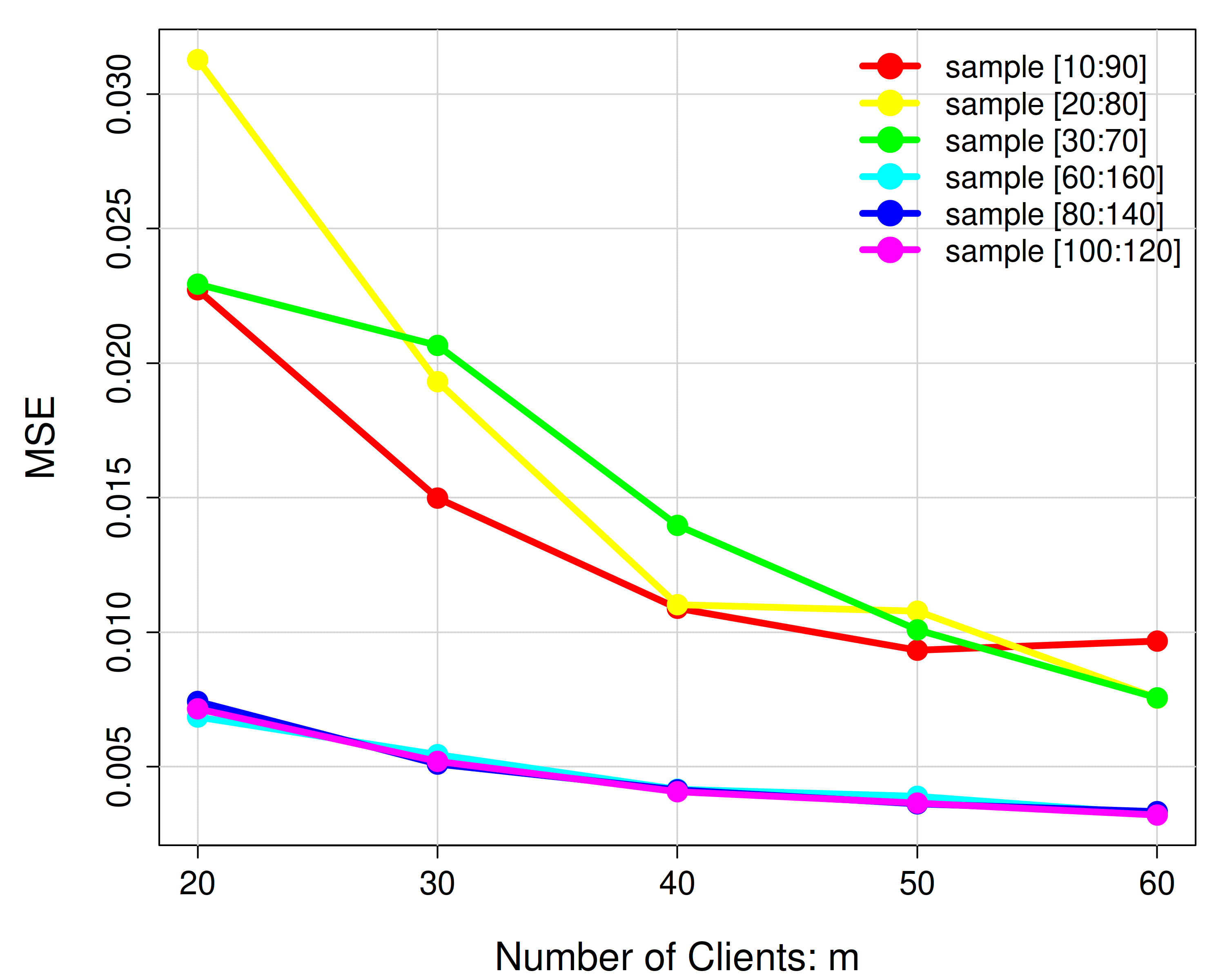}
        \caption{\FedHybrid, different $n_i$}
    \end{subfigure}

    \vspace{0.6cm}

    \caption{Empirical MSE results of DP version of \FedSGD\ and \FedHybrid\ under different local sample sizes in Logistic Regression.}

\label{fig:log_ag12_different_ni}
\end{figure}

This section contains additional figures from the simulation study using logistic regression. These figures depict empirical MSE of the methods with an increasing number of clients for the case when the client sample sizes $n_i$ differ across the clients. 

\begin{figure}[H]
    \centering
    \begin{subfigure}[t]{0.47\textwidth}
        \centering
        \includegraphics[width=\textwidth]{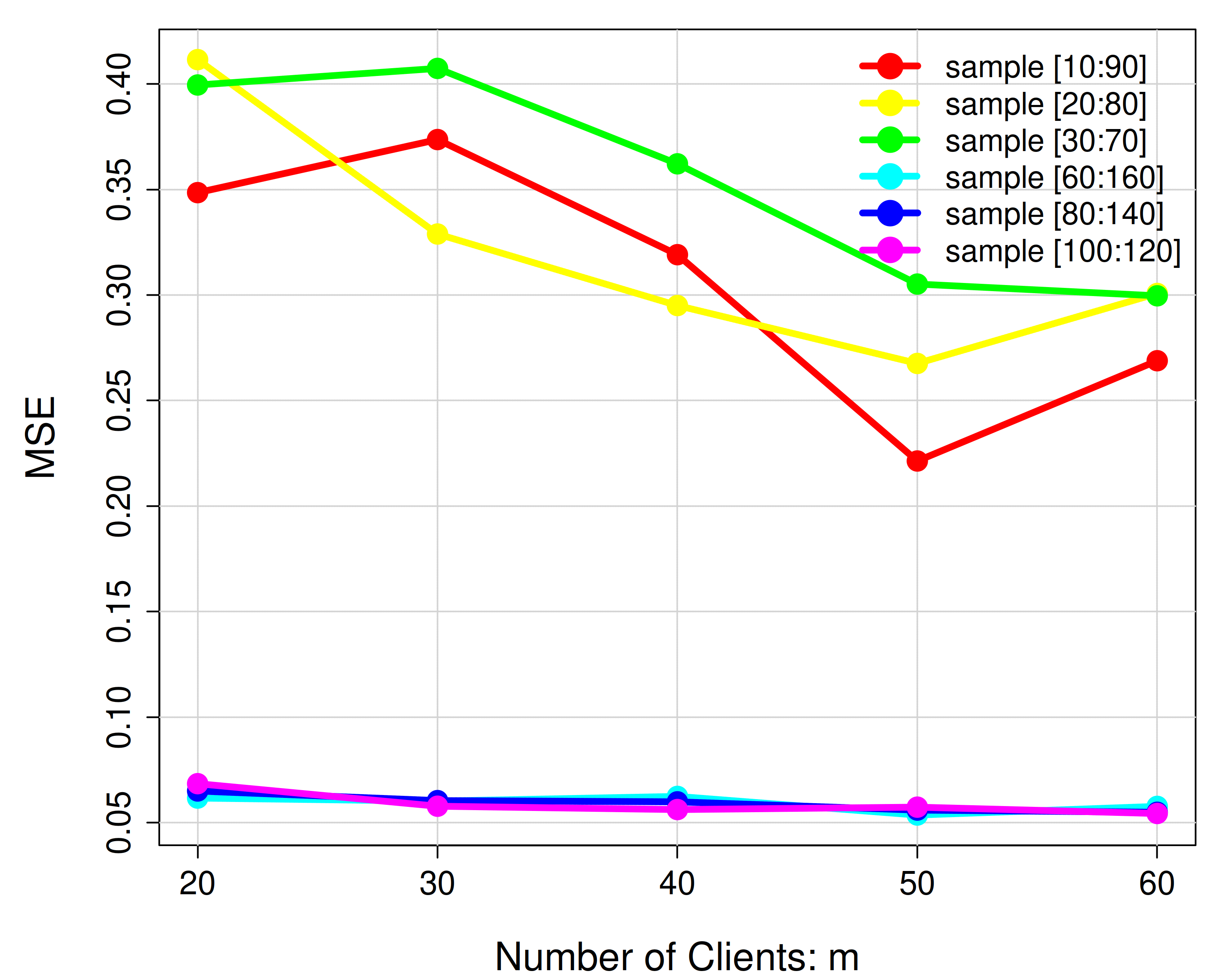}
        \caption{\FedAvg, different $n_i$}
    \end{subfigure}
\hfill
    \begin{subfigure}[t]{0.47\textwidth}
        \centering
        \includegraphics[width=\textwidth]{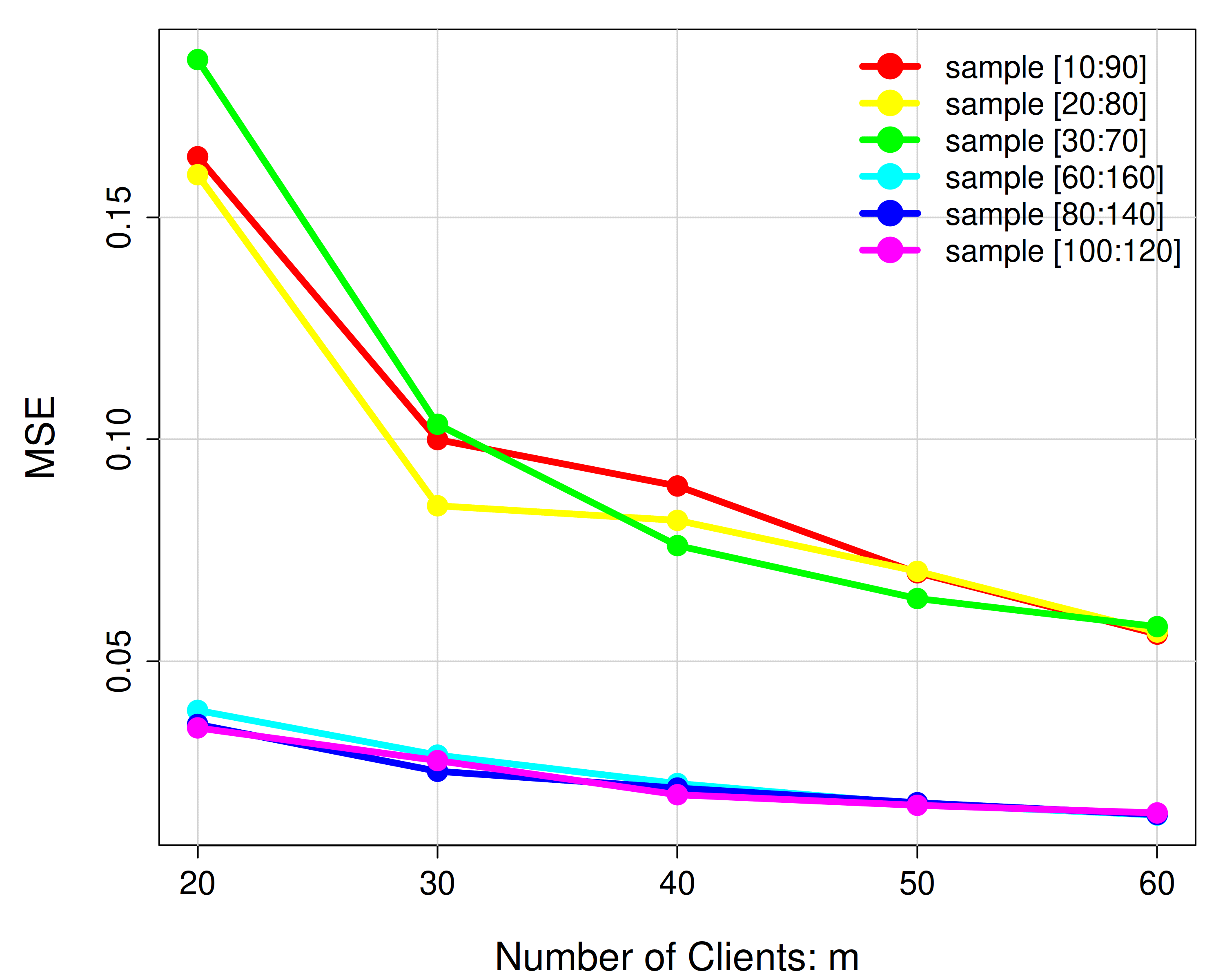}
        \caption{\FedNewton, different $n_i$}
    \end{subfigure}

    \caption{Empirical MSE results of \FedAvg\ and \FedNewton\ under different local sample sizes in Logistic Regression.}
    \label{fig:log_ag34_different_ni}
\end{figure}

\subsection{Poisson GLM simulation figures}

\begin{figure}[h]
    \centering
    \begin{subfigure}[t]{0.45\textwidth}
        \centering
        \includegraphics[width=\textwidth]{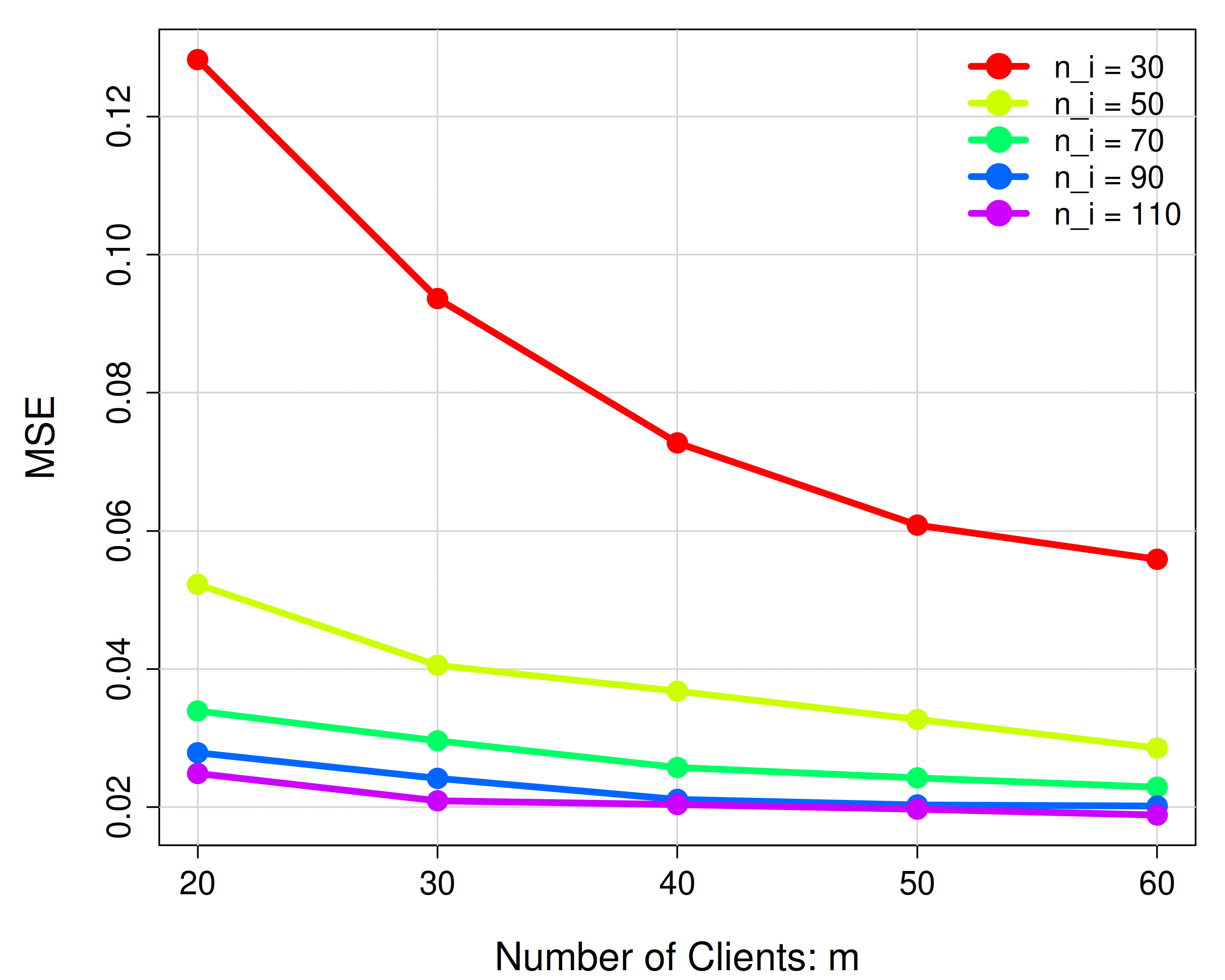}
        \caption{\FedSGD, same $n_i$}
    \end{subfigure}
    \hfill
    \begin{subfigure}[t]{0.45\textwidth}
        \centering
        \includegraphics[width=\textwidth]{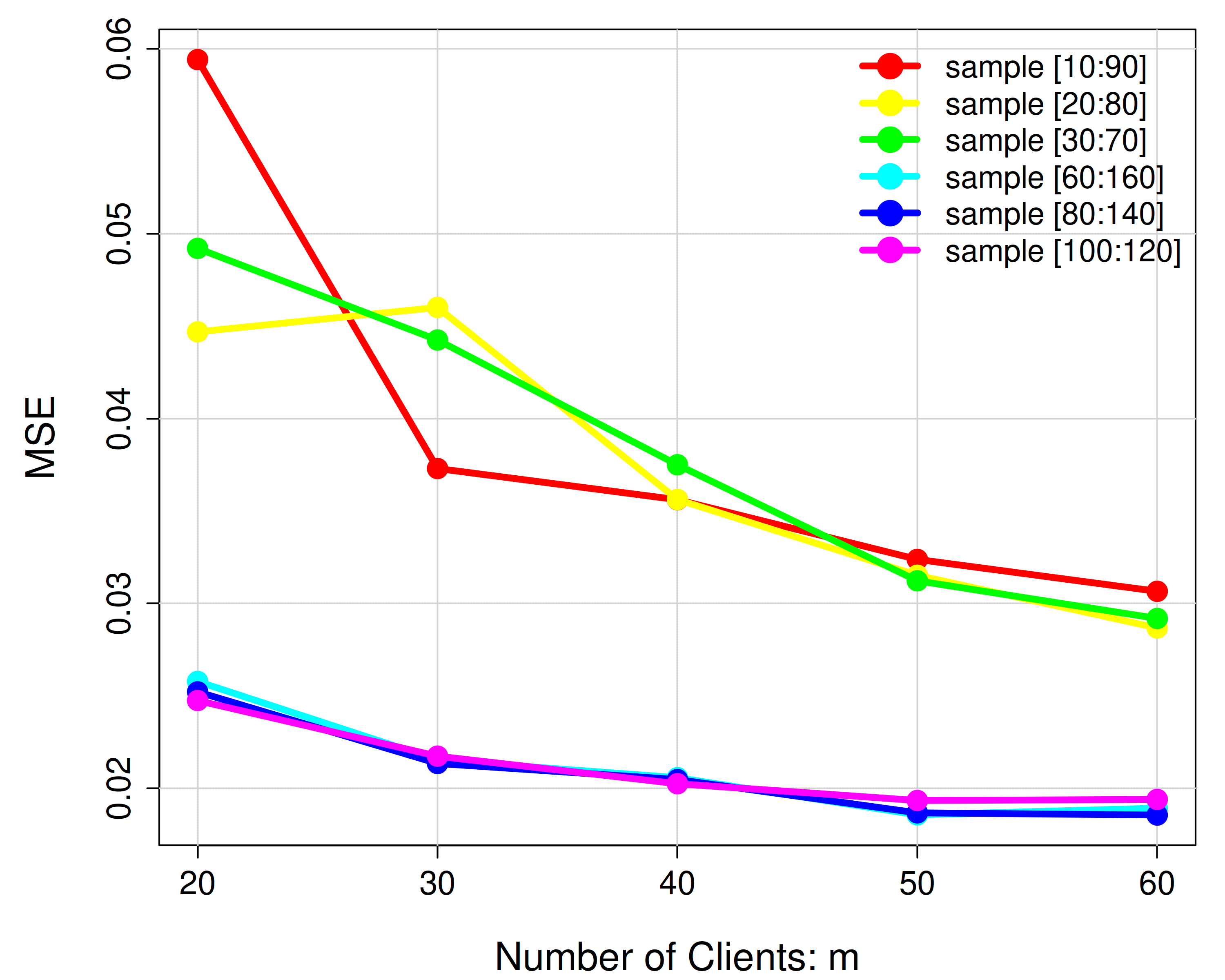}
        \caption{\FedSGD, different $n_i$}
    \end{subfigure}

    \vspace{0.6cm}

    \begin{subfigure}[t]{0.45\textwidth}
        \centering
        \includegraphics[width=\textwidth]{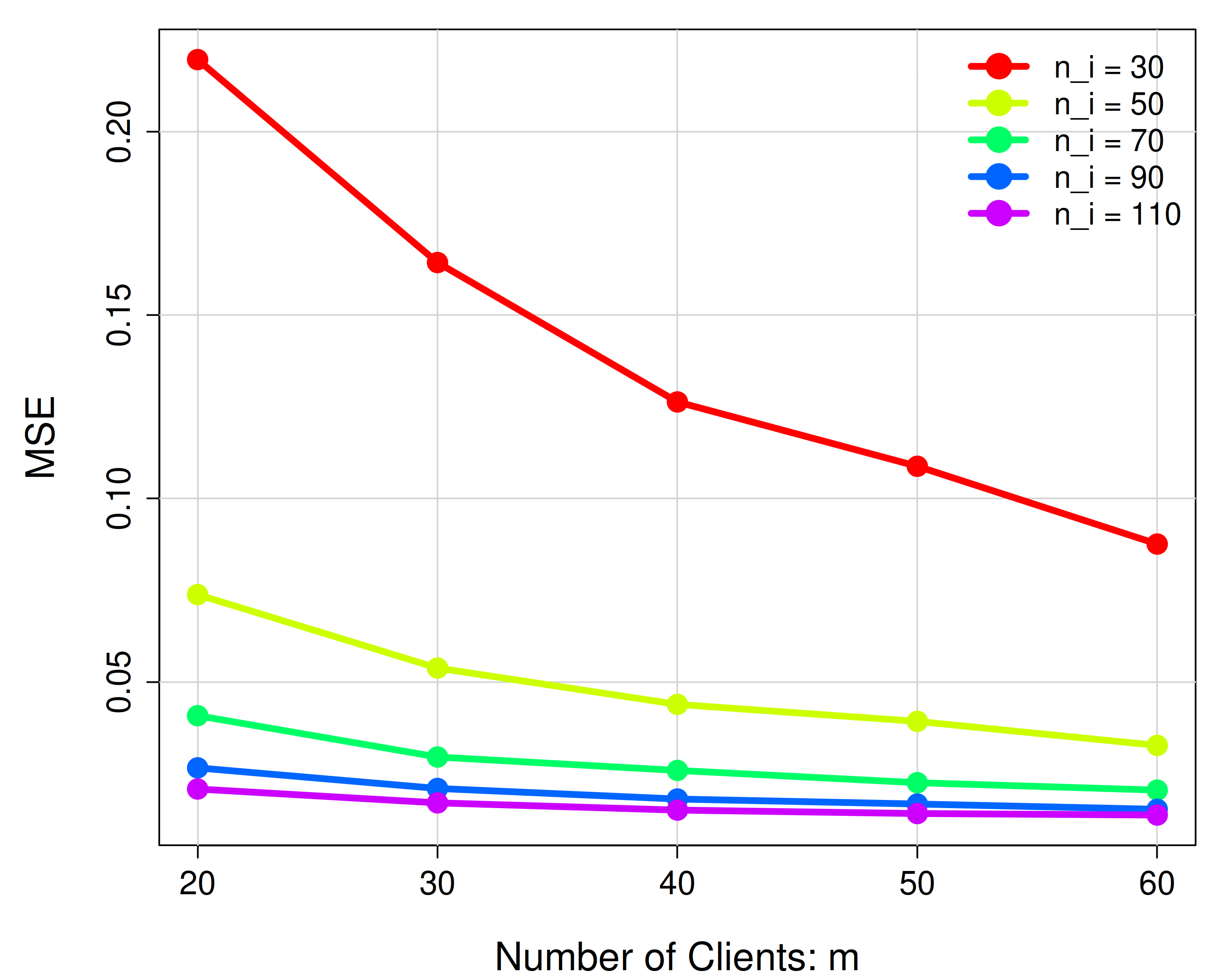}
        \caption{\FedHybrid, same $n_i$}
    \end{subfigure}
    \hfill
    \begin{subfigure}[t]{0.45\textwidth}
        \centering
        \includegraphics[width=\textwidth]{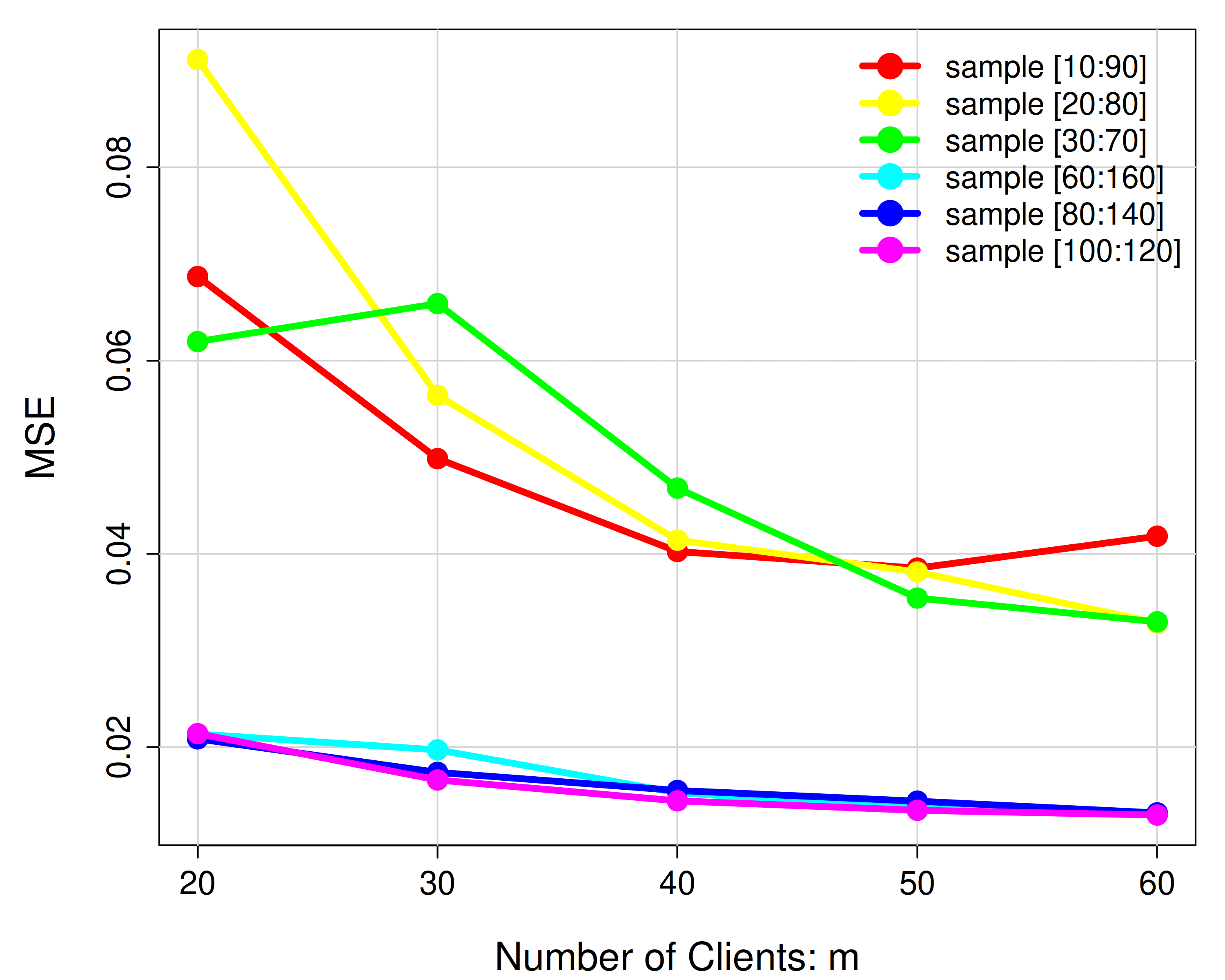}
        \caption{\FedHybrid, different $n_i$}
    \end{subfigure}

    \vspace{0.6cm}

    \caption{Empirical MSE results of \FedSGD\ and \FedHybrid\ under Equal and different Local Sample Sizes in Poisson GLM.}

\label{fig:poi_ag12_equal_different_ni}
\end{figure}

\begin{figure}
    \centering

    \begin{subfigure}[t]{0.45\textwidth}
        \centering
        \includegraphics[width=\textwidth]{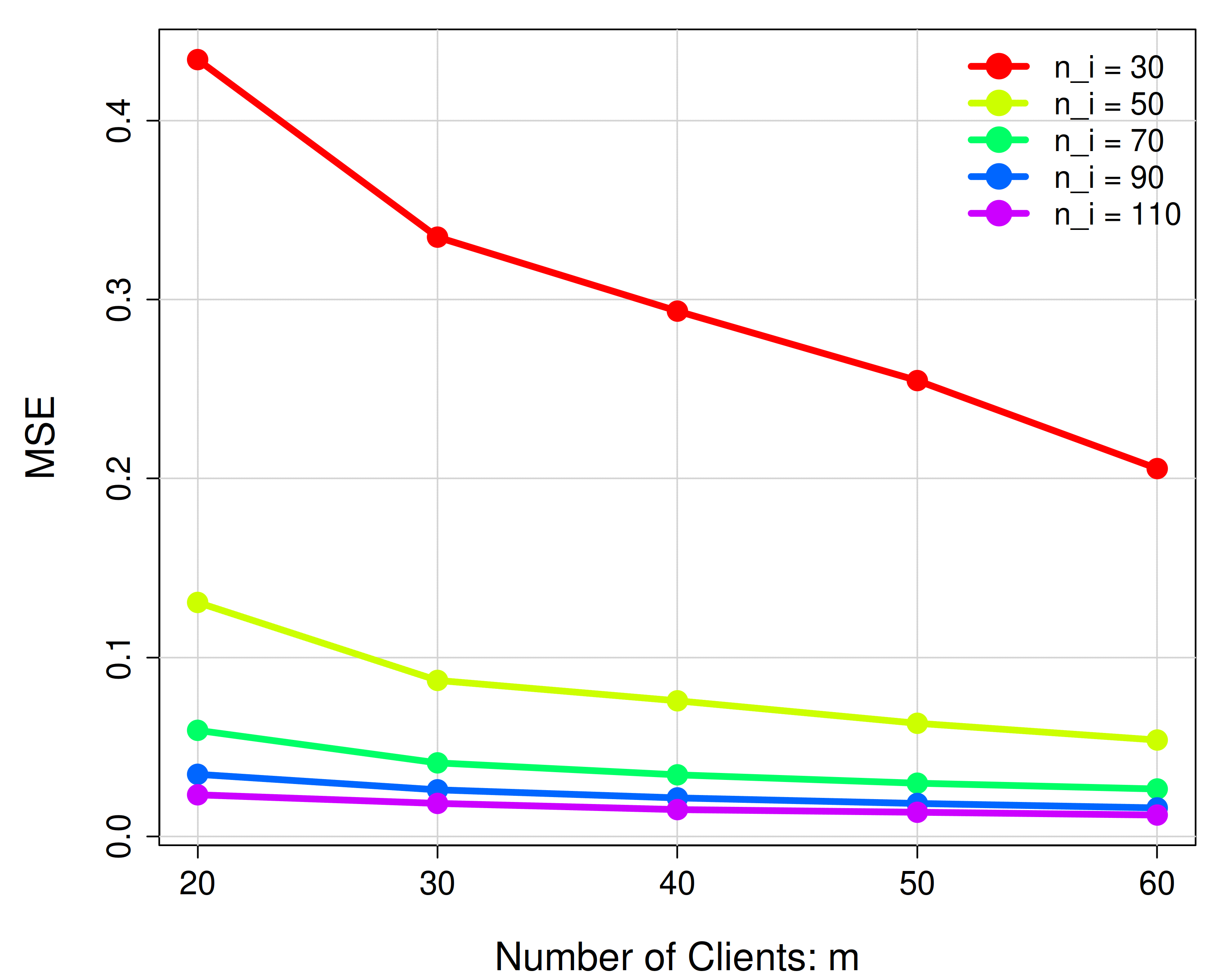}
        \caption{\FedAvg, same $n_i$}
    \end{subfigure}
    \hfill
    \begin{subfigure}[t]{0.45\textwidth}
        \centering
        \includegraphics[width=\textwidth]{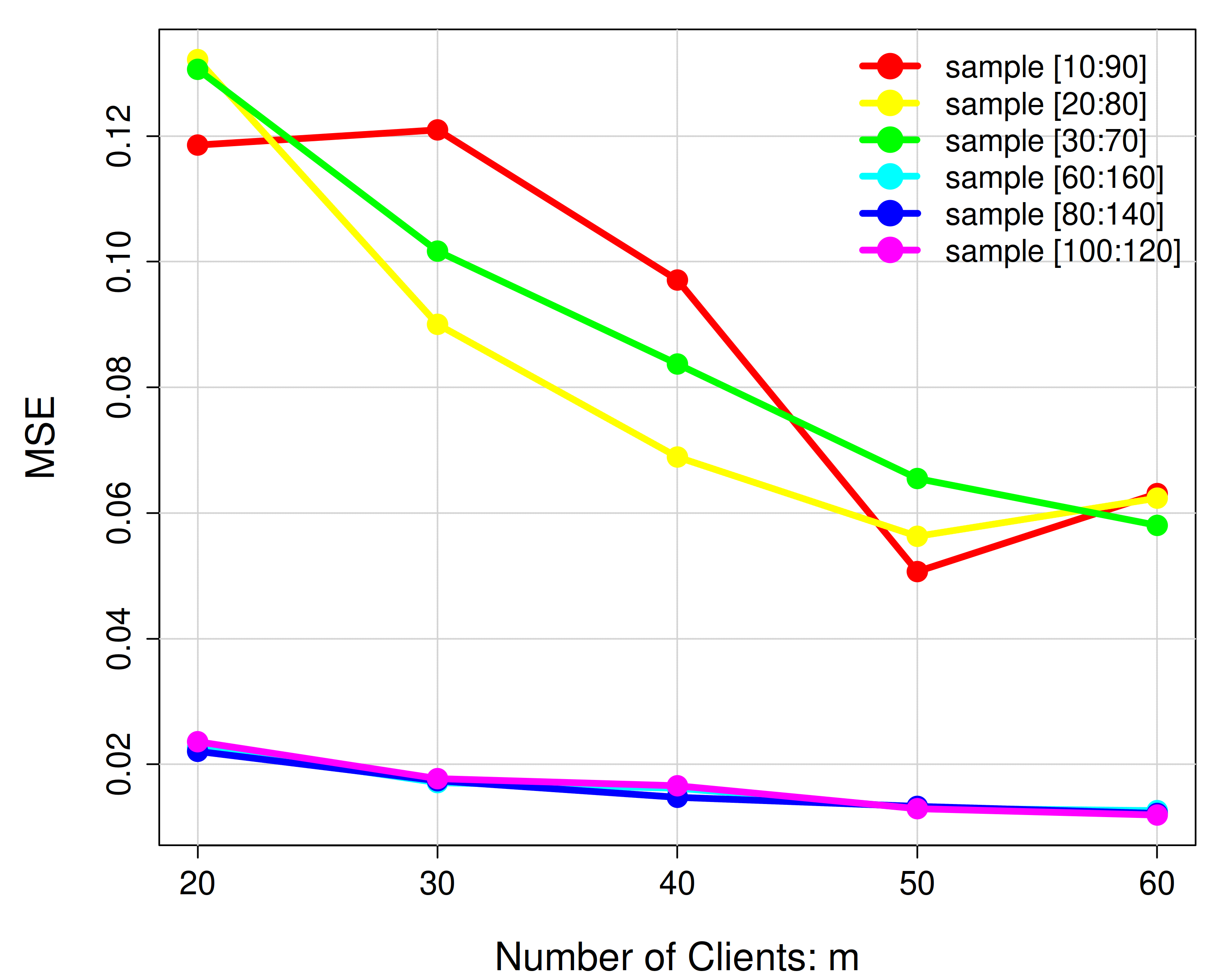}
        \caption{\FedAvg, different $n_i$}
    \end{subfigure}

    \begin{subfigure}[t]{0.45\textwidth}
        \centering
        \includegraphics[width=\textwidth]{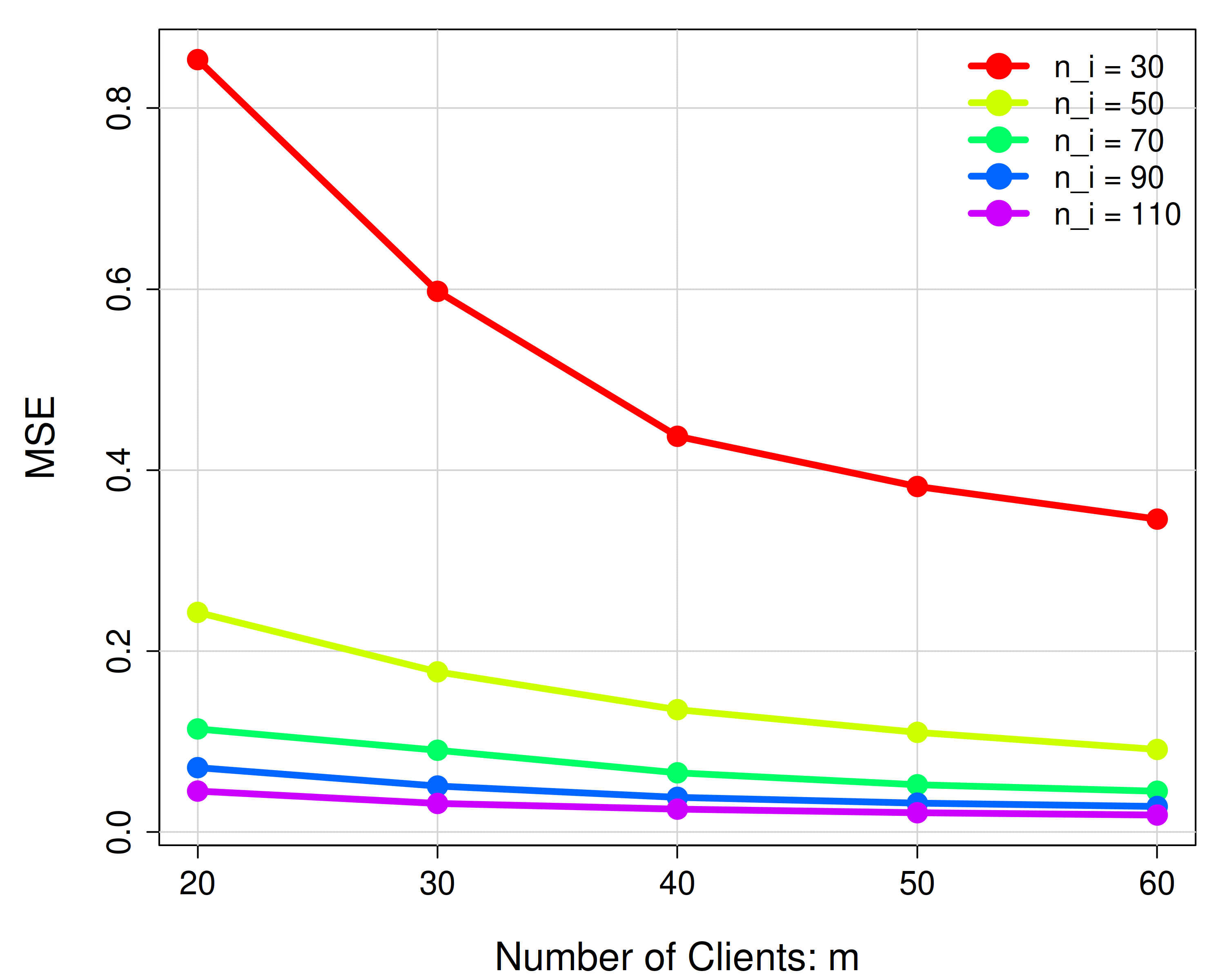}
        \caption{\FedNewton, same $n_i$}
    \end{subfigure}
    \hfill
    \begin{subfigure}[t]{0.45\textwidth}
        \centering
        \includegraphics[width=\textwidth]{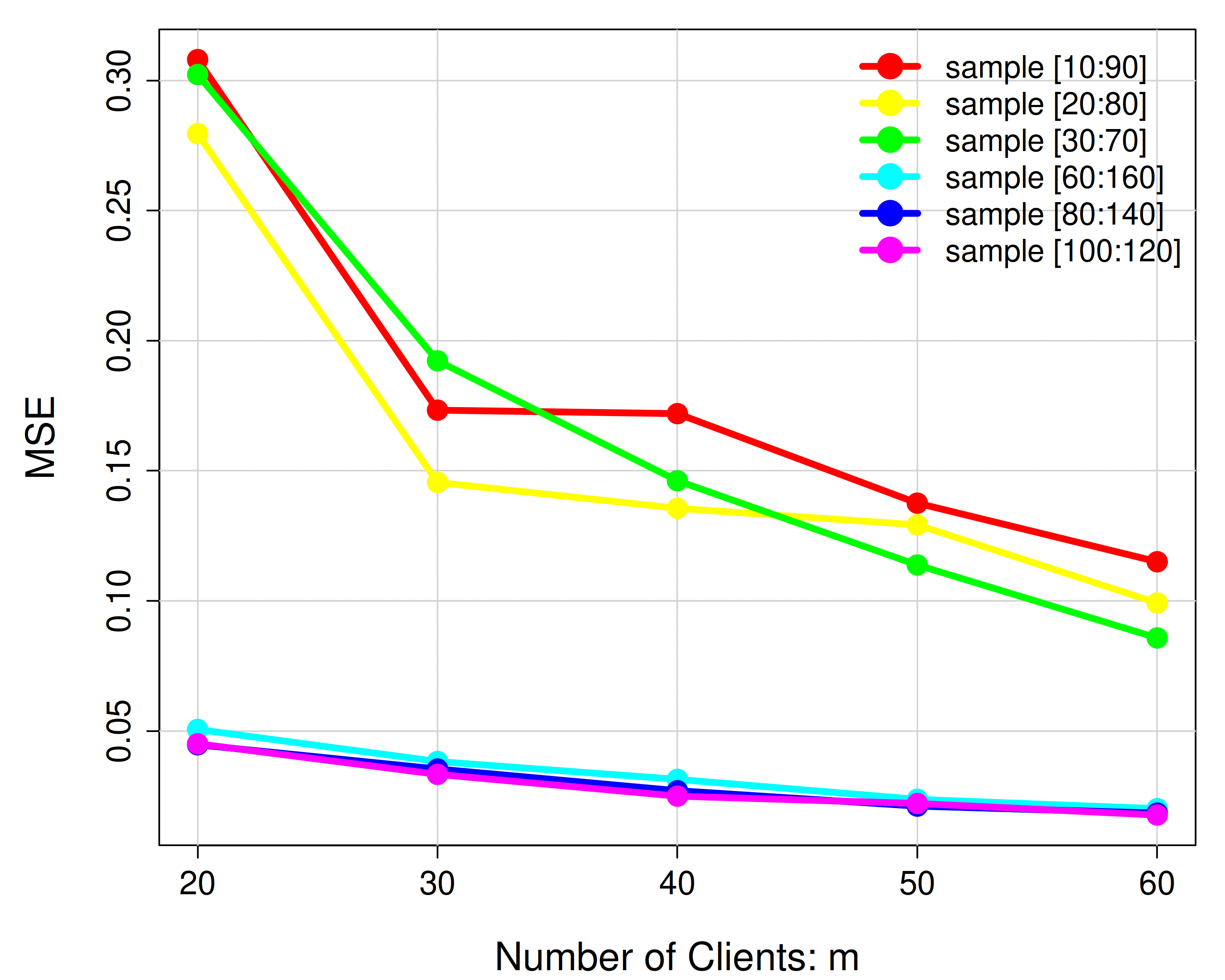}
        \caption{\FedNewton, different $n_i$}
    \end{subfigure}

    \caption{Empirical MSE results of \FedAvg\ and \FedNewton\ under Equal and different Local Sample Sizes in Poisson GLM.}
    \label{fig:poi_ag34_equal_different_ni}
\end{figure}

\begin{figure}[h]
    \centering

    \begin{subfigure}[t]{0.47\textwidth}
        \centering
        \includegraphics[width=\textwidth]{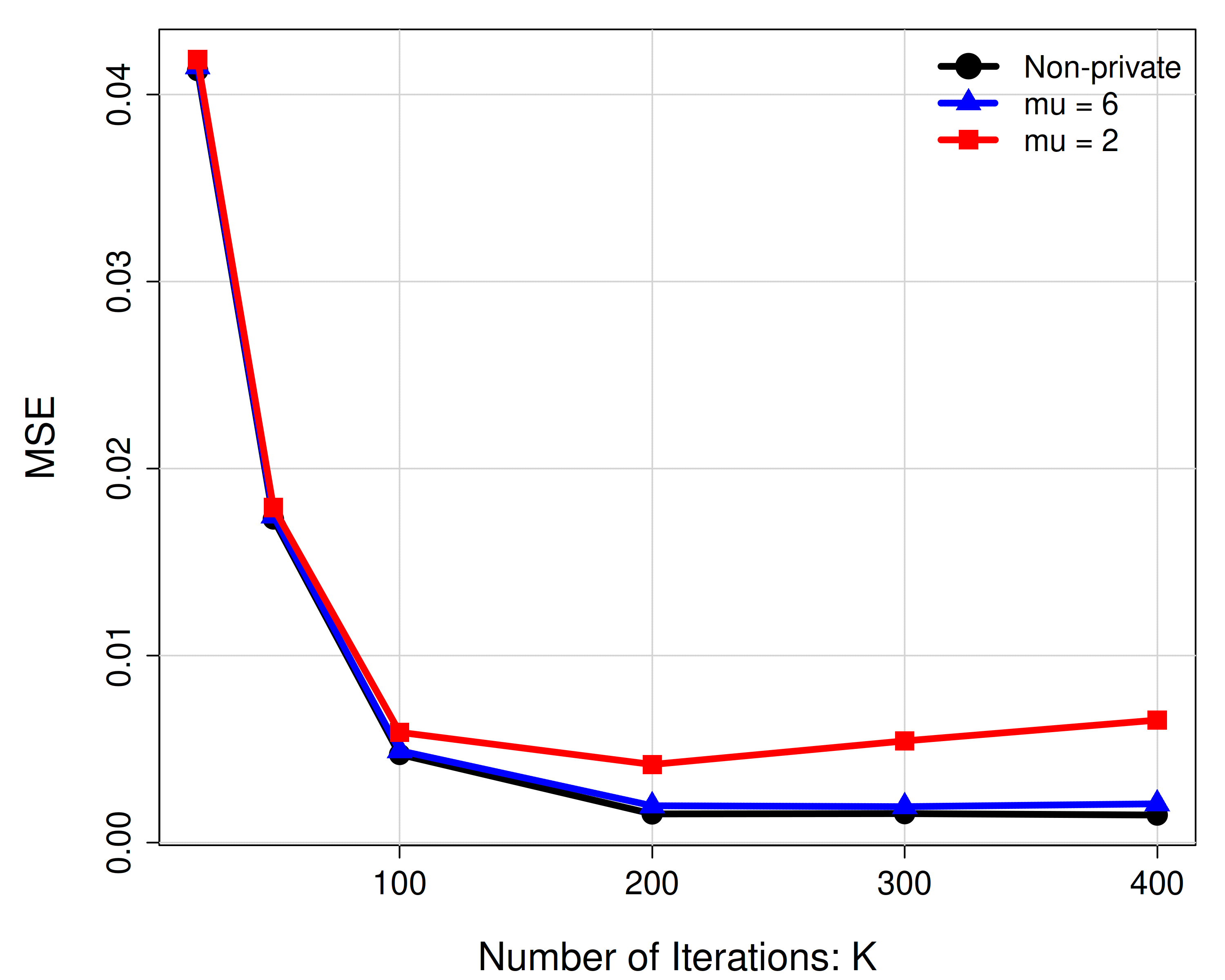}
        \caption{\FedSGD}
    \end{subfigure}
 \hfill
    \begin{subfigure}[t]{0.47\textwidth}
        \centering
        \includegraphics[width=\textwidth]{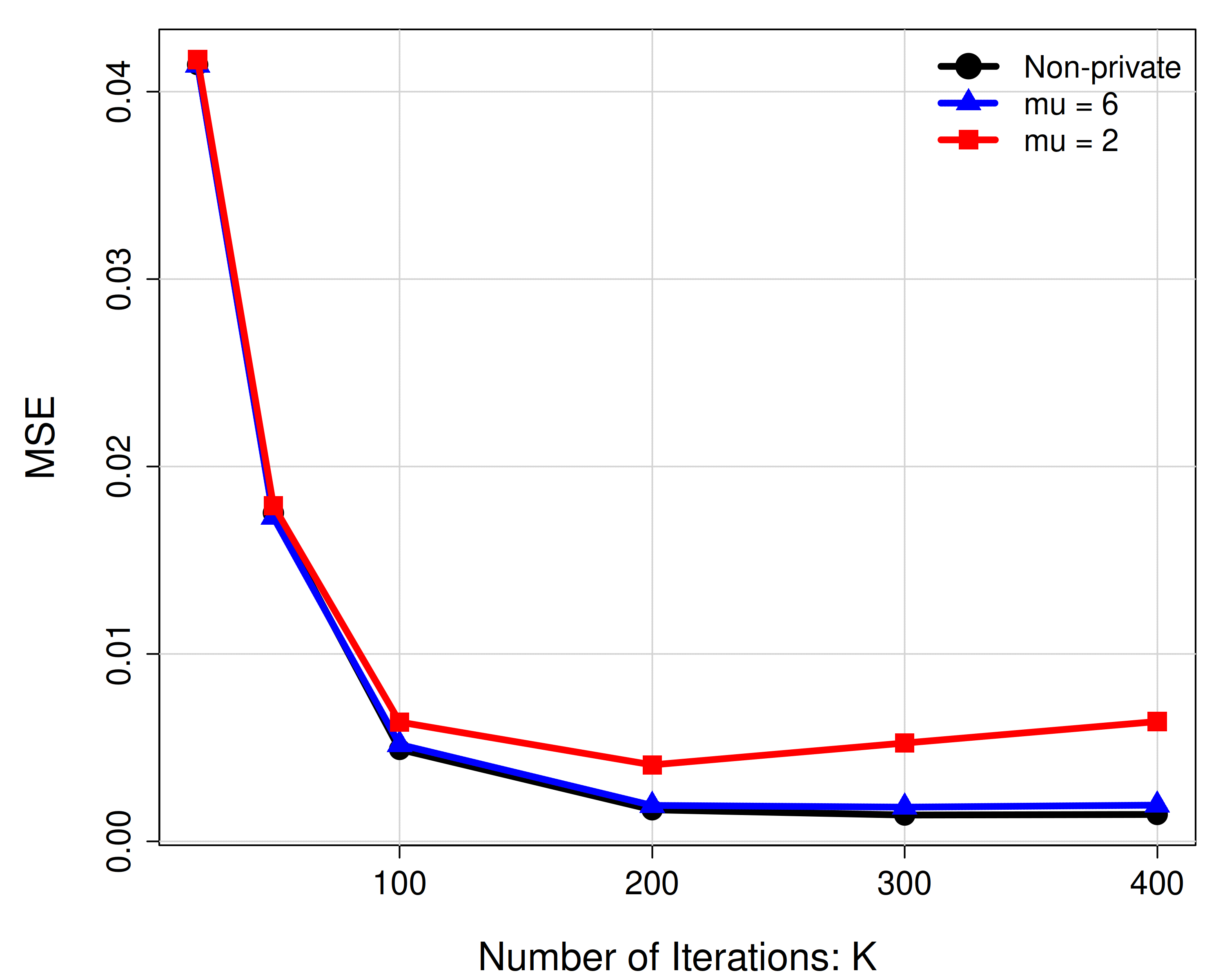}
        \caption{\FedHybrid}
    \end{subfigure}

    \begin{subfigure}[t]{0.47\textwidth}
        \centering
        \includegraphics[width=\textwidth]{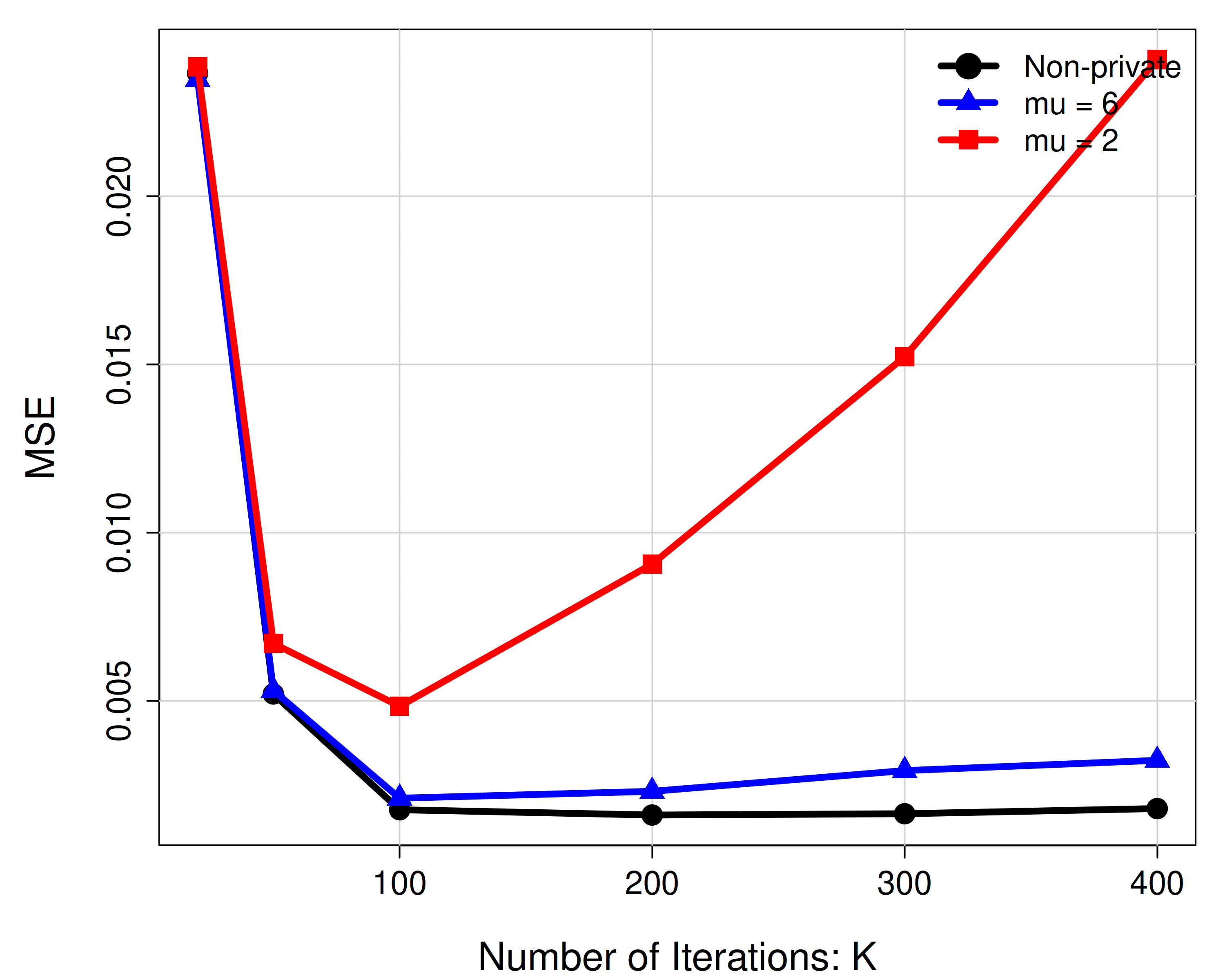}
        \caption{\FedAvg}
    \end{subfigure}
  \hfill
    \begin{subfigure}[t]{0.47\textwidth}
        \centering
        \includegraphics[width=\textwidth]{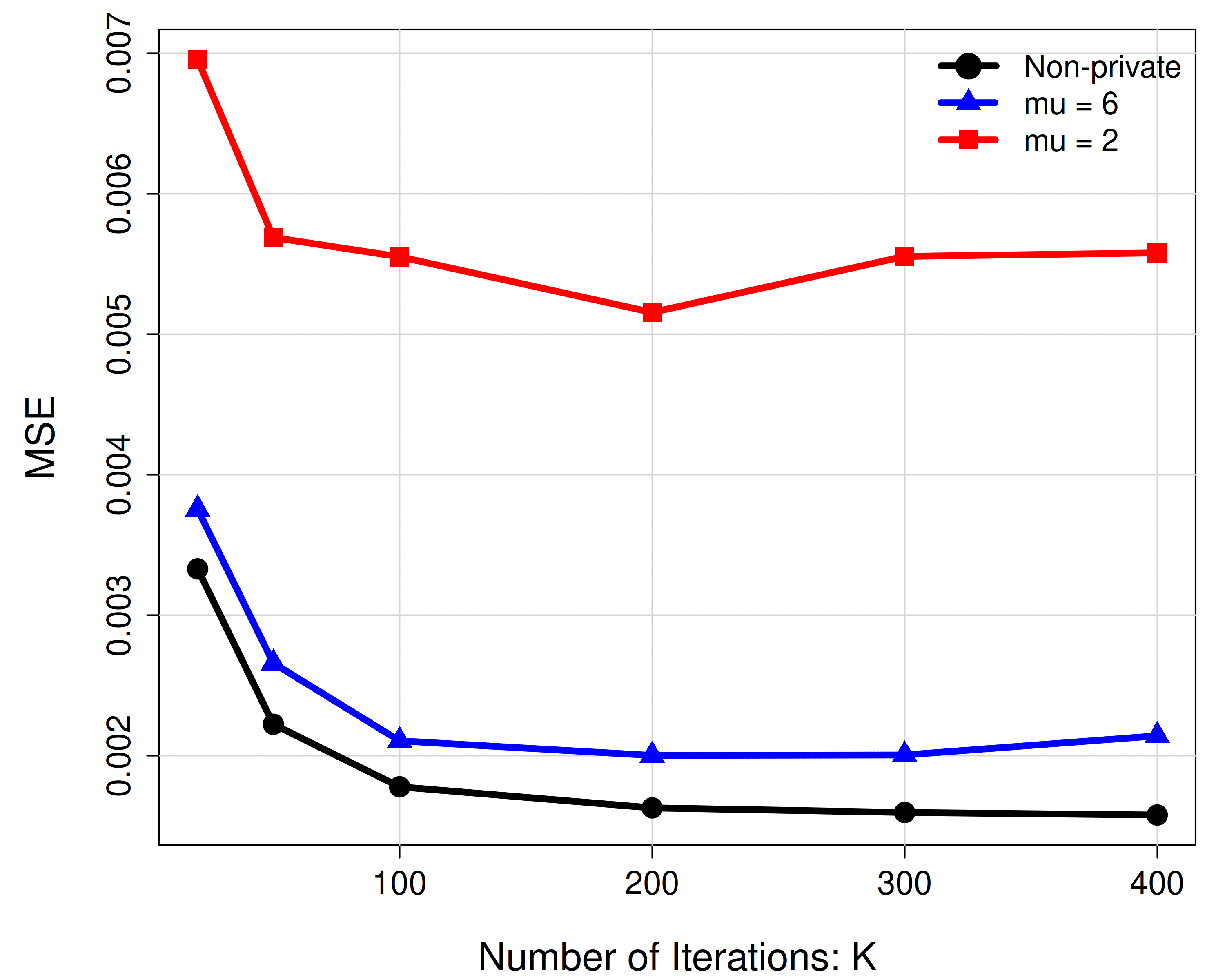}
        \caption{\FedNewton}
    \end{subfigure}

    \caption{The empirical MSE of the methods with increasing number of iterations illustrating the tradeoff between optimization quality and privacy under Poisson GLM.}
    \label{fig:poi_accuracy_pribacy_tradeoff}
\end{figure}

\begin{figure}[h]  
    \centering

    \begin{subfigure}{0.32\textwidth}
        \centering
        \includegraphics[width=\textwidth]{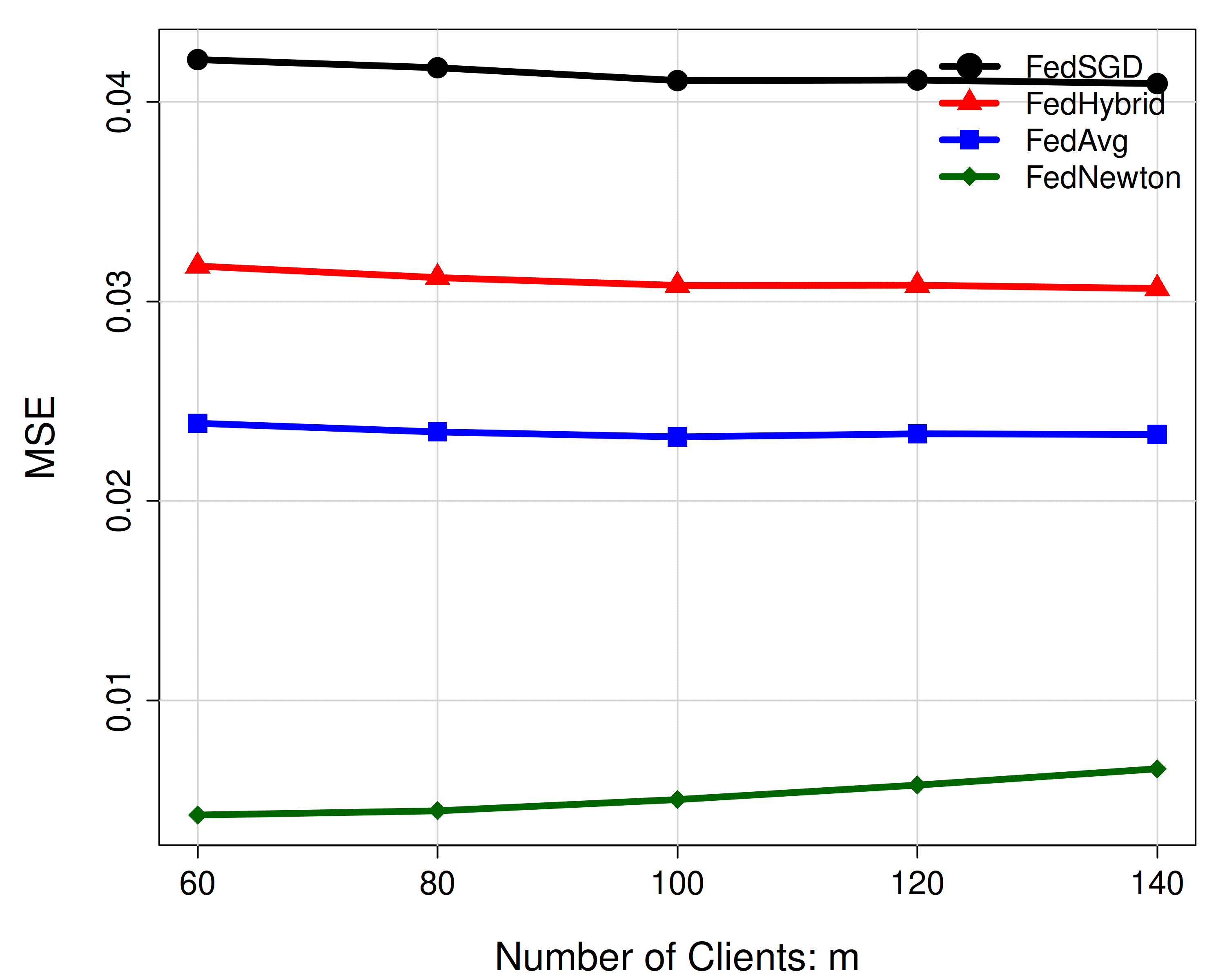}
        \caption{Equal $n_i$}
    \end{subfigure}%
    \begin{subfigure}{0.32\textwidth}
        \centering
        \includegraphics[width=\textwidth]{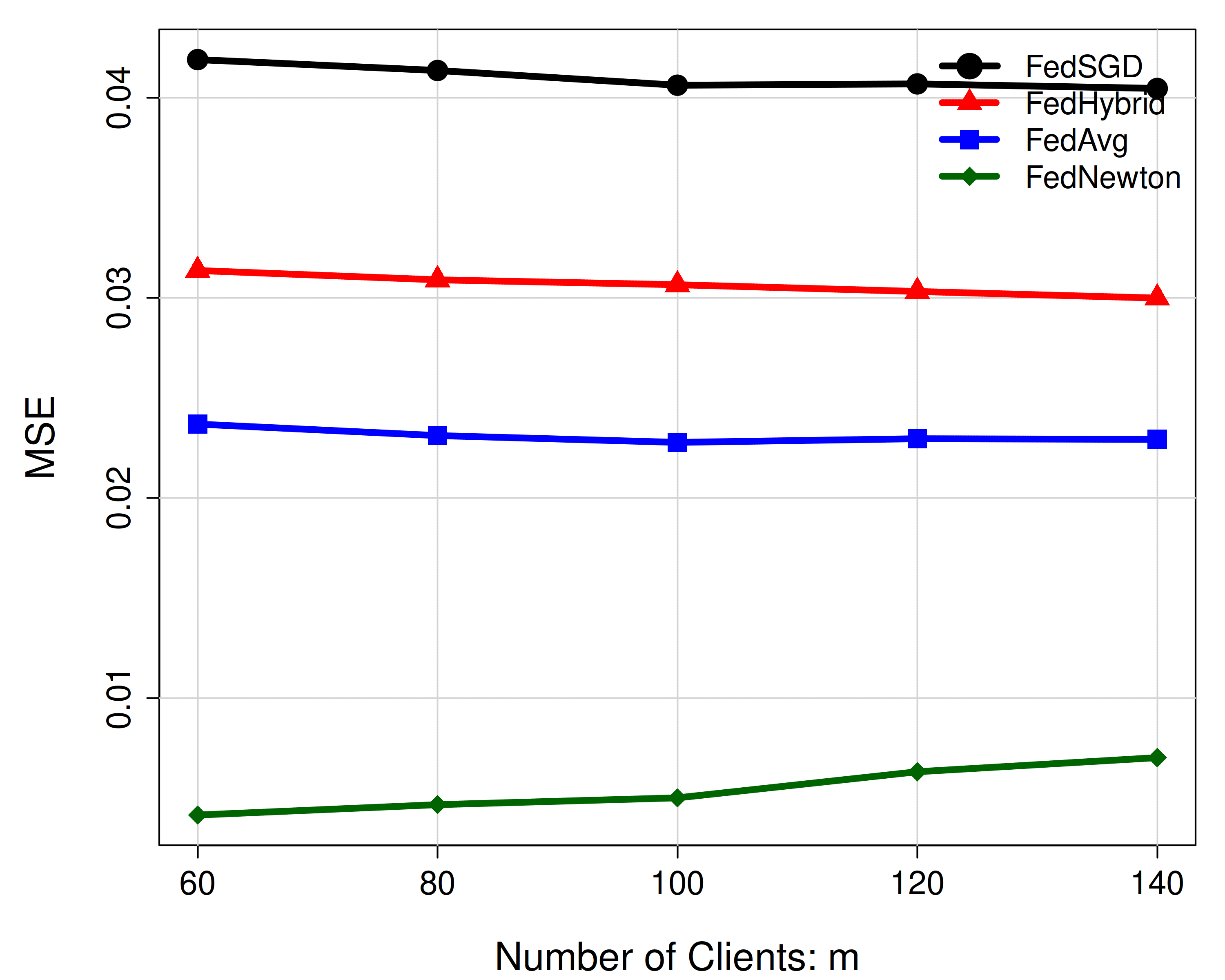}
        \caption{Uniformly distributed $n_i$}
    \end{subfigure}%
    \begin{subfigure}{0.32\textwidth}
        \centering
        \includegraphics[width=\textwidth]{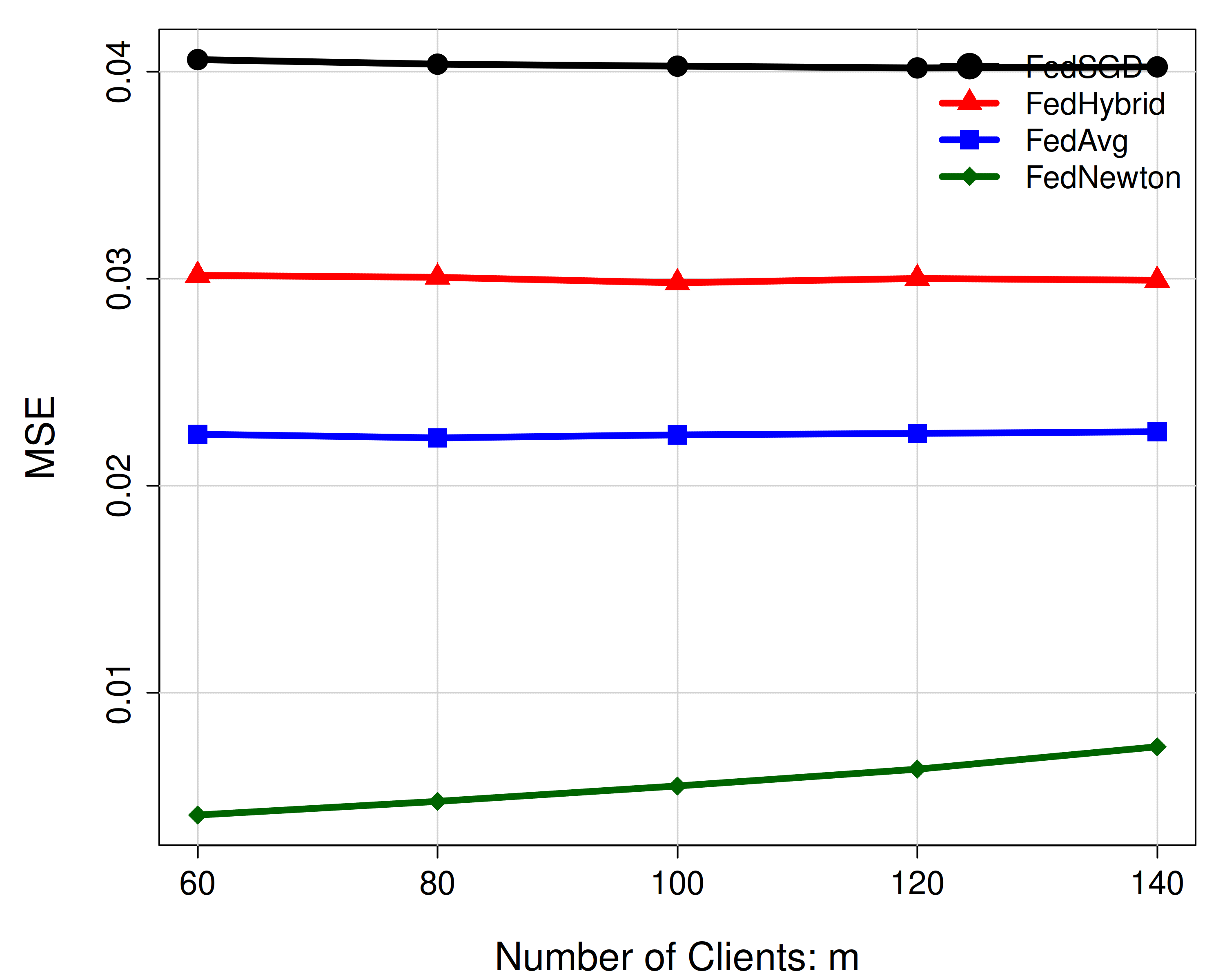}
        \caption{Lognormal distributed $n_i$}
    \end{subfigure}

\caption{MSE comparison of \FedSGD, \FedHybrid, \FedAvg, and \FedNewton\ under fixed total sample size \(N=20000\) with varying number of clients \(m\) in Poisson GLM: (a) equal local sample sizes, (b) uniformly distributed local sample sizes, and (c) lognormally distributed local sample sizes.}
    \label{fig:poi_fixedN}
\end{figure}

\newpage

\end{document}